\newcommand{\mytexttt}[1]{\texttt{\color{blue}#1}} 
\theoremstyle{plain}
\newtheorem{theorem}{Theorem}[section]
\newtheorem{lemma}[theorem]{Lemma}
\newtheorem{corollary}[theorem]{Corollary}
\theoremstyle{definition}
\newtheorem{assumption}[theorem]{Assumption}
\theoremstyle{remark}
\newtheorem{remark}[theorem]{Remark}
\icmltitlerunning{Learning-Rate-Free Stochastic Optimization over Riemannian Manifolds}
\begin{document}

\twocolumn[
\icmltitle{Learning-Rate-Free Stochastic Optimization over Riemannian Manifolds}



\icmlsetsymbol{equal}{}

\begin{icmlauthorlist}
\icmlauthor{Daniel Dodd}{equal,lancaster}
\icmlauthor{Louis Sharrock}{equal,lancaster}
\icmlauthor{Christopher Nemeth}{lancaster}
\end{icmlauthorlist}

\icmlaffiliation{lancaster}{Department of Mathematics and Statistics, Lancaster University, UK}

\icmlcorrespondingauthor{Daniel Dodd}{d.dodd1@lancaster.ac.uk}

\icmlkeywords{Machine Learning, ICML}

\vskip 0.3in
]



\printAffiliationsAndNotice{} 

\begin{abstract}
In recent years, interest in gradient-based optimization over Riemannian manifolds has surged. However, a significant challenge lies in the reliance on hyperparameters, especially the learning rate, which requires meticulous tuning by practitioners to ensure convergence at a suitable rate. In this work, we introduce innovative learning-rate-free algorithms for stochastic optimization over Riemannian manifolds, eliminating the need for hand-tuning and providing a more robust and user-friendly approach. We establish high probability convergence guarantees that are optimal, up to logarithmic factors, compared to the best-known optimally tuned rate in the deterministic setting. Our approach is validated through numerical experiments, demonstrating competitive performance against learning-rate-dependent algorithms.
\end{abstract}

\section{Introduction}
\label{introduction}
We study Riemannian optimization problems of the form
\begin{equation}
\label{eq:objective}
\min_{x\in\mathcal{M}} f(x),
\end{equation}
where $f$ is a geodesically convex function, and $\mathcal{M}$ is a Riemannian manifold. In recent years, there has been a growing interest within the machine learning community in addressing optimization challenges on such geometric spaces. These problems manifest in diverse applications, including principal component analysis \citep{Edelman1998}, dictionary learning \citep{Sun_2017}, low-rank matrix completion \citep{boumal2011}, tensor factorization \citep{Ishteva2011}, Gaussian mixture models \citep{Hosseini2015} and metric learning \citep{zadeh16}. 

{\setlength{\subfigcapskip}{-1.5mm}
\begin{figure}[t]
    \centering
    \subfigure[Learning rate too large.]{\includegraphics
    [width=0.22\textwidth,trim={17cm 10cm 17cm 6cm},clip]{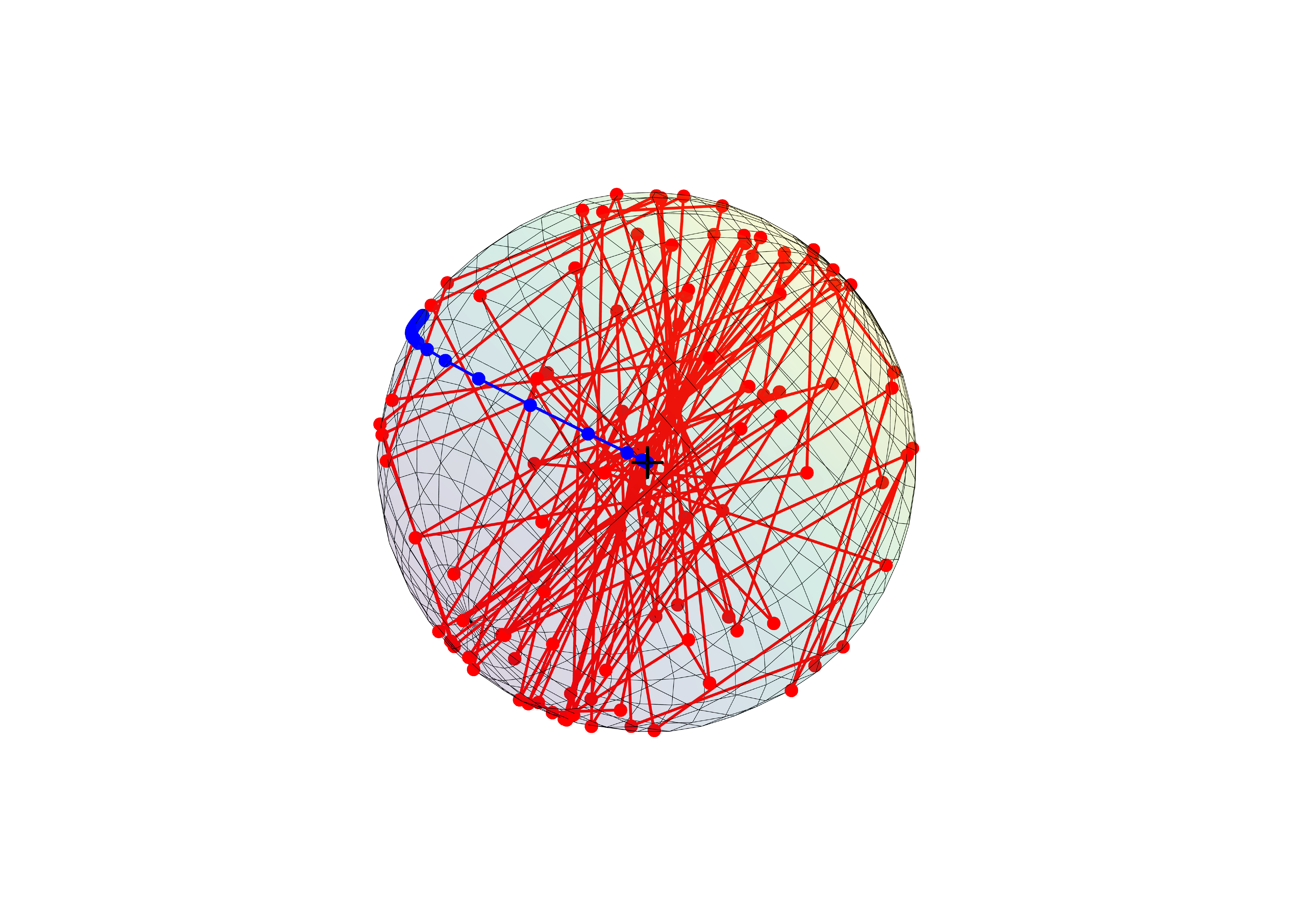}}
    \subfigure[Learning rate too small.]{\includegraphics[width=0.22\textwidth, trim={17cm 10cm 17cm 6cm},clip]{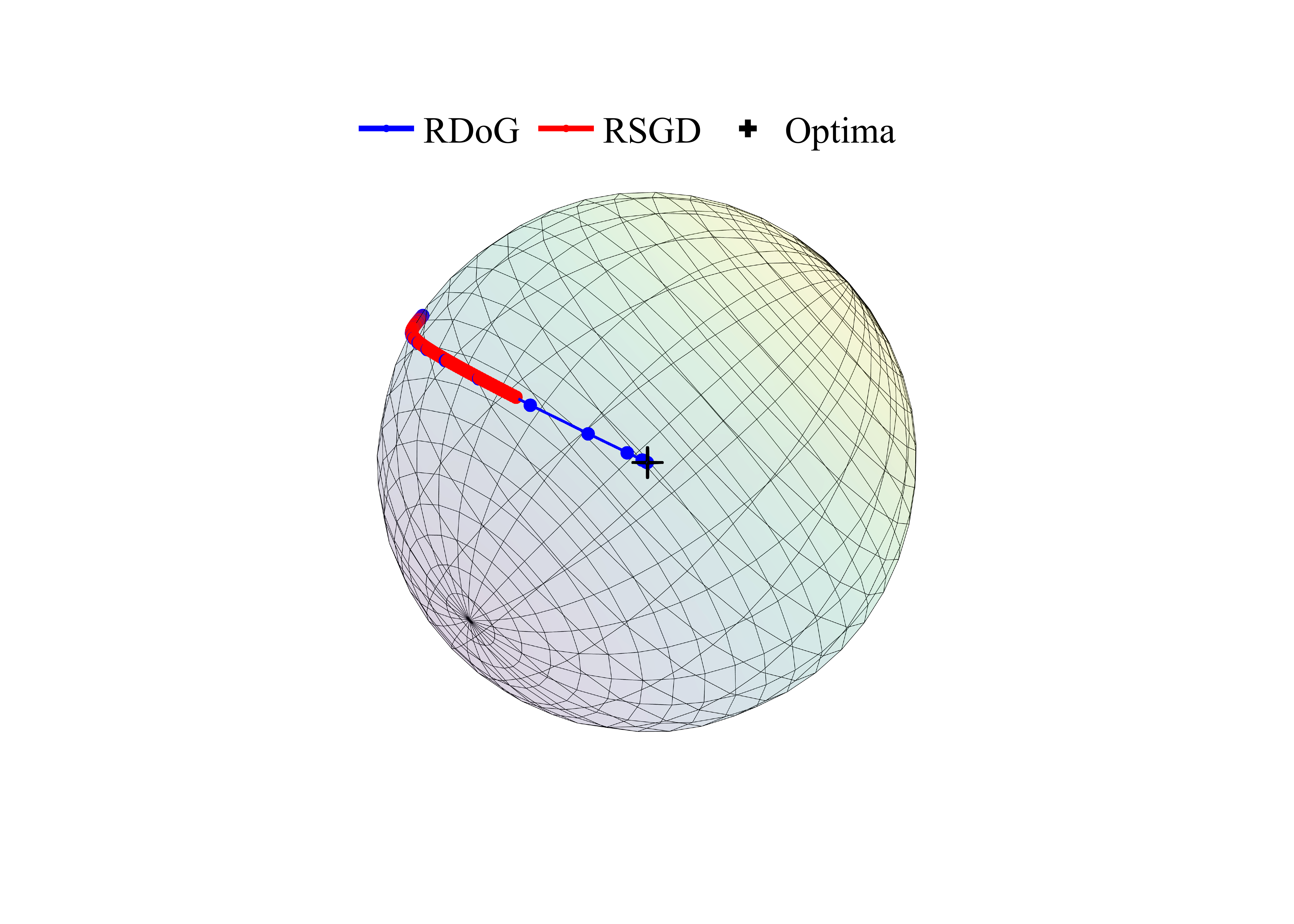}}
    \vspace{-3mm}
    \caption{Rayleigh quotient maximization on the unit sphere. Our algorithm, RDoG, converges without tuning, while RSGD shows sensitivity to the learning rate, leading to (a) overshooting or (b) slow convergence.
    }
    \vspace{-5mm}
    \label{fig:toy_illustration}
\end{figure}

}

One of the prominent hurdles in applying Riemannian gradient-based optimization is the requirement for careful tuning of the learning rate or step size parameter. Selecting an appropriate learning rate is imperative to the algorithm's performance, impacting the convergence rate, final solution quality, and overall algorithm stability. To illustrate, Figure \ref{fig:toy_illustration} showcases the impact of inadequate learning rates on the convergence rate of Riemannian stochastic gradient descent \citep[RSGD, ][]{Bonnabel_2013}.

Recently, the expense and lack of robustness associated with learning rate tuning have spurred substantial research of  \emph{learning-rate-free} methods for Euclidean optimization. These aim to automate tuning by crafting algorithms that achieve near-optimal convergence rates with minimal knowledge of the function's properties and do not have any tuning parameters. Notable examples include online learning schemes like coin betting \citep{orabona2016coin} and exponentiated gradients \cite{mcmahan2014} and bisection subroutines \citep{carmon2022making}. Our paper addresses the absence of comparable tools for Riemannian optimization with the first comprehensive study of learning-rate-free algorithms in this setting.

{\bfseries Contributions}: Building upon the recently proposed Distance over Gradients \citep[DoG,][]{ivgi23a} and Distance over Weighted Gradients \citep[DoWG,][]{khaled2023} Euclidean optimization approaches, we introduce dynamic learning-rate-scheduler algorithms for stochastic Riemannian optimization. Our results establish high probability convergence guarantees, achieving optimal convergence rates with logarithmic factors in smooth and Lipschitz settings, rendering them a robust solution for geodesically convex stochastic optimization on Riemannian manifolds.

\section{Preliminaries}
\label{preliminaries}

\subsection{Riemannian Geometry}
In this section, we recall some fundamental definitions from Riemannian geometry \citep[e.g.,][]{petersen2006, lee2012, boumal23}.

{\bfseries  Riemannian manifold, tangent space, metric}. A \emph{Riemannian manifold} $\mathcal{M}$ is a smooth, locally Euclidean space. At each point $x$ on $\mathcal{M}$, there is a corresponding \emph{tangent space} $\mathcal{T}_x \mathcal{M}$ representing all possible tangential directions, endowed with a smoothly varying inner product $\langle \cdot, \cdot' \rangle_x \colon \mathcal{T}_x \times \mathcal{T}_x \to \mathbb{R}$ termed the \emph{Riemannian metric}, that induces a norm $\lVert \cdot \rVert_x = \sqrt{\langle \cdot, \cdot \rangle_x}$. The metric measures angles, curve lengths, surface areas, and volumes locally, with global quantities obtained by integrating these contributions.

{\bfseries Geodesics and distances.} The length of a curve $c: [0,1] \mapsto c(t) \in \mathcal{M}$ is $L(c) = \int_{0}^{1} \lVert c^\prime(t) \rVert_{c(t)} \mathrm{d}t$. Generalizing straight lines leads to \emph{geodesics}, constant speed curves representing the shortest path between points $x$ and $y$ on the manifold: $\gamma = \arg \min_c L(c)$ with $\gamma(0) = x$, $\gamma(1) = y$, and $\lVert \gamma^\prime(t)\rVert_{\gamma(t)} = 1$, establishing a metric space structure with \emph{geodesic distance} $d(x, y) = \inf_c L(c)$.

{\bfseries Exponential maps}. 
The concept of moving along a
``straight'' curve with constant velocity is given by the \emph{exponential map}. As such, for any point $x$ on $\mathcal{M}$, and any tangent vector $v \in \mathcal{T}_x \mathcal{M}$, there is a unique
unit speed geodesic $\gamma$ satisfying $\gamma(0) = x$ and $\gamma^\prime(0) = v$. The corresponding exponential map $\exp_x \colon \mathcal{T}_x \mathcal{M} \to \mathcal{M}$ is defined as $\exp_x(v) = \gamma(1)$. When $\exp_x$ is well-defined on $\mathcal{T}_x \mathcal{M}$ for all $x \in \mathcal{M}$, the {geodesic distance} $d(x, y)$ is given by $\lVert \exp_x^{-1}(y) \rVert_{x}$.

{\bfseries Parallel transport}. \emph{Parallel transport} $\Gamma_x^y \colon \mathcal{T}_x \mathcal{M} \to \mathcal{T}_y \mathcal{M}$ provides a means to move tangent vectors from one tangent space to another while preserving their norm, and roughly speaking, ``direction,'' analogous to translation in Euclidean space.  

{\bfseries Curvature}. The \emph{curvature} of a Riemannian manifold is determined by its metric at each point. The \emph{sectional curvature} at a point $x$ on the manifold is the Gauss curvature of a two-dimensional submanifold formed as the image of a two-dimensional subspace of the tangent space $\mathcal{T}_x\mathcal{M}$ under the exponential map. 

{\bfseries Trigonometric bound}.
The law of cosines in Euclidean space is fundamental for analyzing optimization algorithms,
\begin{align*}
    a^2 = b^2 + c^2 - 2bc \cos(A),
\end{align*}
where $a, b, c$ are the sides of a Euclidean triangle with $A$ the angle between sides $b$ and $c$. Trigonometric geometry behaves differently in manifolds compared to Euclidean spaces. While the equality does not hold for nonlinear spaces, a trigonometric distance bound can be established for manifolds with sectional curvature bounded below.

\begin{lemma} \citep[Lemma 5]{zhang16}
\label{lemma:geodesic_triangle_inq}
    Suppose $a, b, c$ are the side lengths of a geodesic triangle $\Delta$ in a Riemannian manifold with sectional curvature lower bounded by $\kappa>-\infty$ and $A$ is the angle between sides $b$ and $c$ (defined through the inverse exponential map and inner product in tangent space). Then
    \begin{align*}
        a^2 \le \zeta_\kappa(c) b^2 + c^2 - 2bc \cos(A),
    \end{align*}
    where $\zeta_\kappa \colon \mathbb{R}_{+} \to \mathbb{R}$ is the geometric curvature function
        \[
    \zeta_\kappa(d) =
    \begin{cases} 
    \frac{\sqrt{|\kappa|} \cdot d}{\tanh\left({\sqrt{|\kappa|} \cdot d} \right)}, & \text{if } \kappa < 0, \\
    1, & \text{if } \kappa \ge 0.
    \end{cases}
    \]
    \begin{proof}
         Given by Lemma 3.12 of \cite{Cordero2001} and by Lemma 5 of \cite{zhang16}.
    \end{proof}
\end{lemma}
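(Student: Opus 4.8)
The plan is to reduce the inequality to a comparison with the complete, simply connected model space of constant curvature $\kappa$, and then carry out an explicit trigonometric estimate there. If $\kappa\ge 0$ the claim reads $a^2\le b^2+c^2-2bc\cos A$, which is exactly the hinge form of Toponogov's comparison theorem against the Euclidean plane: since the sectional curvature is bounded below by $0$, the side $a$ opposite the angle $A$ is no longer than the corresponding side of a Euclidean triangle with the same adjacent sides $b,c$ and the same included angle $A$, and for that Euclidean triangle the law of cosines holds with equality. So all the work is in the case $\kappa<0$, which I would treat next.

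For $\kappa<0$ I would first rescale the metric by $|\kappa|$: this replaces every distance $d$ by $\sqrt{|\kappa|}\,d$, turns the curvature lower bound into $-1$, and converts the target coefficient into $\tfrac{\sqrt{|\kappa|}\,c}{\tanh(\sqrt{|\kappa|}\,c)}$ exactly as stated, so we may assume $\kappa=-1$. By Toponogov's hinge comparison theorem the side $a$ is at most the side $\bar a$ of the comparison triangle in the hyperbolic plane with the same adjacent sides $b,c$ and included angle $A$, where the hyperbolic law of cosines gives the exact identity
\[
\cosh\bar a \;=\; \cosh b\,\cosh c \;-\; \cos A\,\sinh b\,\sinh c .
\]
It therefore suffices to prove the one-variable analytic inequality: for all $b\ge 0$, $c>0$ and $t:=\cos A\in[-1,1]$, if $\bar a=\bar a(b)$ is defined by the displayed identity, then $\bar a(b)^2 \le \tfrac{c}{\tanh c}\,b^2 + c^2 - 2bct$.

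For this analytic step I would fix $c,t$ and compare the two sides as functions of $b$. Writing $u(b)=\cosh\bar a$ and $w(b)=u'(b)=\sinh b\cosh c-t\cosh b\sinh c$, one has the algebraic identity $u^2-w^2=\cosh^2 c-t^2\sinh^2 c=:\rho^2$, constant in $b$; differentiating $\sinh\bar a\cdot\bar a'=w$ and using $w'=u=\cosh\bar a$ yields the closed forms $(\bar a')^2=\tfrac{u^2-\rho^2}{u^2-1}$ and $\bar a''=\coth\bar a\,\bigl(1-(\bar a')^2\bigr)$, so that $1-(\bar a')^2=\tfrac{(1-t^2)\sinh^2 c}{\sinh^2\bar a}$. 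Both sides of the target inequality, together with their first $b$-derivatives, agree at $b=0$ (value $c^2$, derivative $-2ct$, using $\bar a(0)=c$), so the claim is equivalent to the second-derivative bound $(\bar a^2)''=2(\bar a')^2+2\bar a\,\bar a''\le\tfrac{2c}{\tanh c}$ for all $b\ge 0$. Substituting the closed forms turns this into an elementary (if tedious) calculus inequality in $u\ge\rho\ge 1$ with $c$ as a parameter, which I would verify by monotonicity; equality is attained at $b=0$, $t=0$, which is what pins down the coefficient $\tfrac{c}{\tanh c}$. The step I expect to be the genuine obstacle is exactly this last estimate: the crude bound $(\bar a^2)''\le 2\bar a\coth\bar a$ is \emph{not} sufficient, because $\bar a$ may exceed $c$ when $b$ is large, so one must exploit that in that regime $(\bar a')^2$ is forced close to $1$ (the factor $1-(\bar a')^2$ decays like $1/\sinh^2\bar a$) and the $\bar a\coth\bar a$ term is damped.

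An alternative, self-contained route that never invokes Toponogov replaces the last two paragraphs by a Jacobi-field computation: let $p$ be the angle vertex, $r$ the third vertex, $v\in\mathcal{T}_p\mathcal{M}$ the unit vector toward the $b$-vertex, set $q_\tau=\exp_p(\tau v)$ and $\ell(\tau)=d(q_\tau,r)$, and apply the first and second variation formulas for arclength along the geodesics from $q_\tau$ to $r$. The curvature lower bound $\kappa$ then bounds the index form, and splitting the variation field into its components radial and tangential to that geodesic gives the pointwise estimate $(\ell^2)''\le 2(\ell')^2+2\bigl(1-(\ell')^2\bigr)\ell\coth\ell$, which is exactly the model-space expression appearing above; integrating on $[0,b]$ after the same one-variable estimate recovers the claim. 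This runs into, and is resolved by, the same subtlety as the direct approach.
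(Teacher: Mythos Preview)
The paper does not actually prove this lemma; its entire proof is the citation ``Given by Lemma 3.12 of \cite{Cordero2001} and by Lemma 5 of \cite{zhang16}.'' Your proposal is therefore far more substantive than what the paper offers, and the route you take---Toponogov's hinge comparison followed by an explicit estimate in the hyperbolic model---is essentially the argument underlying those cited references.

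Your reduction is correct, and the second-derivative bound $(\bar a^2)''\le 2c\coth c$ you isolate is indeed the crux. You flag it as an obstacle, but it closes cleanly. Rewrite
\[
(\bar a^2)'' \;=\; 2(\bar a')^2 + 2\bar a\coth\bar a\,\bigl(1-(\bar a')^2\bigr)\;=\; 2 \;+\; 2\bigl(1-(\bar a')^2\bigr)\bigl(\bar a\coth\bar a - 1\bigr),
\]
and substitute your identity $1-(\bar a')^2=(1-t^2)\sinh^2 c/\sinh^2\bar a\le \sinh^2 c/\sinh^2\bar a$. The claim becomes $(1-(\bar a')^2)(\bar a\coth\bar a-1)\le c\coth c-1$. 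If $\bar a\le c$ this is immediate from $1-(\bar a')^2\le 1$ and the fact that $x\mapsto x\coth x$ is increasing. If $\bar a\ge c$ it suffices that
\[
g(x)\;:=\;\frac{x\coth x - 1}{\sinh^2 x}\;=\;\frac{x\cosh x-\sinh x}{\sinh^3 x}
\]
is nonincreasing on $(0,\infty)$, applied at $x=\bar a$ and $x=c$. A direct computation gives $g'(x)\sinh^4 x = \tfrac32\sinh 2x - x\cosh 2x - 2x=:h(x)$, and one checks $h(0)=h'(0)=h''(0)=0$ with $h'''(x)=-8x\sinh 2x\le 0$, whence $h\le 0$ and $g$ is nonincreasing. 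So your worry that ``the crude bound $(\bar a^2)''\le 2\bar a\coth\bar a$ is not sufficient'' is well founded, but the extra factor $\sinh^2 c/\sinh^2\bar a$ coming from $1-(\bar a')^2$ is exactly what saves the estimate when $\bar a>c$, via the monotonicity of $g$. Your alternative Jacobi-field route is essentially the content of the Cordero--Erausquin--McCann--Schmuckenschl\"ager lemma; it lands on the same pointwise differential inequality and is resolved by the same computation.
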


\subsection{Function Classes}
{\bfseries Geodesic convexity}. $\mathcal{M}$ is \emph{geodesically convex} if every two points are connected by a geodesic. A function $f \colon \mathcal{M} \to \mathbb{R}$ is \emph{geodesically convex} if, for any geodesic $\gamma \subset \mathcal{M}$,
\begin{align*}
    f(\gamma(t)) \le (1-t)f(\gamma(0))+tf(\gamma(1)), \quad \forall t \in [0,1].
\end{align*}
Equivalently, $\mathcal{M}$ is geodesically convex if, for any $x, y \in \mathcal{M}$, there exists a tangent vector $\partial f(x)\in\mathcal{T}_{x}\mathcal{M}$ such that
\[ f(y) \ge f(x) + \langle \partial f(x), \exp_x^{-1}(y) \rangle_x, \]
where $\partial f(x)$ is a \emph{Riemannian subgradient of $f$ at $x$}. When $f$ is differentiable, $\{\partial f(x)\} = \operatorname{grad}f(x)$, the \emph{Riemannian gradient of $f$ at $x$}, defined as the tangent vector in $\mathcal{T}_x \mathcal{M}$ satisfying 
\[ \langle \operatorname{grad} f (x), v \rangle_x = df (x)[v], \]
where $df(x) \colon \mathcal{T}_x \mathcal{M} \to \mathbb{R}$ denotes the differential of $f$ at $x$.

{\bfseries Geodesic Lipschitz}. A function $f:\mathcal{M}\rightarrow\mathbb{R}$ is said to be \emph{geodesically $L$-Lipschitz} if, for all $x,y\in\mathcal{M}$, there exists a constant $L > 0$ such that,
\begin{align*}
|f (y) - f (x)| \le L \cdot \lVert \exp_{x}^{-1}(y) \rVert_x.
\end{align*}
When $f$ is differentiable, the geodesically $L$-Lipschitzness is equivalent to $\lVert \operatorname{grad} f(x) \rVert_x \le L$ for all $x \in \mathcal{M}$.

{\bfseries Geodesic smoothness}. A differentiable function $f$ is  \emph{geodesically  $S$-smooth} if its gradient is geodesically $S$-Lipschitz. That is, if for all $x, y \in \mathcal{M}$, 
\begin{align*}
    \lVert \operatorname{grad}f(x) - \Gamma_y^x  \operatorname{grad}f(y) \rVert_x \le S \cdot \lVert \exp_{x}^{-1}(y) \rVert_x, 
\end{align*}
where $\Gamma_{y}^x$ is the parallel transport from $y$ to $x$. One can show in this case that
\begin{align*}
    f(y) \le f(x) + \langle \operatorname{grad}f(x), \exp_x^{-1}(y) \rangle_x + \frac{S}{2} \cdot \lVert \exp_{x}^{-1}(y) \rVert_x^2.
\end{align*}
\section{Algorithms and Theory}
\label{criterion}

We are interested in solving optimization problems of the form \eqref{eq:objective}. 
We proceed under the following standard regularity conditions \citep{zhang16,alimisis20}. 

\begin{assumption}[Geodesic convexity] \label{assumption:g_convex}
   The geodesically convex function $f \colon \mathcal{M} \to \mathbb{R}$ attains its minimum at $x_\star$ within its closed and geodesically convex domain $\mathcal{M},$ which includes a well-defined exponential map.
\end{assumption}

\begin{assumption}[Lower bounded sectional curvature] \label{assumption:Hadamard}
    $\mathcal{M}$ exhibits sectional curvature bounded from below: $\kappa > -\infty$.
\end{assumption}

To minimize $f$, we will assume access to a \emph{stochastic gradient oracle} $\mathcal{G}$. When queried at $x \in \mathcal{M},$ the oracle returns a stochastic (sub)gradient estimator $\mathcal{G}(x)$ which satisfies $\mathbb{E}\left[\mathcal{G}(x) | x \right] \in \partial f(x)$. In a slight abuse of notation, we will henceforth write $\operatorname{grad} f(x) \coloneqq \mathbb{E}\left[\mathcal{G}(x) | x \right]$. We consider the following additional assumptions.

\begin{assumption}[Locally bounded stochastic gradients] \label{assumption:bounded_stochastic_grads}
   There exists some continuous function $\ell \colon \mathcal{M} \to \mathbb{R}_{+}$ such that $\lVert \mathcal{G}(x) \rVert_{x} \le \ell (x)$ almost surely.
\end{assumption}

\begin{assumption}[Locally smooth stochastic gradients]
\label{assumption:bounded_stochastic_lsmooth}
There exists some continuous function $s \colon \mathcal{M} \to \mathbb{R}_{+}$ such that $\lVert \mathcal{G}(x) - \Gamma_{y}^x \mathcal{G}(y) \rVert_{x} \le s(x) \lVert \exp_x^{-1}(y) \rVert_x$, almost surely.
\end{assumption}

\cref{assumption:bounded_stochastic_grads} corresponds to the Riemannian analog of \citet{ivgi23a}'s locally bounded gradient assumption, whereas \cref{assumption:bounded_stochastic_lsmooth} introduces a novel condition.

\subsection{Riemannian Stochastic Gradient Descent}

Our work centers on the Riemannian stochastic gradient descent algorithm (RSGD) introduced by \citet{Bonnabel_2013}, which from an initial point \(x_0 \in \mathcal{M}\) iterates the following update rule:
\[x_{t+1} = \exp_{x_{t}}(-\eta_t g_t).\]
Here \(t \ge 0\) denotes the iteration index, \(g_t \coloneqq \mathcal{G}(x_t)\) represents the stochastic gradient oracle, and \(\eta_t > 0\) is a user-chosen learning rate or step size parameter.

Our analysis commences by characterizing the ``ideal step size'' in the deterministic gradient setting, an extension of Theorem 9 from \citet{zhang16}.

\begin{theorem}
\label{thm:intractable_oracle}
    Under noiseless conditions and Assumption \ref{assumption:g_convex}, and  \ref{assumption:Hadamard}, RSGD with a constant step size $\eta_t = \eta>0$, for a geodesically $L$-Lipschitz function, satisfies
    \begin{align*}
        \min_{t \le T} \left[f(x_t) - f(x_\star) \right] \le \left[\frac{\bar{d}_T^2}{2 \eta T} + \frac{\eta \zeta_{\kappa}(\bar{d}_T) \sum_{t=0}^{T} \lVert g_t \rVert_{x_t}^2}{2T} \right],
    \end{align*}
    where $\bar{d}_t \coloneqq \max_{s \le t} d_s, d_s = d(x_s,x_{\star})$.
    Minimizing this bound with respect to $\eta$, gives a convergence rate of $O\left(L\bar{d}_T\sqrt{\tfrac{\zeta_\kappa(\bar{d}_T)}{T}}
\right)$  with corresponding ``ideal step size''
    \begin{align*}
        \eta_\star = \frac{\bar{d}_T}{\sqrt{\zeta_{\kappa}(\bar{d}_T) \sum_{t=0}^{T} \lVert g_t \rVert_{x_t}^2}}.
    \end{align*}
\end{theorem}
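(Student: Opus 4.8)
The plan is to mimic the Euclidean potential-function argument, tracking the squared geodesic distance $d_t^2 = d(x_t,x_\star)^2$ and using the trigonometric comparison inequality of \cref{lemma:geodesic_triangle_inq} in place of the Euclidean law of cosines. First I would fix an iteration $t$ and consider the geodesic triangle with vertices $x_t$, $x_{t+1}$, and $x_\star$, whose side lengths are $c = d(x_t,x_\star) = d_t$, $b = d(x_t,x_{t+1}) = \eta\lVert g_t\rVert_{x_t}$ (since $x_{t+1} = \exp_{x_t}(-\eta g_t)$ and the exponential map is unit speed), and $a = d(x_{t+1},x_\star) = d_{t+1}$. Writing $A$ for the angle at $x_t$ between the initial velocities of the geodesics to $x_{t+1}$ and to $x_\star$, one has $bc\cos A = \langle -\eta g_t,\exp_{x_t}^{-1}(x_\star)\rangle_{x_t}$, so \cref{lemma:geodesic_triangle_inq} applied with the lower curvature bound $\kappa$ from \cref{assumption:Hadamard} gives
\[
 d_{t+1}^2 \;\le\; \zeta_\kappa(d_t)\,\eta^2\lVert g_t\rVert_{x_t}^2 \;+\; d_t^2 \;+\; 2\eta\,\langle g_t,\exp_{x_t}^{-1}(x_\star)\rangle_{x_t}.
\]

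Next, since we are in the noiseless setting, $g_t = \operatorname{grad}f(x_t)$, and geodesic convexity (\cref{assumption:g_convex}) yields $\langle g_t,\exp_{x_t}^{-1}(x_\star)\rangle_{x_t} \le f(x_\star) - f(x_t)$. Substituting and rearranging,
\[
 2\eta\big(f(x_t) - f(x_\star)\big) \;\le\; d_t^2 - d_{t+1}^2 + \zeta_\kappa(d_t)\,\eta^2\lVert g_t\rVert_{x_t}^2.
\]
I would then replace $\zeta_\kappa(d_t)$ by $\zeta_\kappa(\bar d_T)$, which is valid because $d_t \le \bar d_T$ and $\zeta_\kappa$ is non-decreasing on $\mathbb{R}_+$ (for $\kappa<0$ this reduces to monotonicity of $u\mapsto u\coth u$, which follows from $\sinh(2u)\ge 2u$). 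Summing over $t$, the distance terms telescope to at most $d_0^2 \le \bar d_T^2$, so $2\eta\sum_t(f(x_t)-f(x_\star)) \le \bar d_T^2 + \eta^2\zeta_\kappa(\bar d_T)\sum_t\lVert g_t\rVert_{x_t}^2$. Dividing by $2\eta T$ and using $\min_{t\le T}[f(x_t)-f(x_\star)] \le \tfrac1T\sum_t[f(x_t)-f(x_\star)]$ (each summand being nonnegative since $x_\star$ minimizes $f$) gives the first claimed bound.

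For the second part, I would view the right-hand side as $\phi(\eta) = A/\eta + B\eta$ with $A = \bar d_T^2/(2T)$ and $B = \zeta_\kappa(\bar d_T)\sum_t\lVert g_t\rVert_{x_t}^2/(2T)$; elementary one-variable calculus gives the minimizer $\eta_\star = \sqrt{A/B} = \bar d_T/\sqrt{\zeta_\kappa(\bar d_T)\sum_t\lVert g_t\rVert_{x_t}^2}$ and minimum value $\phi(\eta_\star) = 2\sqrt{AB} = \bar d_T\sqrt{\zeta_\kappa(\bar d_T)\sum_t\lVert g_t\rVert_{x_t}^2}/T$. Finally, geodesic $L$-Lipschitzness gives $\lVert g_t\rVert_{x_t} = \lVert\operatorname{grad}f(x_t)\rVert_{x_t}\le L$, hence $\sum_{t=0}^T\lVert g_t\rVert_{x_t}^2 \le (T+1)L^2$, and the rate collapses to $O\big(L\bar d_T\sqrt{\zeta_\kappa(\bar d_T)/T}\big)$.

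The main obstacle is the first step: correctly setting up the geodesic triangle and verifying that the cross term $bc\cos A$ produced by \cref{lemma:geodesic_triangle_inq} is exactly $\langle -\eta g_t,\exp_{x_t}^{-1}(x_\star)\rangle_{x_t}$. This requires that the inverse exponential maps and the geodesics involved are well defined (guaranteed by the closed geodesically convex domain with well-defined exponential map in \cref{assumption:g_convex}) and careful bookkeeping of the sign induced by the descent direction $-\eta g_t$. The remaining ingredients — monotonicity of $\zeta_\kappa$, the telescoping sum, the averaging step, and the scalar optimization over $\eta$ — are routine.
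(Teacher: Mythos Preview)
Your proposal is correct and follows essentially the same route as the paper's proof: the paper packages your geodesic-triangle computation as a separate corollary (\cref{cor:law_of_cosines}) and then invokes it, but the sequence of steps --- apply the trigonometric comparison inequality to the triangle $(x_t,x_{t+1},x_\star)$, use geodesic convexity to convert the inner product into $f(x_t)-f(x_\star)$, bound $\zeta_\kappa(d_t)\le\zeta_\kappa(\bar d_T)$, telescope, average, and optimize over $\eta$ --- is identical.
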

\textit{Proof.} See \cref{thm:intractable_oracle_proof}. \qed

\subsection{Riemannian Distance Over Gradients}
In practice, determining the ``ideal step size'' $\eta_\star$, even in hindsight, is challenging due to its dependence on the unknown maximum distance $\bar{d}_T$. In this section, we introduce an adaptive algorithm that estimates this whilst attaining the optimal convergence rate up to a logarithmic factor.

\textbf{Learning-rate-free schedule for RSGD}. Our key proposal, inspired by \citet{ivgi23a}, is to estimate $\bar{d}_T$ via a proxy,
\begin{equation*}
    \bar{r}_t \coloneqq \max_{s\leq t} r_s,\quad r_s\coloneqq \max(d(x_0, x_s),\epsilon),
\end{equation*}
where $\epsilon >0$ is an initial estimate. Intuitively, the maximum deviation from the starting point should reflect the maximum deviation from the optimum, assuming the RSGD iterations converge to the optimum. Integrating this estimation into the ``ideal step size,'' we establish an adaptive sequence of step sizes
\begin{align*}
        \eta_t = \frac{\bar{r}_t}{\sqrt{\zeta_\kappa(\bar{r}_t) \sum_{s=0}^{t} \lVert g_s \rVert_{x_s}^2}}.
\end{align*}
We term this step size schedule as \emph{Riemannian Distance over Gradients} (RDoG, \cref{alg:rdog}). Observe that the initial step gives a step size of $\epsilon/\lVert g_0 \rVert_{x_0}$, a normalized gradient step of size $\epsilon$. We demonstrate that, provided $\epsilon$ is chosen sufficiently small, the specific value is insensitive.

\begin{algorithm}[!htbp]
   \caption{RDoG}
   \label{alg:rdog}
\begin{algorithmic}
   \STATE {\bfseries Input:} initial point $x_0$, initial estimate $\epsilon >0$, $G_{-1}=0$.
   \FOR{$t=0$ {\bfseries to} $T-1$}
    \STATE $g_{t} = \mathcal{G}(x_t)$
    \STATE $\bar{r}_t = \max\left(\epsilon, \max_{s\le t} d(x_{s}, x_{0})\right)$
    \STATE $G_t = G_{t-1} + ||g_t||_{x_t}^2$
    \STATE $\eta_t = \frac{\bar{r}_t} {\sqrt{ \zeta_\kappa( \bar{r}_t) G_t} }$
    \STATE $x_{t+1} = \exp_{x_{t}} \left( - \eta_t g_{t} \right)$
   \ENDFOR
\end{algorithmic}
\end{algorithm}

\textbf{Optimality gap bounds assuming bounded iterates}. We bound the error of the weighted average sequence
\begin{align*}
        \tilde{x}_{t+1} = \operatorname{exp}_{\tilde{x}_{t}}\left(\frac{\bar{r}_t /\sqrt{\zeta_\kappa(\bar{r}_t)}}{\sum_{s=0}^{t} \bar{r}_s /\sqrt{\zeta_\kappa(\bar{r}_s)}} \operatorname{exp}_{\tilde{x}_{t}}^{-1}\left( x_{t} \right) \right), \ \tilde{x}_{1} = x_{0}.
\end{align*}
To simplify our analysis, we write $\log_{+}(\cdot)\coloneqq 1 + \log(\cdot)$ where the logarithm has a base of $e$, and introduce the following quantities
\begin{align*}
    G_t \coloneqq \sum_{s=0}^{t} \lVert g_s \rVert_{x_s}^2, \ \theta_{t, \delta} \coloneqq \log\left(\frac{60 \log (6 t)}{\delta}\right).
\end{align*}
Our first result establishes a bound on the optimality gap under bounded iterates.

\begin{theorem}
    \label{thm:opt_gap_dog_known}
    Suppose that \cref{assumption:g_convex}, \ref{assumption:Hadamard}, and \ref{assumption:bounded_stochastic_grads} hold. Then, for all $\delta \in (0, 1)$ and $L>0$, and for all $t \le T$, RDoG (\cref{alg:rdog}) satisfies the optimality gap $f(\tilde{x}_t) - f(x_\star)$ of
    \begin{align*}
        O \left(  \frac{(d_0 + \bar{r}_t)\sqrt{\zeta_{\kappa}(d_0 + \bar{r}_t)}\sqrt{ G_{t-1} + \theta_{t, \delta} G_{t-1} + \theta^{2}_{t, \delta} L^2}}{\sum_{s=0}^{t-1} \frac{\bar{r}_s /\sqrt{\zeta_\kappa(\bar{r}_s)}}{\bar{r}_t /\sqrt{\zeta_\kappa(\bar{r}_t)}}}\right),
    \end{align*}
     with probability at least $1 - \delta - \mathbb{P}(\bar{\ell}_{T} > L)$,  where $\bar{\ell}_{T} \coloneqq \max_{s \le T} \ell(x_s)$.
\end{theorem}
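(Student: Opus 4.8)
The plan is to mirror the Euclidean Distance-over-Gradients analysis of \citet{ivgi23a}, replacing the Euclidean law of cosines by the curvature-corrected trigonometric inequality of \cref{lemma:geodesic_triangle_inq}. First I would derive a one-step recursion: apply \cref{lemma:geodesic_triangle_inq} to the geodesic triangle with vertices $x_s$, $x_{s+1} = \exp_{x_s}(-\eta_s g_s)$ and $x_\star$, with the angle taken at $x_s$ and $\exp_{x_s}^{-1}(x_{s+1}) = -\eta_s g_s$ used to rewrite the cosine term as an inner product. Decomposing $g_s = \operatorname{grad} f(x_s) + \xi_s$ with $\mathbb{E}[\xi_s \mid x_s] = 0$ and applying geodesic convexity, $\langle \operatorname{grad} f(x_s), \exp_{x_s}^{-1}(x_\star)\rangle_{x_s} \le f(x_\star) - f(x_s)$, this yields, with $d_s := d(x_s, x_\star)$,
\[
2\eta_s\big(f(x_s) - f(x_\star)\big) \le d_s^2 - d_{s+1}^2 + \zeta_\kappa(d_s)\,\eta_s^2 \lVert g_s\rVert_{x_s}^2 + 2\eta_s\langle \xi_s, \exp_{x_s}^{-1}(x_\star)\rangle_{x_s}.
\]

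Next I would multiply through by $\sqrt{G_s}$, so that $\eta_s\sqrt{G_s} = w_s := \bar r_s/\sqrt{\zeta_\kappa(\bar r_s)}$ — precisely the weight defining $\tilde x_{t+1}$ — becomes the coefficient of the optimality gap, and sum over $s = 0, \dots, t-1$. The distance differences telescope after an Abel summation; the curvature terms are controlled by bounding $d_s \le d_0 + r_s \le d_0 + \bar r_t$ and invoking monotonicity of $\zeta_\kappa$ (so $\zeta_\kappa(d_s) \le \zeta_\kappa(d_0 + \bar r_t)$), the identity $\zeta_\kappa(\bar r_s)\eta_s^2 = \bar r_s^2/G_s$, and $G_s \ge \lVert g_s\rVert_{x_s}^2$, the residual being a $\log_+$-type factor that will be absorbed into the final $O(\cdot)$. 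This produces a deterministic bound $\sum_{s<t} w_s (f(x_s) - f(x_\star)) \le \tfrac{1}{2}\sqrt{G_{t-1}}\big[(d_0 + \bar r_t)^2 \zeta_\kappa(d_0 + \bar r_t)\cdot\mathrm{polylog} + 2M_t\big]$, where $M_t := \sum_{s<t}\eta_s\langle \xi_s, \exp_{x_s}^{-1}(x_\star)\rangle_{x_s}$. Since $\mathcal{M}$ is geodesically convex (\cref{assumption:g_convex}), iterating the two-point convexity inequality along the geodesic recursion defining $\tilde x_{t+1}$ gives $f(\tilde x_t) - f(x_\star) \le \big(\sum_{s<t} w_s (f(x_s) - f(x_\star))\big)\big/ \sum_{s<t} w_s$; since $\sum_{s<t} w_s = \big(\bar r_t/\sqrt{\zeta_\kappa(\bar r_t)}\big)\sum_{s<t} (w_s/w_t)$, dividing by it produces exactly the denominator in the statement.

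The crux is the high-probability control of $M_t$: its summands are not martingale differences because $\eta_s$ depends on $g_s$ through $G_s$, so, following \citet{ivgi23a}, I would invoke a time-uniform concentration inequality for such adapted sums — a Freedman/empirical-Bernstein-type bound stitched over geometrically growing time blocks, which is the origin of the $\log\log$ inside $\theta_{t, \delta} = \log(60\log(6t)/\delta)$. On the event $\{\bar{\ell}_T \le L\}$ one has $\lVert \xi_s\rVert_{x_s} \le 2\ell(x_s) \le 2L$ and $|\langle \xi_s, \exp_{x_s}^{-1}(x_\star)\rangle_{x_s}| \le 2L d_s \le 2L(d_0 + \bar r_t)$, so the conditional-variance proxy $\sum_{s<t} \eta_s^2 \lVert \xi_s\rVert_{x_s}^2 d_s^2$ and the maximal summand can be re-expressed, via $\eta_s^2 = \bar r_s^2/(\zeta_\kappa(\bar r_s) G_s)$, in terms of $(d_0 + \bar r_t)^2\zeta_\kappa(d_0 + \bar r_t)$, $G_{t-1}$, $L$ and a $\log_+$ factor; this turns the concentration estimate into the $\sqrt{\theta_{t, \delta} G_{t-1} + \theta^2_{t, \delta} L^2}$ contribution. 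The concentration bound fails on an event of probability at most $\delta$, and $\bar{\ell}_T \le L$ fails on an event of probability $\mathbb{P}(\bar{\ell}_T > L)$, accounting for the stated probability $1 - \delta - \mathbb{P}(\bar{\ell}_T > L)$.

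Finally I would assemble the pieces: combine the Jensen bound with the concentration estimate, collapse every occurrence of $\zeta_\kappa$ to the common argument $d_0 + \bar r_t$ using monotonicity and $\zeta_\kappa \ge 1$, bound $\lVert g_s\rVert_{x_s}^2 \le \bar{\ell}_T^2 \le L^2$ where needed, and absorb the polylogarithmic factors into $O(\cdot)$, obtaining the stated rate. I expect the main obstacle to be the martingale step — securing a time-uniform bound on $M_t$ despite $\eta_s$ being $g_s$-measurable (so the natural summands are not predictable), with a variance proxy expressible purely through the algorithm's own quantities $\bar r_t$, $G_{t-1}$ and $L$. A secondary, manifold-specific difficulty is the bookkeeping of $\zeta_\kappa$ evaluated at the three distinct arguments $d_s$, $\bar r_s$ and $d_0 + \bar r_t$, and organizing the Abel summation of the $\sqrt{G_s}$-weighted distance differences so that only $\sqrt{G_{t-1}}$ survives.
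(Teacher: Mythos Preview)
Your plan matches the paper's proof: the one-step recursion from \cref{lemma:geodesic_triangle_inq}, the $w_s$-weighted summation with Abel summation on $\sqrt{G_s}(d_s^2-d_{s+1}^2)$, the Jensen-type bound for $\tilde x_t$, and a time-uniform martingale concentration on the noise are exactly the four ingredients the paper combines.

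There is one bookkeeping slip worth correcting because it manufactures the difficulty you flag as ``the main obstacle.'' After multiplying the recursion by $\sqrt{G_s}$ and summing, the noise contribution is
\[
2\sum_{s<t}\sqrt{G_s}\,\eta_s\,\langle\xi_s,\exp_{x_s}^{-1}(x_\star)\rangle_{x_s}
\;=\;
2\sum_{s<t}w_s\,\langle\xi_s,\exp_{x_s}^{-1}(x_\star)\rangle_{x_s},
\]
not $2\sqrt{G_{t-1}}\,M_t$ with your $M_t=\sum_{s<t}\eta_s\langle\xi_s,\cdot\rangle$; the two differ because $\sqrt{G_s}$ varies with $s$. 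In the correct form the weight $w_s=\bar r_s/\sqrt{\zeta_\kappa(\bar r_s)}$ is $\mathcal F_{s-1}$-measurable (it depends only on $x_0,\dots,x_s$, hence on $g_0,\dots,g_{s-1}$) and nondecreasing (since $r\mapsto r/\sqrt{\zeta_\kappa(r)}$ is increasing), so $w_s\langle\xi_s,\cdot\rangle$ is a martingale difference times a predictable nondecreasing weight and your worry about $\eta_s$ depending on $g_s$ disappears. The paper runs its concentration lemma with $Y_s=w_{s-1}\bar d_{s-1}$, $X_s=\langle\Delta_{s-1},\exp_{x_{s-1}}^{-1}(x_\star)/\bar d_{s-1}\rangle_{x_{s-1}}$, and $\hat X_s$ the same expression with $-\operatorname{grad}f(x_{s-1})$ replacing $\Delta_{s-1}$; then $(X_s-\hat X_s)^2\le\lVert g_{s-1}\rVert_{x_{s-1}}^2$ gives $\sum_{s\le t}(X_s-\hat X_s)^2\le G_{t-1}$ directly, with no detour through $\eta_s^2$. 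There is also no $\log_+$ residual in the deterministic part: the curvature sum $\sum_{s<t}\bar r_s^2\zeta_\kappa(d_s)\lVert g_s\rVert_{x_s}^2/(\zeta_\kappa(\bar r_s)\sqrt{G_s})$ is bounded by $2(\bar r_t^2/\zeta_\kappa(\bar r_t))\,\zeta_\kappa(\bar d_t)\sqrt{G_{t-1}}$ via the elementary inequality $\sum_s(G_s-G_{s-1})/\sqrt{G_s}\le 2\sqrt{G_{t-1}}$.
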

 \textit{Proof.} See \cref{sec:rdog-non-smooth-analysis}. \qed

This theorem yields a corollary for bounded manifolds.

\begin{corollary}
   Under  \cref{assumption:g_convex}, \ref{assumption:Hadamard}, and \ref{assumption:bounded_stochastic_grads}, for any $D \ge d_0$, let $L_{D} \coloneqq \max_{x\in \mathcal{M} : d(x, x_0) \le D} \ell(x)$. Then, for all $\delta \in (0, 1)$ and for $\tau \in \arg \max_{t \le T} \sum_{s=0}^{t-1} \tfrac{\bar{r}_s /\sqrt{\zeta_\kappa(\bar{r}_s)}}{\bar{r}_t /\sqrt{\zeta_\kappa(\bar{r}_t)}}$, RDoG (\cref{alg:rdog}) satisfies the optimality gap $f(\tilde{x}_\tau) - f(x_\star)$ of
    \begin{align*}
    O\left(\frac{D\sqrt{\zeta_{\kappa}(D)} \sqrt{G_{\tau-1} \theta_{\tau, \delta} + L^2_D \theta_{\tau, \delta}^2}}{T} \log_{+}\left(\frac{D \sqrt{\zeta_{\kappa}(\epsilon)}}{\epsilon\sqrt{\zeta_{\kappa}(D)}}\right)\right),
    \end{align*}
    with probability at least $1 - \delta - \mathbb{P}(\bar{\ell}_{T} > L)$.
 \end{corollary}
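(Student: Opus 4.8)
The plan is to derive the corollary from \cref{thm:opt_gap_dog_known} by specialising the free parameter $L$ to $L_D$, replacing the random quantities $\bar r_t$ and $d_0$ by the deterministic radius $D$, and lower bounding the weighting denominator by an elementary sequence lemma. Concretely, I would condition on the event $\mathcal E\coloneqq E_0\cap\{\bar\ell_T\le L_D\}$, where $E_0$ denotes the concentration event underlying \cref{thm:opt_gap_dog_known} (applied with $L=L_D$), so that $\mathbb P(\mathcal E)\ge 1-\delta-\mathbb P(\bar\ell_T>L)$. The crucial claim is that on $\mathcal E$ the RDoG iterates never move more than geodesic distance $D$ from $x_0$; granting this, $d(x_s,x_0)\le D$ and $\|g_s\|_{x_s}\le\ell(x_s)\le L_D$ for all $s\le T$, and (taking without loss of generality $\epsilon\le D$) $\bar r_t\le\max(\epsilon,D)=D$, which is precisely what is needed to invoke \cref{thm:opt_gap_dog_known} at the index $t=\tau\le T$.

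The rest is deterministic bookkeeping on $\mathcal E$. Since $\theta_{t,\delta}\ge 1$ for every $t\ge 1$ and $\delta\in(0,1)$, the numerator square root collapses to $O\!\big(\sqrt{\theta_{t,\delta}G_{t-1}+\theta_{t,\delta}^2L_D^2}\big)$. Using $d_0\le D$, $\bar r_t\le D$, the monotonicity of $\zeta_\kappa$, and the doubling estimate $\zeta_\kappa(2D)\le 2\zeta_\kappa(D)$ (both read off from the closed form of $\zeta_\kappa$), the prefactor satisfies $(d_0+\bar r_t)\sqrt{\zeta_\kappa(d_0+\bar r_t)}=O\!\big(D\sqrt{\zeta_\kappa(D)}\big)$. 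For the denominator I would apply the Riemannian analogue of the DoG distance lemma \citep{ivgi23a} to the sequence $a_s\coloneqq\bar r_s/\sqrt{\zeta_\kappa(\bar r_s)}$: it is non-decreasing, because $d\mapsto d/\sqrt{\zeta_\kappa(d)}$ is increasing (again from the closed form), and it satisfies the bounded-ratio property $a_{s+1}\le 2a_s$, which follows from the step-size normalisation $\eta_s\|g_s\|_{x_s}\le\bar r_s/\sqrt{\zeta_\kappa(\bar r_s)}\le\bar r_s$ together with the triangle inequality, giving $d(x_{s+1},x_0)\le 2\bar r_s$; consequently, at the maximiser $\tau$ one has $\sum_{s<\tau}a_s/a_\tau=\Omega\!\big(T/\log_+(a_T/a_0)\big)$. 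Bounding $a_0=\epsilon/\sqrt{\zeta_\kappa(\epsilon)}$ from below and $a_\tau\le a_T\le D/\sqrt{\zeta_\kappa(D)}$ from above turns the logarithm into $\log_+\!\big(D\sqrt{\zeta_\kappa(\epsilon)}/(\epsilon\sqrt{\zeta_\kappa(D)})\big)$, and assembling the three estimates at $t=\tau$ yields the stated rate.

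The main obstacle is the iterate-control claim buried in the definition of $\mathcal E$: showing that, on the concentration event intersected with $\{\bar\ell_T\le L_D\}$, the trajectory stays within distance $D$ of $x_0$, so that $\bar r_t$ may legitimately be swapped for $D$ and the local gradient envelope for $L_D$. As in the Euclidean DoG analysis, this is a self-bounding argument that chains the no-large-jumps estimate $\bar r_{s+1}\le 2\bar r_s$ with the high-probability distance control implied by \cref{thm:opt_gap_dog_known} and the smallness of $\epsilon$, phrased as a stopping-time argument in which a first exit from the ball would be forced to make $\bar\ell_T$ exceed $L$. The remaining ingredients — the monotonicity and doubling properties of $\zeta_\kappa$ and of $d\mapsto d/\sqrt{\zeta_\kappa(d)}$, and the exact logarithmic factor produced by the distance lemma (where the bounded-ratio property is what prevents it from degrading with the dynamic range of $\bar r_t$) — are routine.
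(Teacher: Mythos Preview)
Your deterministic bookkeeping is essentially what the paper does: bound the numerator using $d_0+\bar r_t\le 2D$ together with monotonicity of $\zeta_\kappa$, and bound the denominator via the sequence lemma (\cref{lemma:fraction_sum}) applied to $a_s=\bar r_s/\sqrt{\zeta_\kappa(\bar r_s)}$. The paper's proof is literally one line: ``Apply \cref{lemma:fraction_sum} to the denominator term of \cref{thm:opt_gap_dog_known}.'' Note that \cref{lemma:fraction_sum} requires only that $(a_s)$ be positive and increasing; the bounded-ratio property $a_{s+1}\le 2a_s$ you invoke is not needed there.

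The genuine gap is your ``iterate-control claim.'' You propose to \emph{prove} that on $\mathcal E$ the RDoG trajectory stays in the ball of radius $D$, via a self-bounding/stopping-time argument. This cannot be carried out for plain RDoG: nothing in the concentration event $E_0$ or in $\{\bar\ell_T\le L_D\}$ forces $\bar r_t\le D$ (an iterate outside the ball can still have $\ell(x_t)\le L_D$, so your ``first exit forces $\bar\ell_T>L$'' step does not go through), and indeed the paper introduces T-RDoG precisely because plain RDoG trajectories may be unbounded (\cref{sec:rdog-stability-analysis}). The corollary is instead meant to be read in the bounded-domain setting, as the surrounding text makes explicit (``This theorem yields a corollary for bounded manifolds''; ``a useful result when the manifold is bounded but its exact diameter is unknown''). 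Under that reading, $d(x_s,x_0)\le D$ holds trivially for all $s$, so $\bar r_t\le D$ and $\bar\ell_T\le L_D$ are automatic, $\mathbb P(\bar\ell_T>L_D)=0$, and no self-bounding argument is required. Once you drop the iterate-control detour, your proof coincides with the paper's.
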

 \textit{Proof.}  See \cref{sec:rdog-non-smooth-analysis}. \qed

Unlike prior work on bounded domains \citep[e.g.,][]{zhang16, wang2021}, our approach adapts without knowledge of the domain width to set the learning rate, achieving optimality up to a logarithmic factor.

\begin{remark}
We enhance this result to a high probability convergence guarantee of $O(1/T)$ under uniformly averaged iterates, following \cref{assumption:bounded_stochastic_lsmooth} in \cref{sec:rdog-uniform-averaging}.
\end{remark}

\begin{remark}
In \cref{sec:rdog-stability-analysis}, we ensure bounded iterates with high probability by slightly reducing step sizes.
\end{remark}

\begin{remark}
Omitting the geometric curvature term $\zeta_\kappa(\cdot)$ from RDoG's step sizes and weighted averaging results in an additional cost of $O(\sqrt{\zeta_\kappa(D)})$ in the optimality gap. Further details are available in \cref{sec:theoretical-results-RDoG-no-curvature}.
\end{remark}

\subsection{Normalized Riemannian Stochastic Gradient Descent}
We consider extending standard Euclidean normalized gradient descent \citep{shor2012minimization,levy2016power,konnov2003convergence,hazan2015beyond} to Riemannian manifolds, providing scale-free adaptability, with updates of the form
\begin{align*}
    x_{t+1} = \exp_{x_t}\left(-\eta_t \frac{\operatorname{grad} f(x_t)}{\lVert \operatorname{grad} f(x_t) \rVert_{x_t}}\right),\quad x_0\in\mathcal{M}.
\end{align*}
We term this algorithm \emph{Normalized Riemannian Stochastic Gradient Descent} (NRSGD). In the deterministic Euclidean setting, normalized gradient descent automatically adjusts to the Lipschitz constant in non-smooth optimization \citep[Theorem 3.2.2]{Nesterov18} and the smoothness constant(s) in smooth optimization \citep[Corollary 2.2]{Grimmer2019}. This adaptability extends to NRSGD, as we will demonstrate.

\begin{theorem}
\label{thm:ngd_oracle}
Under noiseless conditions and Assumption \ref{assumption:g_convex}, and \ref{assumption:Hadamard}, NRSGD with a constant step size $\eta_t = \eta>0$, for a geodesically $L$-Lipschitz function, satisfies
    \begin{align*}
        \min_{t \le T}\left[f(x_t) - f(x_\star) \right]\le L \left[\frac{\bar{d}_T^2}{2 \eta T} + \frac{\eta}{2} \zeta_\kappa(\bar{d}_T)\right].
    \end{align*}
While for a geodesically  $S$-smooth function, we have
    \begin{align*}
        \min_{t \le T}\left[f(x_t) - f(x_\star) \right]\le 2 S \left[\frac{\bar{d}_T^2}{2 \eta T} + \frac{\eta}{2} \zeta_\kappa(\bar{d}_T) \right]^{2}.
    \end{align*}
    Minimizing these give respective convergence rates $O\left(L\bar{d}_T\sqrt{\tfrac{\zeta_\kappa(\bar{d}_T)}{T}}
\right)$ and  $O\left(\tfrac{2 S \bar{d}_T^2 \zeta_\kappa(\bar{d}_T)}{T} \right)$ with corresponding ``ideal step size''
    \begin{equation*}
        \eta_\star = \frac{\bar{d}_T}{{\sqrt{T \zeta_{\kappa}(\bar{d}_T)}}}.
    \end{equation*}
\end{theorem}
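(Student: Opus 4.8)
The plan is to adapt the classical ``decrease-of-distance-to-optimum'' argument for Riemannian (sub)gradient methods, exploiting the fact that normalization makes every step have geodesic length exactly $\eta$, while the cosine term in the trigonometric bound absorbs a reciprocal gradient-norm factor that is controlled differently in the Lipschitz and smooth regimes.

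First I would fix $t \le T-1$ and apply \cref{lemma:geodesic_triangle_inq} to the geodesic triangle with vertices $x_t,x_{t+1},x_\star$, taking the angle $A$ at $x_t$ between the update direction $\exp_{x_t}^{-1}(x_{t+1}) = -\eta\,\operatorname{grad}f(x_t)/\lVert\operatorname{grad}f(x_t)\rVert_{x_t}$ and the direction $\exp_{x_t}^{-1}(x_\star)$ to the minimizer, so that $b=\eta$, $c=d_t\coloneqq d(x_t,x_\star)$ and $a=d_{t+1}$. This yields $d_{t+1}^2\le \zeta_\kappa(d_t)\eta^2 + d_t^2 - 2bc\cos A$, where $bc\cos A = \langle -\eta\,\operatorname{grad}f(x_t)/\lVert\operatorname{grad}f(x_t)\rVert_{x_t},\,\exp_{x_t}^{-1}(x_\star)\rangle_{x_t}$. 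Invoking geodesic convexity (\cref{assumption:g_convex}), $\langle\operatorname{grad}f(x_t),\exp_{x_t}^{-1}(x_\star)\rangle_{x_t}\le f(x_\star)-f(x_t)$, and using monotonicity of $\zeta_\kappa$ to replace $\zeta_\kappa(d_t)$ by $\zeta_\kappa(\bar d_T)$, I obtain the per-step inequality
\[
\frac{2\eta}{\lVert\operatorname{grad}f(x_t)\rVert_{x_t}}\bigl(f(x_t)-f(x_\star)\bigr)\le d_t^2 - d_{t+1}^2 + \eta^2\zeta_\kappa(\bar d_T).
\]

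For the Lipschitz case I would bound $\lVert\operatorname{grad}f(x_t)\rVert_{x_t}\le L$, telescope over $t=0,\dots,T-1$ (dropping $-d_T^2\le 0$ and using $d_0\le\bar d_T$), divide by $2\eta T/L$, and use $\min_{t\le T}[f(x_t)-f(x_\star)]\le \tfrac1T\sum_t[f(x_t)-f(x_\star)]$ to get the first bound. For the smooth case the extra ingredient is the intrinsic ``gradient-norm / suboptimality'' inequality $\lVert\operatorname{grad}f(x_t)\rVert_{x_t}^2\le 2S\bigl(f(x_t)-f(x_\star)\bigr)$, obtained by evaluating the $S$-smoothness quadratic upper bound at $\exp_{x_t}(-S^{-1}\operatorname{grad}f(x_t))$ and using $f(x_\star)\le f(\exp_{x_t}(-S^{-1}\operatorname{grad}f(x_t)))$. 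Substituting $\lVert\operatorname{grad}f(x_t)\rVert_{x_t}\le\sqrt{2S}\sqrt{f(x_t)-f(x_\star)}$ turns the left-hand side of the per-step inequality into $\tfrac{2\eta}{\sqrt{2S}}\sqrt{f(x_t)-f(x_\star)}$; telescoping, averaging, and squaring (via $\min_t\sqrt{a_t}=\sqrt{\min_t a_t}$) gives the second bound. Finally, minimizing $\tfrac{\bar d_T^2}{2\eta T}+\tfrac{\eta}{2}\zeta_\kappa(\bar d_T)$ over $\eta>0$ by the AM--GM equality condition yields $\eta_\star = \bar d_T/\sqrt{T\zeta_\kappa(\bar d_T)}$, at which the bracket equals $\bar d_T\sqrt{\zeta_\kappa(\bar d_T)/T}$, producing the stated rates $O(L\bar d_T\sqrt{\zeta_\kappa(\bar d_T)/T})$ and $O(2S\bar d_T^2\zeta_\kappa(\bar d_T)/T)$.

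The telescoping and the AM--GM optimization are routine; the points needing care are (i) getting the side-to-$\zeta_\kappa$ correspondence in \cref{lemma:geodesic_triangle_inq} correct so that the curvature factor multiplies $\eta^2$ rather than $d_t^2$, and (ii) justifying the smooth-case gradient bound intrinsically — checking that the auxiliary point $\exp_{x_t}(-S^{-1}\operatorname{grad}f(x_t))$ is admissible under \cref{assumption:g_convex} and that the $S$-smoothness quadratic upper bound applies there. Together with ensuring $\bar d_T<\infty$ so the telescoped inequality is meaningful, this is the only genuinely manifold-specific obstacle; the rest mirrors the Euclidean analysis of normalized gradient descent.
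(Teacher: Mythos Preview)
Your proposal is correct and follows essentially the same approach as the paper: apply the trigonometric distance bound (\cref{lemma:geodesic_triangle_inq}, packaged in the paper as \cref{cor:law_of_cosines}) with the normalized update so that the quadratic term is $\eta^2\zeta_\kappa(d_t)$, combine with geodesic convexity, bound $\lVert\operatorname{grad}f(x_t)\rVert_{x_t}$ by $L$ in the Lipschitz case and by $\sqrt{2S(f(x_t)-f(x_\star))}$ (the paper's \cref{lemma:lsmooth_bound}) in the smooth case, telescope, average, and optimize over $\eta$ via AM--GM. The only cosmetic difference is that the paper first averages the normalized inner product and then invokes convexity, whereas you insert convexity per step before summing; the resulting inequalities and rates are identical.
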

\textit{Proof.} See \cref{thm:ngd_oracle_proof}. \qed

\textbf{Learning-rate-free schedule for NRSGD.} Normalization brings adaptivity to Lipschitz and smoothness settings, using a common \emph{universal} ``ideal step size''. However, like RSGD, this ``ideal step size'' relies on the intractable maximum distance quantity $\bar{d}_T$. Our solution is to substitute this with our proxy $\bar{r}_t$, resulting in our second algorithm, \emph{Normalized Riemannian Distance over Gradients} (NRDoG) algorithm, summarized in \cref{alg:nrdog} in \cref{sec:nrdog}.

\subsection{Riemannian Distance Over Weighted Gradients}

\textbf{Weighted learning-rate-free schedule for RSGD.} We introduce a third algorithm \emph{Riemannian Distance over Weighted Gradients} (RDoWG, \cref{alg:rdowg}) that extends the recently proposed Distance over Weighted Gradients (DoWG) \citep{khaled2023} to the Riemannian setting. Like RDoG and NRDoG, RDoWG estimates the intractable maximum distance quantity $\bar{d}_T$ by utilizing the maximum distance deviation from the initial point, $\bar{r}_t$. However, in RDoWG, the normalization is based on the square root of the \emph{weighted} gradient sum, $\smash{v_t = \sum_{s=0}^{t} \bar{r}_s^2 ||g_s||_{x_s}^2}$, rather than simply the square root of the gradient sum $\smash{G_t = \sum_{s=0}^{t} ||g_s||_{x_s}^2}$.

The motivation for this normalization choice, as discussed in \citet{khaled2023}, lies in its improved adaptation to the problem geometry, especially in regions far from the initialization at $x_0$. Specifically, as the distances $\{\bar{r}_t\}_{t\geq 0}$ monotonically increase, later gradients receive greater weights than earlier gradients. This choice aligns with the practice in previous Riemannian optimization schemes such as RADAM \cite{becigneul2018riemannian}, which also utilizes weighted gradient sums. However, unlike RADAM, where weights are determined by fixed user-selected hyperparameters, RDoWG adaptively estimates these weights.

\begin{algorithm}[!htbp]
   \caption{RDoWG}
   \label{alg:rdowg}
\begin{algorithmic}
   \STATE {\bfseries Input:} initial point $x_0$, initial estimate $\epsilon >0$, $v_{-1} =0$.
   \FOR{$t=0$ {\bfseries to} $T-1$}
    \STATE $g_{t} = \mathcal{G}(x_t)$
    \STATE $\bar{r}_t = \max\left(\epsilon, \max_{s\le t} d(x_{s}, x_{0})\right)$
    \STATE $v_t = v_{t-1} + \bar{r}_t^2\|g_{t}\|_{x_{t}}^{2}$
    \STATE $\eta_t = \frac{\bar{r}_t}{\sqrt{\zeta_{\kappa}(\bar{r}_t)v_t}}$
    \STATE $x_{t+1} = \exp_{x_{t}} \left( - \eta_t g_{t} \right)$
   \ENDFOR
\end{algorithmic}
\end{algorithm}

\textbf{Optimality gap bounds assuming bounded iterates}. We bound the error of the weighted average sequence
\begin{align*}
        \tilde{x}_{t+1} = \operatorname{exp}_{\tilde{x}_{t}}\left(\frac{\bar{r}_t^2 / \zeta_\kappa(\bar{r}_t)}{\sum_{s=0}^{t} \bar{r}_s^2 / \zeta_\kappa(\bar{r}_s)} \operatorname{exp}_{\tilde{x}_{t}}^{-1}\left( x_{t} \right) \right), \  \tilde{x}_{1} = x_{0}.
\end{align*}
We initiate our analysis in the non-smooth setting before transitioning to the smooth setting. Our initial result, assuming bounded iterates, provides the optimality gap achieved by RDoWG.
\begin{theorem}
  \label{thm:dowg_nonsmooth_curvature}
  Suppose that  \cref{assumption:g_convex}, \ref{assumption:Hadamard}, and \ref{assumption:bounded_stochastic_grads} hold. Then, for all $\delta \in (0, 1)$ and $L>0$, and for all $t \le T$, RDoWG (\cref{alg:rdowg}) satisfies the optimality gap $f(\tilde{x}_t) - f(x_\star)$ of
    \begin{align*} 
            O\left(\frac{(d_0 + \bar{r}_t)\sqrt{\zeta_{\kappa}(d_0 + \bar{r}_t)} \sqrt{G_{t-1} + \theta_{t, \delta} G_{t-1} + \theta^{2}_{t, \delta} L^2}}{\sum_{s=0}^{t-1} \frac{ \bar{r}_s^2 / \zeta_\kappa(\bar{r}_s) }{\bar{r}_t^2 / \zeta_\kappa(\bar{r}_t)}} \right),
    \end{align*}
    with probability at least $1 - \delta - \mathbb{P}(\bar{\ell}_{T} > L)$.
\end{theorem}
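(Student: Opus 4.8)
## Proof Proposal for Theorem \ref{thm:dowg_nonsmooth_curvature}

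The plan is to mirror the analysis of RDoG (Theorem \ref{thm:opt_gap_dog_known}) but track the weighted gradient sum $v_t = \sum_{s=0}^t \bar r_s^2 \lVert g_s\rVert_{x_s}^2$ in place of $G_t$. First I would set up the one-step geodesic inequality: applying \cref{lemma:geodesic_triangle_inq} to the geodesic triangle with vertices $x_t$, $x_{t+1} = \exp_{x_t}(-\eta_t g_t)$, and $x_\star$, I obtain
\begin{align*}
    d_{t+1}^2 \le d_t^2 - 2\eta_t \langle g_t, -\exp_{x_t}^{-1}(x_\star)\rangle_{x_t} + \eta_t^2 \zeta_\kappa(d_t)\lVert g_t\rVert_{x_t}^2.
\end{align*}
Since the iterates are assumed bounded (so $d_t \le d_0 + \bar r_t$ and $\zeta_\kappa(d_t) \le \zeta_\kappa(d_0+\bar r_t)$), and writing $\langle g_t, -\exp_{x_t}^{-1}(x_\star)\rangle_{x_t} = \langle \operatorname{grad}f(x_t), -\exp_{x_t}^{-1}(x_\star)\rangle_{x_t} + \langle \xi_t, -\exp_{x_t}^{-1}(x_\star)\rangle_{x_t}$ where $\xi_t := g_t - \operatorname{grad}f(x_t)$ is the martingale-difference noise, geodesic convexity gives $\langle \operatorname{grad}f(x_t), -\exp_{x_t}^{-1}(x_\star)\rangle_{x_t} \ge f(x_t) - f(x_\star)$. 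Rearranging and dividing by $\eta_t = \bar r_t/\sqrt{\zeta_\kappa(\bar r_t)v_t}$ yields, after multiplying through by the weight $w_t := \bar r_t^2/\zeta_\kappa(\bar r_t)$ (which is exactly $\eta_t \cdot \bar r_t/\sqrt{\zeta_\kappa(\bar r_t)} \cdot \sqrt{v_t}\,/\,\text{something}$ — the point being $w_t/\eta_t = \bar r_t \sqrt{v_t}/\sqrt{\zeta_\kappa(\bar r_t)}\cdot \zeta_\kappa(\bar r_t)^{-1/2}$, tuned so the telescoping works), a per-step bound of the form
\begin{align*}
    w_t(f(x_t) - f(x_\star)) \le \tfrac{w_t}{2\eta_t}(d_t^2 - d_{t+1}^2) + \tfrac{\eta_t w_t}{2}\zeta_\kappa(d_0+\bar r_t)\lVert g_t\rVert_{x_t}^2 + w_t\langle \xi_t, \exp_{x_t}^{-1}(x_\star)\rangle_{x_t}.
\end{align*}

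Next I would sum over $s = 0,\dots,t-1$ and control the three resulting terms. The key deterministic lemma, analogous to the DoWG weighted-sum telescoping bound in \citet{khaled2023}, is that $\sum_{s=0}^{t-1}\eta_s w_s \zeta_\kappa(\bar r_s)^{-1}\cdot(\text{stuff})$ telescopes: with $\eta_s w_s = \bar r_s^3/(\zeta_\kappa(\bar r_s)^{3/2}\sqrt{v_s})$ and the monotonicity of $\bar r_s$ and $v_s$, one shows $\sum_{s=0}^{t-1}\tfrac{\bar r_s^2\lVert g_s\rVert^2_{x_s}}{\sqrt{v_s}}\cdot\tfrac{\bar r_s/\sqrt{\zeta_\kappa(\bar r_s)}}{\sqrt{\zeta_\kappa(\bar r_s)}} \lesssim \bar r_t/\sqrt{\zeta_\kappa(\bar r_t)}\cdot\sqrt{v_t}$, using the standard inequality $\sum_k a_k/\sqrt{\sum_{j\le k}a_j} \le 2\sqrt{\sum_k a_k}$. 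Combined with $\sqrt{v_t}\le \bar r_t\sqrt{G_t}$ this converts the gradient-noise-free part of the bound into the numerator $(d_0+\bar r_t)\sqrt{\zeta_\kappa(d_0+\bar r_t)}\sqrt{G_{t-1}}$. For the telescoping distance term, I use Abel summation together with the fact that $w_s/\eta_s = \bar r_s\sqrt{v_s}/\sqrt{\zeta_\kappa(\bar r_s)}$ is nondecreasing and $d_s^2 \le (d_0+\bar r_s)^2 \lesssim (d_0+\bar r_t)^2$, so $\sum_s \tfrac{w_s}{2\eta_s}(d_s^2 - d_{s+1}^2) \lesssim (d_0+\bar r_t)^2 \cdot \bar r_t\sqrt{v_t}/\sqrt{\zeta_\kappa(\bar r_t)}$, matching the same numerator scale after dividing by $\sum_{s=0}^{t-1}w_s$.

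The martingale term $\sum_{s=0}^{t-1}w_s\langle\xi_s,\exp_{x_s}^{-1}(x_\star)\rangle_{x_s}$ is where the probabilistic content lives and is the main obstacle. I would apply a time-uniform Freedman-type concentration inequality (the same one underpinning \citet{ivgi23a}'s analysis and Theorem \ref{thm:opt_gap_dog_known}), noting the summands are martingale differences bounded by $w_s \ell(x_s)(d_0+\bar r_s) \le w_s L (d_0+\bar r_s)$ on the event $\{\bar\ell_T \le L\}$, with conditional variance $\sum_s w_s^2\lVert\xi_s\rVert^2_{x_s}(d_0+\bar r_s)^2 \lesssim (d_0+\bar r_t)^2\sum_s w_s^2\lVert g_s\rVert^2_{x_s}$. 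The delicate point — exactly as in the Euclidean DoG/DoWG arguments — is that $\bar r_s$, and hence $w_s$, is itself random and adapted, so one cannot simply pull it out of the variance; the resolution is to use a weighted/self-bounding Freedman inequality that allows the variance proxy to be data-dependent, producing the $\theta_{t,\delta}G_{t-1}$ and $\theta_{t,\delta}^2 L^2$ terms under the square root after absorbing the $\bar r$-weights into the normalization $\sqrt{v_t}$. Once this concentration step is in place, collecting all three terms over the common normalizer $\sum_{s=0}^{t-1} w_s = \sum_{s=0}^{t-1}\bar r_s^2/\zeta_\kappa(\bar r_s)$ and dividing (and invoking the geodesic-convexity/Jensen property of the weighted average $\tilde x_t$ exactly as in the RDoG proof) gives the stated bound, with the failure probability $\delta$ from Freedman and the extra $\mathbb{P}(\bar\ell_T > L)$ from removing the boundedness-of-gradients event. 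I would largely defer the routine manipulations to the appendix (\cref{sec:rdowg-analysis}) and present only the weighted-telescoping lemma and the concentration step in detail.
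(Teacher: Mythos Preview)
Your overall architecture is the paper's: apply the geodesic Jensen inequality (\cref{lemma:weighted_average}) with weights $w_s=\bar r_s^2/\zeta_\kappa(\bar r_s)$, split the numerator into a deterministic weighted-regret piece and a martingale noise piece, bound the first via \cref{cor:law_of_cosines} plus the standard $\sum a_k/\sqrt{\sum_{j\le k}a_j}\le 2\sqrt{\sum a_k}$ inequality, and bound the second via the time-uniform martingale lemma (\cref{lemma:martingale_bound}). The noise step you sketch is exactly \cref{lemma:dowg_nonsmooth_prob_curvature} in the paper, so that part is fine.

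The gap is in your treatment of the telescoping distance term. You bound $d_s^2\le(d_0+\bar r_t)^2$ uniformly and pull this out of the Abel sum, which yields a numerator of order $(d_0+\bar r_t)^2\sqrt{\zeta_\kappa(\bar r_t)}\sqrt{G_{t-1}}$ after normalizing---one extra factor of $(d_0+\bar r_t)$ compared to the claimed bound. This matters precisely when $\bar r_t\ll d_0$ (e.g., early iterations), and since the theorem is stated for \emph{all} $t\le T$, you have not proved it. The paper's sharper step, inherited from the RDoG argument (\cref{lemma:weighted_regret_bound}), is to keep the Abel summation exact,
\[
\sum_{s=0}^{t-1}\sqrt{v_s}\,(d_s^2-d_{s+1}^2)=\sqrt{v_{t-1}}\bigl(\bar d_t^2-d_t^2\bigr),
\]
and then use the triangle-inequality trick: for $k\in\arg\max_{s\le t}d_s$,
\[
\bar d_t^2-d_t^2=(d_k-d_t)(d_k+d_t)\le d(x_k,x_t)\cdot 2\bar d_t\le (\bar r_k+\bar r_t)\cdot 2\bar d_t\le 4\bar r_t\bar d_t.
\]
This replaces your $(d_0+\bar r_t)^2$ by $\bar r_t(d_0+\bar r_t)$, and after dividing by $\sum_s w_s=(\bar r_t^2/\zeta_\kappa(\bar r_t))\sum_s\frac{\bar r_s^2/\zeta_\kappa(\bar r_s)}{\bar r_t^2/\zeta_\kappa(\bar r_t)}$ the $\bar r_t$ cancels and you land on the stated $(d_0+\bar r_t)\sqrt{\zeta_\kappa(d_0+\bar r_t)}\sqrt{G_{t-1}}$ numerator. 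Without this trick your proposal does not close.
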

\textit{Proof.} See Appendix \ref{sec:rdowg-non-smooth-analysis} \qed

We obtain a result on bounded domains which is optimal up to a logarithmic factor.
\begin{corollary}
    Suppose that \cref{assumption:g_convex}, \ref{assumption:Hadamard}, and \ref{assumption:bounded_stochastic_grads} hold. In addition, for any $D \ge d_0$, let $L_{D} \coloneqq \max_{x\in \mathcal{M} : d(x, x_0) \le D} \ell(x)$. Then, for all $\delta \in (0, 1)$ and for $\tau \in \arg \max_{t \le T} \sum_{s=0}^{t-1} \tfrac{\bar{r}_s^2/ \zeta_\kappa(\bar{r}_s)}{\bar{r}_t^2/ \zeta_\kappa(\bar{r}_t)}$, RDoWG (\cref{alg:rdowg}) satisfies the optimality gap $f(\tilde{x}_\tau) - f(x_\star)$ of
    \begin{align*}
        O\left( \frac{D\sqrt{\zeta_{\kappa}(D)} \sqrt{G_{\tau-1} \theta_{\tau, \delta} + L^2_D \theta_{\tau, \delta}^2}}{T} \log_{+}\left(\frac{D\sqrt{\zeta_{\kappa}(\epsilon)}}{\epsilon \sqrt{\zeta_{\kappa}(D)}}\right)\right),
    \end{align*}
    with probability at least $1 - \delta - \mathbb{P}(\bar{\ell}_{T} > L)$.
\end{corollary}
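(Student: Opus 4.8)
The plan is to deduce the corollary from \cref{thm:dowg_nonsmooth_curvature} by evaluating its bound at $t=\tau$ and then (i) collapsing the curvature prefactors using $d_0\le D$ and the boundedness of the iterates, and (ii) lower bounding the weighted-ratio denominator, which by the defining property of $\tau$ is exactly $\max_{t\le T}\sum_{s=0}^{t-1}(\bar{r}_s^2/\zeta_\kappa(\bar{r}_s))/(\bar{r}_t^2/\zeta_\kappa(\bar{r}_t))$. All of this is carried out on the event $\mathcal{E}\coloneqq\{\bar{r}_T\le D\}$ intersected with the high-probability event of \cref{thm:dowg_nonsmooth_curvature} taken with $L=L_D$; since on $\mathcal{E}$ every iterate $x_0,\dots,x_T$ lies in $\{x:d(x,x_0)\le D\}$ and hence $\bar{\ell}_T\le L_D$ automatically, the probability of the combined event is inherited from \cref{thm:dowg_nonsmooth_curvature}, giving the failure probability stated in the corollary.

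On $\mathcal{E}$ we have $\bar{r}_\tau\le\bar{r}_T\le D$ and, since $D\ge d_0$, also $d_0+\bar{r}_\tau\le 2D$. Using that $\zeta_\kappa$ is non-decreasing and satisfies the doubling bound $\zeta_\kappa(2d)\le 2\zeta_\kappa(d)$ (elementary properties of the geometric curvature function: for $\kappa<0$ the map $x\mapsto x/\tanh x$ is increasing and $\tanh$ is increasing, and both are trivial for $\kappa\ge0$), we obtain $\zeta_\kappa(d_0+\bar{r}_\tau)\le\zeta_\kappa(2D)\le 2\zeta_\kappa(D)$, so $(d_0+\bar{r}_\tau)\sqrt{\zeta_\kappa(d_0+\bar{r}_\tau)}=O(D\sqrt{\zeta_\kappa(D)})$. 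Since $\theta_{\tau,\delta}\ge 1$ for every $\delta\in(0,1)$, we also have $G_{\tau-1}+\theta_{\tau,\delta}G_{\tau-1}+\theta_{\tau,\delta}^2L_D^2\le 2(\theta_{\tau,\delta}G_{\tau-1}+\theta_{\tau,\delta}^2L_D^2)$. Feeding these into \cref{thm:dowg_nonsmooth_curvature} (with $L=L_D$, and $\bar{\ell}_T\le L_D$ on $\mathcal{E}$) reduces the claim to
\[ \max_{t \le T} \sum_{s=0}^{t-1}\frac{\bar{r}_s^2/\zeta_\kappa(\bar{r}_s)}{\bar{r}_t^2/\zeta_\kappa(\bar{r}_t)} = \Omega\!\left(\frac{T}{\log_{+}\!\left(\frac{D\sqrt{\zeta_\kappa(\epsilon)}}{\epsilon\sqrt{\zeta_\kappa(D)}}\right)}\right). \]

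To prove this, set $\phi_s\coloneqq\bar{r}_s^2/\zeta_\kappa(\bar{r}_s)$. The map $d\mapsto d^2/\zeta_\kappa(d)$ is non-decreasing (it equals $d\tanh(\sqrt{|\kappa|}d)/\sqrt{|\kappa|}$ for $\kappa<0$, a product of non-negative increasing functions, and $d^2$ for $\kappa\ge0$), so since $(\bar{r}_s)_{s\ge0}$ is non-decreasing with $\bar{r}_0=\epsilon$, the sequence $(\phi_s)$ is non-decreasing, $\phi_0=\epsilon^2/\zeta_\kappa(\epsilon)$, and $\phi_{T-1}\le D^2/\zeta_\kappa(D)$ on $\mathcal{E}$. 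Partition $\{0,\dots,T-1\}$ into the dyadic blocks $I_j\coloneqq\{s:2^j\phi_0\le\phi_s<2^{j+1}\phi_0\}$; there are at most $1+\log_2(\phi_{T-1}/\phi_0)$ nonempty blocks, so one of them, $I_{j^\star}$, contains at least $T/(1+\log_2(\phi_{T-1}/\phi_0))$ indices, and it is a contiguous run of integers since $(\phi_s)$ is non-decreasing. Taking $t^\star=\max I_{j^\star}$ gives $\phi_{t^\star}<2^{j^\star+1}\phi_0$ while $\sum_{s=0}^{t^\star-1}\phi_s\ge(|I_{j^\star}|-1)2^{j^\star}\phi_0$, whence
\[ \sum_{s=0}^{t^\star-1}\frac{\phi_s}{\phi_{t^\star}} \ge \frac{|I_{j^\star}|-1}{2} \ge \frac{1}{2}\left(\frac{T}{1+\log_2(\phi_{T-1}/\phi_0)}-1\right) = \Omega\!\left(\frac{T}{\log_{+}(\phi_{T-1}/\phi_0)}\right). \]
Since $\phi_{T-1}/\phi_0\le D^2\zeta_\kappa(\epsilon)/(\epsilon^2\zeta_\kappa(D))$, and since $d\mapsto d^2/\zeta_\kappa(d)$ non-decreasing together with $\epsilon=\bar{r}_0\le D$ implies $D\sqrt{\zeta_\kappa(\epsilon)}/(\epsilon\sqrt{\zeta_\kappa(D)})\ge 1$, we get $\log_{+}(\phi_{T-1}/\phi_0)\le 2\log_{+}(D\sqrt{\zeta_\kappa(\epsilon)}/(\epsilon\sqrt{\zeta_\kappa(D)}))$, which establishes the displayed lower bound and hence, combined with the numerator estimates, the stated $O(1/T)$ rate.

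The main obstacle is the dyadic pigeonhole step — extracting a single index at which the weighted ratio is $\Omega(T/\log_{+}(\cdot))$ — together with verifying the two analytic facts it needs: the monotonicity of $d\mapsto d^2/\zeta_\kappa(d)$ and the doubling bound for $\zeta_\kappa$; the rest is routine constant bookkeeping. This argument is the exact analogue of the corresponding RDoG corollary, with the weights $\bar{r}_s/\sqrt{\zeta_\kappa(\bar{r}_s)}$ replaced throughout by $\bar{r}_s^2/\zeta_\kappa(\bar{r}_s)$.
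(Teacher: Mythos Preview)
Your proof is correct and follows essentially the same route as the paper: instantiate \cref{thm:dowg_nonsmooth_curvature} at $t=\tau$, bound the numerator using $d_0+\bar{r}_\tau\le 2D$ and monotonicity/doubling of $\zeta_\kappa$, and lower bound the denominator. The only difference is that the paper dispatches the denominator in one line by citing \cref{lemma:fraction_sum} applied to the non-decreasing sequence $a_s=\bar{r}_s^2/\zeta_\kappa(\bar{r}_s)$, whereas you rederive an equivalent $\Omega(T/\log_+(\cdot))$ bound from scratch via a dyadic pigeonhole; both arguments are standard and yield the same conclusion.
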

\textit{Proof.} See Appendix \ref{sec:rdowg-non-smooth-analysis} \qed


{\setlength{\subfigcapskip}{-1.5mm}
\begin{figure*}[htb]
  \centering
  \subfigure[Initial distance sensitivity.]{\includegraphics[width=0.325\textwidth]{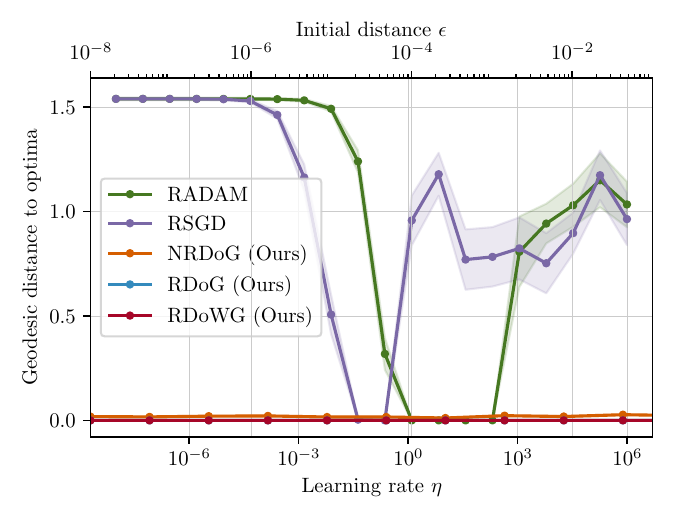}}
  \subfigure[Regret trace plot RSGD.]{\includegraphics[width=0.325\textwidth]{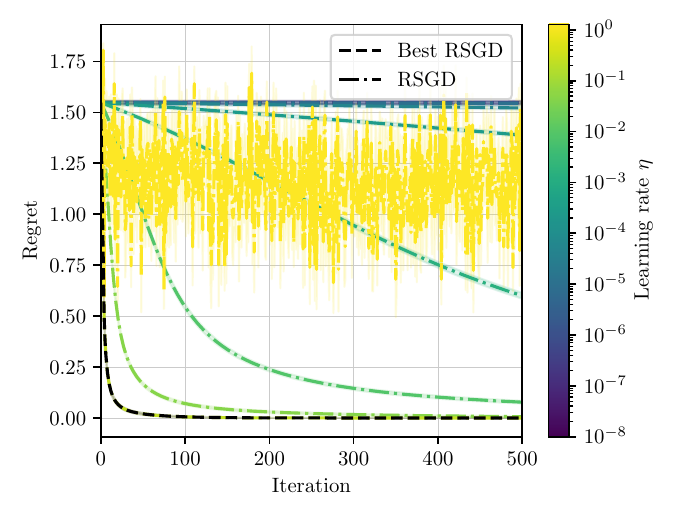}}
  \subfigure[Regret trace plot RDoG.]{\includegraphics[width=0.325\textwidth]{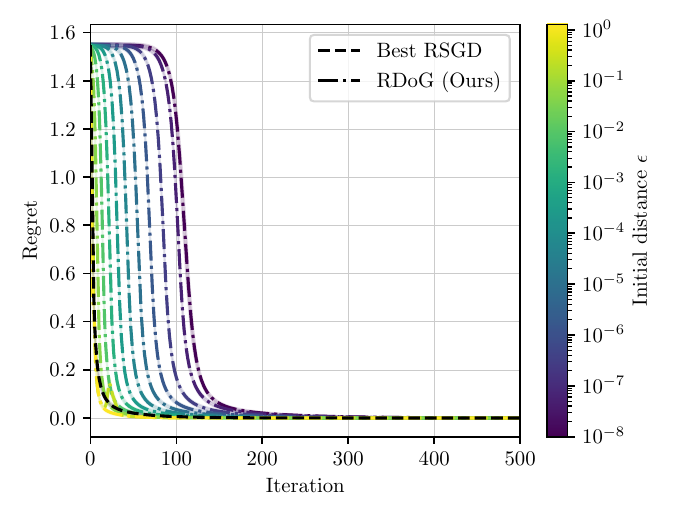}}
  \vspace{-3mm}
  \caption{\textbf{Results for Rayleigh quotient maximization on the sphere.} (a) Geodesic distance between the final iterate and the numerical solution after $T=5000$ iterations as a function of the learning rate for RADAM and RSGD and as a function of the initial distance estimate for RDoG, RDoWG, and NRDoG. (b) Shows the regret (the function value of each iterate minus the function value of the numerical solution) for RSGD for a selection of learning rates. (c) Shows the regret for RDoG for a selection of different initial distance estimates. Results are averaged over ten replications.}

  \label{fig:rayleigh}
  \vspace{-2mm}
\end{figure*}
}

We proceed with analyzing the smooth setting. Our initial result yields an optimality gap for bounded iterates. It is worth noting that RDoG achieves similar results via uniform averaging, albeit with an additional cost (see Appendix \ref{sec:rdog-uniform-averaging} for further details).
\begin{theorem}
    \label{thm:dowg_smooth_known_curvature}
    Suppose that \cref{assumption:g_convex}, \ref{assumption:Hadamard}, and \ref{assumption:bounded_stochastic_lsmooth} hold and write $\bar{s}_{T} \coloneqq \max_{t \le T} s(x_t)$. Then, for all $\delta \in (0, 1)$ and $S>0$, and for all $t \le T$, RDoWG (\cref{alg:rdowg}) satisfies the optimality gap $f(\tilde{x}_t) - f(x_\star)$ of
    \begin{align*}
        O \left(\frac{ (d_0 + \bar{r}_t)^2 \zeta_{\kappa}(d_0 + \bar{r}_t) (S \theta_{t, \delta}^2) }{\sum_{s=0}^{t-1} \frac{ \bar{r}_s^2 / \zeta_\kappa(\bar{r}_s)}{\bar{r}_t^2 / \zeta_\kappa(\bar{r}_t)} } \right),
    \end{align*}
    with probability at least $1 - \delta - \mathbb{P}(\bar{s}_{T} > S)$.
\end{theorem}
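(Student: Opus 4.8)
The plan is to transport the DoG/DoWG analysis of \citet{ivgi23a} and \citet{khaled2023} to the Riemannian smooth setting, replacing the Euclidean law of cosines by the trigonometric comparison of \cref{lemma:geodesic_triangle_inq} and combining it with a self-bounding argument for smooth functions together with a time-uniform martingale concentration bound. First I would derive a one-step distance recursion: applying \cref{lemma:geodesic_triangle_inq} to the geodesic triangle $(x_t, x_{t+1}, x_\star)$ with the angle at $x_t$, where $x_{t+1} = \exp_{x_t}(-\eta_t g_t)$, gives
\begin{align*}
 d_{t+1}^2 \le d_t^2 + \zeta_\kappa(d_t)\,\eta_t^2 \|g_t\|_{x_t}^2 + 2\eta_t \langle g_t, \exp_{x_t}^{-1}(x_\star)\rangle_{x_t}.
\end{align*}
Since $d(x_0,x_s) \le \bar r_s$, the triangle inequality gives $d_t \le d_0 + \bar r_t$, and monotonicity of $\zeta_\kappa$ lets us replace $\zeta_\kappa(d_t)$ by $\zeta_\kappa(d_0 + \bar r_t)$. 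Decomposing $\langle g_t, \exp_{x_t}^{-1}(x_\star)\rangle_{x_t} = \langle \operatorname{grad} f(x_t), \exp_{x_t}^{-1}(x_\star)\rangle_{x_t} + \xi_t$ with $\xi_t$ a martingale-difference term, and bounding the first piece by $f(x_\star) - f(x_t)$ via \cref{assumption:g_convex}, yields the per-step inequality $2\eta_t(f(x_t) - f(x_\star)) \le d_t^2 - d_{t+1}^2 + \zeta_\kappa(d_0+\bar r_t)\eta_t^2\|g_t\|_{x_t}^2 + 2\eta_t\xi_t$.

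The smooth-specific ingredient, and the technical heart of the proof, is controlling the error term $\sum_s \zeta_\kappa(d_0+\bar r_s)\eta_s^2\|g_s\|_{x_s}^2$ by a \emph{self-bounding} argument. Minimizing the smoothness inequality recorded in \cref{preliminaries} along the geodesic in direction $-\operatorname{grad} f(x)$ gives $\|\operatorname{grad} f(x)\|_x^2 \le 2S(f(x) - f(x_\star))$; combined with \cref{assumption:bounded_stochastic_lsmooth} (which compares $g_s$ with its conditional mean and with the gradient at a reference point, introducing further martingale-difference noise) this bounds $\|g_s\|_{x_s}^2$ by $O(\bar s_T(f(x_s) - f(x_\star)))$ up to noise. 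Since $\eta_s^2 = \bar r_s^2/(\zeta_\kappa(\bar r_s) v_s)$ decays once $v_s = \sum_{u\le s}\bar r_u^2\|g_u\|_{x_u}^2$ has grown, using $v_s \le \bar r_s^2 G_s$ and the AdaGrad-type ratio bound $\sum_s \bar r_s^2\|g_s\|_{x_s}^2 / v_s \le \log_{+}(v_{t-1}/v_0)$, the error term becomes $\sum_s c_s (f(x_s) - f(x_\star))$ with coefficients small enough to be reabsorbed into the left-hand side $\sum_s \eta_s (f(x_s) - f(x_\star))$, up to logarithmic factors and the curvature ratio $\zeta_\kappa(d_0+\bar r_t)/\zeta_\kappa(\bar r_t)$. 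The residual martingale terms ($\sum_s \eta_s\xi_s$ and the noise generated above) have summands depending on the adaptive random quantities $\bar r_s, v_s$, so I would control them by the peeling/stitching argument over dyadic scales of the predictable quadratic variation together with a Freedman-type inequality, as in \citet{ivgi23a}; this produces the $\theta_{t,\delta} = \log(60\log(6t)/\delta)$ and $\theta_{t,\delta}^2$ factors and, after discarding the event $\{\bar s_T > S\}$, accounts for the $\mathbb{P}(\bar s_T > S)$ loss and the uniformity over $t \le T$.

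Finally I would sum the per-step inequality over $s = 0,\dots,t-1$ (the distance terms telescoping to $\le d_0^2$), divide through, and invoke geodesic convexity of $f$ along the streaming geodesic average to obtain $f(\tilde x_t) - f(x_\star) \le (\sum_{s<t} w_s)^{-1}\sum_{s<t} w_s (f(x_s) - f(x_\star))$ with $w_s = \bar r_s^2/\zeta_\kappa(\bar r_s)$; collecting the $(d_0+\bar r_t)^2$, $\zeta_\kappa(d_0+\bar r_t)$, $S$ and $\theta_{t,\delta}^2$ factors and the normalizing denominator $\sum_{s<t}\tfrac{\bar r_s^2/\zeta_\kappa(\bar r_s)}{\bar r_t^2/\zeta_\kappa(\bar r_t)}$ then gives the stated bound. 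I expect the main obstacle to be the interaction of the self-bounding step with the adaptive step sizes: making the absorption quantitative when $\eta_s$, $\bar r_s$, $v_s$ and $\|g_s\|_{x_s}$ are mutually dependent, while only having high-probability rather than in-expectation control of the noise — in particular ensuring $c_s \lesssim \eta_s$ requires careful tracking of the growth of $v_s$ relative to $\bar r_s$ and of the $\zeta_\kappa$ factors, which is also the dominant source of bookkeeping throughout.
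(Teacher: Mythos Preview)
Your high-level plan (trigonometric distance recursion, smoothness self-bounding, time-uniform martingale concentration, geodesic Jensen) matches the paper's, but the self-bounding mechanism you propose --- termwise absorption of $c_s \approx S\zeta_\kappa(d_0+\bar r_s)\eta_s$ into the left-hand coefficient $\eta_s$ --- is not what the paper does, and the obstacle you yourself flag (``ensuring $c_s \lesssim \eta_s$'') is exactly where it stalls. With $\eta_s = w_s/\sqrt{v_s}$ and $w_s = \bar r_s^2/\zeta_\kappa(\bar r_s)$, absorption would require $S\zeta_\kappa(d_0+\bar r_s)\eta_s \lesssim 1$, i.e.\ $\sqrt{v_s} \gtrsim S\zeta_\kappa(d_0+\bar r_s)\,w_s$, which fails for small $s$ when $v_s$ is small; no AdaGrad-ratio bound repairs this per term, since such bounds control only the \emph{sum} $\sum_s (v_s-v_{s-1})/v_s$, not the individual coefficients. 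There is also a weights mismatch: your summed per-step inequality is $\eta_s$-weighted, but the streaming average $\tilde x_t$ uses weights $w_s$, so the final Jensen step does not connect directly to what you summed.

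The paper sidesteps absorption entirely. It works with the $w_s$-weighted regret $\sum_{s<t} w_s\langle -g_s,\exp_{x_s}^{-1}(x_\star)\rangle_{x_s}$ and first bounds it \emph{deterministically} by $C(\bar r_t,\bar d_t)\sqrt{v_{t-1}}$ via Abel summation on the telescoping part together with $\sum_s w_s\eta_s\zeta_\kappa(d_s)\|g_s\|_{x_s}^2 \le \tfrac{\zeta_\kappa(\bar d_t)\,\bar r_t^2}{\zeta_\kappa(\bar r_t)}\sum_s \tfrac{v_s-v_{s-1}}{\sqrt{v_s}}\le 2\tfrac{\zeta_\kappa(\bar d_t)\,\bar r_t^2}{\zeta_\kappa(\bar r_t)}\sqrt{v_{t-1}}$. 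Only \emph{then} is smoothness invoked, once and globally, to bound $v_{t-1}\le 2S\sum_{s<t}w_s(f(x_s)-f(x_\star))$. The noise term is handled the same way: the martingale lemma is applied with variance and envelope chosen via the smoothness inequality $\|g_s\|_{x_s}^2\le 2s(x_s)(f(x_s)-f(x_\star))$, yielding a bound of the form $C''\sqrt{S\theta_{t,\delta}^2\sum_{s<t}w_s(f(x_s)-f(x_\star))}$. Both pieces are thus $\lesssim C'\sqrt{A}$ with $A=\sum_{s<t}w_s(f(x_s)-f(x_\star))$, and the self-bounding is the one-line divide-by-square-root-and-square step $A\le C'\sqrt{A}\Rightarrow A\le (C')^2$. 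No per-step absorption, no tracking of $c_s$ against $\eta_s$, and the $w_s$-weighting matches the streaming average directly.
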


\textit{Proof.} See Appendix \ref{sec:rdowg-smooth-analysis}
\qed 

This result achieves the optimal rate, aligning with the smooth analysis of \citet{zhang16}, with an additional logarithmic factor on bounded domains.

\begin{corollary}
    Suppose that \cref{assumption:g_convex}, \ref{assumption:Hadamard}, and \ref{assumption:bounded_stochastic_lsmooth}  hold. In addition, for any $D \ge d_0$, let $S_{D} \coloneqq \max_{x\in \mathcal{M} : d(x, x_0) \le D} s(x)$. Then, for all $\delta \in (0, 1)$ and for $\tau \in \arg \max_{t \le T} \sum_{s=0}^{t-1} \frac{\bar{r}_s^2 / \zeta_\kappa(\bar{r}_s)}{\bar{r}_t^2 / \zeta_\kappa(\bar{r}_t)}$, RDoWG (\cref{alg:rdowg}) satisfies the optimality gap $f(\tilde{x}_{\tau}) - f(x_\star)$ of 
    \begin{align*}
         O\left(\frac{D^2 \zeta_{\kappa}(D) S_D \theta_{\tau, \delta}^2}{T} \log_{+}\left(\frac{D\sqrt{\zeta_{\kappa}(\epsilon)}}{\epsilon \sqrt{\zeta_{\kappa}(D)}}\right)\right),
    \end{align*}
    with probability at least $1 - \delta - \mathbb{P}(\bar{s}_{T} > S)$.
\end{corollary}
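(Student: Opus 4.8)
The plan is to specialize Theorem~\ref{thm:dowg_smooth_known_curvature} to a bounded domain exactly as the corollaries for the non-smooth case are derived, so the argument is almost entirely bookkeeping on the weighted-average denominator. First I would fix $D \ge d_0$ and the event $\{\bar{r}_T \le D\}$: on this event, the locally smooth constant $s(x_t)$ is bounded by $S_D$ for every $t \le T$, so one may take $S = S_D$ in Theorem~\ref{thm:dowg_smooth_known_curvature}, and the failure probability $\mathbb{P}(\bar{s}_T > S)$ becomes (bounded by) $\mathbb{P}(\bar{r}_T > D)$; the stability analysis referenced in the remark after Theorem~\ref{thm:opt_gap_dog_known} is what controls this latter probability, but for the purpose of the corollary it is simply carried through as the stated failure term. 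On $\{\bar{r}_T \le D\}$ we also have $d_0 + \bar{r}_t \le 2D$, and since $\zeta_\kappa$ is nondecreasing, $\zeta_\kappa(d_0+\bar{r}_t) \le \zeta_\kappa(2D) = O(\zeta_\kappa(D))$ (using $\tfrac{2x}{\tanh(2x)} \le \tfrac{2x}{\tanh(x)}$, so the factor of $2$ is absorbed into the big-$O$), and similarly $(d_0+\bar{r}_t)^2 = O(D^2)$. Hence the numerator of Theorem~\ref{thm:dowg_smooth_known_curvature} is $O(D^2 \zeta_\kappa(D) S_D \theta_{t,\delta}^2)$, and since $\theta_{t,\delta}$ is increasing in $t$, evaluating at $t = \tau \le T$ gives $O(D^2 \zeta_\kappa(D) S_D \theta_{\tau,\delta}^2)$.

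The substantive step is lower-bounding the denominator $\Sigma_\tau \coloneqq \sum_{s=0}^{\tau-1} \tfrac{\bar{r}_s^2/\zeta_\kappa(\bar{r}_s)}{\bar{r}_\tau^2/\zeta_\kappa(\bar{r}_\tau)}$ for the maximizing index $\tau$, to turn the abstract ratio into the explicit $T/\log_+(\cdot)$ form. I would reuse the RDoWG weighted-sum estimate that must already appear in Appendix~\ref{sec:rdowg-non-smooth-analysis} (it is the analog of \citet{ivgi23a}'s bound on $\sum_t w_t / w_T$ for DoWG, and of \citet{khaled2023}): writing $\phi(r) \coloneqq r^2/\zeta_\kappa(r)$, the sequence $\bar{r}_s$ is nondecreasing, $\phi$ is nondecreasing, and one shows that for $\tau$ chosen to maximize $\Sigma_t$ one has $\Sigma_\tau \ge c\, T / \log_+\!\big(\phi(\bar r_T)/\phi(\bar r_0)\big)$ for an absolute constant $c$. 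On $\{\bar{r}_T \le D\}$ we have $\bar{r}_T \le D$ and $\bar{r}_0 = \epsilon$ (from the initialization $r_0 = \max(d(x_0,x_0),\epsilon) = \epsilon$), so $\phi(\bar r_T)/\phi(\bar r_0) \le D^2 \zeta_\kappa(\epsilon)/(\epsilon^2 \zeta_\kappa(D))$, i.e. $\log_+(\phi(\bar r_T)/\phi(\bar r_0)) \le 2\log_+\!\big(D\sqrt{\zeta_\kappa(\epsilon)}/(\epsilon\sqrt{\zeta_\kappa(D)})\big)$ up to the big-$O$, matching the logarithmic factor in the statement. Dividing the numerator bound by this lower bound on $\Sigma_\tau$ yields exactly
\begin{align*}
    O\!\left(\frac{D^2 \zeta_\kappa(D) S_D \theta_{\tau,\delta}^2}{T}\log_+\!\left(\frac{D\sqrt{\zeta_\kappa(\epsilon)}}{\epsilon\sqrt{\zeta_\kappa(D)}}\right)\right),
\end{align*}
and finally I would note that $\tilde x_\tau$ is a well-defined point of $\mathcal M$ since it lies in the closed geodesically convex domain (each $\tilde x_{t+1}$ is a weighted geodesic interpolation, hence stays in the convex hull of the iterates), so the bound is on a genuine point.

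The main obstacle I anticipate is the denominator estimate: one needs a clean, curvature-aware analog of the DoG/DoWG summation lemma for $\phi(r) = r^2/\zeta_\kappa(r)$, and care is required because $\zeta_\kappa$ grows (linearly in $d$ for $\kappa<0$), so $\phi$ is not simply $r^2$ and the ratio $\phi(\bar r_s)/\phi(\bar r_\tau)$ does not telescope as neatly as in the Euclidean case. The key facts that make it go through are that $\zeta_\kappa$ is nondecreasing and log-concave-ish in the relevant sense (so $\phi$ is still increasing and the harmonic-type sum $\sum_s \phi(\bar r_s)/\phi(\bar r_\tau)$ over a nondecreasing sequence admits a $T/\log_+$-type lower bound at the maximizing index), together with $\zeta_\kappa(2D) = O(\zeta_\kappa(D))$; once the appendix lemma supplying this is in hand, the rest is the routine substitution sketched above.
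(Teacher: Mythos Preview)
Your proposal is correct and follows essentially the same approach as the paper, whose proof is the one-liner ``Apply \cref{lemma:fraction_sum} to the denominator term of \cref{thm:dowg_smooth_known_curvature}''; you have simply spelled out the bookkeeping (numerator via $d_0+\bar r_t\le 2D$ and $\zeta_\kappa(2D)=O(\zeta_\kappa(D))$, denominator via the fraction-sum lemma) that the paper leaves implicit. Your anticipated obstacle about needing a curvature-aware analog of the summation lemma is unwarranted: \cref{lemma:fraction_sum} applies to \emph{any} positive nondecreasing sequence, and $a_s=\bar r_s^2/\zeta_\kappa(\bar r_s)$ is nondecreasing because $r\mapsto r^2/\zeta_\kappa(r)=r\tanh(\sqrt{|\kappa|}r)/\sqrt{|\kappa|}$ (for $\kappa<0$, and $r^2$ for $\kappa\ge 0$) is increasing, so no new ingredient is required.
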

\textit{Proof.} See Appendix \ref{sec:rdowg-smooth-analysis} 
\qed

\textbf{Stability analysis}. While RDoWG is generally stable in practice, in theory, the algorithm trajectories can diverge. Drawing inspiration from \citet{ivgi23a}, we now introduce a variant of RDoWG that guarantees iterates remain bounded with high probability. The concept involves using step sizes that are smaller by a polylogarithmic factor. Following the taxonomy introduced in \citet{ivgi23a}, we refer to this scheme as \emph{Tamed Riemannian Distance over Weighted Gradients} (T-RDoWG, \cref{alg:t-rdowg}).

\begin{algorithm}[!htbp]
   \caption{T-RDoWG}
   \label{alg:t-rdowg}
\begin{algorithmic}
   \STATE {\bfseries Input:} initial point $x_0$, initial estimate $\epsilon >0$, $v_{-1} =0$.
   \FOR{$t=0$ {\bfseries to} $T-1$}
    \STATE $g_{t} = \mathcal{G}(x_t)$
    \STATE $\bar{r}_t = \max\left(\epsilon, \max_{s\le t} d(x_{s}, x_{0})\right)$
    \STATE $v_t = v_{t-1} + \bar{r}_t^2\|g_{t}\|_{x_{t}}^{2}$
    \STATE \footnotesize $v_t^\prime = 8^4 \theta_{T, \delta}^2 \log_{+}^2\left( \frac{(1 +t)\bar{r}_t^2\bar{\ell}_t^2/\zeta_\kappa(\bar{r}_t)}{\bar{r}_0^2\bar{\ell}_0^2/\zeta_\kappa(\bar{r}_0)}\right)(v_{t-1} + 16 \frac{\bar{r}_t^2}{\zeta_\kappa(\bar{r}_t)}\bar{\ell}^2_t)$
    \STATE $\eta_t = \frac{\bar{r}_t}{\sqrt{\zeta_{\kappa}(\bar{r}_t)v_t^{\prime}}}$
    \STATE $x_{t+1} = \exp_{x_{t}} \left( - \eta_t g_{t} \right)$
   \ENDFOR
\end{algorithmic}
\end{algorithm}

Our first result characterizes the key property of T-RDoWG: bounded iterates with high probability. 
\begin{theorem}
Suppose that \cref{assumption:g_convex}, \ref{assumption:Hadamard}, and \ref{assumption:bounded_stochastic_grads}  hold, and $\epsilon \le 3 d_0$. Then, for any $\delta \in (0, 1)$, and for any $t\in\mathbb{N}$, the iterations of T-RDoWG (\cref{alg:t-rdowg}) satisfy $\mathbb{P}(\bar{r}_t > 3 d_0) \le \delta$.
\end{theorem}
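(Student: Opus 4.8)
The plan is to run a first-exit (stopping-time) argument in the style of the T-DoG stability proof of \citet{ivgi23a}. Define $\tau \coloneqq \inf\{t \ge 0 : \bar r_t > 3 d_0\}$. Since $\bar r_t$ is nondecreasing, $\{\bar r_t > 3 d_0\} = \{\tau \le t\}$; and since $\bar r_0 = \epsilon \le 3 d_0$ (using the hypothesis) we have $\tau \ge 1$, and at the crossing time $\tau$ it must be the new iterate that overshoots, so $d(x_0, x_\tau) > 3 d_0$ and hence $d_\tau \ge d(x_0, x_\tau) - d_0 > 2 d_0$. It therefore suffices to show $d_{t \wedge \tau} < 2 d_0$ with probability at least $1 - \delta$. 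On the event $\{s < \tau\}$ all of $x_0, \dots, x_s$ lie in the closed geodesic ball $\bar B(x_0, 3 d_0)$, compact by geodesic completeness, so $d_s \le d(x_0, x_s) + d_0 \le 4 d_0$ and (by continuity of $\ell$) $\|g_s\|_{x_s} \le \bar\ell_s \le L_{3 d_0} \coloneqq \max_{d(x,x_0)\le 3 d_0} \ell(x) < \infty$; consequently $\|\xi_s\|_{x_s} \le 2 L_{3 d_0}$ for the noise $\xi_s \coloneqq g_s - \operatorname{grad} f(x_s)$, and every quantity entering $\eta_s$ is deterministically bounded on this event.

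\textbf{Distance recursion.} Applying \cref{lemma:geodesic_triangle_inq} to the geodesic triangle $(x_\star, x_s, x_{s+1})$ with $x_{s+1} = \exp_{x_s}(-\eta_s g_s)$, followed by the subgradient inequality $\langle \operatorname{grad} f(x_s), \exp_{x_s}^{-1}(x_\star)\rangle_{x_s} \le f(x_\star) - f(x_s) \le 0$, gives for every $s < \tau$
\begin{align*}
 d_{s+1}^2 \le d_s^2 + 2 \eta_s \langle \xi_s, \exp_{x_s}^{-1}(x_\star)\rangle_{x_s} + \zeta_\kappa(d_s)\, \eta_s^2 \|g_s\|_{x_s}^2 .
\end{align*}
Summing from $s = 0$ to $(t\wedge\tau)-1$ yields $d_{t\wedge\tau}^2 \le d_0^2 + \mathcal M_{t\wedge\tau} + \mathcal V_{t\wedge\tau}$, where $\mathcal M_{t\wedge\tau} = 2\sum_{s<t\wedge\tau}\eta_s\langle\xi_s,\exp_{x_s}^{-1}(x_\star)\rangle_{x_s}$ is the terminal value of a martingale (since $\mathbb{E}[\xi_s\mid\mathcal F_s] = 0$) and $\mathcal V_{t\wedge\tau} = \sum_{s<t\wedge\tau}\zeta_\kappa(d_s)\eta_s^2\|g_s\|_{x_s}^2$ is predictable; moreover every index $s < t\wedge\tau$ satisfies $s\le\tau-1$, so $\bar r_s\le 3d_0$ and $\zeta_\kappa(d_s)\le\zeta_\kappa(4d_0)$ there.

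\textbf{Both terms are small.} The inflated denominator $v_s' = 8^4\theta_{T,\delta}^2\log_{+}^{2}(\cdot)\,(v_{s-1} + 16\bar r_s^2\bar\ell_s^2/\zeta_\kappa(\bar r_s))$ is engineered so the T-RDoWG steps are polylogarithmically smaller than RDoWG's. For $\mathcal V$: substituting $\eta_s^2\zeta_\kappa(\bar r_s) = \bar r_s^2/v_s'$ and telescoping the ratios $\bar r_s^2\|g_s\|_{x_s}^2/v_s'$ against the accumulator (the elementary estimate $\sum_s a_s/(b_0 + \sum_{u\le s}a_u)\le\log_{+}(\cdot)$, with the $\bar\ell$-offset playing the role of $b_0$), the resulting $\log_{+}$ is cancelled by one of the two powers of $\log_{+}$ built into $v_s'$, while the curvature factors appearing in the recursion, in the step size, and in the $\log_{+}^{2}(\cdot)$ argument mutually cancel; combined with $\bar r_s\le 3d_0$ this leaves $\mathcal V_{t\wedge\tau} = O(d_0^2/\theta_{T,\delta}^2)\le\tfrac14 d_0^2$. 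For $\mathcal M$: the $\bar\ell$-offset forces $\eta_s\bar\ell_s = O(\bar r_s/\theta_{T,\delta})$ on $\{s<\tau\}$, so each martingale increment is $O(d_0^2/\theta_{T,\delta})$ and its predictable quadratic variation $\sum_s\mathbb{E}[(2\eta_s\langle\xi_s,\exp_{x_s}^{-1}(x_\star)\rangle_{x_s})^2\mid\mathcal F_s]\lesssim d_0^2\sum_s\eta_s^2\mathbb{E}[\|g_s\|_{x_s}^2\mid\mathcal F_s]$ is, after an auxiliary martingale exchanging $\mathbb{E}[\|g_s\|_{x_s}^2\mid\mathcal F_s]$ for $\|g_s\|_{x_s}^2$, at most $O(d_0^2\,\mathcal V_{t\wedge\tau}) = O(d_0^4/\theta_{T,\delta}^2)$. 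Feeding these into a time-uniform Freedman/Bernstein inequality valid up to the stopping time $\tau$ (the $\log\log$ factor in $\theta_{t,\delta}$ paying for the union over dyadic time-blocks) gives $|\mathcal M_{t\wedge\tau}| \le \tfrac14 d_0^2$ with probability at least $1-\delta$; the constant $8^4$ and the two powers of $\log_{+}$ in $v_t'$ are calibrated precisely so that these two estimates hold.

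\textbf{Conclusion and main obstacle.} On the $(1-\delta)$-probability event of the Freedman bound, $d_{t\wedge\tau}^2 \le d_0^2 + \tfrac14 d_0^2 + \tfrac14 d_0^2 < 4d_0^2$, i.e.\ $d_{t\wedge\tau} < 2 d_0$, contradicting $d_\tau > 2 d_0$ whenever $\tau\le t$; hence on this event $\tau > t$, so $\mathbb{P}(\bar r_t > 3 d_0) = \mathbb{P}(\tau \le t) \le \delta$. The main obstacle is the martingale step: since $\eta_s$ depends on the past noise (through $v_s'$ and $\bar r_s$) the quadratic variation is data-dependent, so a fixed-variance Bernstein bound is unavailable; one must combine a self-normalized, time-uniform concentration inequality with the telescoping control of $\sum_s\eta_s^2\|g_s\|_{x_s}^2$, and the taming constants have to be chosen so the whole chain closes within the slack between the radius $2d_0$ it produces and the threshold $3 d_0$. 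Tracking the curvature factors $\zeta_\kappa(\cdot)$ through the telescoping — so that those in the recursion, the step size and the $\log_{+}^{2}$ argument genuinely cancel — is the secondary technical point specific to the Riemannian setting.
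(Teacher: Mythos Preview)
Your proposal is correct and follows essentially the same first-exit strategy as the paper's proof in Appendix~\ref{sec:rdowg-stability-analysis}: stopping time, the distance recursion from the geodesic law of cosines, a split into a predictable sum controlled deterministically by the inflated denominator and a martingale controlled by a time-uniform concentration bound calibrated via $\theta_{T,\delta}$, and the final contradiction $d_\tau > 2d_0$ versus $d_{t\wedge\tau} < 2d_0$. Two minor execution differences worth noting: the paper consumes \emph{both} $\log_+$ factors at once via the integral bound $\sum_k(a_k-a_{k-1})/(a_k\log_+^2(a_k/a_{-1}))\le 1$ (\cref{lemma:sum_bounded_by_one}) rather than your single-log telescope, and it applies the empirical-variance martingale inequality (\cref{lemma:martingale_bound}) directly with $\hat X_s = -\tilde\eta_s\langle\operatorname{grad}f(x_s),\exp_{x_s}^{-1}(x_\star)\rangle_{x_s}$ so that $X_s-\hat X_s = \tilde\eta_s\langle g_s,\exp_{x_s}^{-1}(x_\star)\rangle_{x_s}$, which sidesteps your auxiliary exchange from predictable to realized quadratic variation.
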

\textit{Proof.} See Appendix \ref{sec:rdowg-stability-analysis}
\qed

Using this result, we can now obtain the convergence rate of T-RDoWG.
\begin{corollary}
    Suppose that \cref{assumption:g_convex}, \ref{assumption:Hadamard}, and \ref{assumption:bounded_stochastic_grads}  hold, and $\epsilon \le 3d_0$. For any $\delta \in (0, 1/2)$, and for any $t \in \mathbb{N}$, let $\tau \in \arg\max_{t \le T} \sum_{s=0}^{t-1} \frac{\bar{r}_s^2/\zeta_\kappa(\bar{r}_s)}{\bar{r}_t^2/\zeta_\kappa(\bar{r}_t)}$. Then T-RDoWG (\cref{alg:t-rdowg}) satisfies the optimality gap $f(\tilde{x}_{\tau}) - f(x_{\star})$ of
    \begin{align*}
     O\left(c \frac{d_0 \sqrt{\zeta_\kappa(d_0) (G_{\tau}+L_{\star}^2)}}{T} \right) 
     = O\left(c \frac{d_0 \sqrt{\zeta_\kappa(d_0)} L_\star}{\sqrt{T}} \right),
    \end{align*}
    with probability at least $1-2\delta$, where $L_\star \coloneqq \max_{x\in \mathcal{M}: d(x, x_0) \le 3 d(x_\star, x_0)} \ell (x)$ and $c = \log_{+}(T \frac{d_0 L_\star}{f(x_0) - f(x_\star)})\log_{+}(\frac{d_0}{\epsilon})\log(\frac{\log_{+}(T)}{\delta})$. 
\end{corollary}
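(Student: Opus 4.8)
The plan is to obtain this corollary by conditioning on the high-probability boundedness event supplied by the preceding stability theorem, and then invoking the bounded-iterate optimality-gap bound for T-RDoWG (the tamed counterpart of \cref{thm:dowg_nonsmooth_curvature}, established in \cref{sec:rdowg-stability-analysis}) with all radius-, curvature- and gradient-dependent constants instantiated on that event. Since $\epsilon \le 3 d_0$, the preceding theorem gives $\mathbb{P}(\bar r_T > 3 d_0) \le \delta$; write $\mathcal E$ for the complementary event $\{\bar r_T \le 3 d_0\}$. On $\mathcal E$ we have $d(x_s, x_0) \le 3 d_0 = 3\, d(x_\star, x_0)$ for every $s \le T$, so \cref{assumption:bounded_stochastic_grads} yields $\lVert g_s\rVert_{x_s} \le \ell(x_s) \le L_\star$ for all $s$, hence $\bar\ell_T \le L_\star$ and $G_\tau = \sum_{s=0}^{\tau-1}\lVert g_s\rVert_{x_s}^2 \le (\tau+1) L_\star^2 \le 2 T L_\star^2$. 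Since moreover $d_s = d(x_s, x_\star) \le d(x_s, x_0) + d_0 \le \bar r_s + d_0$, on $\mathcal E$ every iterate stays within geodesic distance $4 d_0$ of $x_0$; these are the substitutions that will collapse the general bound to the stated one.

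Next I would run the RDoWG weighted-regret argument with the tamed step sizes. By geodesic convexity (\cref{assumption:g_convex}) and the iterated geodesic-mean construction defining $\tilde x_t$, $\big(\sum_{s=0}^{t-1} w_s\big)\big(f(\tilde x_t) - f(x_\star)\big) \le -\sum_{s=0}^{t-1} w_s \langle \operatorname{grad} f(x_s), \exp_{x_s}^{-1}(x_\star)\rangle_{x_s}$ with $w_s = \bar r_s^2/\zeta_\kappa(\bar r_s)$; applying the trigonometric distance bound \cref{lemma:geodesic_triangle_inq} to the geodesic triangle $(x_s, x_{s+1}, x_\star)$, where $x_{s+1} = \exp_{x_s}(-\eta_s g_s)$, gives for each $s$
\[
-2\eta_s \langle g_s, \exp_{x_s}^{-1}(x_\star)\rangle_{x_s} \le d_s^2 - d_{s+1}^2 + \zeta_\kappa(d_s)\,\eta_s^2 \lVert g_s\rVert_{x_s}^2 .
\]
Multiplying by $w_s/(2\eta_s)$ and summing, the distance differences collapse by Abel summation (using that $w_s/\eta_s = \bar r_s\sqrt{v_s'}/\sqrt{\zeta_\kappa(\bar r_s)}$ is nondecreasing and $d_s \le d_0 + \bar r_t$ on $\mathcal E$), while the curvature-weighted gradient terms are controlled through $\zeta_\kappa(d_s) \le \zeta_\kappa(d_0 + \bar r_t)$ and the elementary inequality $\sum_s a_s/\sqrt{\sum_{u\le s} a_u} \le 2\sqrt{\sum_s a_s}$ applied with $a_s = \bar r_s^2 \lVert g_s\rVert_{x_s}^2$; a Freedman/Bernstein martingale inequality replaces $\langle \operatorname{grad} f(x_s), \cdot\rangle$ by $\langle g_s, \cdot\rangle$ at the cost of the $\theta_{t,\delta}$ factors and one further $\delta$, so a union bound leaves probability at least $1 - 2\delta$. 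This reproduces a bound of the shape in \cref{thm:dowg_nonsmooth_curvature} with $v_t$ replaced by the inflated $v_t' = 8^4\theta_{T,\delta}^2 \log_{+}^2(\cdots)\big(v_{t-1} + 16\tfrac{\bar r_t^2}{\zeta_\kappa(\bar r_t)}\bar\ell_t^2\big)$; on $\mathcal E$ its $\log_{+}$ argument is bounded using $\bar r_t \le 3 d_0$, $\bar\ell_t \le L_\star$, $\bar r_0 = \epsilon$ and $f(x_0) - f(x_\star) \le \lVert \operatorname{grad} f(x_0)\rVert_{x_0}\, d_0$, extracting exactly the polylogarithmic prefactor $c$.

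It then remains to simplify at $t = \tau$. The normalising sum $\sum_{s=0}^{\tau-1} w_s/w_\tau$ is bounded below by the weight-ratio lemma --- for a nondecreasing positive sequence $w_0 \le \cdots \le w_{T}$, $\max_{t\le T}\sum_{s=0}^{t-1} w_s/w_t = \Omega\big(T/\log_{+}(T\, w_{T}/w_0)\big)$ --- and since $\bar r\mapsto\bar r^2/\zeta_\kappa(\bar r)$ is increasing and $\epsilon\le\bar r_t\le 3 d_0$ on $\mathcal E$, the ratio $w_{T}/w_0 \le (3d_0)^2\zeta_\kappa(\epsilon)/(\epsilon^2\zeta_\kappa(3 d_0))$ supplies the $\log_{+}\!\big(d_0\sqrt{\zeta_\kappa(\epsilon)}/(\epsilon\sqrt{\zeta_\kappa(d_0)})\big)$-type factor in $c$. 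Using further $d_0 + \bar r_\tau \le 4 d_0$ and $\zeta_\kappa(4 d_0)\le 4\,\zeta_\kappa(d_0)$ (which follows from $\zeta_\kappa(cd)\le c\,\zeta_\kappa(d)$ for $c\ge1$), together with $\bar\ell_T \le L_\star$ and $G_\tau \le 2 T L_\star^2$, gives $f(\tilde x_\tau) - f(x_\star) = O\big(c\, d_0\sqrt{\zeta_\kappa(d_0)(G_\tau + L_\star^2)}/T\big)$, and then the second form via $\sqrt{G_\tau + L_\star^2}/T \le \sqrt{3}\,L_\star/\sqrt{T}$; all statements hold on $\mathcal E$, hence with probability at least $1 - 2\delta$.

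The main obstacle is the tamed-step-size analysis, i.e. establishing the T-RDoWG counterpart of \cref{thm:dowg_nonsmooth_curvature} consistently with the preceding stability theorem: the inflated denominator $v_t'$ must be large enough that the telescoped distance contribution is dominated (which is precisely what forces $\bar r_t$ to stay bounded) yet small enough that it costs only the polylogarithmic factor $c$ relative to the untamed RDoWG bound, and one must track $v_t'$ simultaneously through the step size $\eta_t$ and through the regret weights $w_s/\eta_s$ while keeping the three curvature evaluations $\zeta_\kappa(\bar r_s)$, $\zeta_\kappa(d_s)$ and $\zeta_\kappa(d_0+\bar r_t)$ aligned --- the curvature bookkeeping inherited from \cref{lemma:geodesic_triangle_inq} being the source of most of the technical friction. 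The martingale concentration producing the $\theta_{t,\delta}$ terms is, by contrast, routine once the predictable quadratic variation is controlled via $\bar\ell_T \le L_\star$ on $\mathcal E$.
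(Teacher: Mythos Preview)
Your proposal is correct and follows essentially the same route as the paper: condition on the boundedness event $\{\bar r_T\le 3d_0\}$ supplied by the preceding stability theorem, then instantiate the non-smooth optimality-gap bound (the tamed analogue of \cref{thm:dowg_nonsmooth_curvature}) using $\bar\ell_T\le L_\star$, $\zeta_\kappa(d_0+\bar r_t)=O(\zeta_\kappa(d_0))$, and $\ell_0\ge (f(x_0)-f(x_\star))/d_0$ to extract the polylogarithmic factor $c$. The paper's own proof is extremely terse---it simply says ``adapt \cref{thm:dowg_nonsmooth_curvature}'' and lists these same substitutions---whereas you are more explicit about the intermediate step of rerunning the weighted-regret and martingale arguments with the inflated $v_t'$, which the paper leaves implicit.
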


\textit{Proof.} See Appendix \ref{sec:rdowg-stability-analysis}
\qed 

\begin{remark}
    We can also extend the analysis to obtain a similar optimality gap in the smooth setting. For brevity, we omit the details here.
\end{remark}

\section{Related Work}
\label{related-work}

\textbf{Riemannian optimization}. Numerous authors have studied optimization on Riemannian manifolds. Earlier works on this topic established the asymptotic convergence of first-order methods in both the deterministic \citep{udriste1994,absil2008} and the stochastic \citep{liu2004, Bonnabel_2013} settings. More recently, \citet{zhang16} obtained the first non-asymptotic analysis for Riemannian stochastic gradient descent, assuming geodesic convexity. Subsequently, other authors have obtained iteration complexity results for Riemannian proximal-point methods \citep{bento2017}, Frank-Wolfe schemes \citep{weber2021}, variance reduced methods \citep{zhang2016c,kasai2017,sato2019,zhou2021}, trust-region methods \citep{boumal18,agarwal2021}, amongst others. In parallel, there has also been growing interest in obtaining Riemannian counterparts of accelerated \citep{liu2017,alimsis2020,zhang2018,ahn2020} and adaptive \citep{becigneul2018riemannian,kasai19,cho2017,roy2018} methods used in Euclidean optimization. No existing works, however, consider learning-rate-free Riemannian optimization algorithms.

\textbf{Learning-rate-free Euclidean optimization}. On the other hand, learning-rate-free methods for (stochastic) optimization on Euclidean spaces are substantial; see, e.g., \citet{orabona2020,carmon2022making} and references therein. Most relevant to our work, \citet{carmon2022making} recently introduced a learning-rate-free algorithm for stochastic convex optimization based on interval bisection. Building on this work, \citet{ivgi23a}, \citet{defazio2023} and \citet{khaled2023} have since obtained learning-rate-free (stochastic) convex optimization algorithms which, under varying assumptions, achieve the optimal convergence rate of (stochastic) gradient descent up to a logarithmic factor. Many other learning-rate-free optimization algorithms originate in the online learning literature. These include methods based on coin betting \citep{orabona2016coin,orabona2017}, exponentiated gradients \citep{streeter2012,orabona2013}, amongst others \citep[e.g.,][]{mcmahan2014,orabona2020}. Recently, coin betting ideas have demonstrated effectiveness on Wasserstein spaces \citep{sharrock2023,sharrock2023a,sharrock2023b}, that heuristically follow a Riemannian interpretation \citep{villani2003}.

\section{Experiments}
\label{experiments}

{\setlength{\subfigcapskip}{-1.5mm}
\begin{figure*}[t]
  \centering
  \subfigure[Wine $(d=13, r=1)$.]{\includegraphics[width=0.325\textwidth]{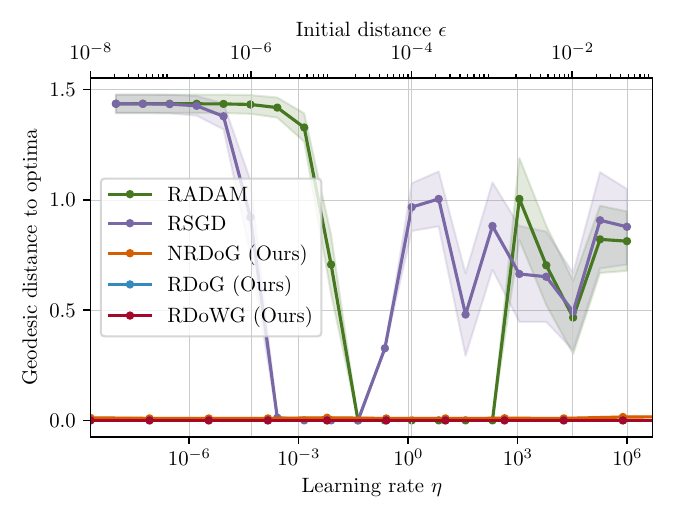}}
  \subfigure[Waveform $(d=40, r=2)$.]{\includegraphics[width=0.325\textwidth]{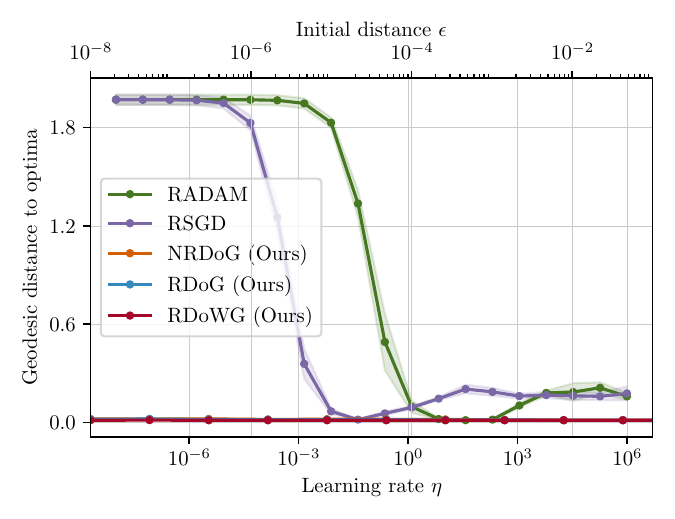}}
  \subfigure[Tiny ImageNet $(d=12,288, r=5)$.]{\includegraphics[width=0.325\textwidth]{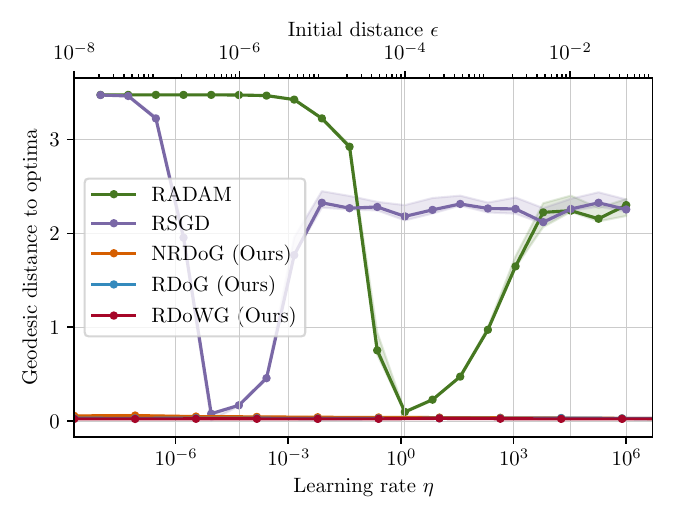}}
  \vspace{-3mm}
  \caption{\textbf{Results for PCA on the Grassmann manifold}. (a)-(c) Geodesic distance between the final iterate and the numerical solution after $T=2000$ iterations as a function of the learning rate for RADAM and RSGD and as a function of the initial distance estimate for RDoG, RDoWG, and NRDoG. (b)-(c) Uses the final iterate of the \emph{weighted average} sequence for RDoG, RDoWG, and NRDog. Results are averaged over five replications.}
  \label{fig:pca}
\vspace{-2mm}
\end{figure*}
}

In this section, we assess the numerical performance of RDoG (\cref{alg:rdog}), RDoWG (\cref{alg:rdowg}), and NRDoG (\cref{alg:nrdog}) against manually tuned RSGD \cite{Bonnabel_2013} and RADAM \cite{becigneul2018riemannian}. Implementing all algorithms in Python 3 with JAX \cite{jax2018github}, our experiments run on a MacBook Pro 16" (2021) with an Apple M1 Pro chip and 16GB of RAM. Detailed manifold descriptions and required operations for the experiments are provided in \cref{sec:geometry}. Code to reproduce the experiments is available at \url{https://github.com/daniel-dodd/riemannian_dog}.

\subsection{Rayleigh Quotient Maximization on the Sphere}
\label{experiments:rayleigh}
We seek to find the dominant eigenvector of a symmetric matrix $A$ in $\mathbb{R}^{d \times d}$ by minimizing $-\frac{1}{2} x^{T} A x$ on the unit sphere $\mathbb{S}^{d-1}$. This is challenging for high-dimensional and ill-conditioned $A$ in the Euclidean case. We consider $A=\frac{1}{d} BB^T$, with $B \in \mathbb{R}^{d \times q}$ having standard Gaussian entries.

For illustration purposes, we first consider $d=3$ and $q=5$ in \cref{fig:toy_illustration}, underscoring the pivotal role of selecting an optimal learning rate for RSGD, as deviations, whether too small or too large, adversely affect performance. 

In a higher-dimensional scenario with $d = 1000$ and $q = 1100 \cong d$, resulting in a high condition number, we employ RADAM and RSGD with a grid of twenty logarithmically spaced learning rates $\eta \in [10^{-8}, 10^6]$. On the other hand, we investigate RDoG and RDoWG with ten logarithmically spaced initial distance values $\epsilon \in [10^{-8}, 10^{0}]$. Here, we initialize ten starting points $x_0 \in \mathbb{R}^d$ by drawing their entries independently from a standard Gaussian distribution, then projecting them onto the sphere through normalizing, a shared procedure for each optimizer.

Our results show that RDoG, RDoWG, and NRDoG are insensitive to initial distance, consistently achieving robust performance in recovering negligible geodesic distance to the numerical solution via the eigendecomposition. In contrast, the effectiveness of RADAM and RSGD depends on selecting an appropriate learning rate. Notably, as seen in \cref{fig:rayleigh}, RSGD is highly sensitive to the learning rate, while RDoG rapidly adapts to optimal regret within a few hundred iterations, irrespective of the initial distance estimate's magnitude. Additional regret trace plots for other optimizers are available in \cref{additional:rayleigh}, along with similar plots for geodesic distance to the optima. These underscore that the algorithms quickly adapt within a few hundred iterations without prior knowledge of the function.

\subsection{PCA on the Grassmann Manifold}
\label{experiments:pca}

We investigate principal component analysis (PCA) on the Grassmann manifold $\mathbb{G}(d, r)$, where points are represented as equivalence classes with an orthogonal matrix $x \in \mathbb{R}^{d \times r}$ having orthonormal columns ($x^{T}x= \mathrm{I}$). The PCA problem minimizes the sum of squared residual errors between projected data points and the original data, $\min_{x \in \mathbb{G}(d, r)} \frac{1}{n} \sum_{i=1}^{n} \lVert z_i - x x^Tz_i \rVert_{2}^2$, with each $z_i$ represented as a $d$-dimensional data point. We consider datasets \mytexttt{Wine}, \mytexttt{Waveform-5000}, and \mytexttt{Tiny ImageNet}. The numerical solution is computed using the scikit-learn implementation \citep{scikit-learn}. The geodesic distances of final iterates (using weighted averages for RDoG and RDoWG) are compared against learning-rate-dependent algorithms, as shown in \cref{fig:pca}.

In training, \mytexttt{Wine} uses the full batch for $T=5000$ iterations, and \mytexttt{Waveform-5000} and \mytexttt{Tiny ImageNet} use batch sizes of 64 for $T=2000$ iterations. Each dataset has an 80:20 train-test split per replication. Following Pymanopt \citep{Pymanopt}, initial points $x_0 \in \mathbb{R}^{d \times r}$ are drawn from a standard Gaussian distribution and projected onto the manifold using vectorized QR decomposition.

Results in \cref{fig:pca} from five random train-test splits show RDoG, RDoWG, and NRDoG are insensitive to initial distance estimates across magnitudes, with ten logarithmically spaced values in $\epsilon \in [10^{-8}, 10^{0}]$. In contrast, RADAM and RSGD require a narrower tuning range of optimal learning rates, exploring twenty logarithmically spaced values in $\eta \in [10^{-8}, 10^6]$. Additional results in \cref{additional:pca} further highlight the robust adaptation of RDoG, RDoWG, and NRDoG.

\subsection{Embedding Graphs in the Poincar\'{e} Ball}
\label{experiments:poincare}

The WordNet noun hierarchy \citep{miller1990wordnet} is a lexical database of English words organized into a hierarchical structure, where each word is categorized based on its semantic relationships with other words. Moreover, the \emph{hypernymy relation}, often termed \emph{Is-A relation}, signifies that one concept (the hypernym) encompasses another (the hyponym). For instance, \mytexttt{mammal} is a hypernym of \mytexttt{dog} and \mytexttt{cat}. Following \citet{Nickel2017}, we consider representing the transitive closure of the mammals' subtree that involves 1,180 nouns denoted as $\mathcal{N}$ (of which \mytexttt{mammal} is a hypernym) and 6,450 hypernymy relations, represented as $\mathcal{R} = \{(u, v)\} \subset \mathcal{N} \times \mathcal{N}$. 

The embedding is performed in the Poincar\'{e} ball of hyperbolic geometry which is well-known to be better suited to embed tree-like graphs than the Euclidean space \citep{gromov1987hyperbolic, sala2018representation}. As such, the Poincar\'{e} ball model is defined as $\mathbb{B}_d = \{x \in \mathbb{R}^d: \lVert x \rVert < 1\}$ equipped with the Riemannian metric $\langle \cdot, \cdot' \rangle_x = 4/(1 - \lVert x \rVert^2)^2 \langle \cdot, \cdot' \rangle$. We adopt the loss function from the official code of \citet{Nickel2017}, deviating from the one described in the paper:
\[
\min_{\theta \colon \mathcal{N} \to \mathbb{B}_d } \sum_{(u,v) \in \mathcal{R}} -\log\left(\frac{e^{-d(\theta(u), \theta(v))}}{\sum_{v' \in \operatorname{Neg}(u, v)} e^{-d(\theta(u), \theta(v'))}}\right),
\]
where each noun pair $(u, v) \in \mathcal{R}$ has associated embeddings $\theta(u), \theta(v) \in \mathbb{B}_d$, and $\operatorname{Neg}(u, v) = \{v' : (u,v') \notin \mathcal{R}\} \cup \{v\}$ is the set of negative examples for $u$, including $v$, and 
\begin{align*}
    d(\cdot, \cdot') = \operatorname{arcosh}\left(1 + 2 \frac{\lVert \cdot - \cdot' \rVert^2}{(1 - \lVert \cdot \rVert^2)(1 - \lVert \cdot' \rVert^2)}\right),
\end{align*}
is the geodesic distance measuring the dissimilarity between the embeddings of two nouns in the Poincar\'{e} ball. Intuitively, minimizing this loss function encourages closely related mammals to be positioned closer together in the embedding space and less similar pairs to be farther apart.

For initialization, following \citet{Nickel2017}, we uniformly initialize the embeddings in $[-10^{-3}, 10^{-3}]^{d}$ and consider ten logarithmically spaced learning rates $\eta \in [10^{-2}, 10^2]$ and five logarithmically spaced initial distance estimates $\epsilon \in [10^{-10}, 10^{-6}]$. In the first ten epochs, we use RSGD with a reduced learning rate of $\eta/10$ for RSGD and RADAM. During this \emph{burn-in} phase, negative word sampling is based on the graph degree raised to the power of $3/4$, leading to numerical improvements. No burn-in heuristic is applied for RDoG, RDoWG, or NRDoG. Thereafter, we run the optimizers on the initialized embeddings for one thousand epochs, with each iteration having a batch size of ten and fifty uniformly sampled negative samples. We repeat this experiment over five replications.

To measure the quality of the embeddings obtained from each optimizer, we follow \citet{Nickel2017} and compute, for each observed edge $(u, v) \in \mathcal{R}$, the corresponding distance $d(u, v)$ in the embedding space and rank it among the distances of all unobserved edges for $u$, i.e., $\{d(u, v') : (u, v') \notin \mathcal{R}\}$. Subsequently, we calculate the mean average precision of this ranking.

In \cref{fig:poincare}, embeddings of dimension five are presented. RDoG and RDoWG demonstrate competitive performance, while RADAM and RSGD require careful tuning. The performance significantly degrades for RADAM and RSGD without a burn-in heuristic, as exemplified in \cref{additional:poincare}. Visualizing two-dimensional embeddings between RDoG and RSGD trained for two thousand epochs, with burn-in applied only for RSGD and using the optimal learning rate selected from ten logarithmically spaced values $\eta \in [10^{-2}, 10^2]$, we observe meaningful groupings across various categories without employing burn-in heuristics for RDoG. Additional embedding plots for the other optimizers are presented in \cref{additional:poincare}.

{\setlength{\subfigcapskip}{-1.5mm}
\begin{figure*}[t]
  \centering
  \subfigure[Mean average precision.]{\includegraphics[width=0.325\textwidth]{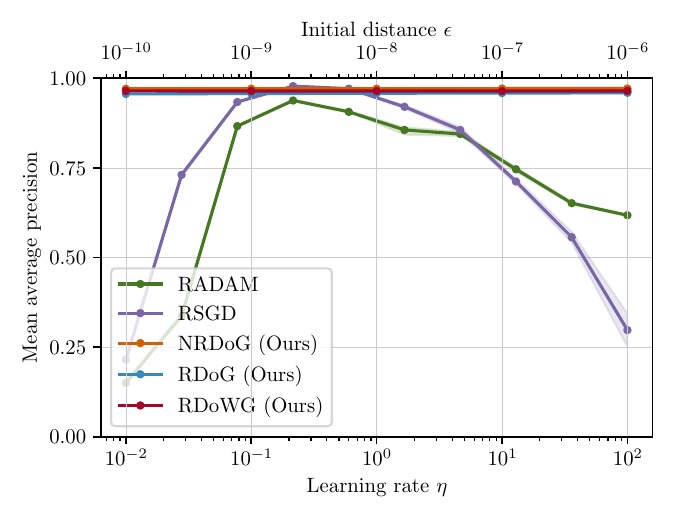}
  \label{fig:map}
  }
  \subfigure[RDoG embeddings.]{\includegraphics[width=0.325\textwidth]{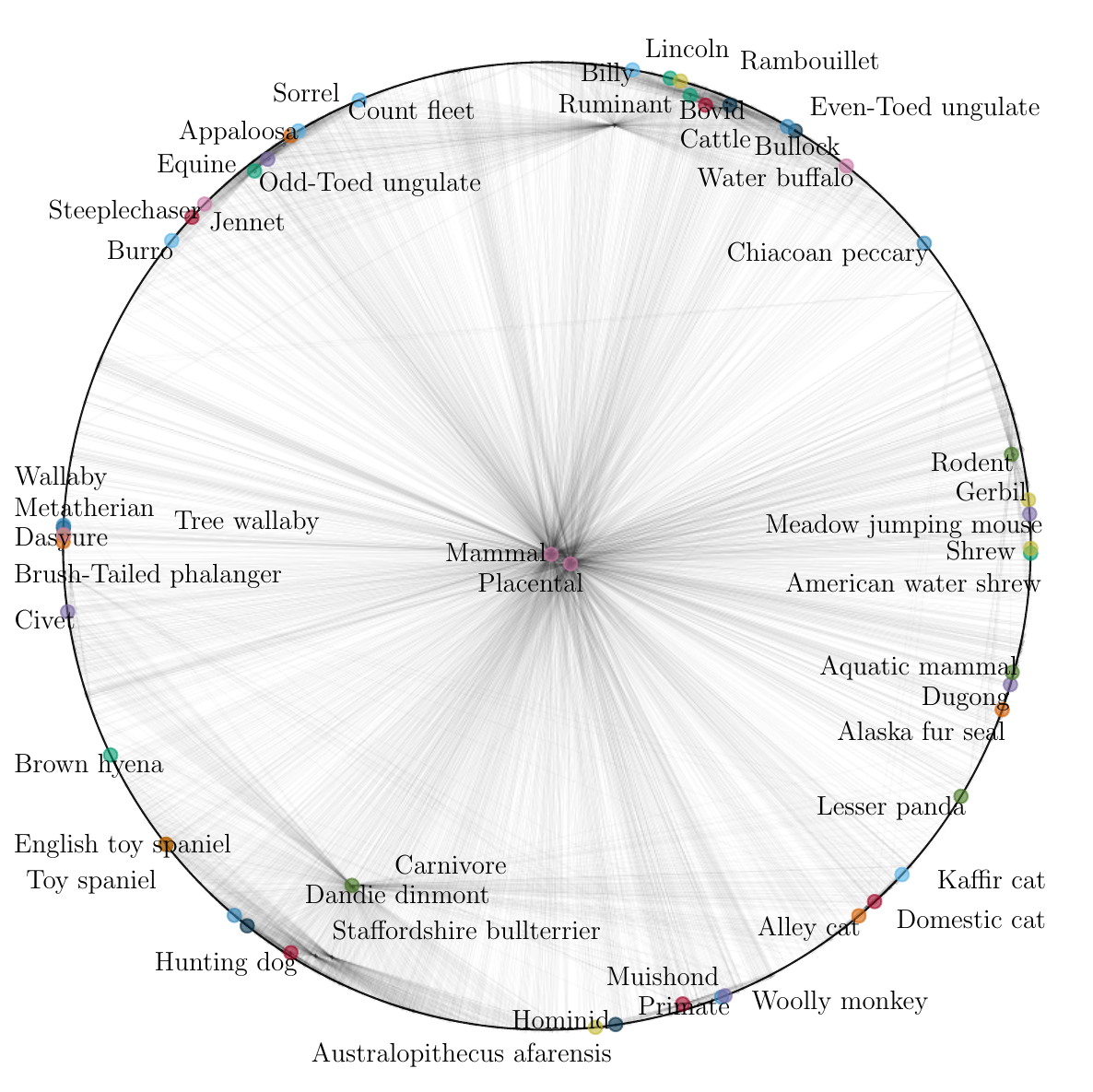}}
  \subfigure[RSGD embeddings.]{\includegraphics[width=0.325\textwidth]{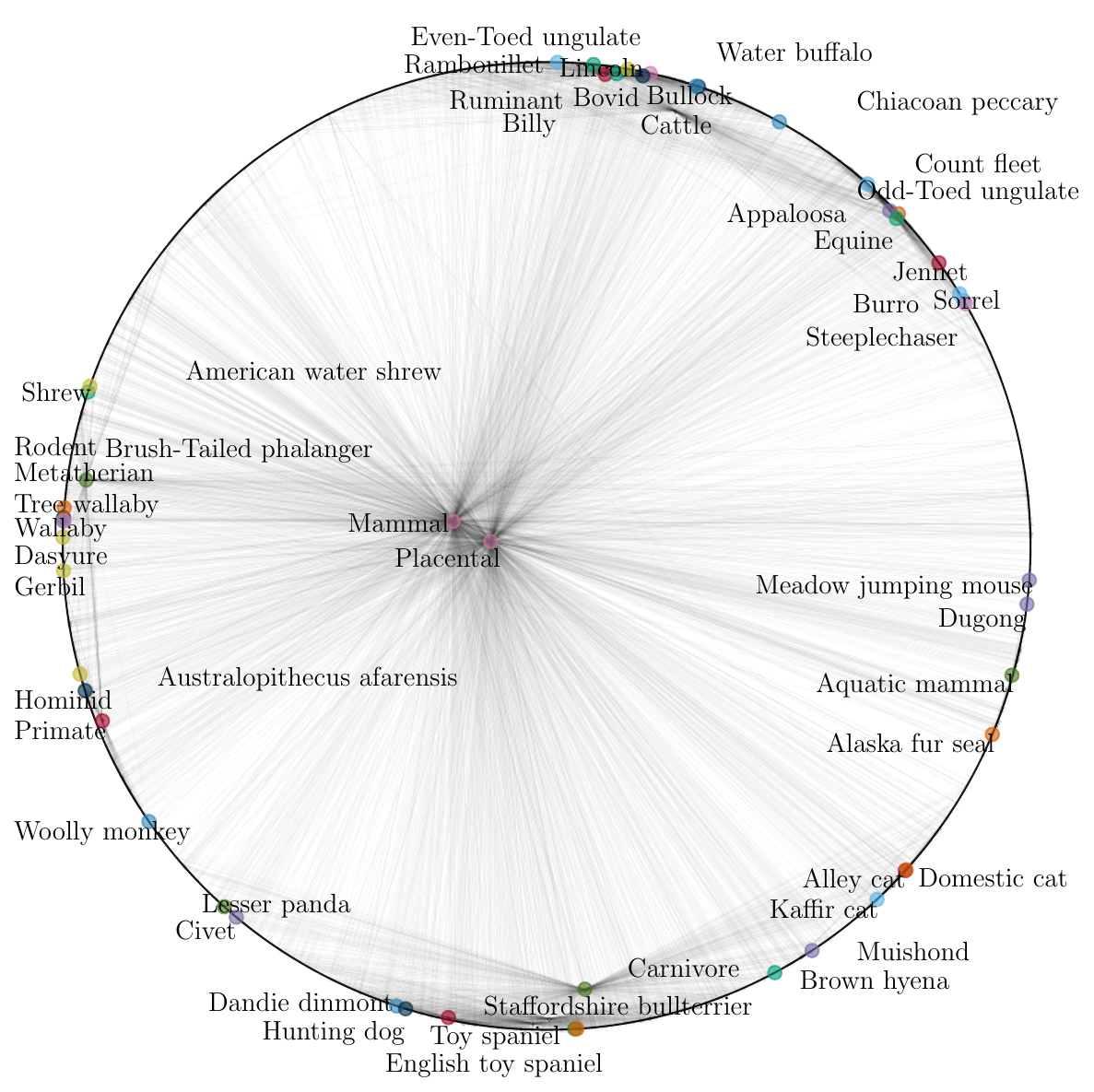}}
  \vspace{-3mm}
  \caption{\textbf{Results for Poincar\'{e} word embeddings}. (a) The mean average precision of the embeddings is assessed against the ground truth after 1000 training epochs. Results are averaged over five replications, with the embedding dimension set to five. (b)-(c) Two-dimensional embeddings after 2000 training epochs are visualized and annotated for the first 50 nouns of the mammal's subtree for RDoG and RSGD.}
  \label{fig:poincare}
\vspace{-2mm}
\end{figure*}
}

\section{Discussion}
We have introduced new learning-rate-free optimizers for Riemannian manifolds and have highlighted significant numerical improvements over learning-rate-dependent algorithms. Our theoretical results provide high probability convergence guarantees that are optimal, up to a logarithmic factor, compared to the theoretically, yet practically unavailable, optimal deterministic algorithms. 

Many existing Riemannian optimization methods rely on a retraction map, which serves as a cost-effective approximation of the exponential map on manifolds and is a reasonable choice in numerous real-world scenarios. Incorporating this into our framework is paramount for enhancing the effectiveness of our algorithms, particularly in large-scale optimization problems. Moreover, certain Riemannian manifolds, such as the Stiefel or multivariate Gaussian Fisher-Rao manifolds, pose challenges due to the intractability of the geodesic distance. Recognizing the argument underlying our convergence guarantees (though potentially less robust) holds for upper bounds on geodesic distance, exploring tractable or more economical approximations in these situations is essential.

Additionally, it is crucial to explore integrating these methods with recent proven advances in momentum acceleration \citep{liu2017,alimsis2020,zhang2018,ahn2020}, a challenge both in theory and practice. Furthermore, developing practical algorithms that offer guarantees on iterate boundedness is a consideration for future research.

\section*{Acknowledgements}
We thank the anonymous reviewers for their constructive feedback. DD was supported by the EPSRC-funded STOR-i Centre for Doctoral Training, grant number EP/S022252/1. LS and CN were supported by the Engineering and Physical Sciences Research Council (EPSRC), grant number EP/V022636/1. CN acknowledges further support from the EPSRC, grant numbers EP/S00159X/1 and  EP/Y028783/1.

\section*{Impact Statement}
This paper presents work whose goal is to advance the field of 
Machine Learning. There are many potential societal consequences 
of our work, none which we feel must be specifically highlighted here.

\bibliography{example_paper}
\bibliographystyle{icml2024}

\newpage
\appendix
\onecolumn

\section{Useful Results}

We begin by introducing essential lemmas for the establishment of our theory.

\subsection{Trigonometric Distance Bounds for Manifolds}
\label{appendix:distance_bound}

The law of cosines in Euclidean space is fundamental for analyzing optimization algorithms,
\begin{align}
    a^2 = b^2 + c^2 - 2bc \cos(A),
\end{align}
where $a, b, c$ are the sides of a Euclidean triangle with $A$ the angle between sides $b$ and $c$. 

Trigonometric geometry behaves differently in manifolds compared to Euclidean spaces. While the equality does not hold for nonlinear spaces, a trigonometric distance bound can be established for manifolds with curvature bounded below.

\begin{lemma} \citep[Lemma 5]{zhang16}
\label{lemma:geodesic_triangle_inq}
    If $a, b, c$ are the side lengths of a geodesic triangle $\Delta$ in a Riemannian manifold with sectional curvature lower bounded by $\kappa>-\infty$ and $A$ is the angle between sides $b$ and $c$ (defined through the inverse exponential map and inner product in tangent space), then
    \begin{align}
        a^2 \le \zeta_\kappa(c) b^2 + c^2 - 2bc \cos(A).
    \end{align}
    \begin{proof}
         Given by Lemma 3.12 of \cite{Cordero2001} and by Lemma 5 of \cite{zhang16}.
    \end{proof}
\end{lemma}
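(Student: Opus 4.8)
The plan is to reduce the claimed inequality to the exact law of cosines in the two-dimensional model space of constant curvature $\kappa$, and then to extract from that the quadratic-in-$b$ bound with the coefficient $\zeta_\kappa(c)$ by a one-variable analytic estimate. Throughout, label the vertices of $\Delta$ as $P,Q,R$ with $a=d(Q,R)$, $b=d(P,R)$, $c=d(P,Q)$, and $A$ the angle at $P$ between the sides of length $b$ and $c$.

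For the comparison step I would use a Hessian/Rauch argument. Let $\sigma\colon[0,c]\to\mathcal{M}$ be the unit-speed minimizing geodesic from $P$ to $Q$ and put $r(t)=d(R,\sigma(t))$, so $r(0)=b$, $r(c)=a$, and $r'(0)=-\cos A$ by the first variation formula. Since the sectional curvature is bounded below by $\kappa$, the Hessian comparison theorem bounds $\operatorname{Hess} d(R,\cdot)$ from above by its model-space value; substituting this into the second derivative of $\phi(t)=\cosh(\sqrt{|\kappa|}\,r(t))$ (for $\kappa<0$) gives $\phi''\le|\kappa|\phi$, and comparing with the solution of the equality ODE having the same initial data $\phi(0),\phi'(0)$ via a Sturm-type argument yields, at $t=c$, exactly the hyperbolic law of cosines as an inequality, $\cosh(\sqrt{|\kappa|}\,a)\le\cosh(\sqrt{|\kappa|}\,b)\cosh(\sqrt{|\kappa|}\,c)-\sinh(\sqrt{|\kappa|}\,b)\sinh(\sqrt{|\kappa|}\,c)\cos A$; for $\kappa\ge 0$ the analogous computation with $\phi(t)=r(t)^2$, or direct comparison against the flat model, gives $a^2\le b^2+c^2-2bc\cos A$. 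Since $r$ may fail to be smooth where $\sigma$ meets the cut locus of $R$, the cleanest route is to invoke Toponogov's hinge comparison theorem directly (valid against the flat model with no perimeter restriction, and against the model of curvature $\kappa<0$ unconditionally), which yields the same two inequalities without regularity fuss. This already closes the case $\kappa\ge 0$, where $\zeta_\kappa\equiv 1$; and for $\kappa>0$ the spherical law of cosines is even stronger than the Euclidean one, so the claim is safe there too.

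It remains to treat $\kappa<0$, and this analytic lemma is the step I expect to be the main obstacle: one must show that the hyperbolic law of cosines implies $a^2\le\frac{\sqrt{|\kappa|}\,c}{\tanh(\sqrt{|\kappa|}\,c)}\,b^2+c^2-2bc\cos A$. Setting $u=\sqrt{|\kappa|}\,a$, $v=\sqrt{|\kappa|}\,b$, $w=\sqrt{|\kappa|}\,c$, this is the scalar inequality $u^2\le\frac{w}{\tanh w}\,v^2+w^2-2vw\cos A$ under the constraint $\cosh u\le\cosh v\cosh w-\sinh v\sinh w\cos A$. I would prove it by fixing $w$ and $A$ and comparing the two sides as functions of $v\ge 0$: at $v=0$ the constraint is tight ($u=w$) and both sides equal $w^2$, so it suffices to control the growth in $v$, which reduces to convexity/monotonicity facts about $x\mapsto x/\tanh x$ (increasing, since $\sinh(2x)\ge 2x$) and about $\cosh$ and $\sinh$. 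The delicacy is that the crude bound $u^2\le 2(\cosh u-1)$ is too lossy — it already fails at $v=0$ — so the estimate must be carried out carefully, in particular tracking the cross-term involving $\cos A$, whose sign is not fixed. This is precisely the content of Lemma 3.12 of \citet{Cordero2001} and Lemma 5 of \citet{zhang16}, whose argument I would follow. Combining the two regimes gives the uniform bound $a^2\le\zeta_\kappa(c)b^2+c^2-2bc\cos A$.
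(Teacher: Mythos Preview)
Your proposal is correct and in fact goes well beyond what the paper does: the paper's ``proof'' consists solely of the citation to Lemma 3.12 of \cite{Cordero2001} and Lemma 5 of \cite{zhang16}, with no argument given. You sketch the actual two-step structure (Toponogov/Rauch comparison to the model-space law of cosines, followed by the scalar analytic estimate extracting the $\zeta_\kappa(c)$ coefficient from the hyperbolic cosine law) and then defer the delicate step to exactly the same two references, so there is no discrepancy to discuss.
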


This lemma holds profound implications for our analysis of geodesically convex functions $f$. Specifically, the property of geodesic convexity allows us to bound $f(x_t) - f(x_\star)$ by the inner product $\langle -\operatorname{grad} f(x_t), \exp_{x_t}^{-1}(x_\star)\rangle_{x_t}$. The above trigonometric inequality empowers us to bound this inner product to devise tractable optimization algorithms.

To streamline future analysis, we expand our perspective to encompass bounding the inner product $\langle -g_t, \exp_{x_t}^{-1}(x_\star)\rangle_{x_t}$ for any tangent vector $g_t \in \mathcal{T}_{x_t} \mathcal{M}$.

\begin{lemma}\citep[Corollary 8]{zhang16}
    \label{cor:law_of_cosines}
    For any Riemannian manifold $\mathcal{M}$ where the sectional curvature is lower bounded by $\kappa>-\infty$ and any point $x_\star, x_t \in \mathcal{M}$ and any tangent vector $g_t \in \mathcal{T}_{x_t} \mathcal{M}$, scalar $\eta_t >0$  consider the RSGD update $x_{t+1} = \operatorname{exp}_{x_t}(-\eta_t g_t)$. Then by \cref{lemma:geodesic_triangle_inq}, we have
    \begin{align}
       \langle  -g_t, \exp_{x_t}^{-1}(x_\star)\rangle_{x_t} \le \frac{1}{2 \eta_t}\left(d_t^2 - d^2_{t+1} \right) + \frac{\eta_t}{2} \zeta_\kappa(d_t) \lVert g_t \rVert_{x_t}^2.
   \end{align}
    \begin{proof}
        Consider the geodesic triangle $\Delta$ with vertices $x_{t+1}$, $x_t$, and $x_{\star}$. Then we have the side lengths of $\Delta$ are given by
        \begin{align}
            a = d(x_{t+1},x_{\star}) = d_{t+1}, \quad
            b = d(x_{t+1}, x_t) = \eta_t \lVert g_t\rVert_{x_t},
            \quad
            c = d(x_{t}, x_{\star}) = d_{t}.
        \end{align}
   Recalling that the angle between two tangent vectors $u$ and $v$ at $x \in \mathcal{M}$ is given by $\arccos \frac{\langle u, v \rangle_x}{\lVert u \rVert_x \lVert v \rVert_x}$. Now, considering the angle, $A$, between side lengths $b$ and $c$, we have,
\begin{align}
    2bc \cos(A) = 2bc\cos \left(\arccos \left( \frac{\langle \exp_{x_t}^{-1}(x_{t+1}), \exp_{x_t}^{-1}(x_{\star}) \rangle_{x_t}}{\lVert \exp_{x_t}^{-1}(x_{t+1}) \rVert_{x_t} \lVert \exp_{x_t}^{-1}(x_{\star}) \rVert_{x_t}} \right) \right) = \langle  -\eta_t g_t, \exp_{x_t}^{-1}(x_\star)\rangle_{x_t}.
\end{align}
Substituting these terms in \cref{lemma:geodesic_triangle_inq} and rearranging yields the result as required.
\end{proof}
    
\end{lemma}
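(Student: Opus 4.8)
The plan is to apply the geodesic law-of-cosines bound of \cref{lemma:geodesic_triangle_inq} to the geodesic triangle $\Delta$ whose vertices are the current iterate $x_t$, the updated iterate $x_{t+1}$, and the minimizer $x_\star$. First I would record the three side lengths. Since $x_{t+1} = \exp_{x_t}(-\eta_t g_t)$, the geodesic from $x_t$ to $x_{t+1}$ is $s \mapsto \exp_{x_t}(-s\eta_t g_t)$, which has length $\eta_t \lVert g_t \rVert_{x_t}$; the remaining two sides have lengths $d(x_{t+1}, x_\star) = d_{t+1}$ and $d(x_t, x_\star) = d_t$. The key structural point is that the vertex $x_t$ is exactly the point at which the inverse exponential maps of both $x_{t+1}$ and $x_\star$ are available, so the angle $A$ at $x_t$ between these two sides is the one we can handle; indeed $\exp_{x_t}^{-1}(x_{t+1}) = -\eta_t g_t$ by construction.

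Next I would identify $A$ with the angle between the tangent vectors $-\eta_t g_t$ and $\exp_{x_t}^{-1}(x_\star)$ in $\mathcal{T}_{x_t}\mathcal{M}$, namely $A = \arccos\!\big(\langle -\eta_t g_t, \exp_{x_t}^{-1}(x_\star)\rangle_{x_t} / (\lVert \eta_t g_t\rVert_{x_t}\lVert \exp_{x_t}^{-1}(x_\star)\rVert_{x_t})\big)$. Taking $b = \eta_t\lVert g_t\rVert_{x_t}$ and $c = d_t = \lVert \exp_{x_t}^{-1}(x_\star)\rVert_{x_t}$, the product $2bc\cos A$ collapses exactly to $2\langle -\eta_t g_t, \exp_{x_t}^{-1}(x_\star)\rangle_{x_t}$. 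Substituting $a = d_{t+1}$, this $b$, and $c = d_t$ into $a^2 \le \zeta_\kappa(c)\, b^2 + c^2 - 2bc\cos A$ yields $d_{t+1}^2 \le \zeta_\kappa(d_t)\eta_t^2\lVert g_t\rVert_{x_t}^2 + d_t^2 - 2\langle -\eta_t g_t, \exp_{x_t}^{-1}(x_\star)\rangle_{x_t}$. Rearranging to isolate the inner product and dividing through by $2\eta_t$ gives the claimed inequality.

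The assignment of sides is the one point that needs care: in \cref{lemma:geodesic_triangle_inq} the curvature factor $\zeta_\kappa$ multiplies $b^2$ and is evaluated at $c$, so to land the term $\tfrac{\eta_t}{2}\zeta_\kappa(d_t)\lVert g_t\rVert_{x_t}^2$ one must take $b$ to be the gradient-step side and $c = d_t$; the choice $a = d_{t+1}$ is then forced, and it is exactly the side opposite to the vertex $x_t$, where the inverse exponential map is defined, so the angle term is the controllable one. Beyond this bookkeeping — which distance plays which role, and the sign conventions for $\exp_{x_t}^{-1}$ — there is no analytic obstacle: the statement is a one-shot specialization of the trigonometric distance bound to the RSGD update.
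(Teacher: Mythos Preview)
Your proposal is correct and follows essentially the same route as the paper: form the geodesic triangle with vertices $x_t$, $x_{t+1}$, $x_\star$, assign $a=d_{t+1}$, $b=\eta_t\lVert g_t\rVert_{x_t}$, $c=d_t$, identify the angle at $x_t$ via the inverse exponential map, and substitute into \cref{lemma:geodesic_triangle_inq}. Your added remark on why the side assignment must put $c=d_t$ (so that $\zeta_\kappa$ lands on $d_t$ and multiplies the gradient-step term) is a nice clarification the paper leaves implicit.
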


\subsection{Jensen's Inequality for Geodesically Convex Functionals}

We present an analog for Jensen's inequality for geodesically convex functions on Riemannian manifolds. This will allow us to leverage innovative weighted averaging strategies in the regret analysis of our algorithms.

\begin{lemma} \label{lemma:weighted_average}
    Let $f$ be geodesically convex. For any sequence of iterates $x_{0}, \dotsc, x_{t} \in \mathcal{M}$ and positive weights $w_0, \dotsc, w_t \in \mathbb{R}_{+}$, define the online weighted average sequence by 
    \begin{align}
        \tilde{x}_{t+1} = \operatorname{exp}_{\tilde{x}_{t}}\left(\frac{w_t}{\sum_{s=0}^{t} w_s} \operatorname{exp}_{\tilde{x}_{t}}^{-1}\left( x_t \right) \right), \qquad \tilde{x}_{1} = x_{0}.
    \end{align}
    Then we have 
    \begin{align}
    f(\tilde{x}_{t}) \le \frac{1}{\sum_{s=0}^{t-1} w_s} \sum_{s=0}^{t-1} w_s f(x_s).
    \end{align}
    \begin{proof}
        We prove this by induction. The base case for $t=1$ holds by definition. Now for $t\ge2$, for the inductive step, assume the statement is true for $t-1$ and consider $t$. We have,
        \begin{align}
        \frac{1}{\sum_{s=0}^{t-1} w_s} \sum_{s=0}^{t-1} w_s f(x_s)  &= \frac{w_{t-1}}{\sum_{s=0}^{t-1} w_s}  f(x_{t-1}) + \frac{1}{\sum_{s=0}^{t-1} w_s} \sum_{s=0}^{t-2} w_s f(x_s) \\
        &= \frac{w_{t-1}}{\sum_{s=0}^{t-1} w_s}  f(x_{t-1}) + \frac{\sum_{s=0}^{t-2} w_s}{\sum_{s=0}^{t-1} w_s} \frac{1}{\sum_{s=0}^{t-2} w_s} \sum_{s=0}^{t-2} w_s f(x_s) 
        \\ &\ge \frac{w_{t-1}}{\sum_{s=0}^{t-1} w_s}  f(x_{t-1}) + \frac{\sum_{s=0}^{t-2} w_s}{\sum_{s=0}^{t-1} w_s} f(\tilde{x}_{t-1}) \label{eqaution:geodesic_convex_sum_thing}.
        \end{align}
        In the final line, we have exploited the inductive assumption. Finally, we note that $\gamma(s) = \exp_x\left((1-s)\exp_x^{-1}(x) + s\exp_x^{-1}(y)\right)$ for $s \in [0, 1]$ defines a geodesic between any two points $x$ and $y$ in $\mathcal{M}$. Moreover, by geodesic convexity we have
        \begin{align}
            f(\gamma(s)) \le  (1-s) f(\gamma(0)) + sf(\gamma(1)) = (1-s) f(x) + sf(y).
        \end{align}
        Thus applying this to \cref{eqaution:geodesic_convex_sum_thing} with $x=\tilde{x}_{t-1}$, $y = x_{t-1}$ and $s = \frac{w_{t-1}}{\sum_{s=0}^{t-1} w_s} $ and noting that for this choice,
        \begin{align}
            \gamma\left(\frac{w_{t-1}}{\sum_{s=0}^{t-1} w_s}\right) &= \exp_{\tilde{x}_{t-1}} \left(\left(1-\frac{w_{t-1}}{\sum_{s=0}^{t-1} w_s}\right) \exp_{\tilde{x}_{t-1}}^{-1}(\tilde{x}_{t-1}) + 
             \frac{w_{t-1}}{\sum_{s=0}^{t-1} w_s}  \exp_{\tilde{x}_{t-1}}^{-1}(x_{t-1}) \right) \\
            &= \exp_{\tilde{x}_{t-1}} \left(\frac{w_{t-1}}{\sum_{s=0}^{t-1} w_s}  \exp_{\tilde{x}_{t-1}}^{-1}(x_{t-1}) \right) \\ &= \tilde{x}_{t},
        \end{align}
        yields the result as required.
    \end{proof}
\end{lemma}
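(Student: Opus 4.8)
The plan is to prove \cref{lemma:weighted_average} by induction on $t$, combining the defining recursion for the running weighted average with the one-step definition of geodesic convexity. Write $W_k \coloneqq \sum_{s=0}^{k} w_s$ throughout. The base case $t=1$ is immediate: by definition $\tilde{x}_1 = x_0$, while the right-hand side equals $\frac{1}{W_0} w_0 f(x_0) = f(x_0)$, so the inequality holds (with equality).

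For the inductive step I would assume the claim at index $t-1$, namely $f(\tilde{x}_{t-1}) \le W_{t-2}^{-1} \sum_{s=0}^{t-2} w_s f(x_s)$. The crucial structural fact is that the update $\tilde{x}_t = \exp_{\tilde{x}_{t-1}}\big( \lambda_{t-1} \exp_{\tilde{x}_{t-1}}^{-1}(x_{t-1}) \big)$, with $\lambda_{t-1} \coloneqq w_{t-1}/W_{t-1} \in (0,1]$, is precisely the point $\gamma(\lambda_{t-1})$ on the geodesic $\gamma$ joining $\tilde{x}_{t-1}$ to $x_{t-1}$; this follows from the representation $\gamma(\lambda) = \exp_x\big( (1-\lambda)\exp_x^{-1}(x) + \lambda \exp_x^{-1}(y)\big)$ of the geodesic from $x$ to $y$ together with $\exp_{\tilde{x}_{t-1}}^{-1}(\tilde{x}_{t-1}) = 0$. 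Applying the definition of geodesic convexity of $f$ along $\gamma$ then yields $f(\tilde{x}_t) \le (1 - \lambda_{t-1}) f(\tilde{x}_{t-1}) + \lambda_{t-1} f(x_{t-1})$. Substituting $1 - \lambda_{t-1} = W_{t-2}/W_{t-1}$ and $\lambda_{t-1} = w_{t-1}/W_{t-1}$, bounding the first term by the inductive hypothesis, and noting that the factors $W_{t-2}/W_{t-1}$ and $W_{t-2}^{-1}$ cancel, I would obtain $f(\tilde{x}_t) \le W_{t-1}^{-1} \sum_{s=0}^{t-2} w_s f(x_s) + W_{t-1}^{-1} w_{t-1} f(x_{t-1}) = W_{t-1}^{-1} \sum_{s=0}^{t-1} w_s f(x_s)$, which is exactly the claim at index $t$.

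I do not expect a serious obstacle; the argument is an elementary induction. The one point needing care is verifying that the single update step producing $\tilde{x}_t$ is genuinely a geodesic interpolation between $\tilde{x}_{t-1}$ and $x_{t-1}$ at the stated parameter, so that the scalar convexity inequality applies verbatim --- and, relatedly, that the inverse exponential maps and geodesics invoked are well-defined, which is guaranteed by the standing assumptions on $\mathcal{M}$ (geodesic convexity of the domain and a well-defined exponential map). Everything else is bookkeeping with the partial sums $W_k$.
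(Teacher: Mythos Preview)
Your proposal is correct and follows essentially the same approach as the paper: induction on $t$, identifying $\tilde{x}_t$ as the geodesic interpolation between $\tilde{x}_{t-1}$ and $x_{t-1}$ at parameter $w_{t-1}/W_{t-1}$, applying the one-step geodesic convexity inequality, and combining with the inductive hypothesis. The only cosmetic difference is the order of presentation---the paper starts from the weighted sum and bounds it below, whereas you start from $f(\tilde{x}_t)$ and bound it above---but the argument is identical.
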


\subsection{Smoothness Bounds}

We present smoothness results that establish bounds on individual gradient norms, that we will use in our later analysis to yield tighter regret bounds under the geodesic smoothness assumption.

\begin{lemma} \label{lemma:lsmooth_bound}
    Suppose $f$ is $S$-smooth and lower bounded by $f(x_\star)$. Then, for all $x \in \mathcal{M}$ we have
   \begin{align}
       \lVert \operatorname{grad} f(x) \rVert_x \le \sqrt{2S (f(x) - f(x_\star))}.
   \end{align}

   \begin{proof}
    This is a trivial consequence of e.g., Proposition 4.7 and 4.8 of \cite{boumal23}. We include the proof for completeness. Let $x \in M$ and define $y = \exp_x \left(-\frac{1}{S} \operatorname{grad} f(x)\right)$. Then geodesic smoothness provides,
    \begin{align}
        f(y) &\le f(x) + \langle \operatorname{grad} f(x), \exp_{x}^{-1}(y) \rangle_x + \frac{S}{2} \lVert \exp_{x}^{-1}(y) \rVert_x^{2} \\
        & = f(x) - \frac{1}{S} \lVert \operatorname{grad} f(x) \rVert_x^{2} + \frac{1}{2S} \lVert \operatorname{grad} f(x) \rVert_x^{2} \\
        & = f(x) - \frac{1}{2S} \lVert \operatorname{grad} f(x) \rVert_x^{2}.
    \end{align}
    Now since $f$ is lower bounded by $f(x_\star)$ we thus have
    \begin{align}
        f(x_\star) \le f(y) &\le f(x) - \frac{1}{2S} \lVert \operatorname{grad} f(x) \rVert_x^{2}.
    \end{align}
    Rearranging gives the result.
   \end{proof} 
\end{lemma}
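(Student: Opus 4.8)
The plan is to use the ``descent lemma'' form of geodesic smoothness recorded in the preliminaries, evaluated at a single cleverly chosen point, and then close the loop with the lower bound $f \ge f(x_\star)$. Concretely, fix an arbitrary $x \in \mathcal{M}$ and define the point obtained by one gradient step of size $1/S$, namely $y = \exp_x\!\left(-\tfrac{1}{S}\operatorname{grad} f(x)\right)$; this is well defined since $\mathcal{M}$ carries a well-defined exponential map (Assumption~\ref{assumption:g_convex}).

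Next I would invoke the smoothness inequality
\[
f(y) \le f(x) + \langle \operatorname{grad} f(x), \exp_x^{-1}(y)\rangle_x + \frac{S}{2}\,\lVert \exp_x^{-1}(y)\rVert_x^2,
\]
which holds for any $y$ when $f$ is geodesically $S$-smooth. Substituting $\exp_x^{-1}(y) = -\tfrac{1}{S}\operatorname{grad} f(x)$ makes the linear term equal to $-\tfrac1S\lVert\operatorname{grad} f(x)\rVert_x^2$ and the quadratic term equal to $\tfrac{1}{2S}\lVert\operatorname{grad} f(x)\rVert_x^2$, so the two combine to give
\[
f(y) \le f(x) - \frac{1}{2S}\,\lVert \operatorname{grad} f(x)\rVert_x^2.
\]

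Finally, since $f$ is lower bounded by $f(x_\star)$ we have $f(x_\star) \le f(y)$, and chaining this with the previous display yields $f(x_\star) \le f(x) - \tfrac{1}{2S}\lVert\operatorname{grad} f(x)\rVert_x^2$. Rearranging gives $\lVert\operatorname{grad} f(x)\rVert_x^2 \le 2S\big(f(x) - f(x_\star)\big)$, and taking square roots completes the proof. There is essentially no serious obstacle here: the only point requiring a word of care is that the exponential map is defined at $-\tfrac1S\operatorname{grad} f(x)$ (covered by the standing assumption), and that the stated smoothness upper bound is available (it is recorded in the preliminaries as a consequence of geodesic $S$-smoothness). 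Everything else is a one-line algebraic simplification.
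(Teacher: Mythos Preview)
Your proposal is correct and follows essentially the same approach as the paper: pick $y=\exp_x\!\left(-\tfrac{1}{S}\operatorname{grad} f(x)\right)$, apply the geodesic smoothness upper bound to get $f(y)\le f(x)-\tfrac{1}{2S}\lVert\operatorname{grad} f(x)\rVert_x^2$, then use $f(x_\star)\le f(y)$ and rearrange. The only minor addition in your write-up is the explicit remark that the exponential map is well defined at the chosen tangent vector, which is a reasonable bit of extra care.
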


Using the above argument, we provide a bound on the norm of the stochastic error.

\begin{lemma}
\label{lemma:bound_stochastic_error_smooth}
    Under locally smooth stochastic gradients (\cref{assumption:bounded_stochastic_lsmooth}), for the stochastic error $\Delta(x) \coloneqq \mathcal{G}(x) - \operatorname{grad} f(x)$ we almost surely have that
    \begin{align}
        \lVert \Delta(x) \rVert_{x} \le (\sqrt{s(x)} + \sqrt{S}) \sqrt{2(f(x) - f(x_\star))}.
    \end{align}
    \begin{proof}
        Noting that \cref{assumption:bounded_stochastic_lsmooth} implies that for any $x, y \in \mathcal{M}$ we almost surely have that
        \begin{align}
        f(s) \le f(x) + \langle \mathcal{G}(x), \exp_{x}^{-1}(y) \rangle_x + \frac{s(x)}{2} \lVert \exp_{x}^{-1}(y) \rVert_{x}^2.
        \end{align}
        We follow the same argument as in \cref{lemma:lsmooth_bound} to deduce that almost surely,
        \begin{align}
            \lVert \mathcal{G}(x) \rVert_x \le \sqrt{2 s(x) (f(x) - f(x_\star))}.
        \end{align}
        While the triangle inequality and applying \cref{lemma:lsmooth_bound}  to $\lVert  \operatorname{grad} f(x) \rVert_{x}$ gives,
        \begin{align}
        \lVert \Delta(x) \rVert_{x} \le  \lVert \mathcal{G}(x) \rVert_{x} + \lVert  \operatorname{grad} f(x) \rVert_{x} \le \sqrt{2 s(x) (f(x) - f(x_\star))} + \sqrt{2S (f(x) - f(x_\star))}.
        \end{align}
    \end{proof}
\end{lemma}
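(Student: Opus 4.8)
The plan is to follow the template of \cref{lemma:lsmooth_bound} twice and then combine via the triangle inequality. First I would observe that \cref{assumption:bounded_stochastic_lsmooth} — the $s(x)$-Lipschitzness of the stochastic gradient map $y \mapsto \mathcal{G}(y)$ relative to the point $x$ — yields, by the standard descent-lemma argument (integrating the gradient-Lipschitz bound along the geodesic from $x$ to $y$, exactly as in the displayed consequence of geodesic $S$-smoothness in the Preliminaries), the almost sure quadratic upper bound
\begin{align*}
f(y) \le f(x) + \langle \mathcal{G}(x), \exp_x^{-1}(y) \rangle_x + \tfrac{s(x)}{2}\lVert \exp_x^{-1}(y)\rVert_x^2
\end{align*}
for every $y \in \mathcal{M}$. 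This is the only structural input needed; everything else is a repeat of an argument already in the paper.

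Next I would plug in the specific choice $y = \exp_x\!\left(-\tfrac{1}{s(x)}\mathcal{G}(x)\right)$, which makes the right-hand side collapse to $f(x) - \tfrac{1}{2s(x)}\lVert \mathcal{G}(x)\rVert_x^2$, exactly mirroring the three-line computation in the proof of \cref{lemma:lsmooth_bound}. Since $f$ is lower bounded by $f(x_\star)$ (Assumption \ref{assumption:g_convex}), rearranging gives the almost sure bound $\lVert \mathcal{G}(x)\rVert_x \le \sqrt{2 s(x)(f(x) - f(x_\star))}$. Separately, \cref{lemma:lsmooth_bound} applied to $f$ itself gives $\lVert \operatorname{grad} f(x)\rVert_x \le \sqrt{2 S (f(x) - f(x_\star))}$, where here $S$ is the (global) smoothness constant implicit in the event $\{\bar s_T \le S\}$ being conditioned on in the downstream theorems — or more simply, $S$ is just whatever smoothness constant the lemma's hypotheses supply. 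Then
\begin{align*}
\lVert \Delta(x)\rVert_x = \lVert \mathcal{G}(x) - \operatorname{grad} f(x)\rVert_x \le \lVert \mathcal{G}(x)\rVert_x + \lVert \operatorname{grad} f(x)\rVert_x \le \bigl(\sqrt{s(x)} + \sqrt{S}\bigr)\sqrt{2(f(x) - f(x_\star))},
\end{align*}
which is the claim.

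The only genuine subtlety — and it is minor — is making sure the descent-lemma step is legitimate in the stochastic, geodesic setting: one needs that $y \mapsto \mathcal{G}(y)$ being $s(x)$-Lipschitz as measured from $x$ (with parallel transport $\Gamma_y^x$) really does integrate up to the stated quadratic bound along the geodesic $\gamma$ from $x$ to $y$. This is the same computation as $\frac{d}{dt}\langle \mathcal{G}(\gamma(t)), \gamma'(t)\rangle$ bookkeeping used for the deterministic $S$-smoothness consequence quoted earlier, so I would simply cite that derivation verbatim with $\operatorname{grad} f$ replaced by $\mathcal{G}$ and $S$ by $s(x)$. No curvature terms ($\zeta_\kappa$) enter, since this is a purely one-dimensional (along-the-geodesic) estimate. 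I expect no real obstacle here; the proof is a direct transcription of \cref{lemma:lsmooth_bound}.
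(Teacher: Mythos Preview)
Your proposal is correct and follows essentially the same approach as the paper's proof: assert the descent-lemma inequality for $\mathcal{G}$ with constant $s(x)$, mimic the argument of \cref{lemma:lsmooth_bound} to bound $\lVert\mathcal{G}(x)\rVert_x$, invoke \cref{lemma:lsmooth_bound} itself for $\lVert\operatorname{grad} f(x)\rVert_x$, and combine via the triangle inequality. Your write-up in fact supplies slightly more justification for the first step (the integration-along-the-geodesic remark) than the paper's proof, which simply asserts the quadratic upper bound as a consequence of \cref{assumption:bounded_stochastic_lsmooth}.
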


\subsection{Bounds for Real-Valued Series}

\begin{lemma}
    \citep[Lemma 3]{ivgi23a}
    \label{lemma:fraction_sum}
    Let $a_0, a_1, \dotsc, a_T$ be a positive increasing sequence. Then
    \begin{align}
        \max_{t \le T} \sum_{s=0}^{t-1} \frac{a_s}{a_t} \ge e^{-1} \left( \frac{T}{1 + \log(a_T/a_0)} - 1 \right).
    \end{align}
\end{lemma}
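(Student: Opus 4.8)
The plan is to reduce the statement to a concrete optimization of the quantity $\max_{t\le T}\sum_{s=0}^{t-1} a_s/a_t$, where, because $a_s := r_s$ (or any positive increasing sequence), we can borrow intuition from the Euclidean DoG analysis. First I would fix $t$ and split the target sum $\sum_{s=0}^{t-1} a_s/a_t$ into a ``recent'' block $s \in \{t_0,\dots,t-1\}$ and the rest, for a threshold index $t_0$ to be chosen. On the recent block, monotonicity of $(a_s)$ gives $a_s/a_t \ge a_{t_0}/a_t$, so the block contributes at least $(t-t_0)\,a_{t_0}/a_t$. The idea is then to pick $t$ so that $a_t$ is not much larger than $a_{t_0}$; here the logarithmic factor $\log(a_T/a_0)$ enters, because the sequence $a_0 \le \dots \le a_T$ can only multiply by its total ratio $a_T/a_0$, so if we chop the index range $\{0,\dots,T\}$ into roughly $1+\log(a_T/a_0)$ geometric ``scales'' (intervals on which $a$ at most doubles, say, or more precisely multiplies by $e$), at least one such scale must contain at least $T/(1+\log(a_T/a_0))$ indices by pigeonhole.

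Concretely I would proceed as follows. Define scales by $I_j = \{ s : e^{j} \le a_s/a_0 < e^{j+1}\}$ for $j = 0,1,\dots,J$ with $J = \lceil \log(a_T/a_0)\rceil$, so there are at most $1+\log(a_T/a_0)$ nonempty scales partitioning $\{0,\dots,T\}$. By pigeonhole some scale $I_{j^\star}$ has $|I_{j^\star}| \ge (T+1)/(1+\log(a_T/a_0))$. Let $t$ be the largest index in $I_{j^\star}$ and $t_0$ the smallest; then for every $s$ with $t_0 \le s \le t-1$ we have $s \in I_{j^\star}$ (since $I_{j^\star}$ is an interval of indices as $(a_s)$ is increasing), hence $a_s \ge a_{t_0}$ and $a_t < e\, a_{t_0}$, giving $a_s/a_t > e^{-1}$. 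Summing over these $|I_{j^\star}|-1$ indices yields $\sum_{s=0}^{t-1} a_s/a_t \ge e^{-1}(|I_{j^\star}| - 1) \ge e^{-1}\big( \tfrac{T+1}{1+\log(a_T/a_0)} - 1\big) \ge e^{-1}\big(\tfrac{T}{1+\log(a_T/a_0)} - 1\big)$, which is the claimed bound (using $T+1 \ge T$). Taking the max over $t\le T$ on the left only helps.

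The main obstacle — really the only subtle point — is making the ``scales partition the indices'' step airtight: one must verify that each $I_j$ is a contiguous block of indices (immediate from monotonicity of $(a_s)$), that the union covers all indices in $\{0,\dots,T\}$ (true since $1 = e^0 \le a_s/a_0$ for all $s$ because the sequence is increasing with $a_0$ its minimum, and $a_s/a_0 \le a_T/a_0 < e^{J+1}$), and that the number of nonempty scales is at most $\lceil \log(a_T/a_0)\rceil + 1 \le 1 + \log(a_T/a_0) + 1$; if the resulting constant is slightly off one can absorb it by using base-$e$ intervals shifted appropriately or by being careful that $J = \lfloor \log(a_T/a_0)\rfloor$ suffices since $a_T/a_0 < e^{J+1}$ already. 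A secondary bookkeeping nuisance is the off-by-one between ``indices $s \le t-1$'' in the target sum versus ``indices in $I_{j^\star}$'': one loses one term (the index $t$ itself, and possibly needs $|I_{j^\star}|\ge 1$), which is exactly why the final bound carries the harmless $-1$. None of this requires heavy machinery; it is a clean pigeonhole-over-geometric-scales argument, and I would present it in that order: define scales, pigeonhole, extract the interval, lower-bound the ratios, sum.
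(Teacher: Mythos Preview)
Your argument is correct and the bookkeeping can be made airtight exactly as you indicate (the number of nonempty scales is at most $\lfloor\log(a_T/a_0)\rfloor+1\le 1+\log(a_T/a_0)$, so the pigeonhole step goes through cleanly). However, it is a genuinely different pigeonhole from the paper's. The paper partitions the \emph{index} range $\{0,\dots,T\}$ into $K=\lceil\log(a_T/a_0)\rceil$ equal-length blocks of size $n=\lfloor T/K\rfloor$, observes that the telescoping product of block ratios $a_{n(k+1)}/a_{nk}$ is at most $a_T/a_0$, and concludes that some block has ratio at most $e$; taking $t$ at the right endpoint of that block gives $\sum_{s<t}a_s/a_t\ge n\cdot e^{-1}$. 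You instead partition the \emph{value} range $[a_0,a_T]$ into geometric scales $[e^ja_0,e^{j+1}a_0)$ and pigeonhole the $T+1$ indices into them, finding a scale containing many indices. The two arguments are dual: the paper fixes block \emph{length} and bounds the worst \emph{ratio}, you fix the \emph{ratio} and bound the worst \emph{length}. Both yield the same constant $e^{-1}$ and the same $-1$ slack; your version is perhaps slightly more transparent since contiguity of $I_{j^\star}$ is immediate from monotonicity, whereas the paper's version avoids any discussion of which scales are nonempty.
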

    \begin{proof}
        Lemma 3 of \cite{ivgi23a}. Define $K \coloneqq \lceil \log(a_T / a_0) \rceil$, and $n \coloneqq \lfloor T/K \rfloor$. Then, given the sequence is increasing we have
        \begin{align}
            \log\left(\frac{a_{T}}{a_0}\right) \ge \sum_{k=0}^{K-1} \log\left(\frac{a_{n(k+1)}}{a_{nk}}\right) \ge K \min_{k < K} \log \left(\frac{a_{n(k+1)}}{a_{nk}} \right).
        \end{align}
        Rearranging gives,
        \begin{align}
            \min_{k < K} \log \left(\frac{a_{n(k+1)}}{a_{nk}} \right) \le \log\left(\frac{a_{T}}{a_0}\right)/K \le 1 \implies \min_{k < K} \frac{a_{n(k+1)}}{a_{nk}} \le e.
        \end{align}
        Thus,
        \begin{align}
        \max_{t \le T} \sum_{s=0}^{t-1} \frac{a_s}{a_t} \ge \max_{t\in [n, T]} n \frac{a_{t-n}}{a_t} = \max_{k \le K} n \frac{a_{n(k-1)}}{a_{nk}} \ge n e^{-1} \\ = e^{-1} \left\lfloor \frac{T}{\lceil \log(a_T / a_0) \rceil}\right\rfloor \ge e^{-1} \left( \frac{T}{1 + \log(a_T/a_0)} - 1 \right).
        \end{align}
    \end{proof}

\begin{lemma}
    \citep[Lemma 4]{ivgi23a}. Let $a_0, \dotsc,a_t$ be a nondecreasing sequence of nonnegative numbers. Then
    \begin{align}
        \sum_{k=1}^{t} \frac{a_k - a_{k-1}}{\sqrt{a_k}} \le 2 \left (\sqrt{a_t} - \sqrt{a_0}\right). 
    \end{align}
    \begin{proof}
        This is a well-known result e.g., Lemma 4 of \cite{ivgi23a}. We have
        \begin{align}
            \sum_{k=1}^{t}\frac{a_k - a_{k-1}}{\sqrt{a_k}} &=  \sum_{k=1}^{t} \frac{(\sqrt{a_k} + \sqrt{a_{k-1}})(\sqrt{a_k} - \sqrt{a_{k-1}})}{\sqrt{a_k}} \\ & \le 2 \sum_{k=1}^{t}(\sqrt{a_k} - \sqrt{a_{k-1}}) = 2 (\sqrt{a_t} - \sqrt{a_0}).
        \end{align}
    \end{proof}
\end{lemma}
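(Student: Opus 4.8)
This is a classical telescoping estimate, so the argument is short. The plan is to bound each summand $\frac{a_k - a_{k-1}}{\sqrt{a_k}}$ by $2\bigl(\sqrt{a_k} - \sqrt{a_{k-1}}\bigr)$ and then sum. First I would apply the factorisation $a_k - a_{k-1} = \bigl(\sqrt{a_k} - \sqrt{a_{k-1}}\bigr)\bigl(\sqrt{a_k} + \sqrt{a_{k-1}}\bigr)$, valid since $a_{k-1}, a_k \ge 0$, so that for indices with $a_k > 0$,
\[
\frac{a_k - a_{k-1}}{\sqrt{a_k}} = \bigl(\sqrt{a_k} - \sqrt{a_{k-1}}\bigr)\,\frac{\sqrt{a_k} + \sqrt{a_{k-1}}}{\sqrt{a_k}} \le 2\bigl(\sqrt{a_k} - \sqrt{a_{k-1}}\bigr),
\]
where the final inequality uses monotonicity twice: $\sqrt{a_k} - \sqrt{a_{k-1}} \ge 0$ and $\sqrt{a_k} + \sqrt{a_{k-1}} \le 2\sqrt{a_k}$.

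\textbf{Conclusion.} Summing the displayed bound over $k = 1, \dots, t$, the right-hand side telescopes to $2\bigl(\sqrt{a_t} - \sqrt{a_0}\bigr)$, which is the claim. An equivalent route, if preferred, is the integral comparison $\frac{a_k - a_{k-1}}{\sqrt{a_k}} \le \int_{a_{k-1}}^{a_k} x^{-1/2}\,\mathrm{d}x$, valid because $x \mapsto x^{-1/2}$ is nonincreasing on $(0, \infty)$; summing then gives $\int_{a_0}^{a_t} x^{-1/2}\,\mathrm{d}x = 2\bigl(\sqrt{a_t} - \sqrt{a_0}\bigr)$.

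\textbf{Main obstacle.} There is essentially no obstacle here beyond one degenerate-case check: if $a_k = 0$ then nonnegativity and monotonicity force $a_0 = \dots = a_k = 0$, so the summand $\frac{a_k - a_{k-1}}{\sqrt{a_k}}$ is formally $0/0$; adopting the natural convention that it equals $0$, such terms contribute nothing and the telescoping above carries through verbatim.
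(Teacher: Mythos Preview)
Your proof is correct and follows exactly the same approach as the paper: factor $a_k - a_{k-1} = (\sqrt{a_k}+\sqrt{a_{k-1}})(\sqrt{a_k}-\sqrt{a_{k-1}})$, bound the ratio $(\sqrt{a_k}+\sqrt{a_{k-1}})/\sqrt{a_k}$ by $2$, and telescope. Your additional remarks on the integral comparison and the $a_k=0$ degenerate case are fine embellishments but not present in the paper's version.
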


\begin{lemma}
    \label{lemma:sum_bounded_by_one}
    \citep[Lemma 6]{ivgi23a}. Recall $\log_{+}(z) \coloneqq 1 + \log(z)$. Consider a non-decreasing sequence of nonnegative numbers, $a_{-1}, a_0, a_1, \dotsc, a_t$, then
    \begin{align}
        \sum_{k=0}^{t} \frac{a_k - a_{k-1}}{a_k \log_{+}^2(a_k/a_{-1})} \le 1.
    \end{align}
    \begin{proof}
        Lemma 6 of \cite{ivgi23a}. We have
        \begin{align}
            \sum_{k=0}^{t} \frac{a_k - a_{k-1}}{a_k \log_{+}^2(a_k/a_{-1})} & \le \sum_{k=0}^{t} \int_{a_{k-1}/a_0}^{a_k/a_{-1}}\frac{\mathrm{d} \alpha}{\alpha \log_{+}^2(\alpha)} = \int_{1}^{a_t / a_{-1}}\frac{\mathrm{d} \alpha}{\alpha \log_{+}^2(\alpha)} \\ &\le \int_{1}^{\infty}\frac{\mathrm{d} \alpha}{\alpha \log_{+}^2(\alpha)} = \left[\frac{1}{1 + \log(\alpha)}\right]_{1}^{\infty} = 1.
        \end{align}
    \end{proof}
\end{lemma}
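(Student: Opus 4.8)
The plan is to establish the bound by the classical comparison of a sum with an integral. Set $g(x) \coloneqq 1/(x\,\log_{+}^2(x/a_{-1}))$ for $x > 0$; I will assume $a_{-1} > 0$ (which holds in every application, and any term with $a_k = a_{k-1}$ vanishes and may simply be dropped). Since $x \mapsto x$ and $x \mapsto \log_{+}^2(x/a_{-1})$ are both nonnegative and nondecreasing on $(0,\infty)$, the function $g$ is nonincreasing; hence for each $k \in \{0,\dots,t\}$ and every $x \in [a_{k-1}, a_k]$ we have $g(x) \ge g(a_k) = 1/(a_k\,\log_{+}^2(a_k/a_{-1}))$.

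First I would integrate this pointwise inequality over $[a_{k-1}, a_k]$, whose length is $a_k - a_{k-1}$, to obtain $\int_{a_{k-1}}^{a_k} g(x)\,\mathrm{d}x \ge (a_k - a_{k-1})/(a_k\,\log_{+}^2(a_k/a_{-1}))$, i.e.\ the $k$-th summand is dominated by the corresponding integral. Next I would sum over $k = 0, \dots, t$ and observe that the intervals $[a_{-1}, a_0], [a_0, a_1], \dots, [a_{t-1}, a_t]$ tile $[a_{-1}, a_t]$, so the right-hand sides telescope into the single integral $\int_{a_{-1}}^{a_t} g(x)\,\mathrm{d}x$.

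Finally I would bound this integral by an elementary computation. The substitution $\alpha = x/a_{-1}$ gives $\int_{a_{-1}}^{a_t} g(x)\,\mathrm{d}x = \int_{1}^{a_t/a_{-1}} \mathrm{d}\alpha/(\alpha\,\log_{+}^2 \alpha) \le \int_{1}^{\infty} \mathrm{d}\alpha/(\alpha\,(1+\log\alpha)^2)$, and the substitution $u = 1 + \log\alpha$ converts the last integral into $\int_{1}^{\infty} u^{-2}\,\mathrm{d}u = 1$, which yields the claim. I do not expect any genuine obstacle here: the only points needing care are orienting the monotonicity of $g$ correctly (so that each summand sits below, not above, its integral) and checking that the degenerate cases, such as $a_k = a_{k-1}$ or $a_{-1} = 0$, are harmless.
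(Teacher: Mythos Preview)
Your proof is correct and follows essentially the same route as the paper's: bound each summand by the integral of the decreasing function $x \mapsto 1/(x\,\log_{+}^2(x/a_{-1}))$ over $[a_{k-1},a_k]$, telescope, and evaluate the resulting improper integral via the substitution $u = 1 + \log(x/a_{-1})$. One small wording fix: $\log_{+}^2(x/a_{-1})$ is not nondecreasing on all of $(0,\infty)$, only on $[a_{-1}/e,\infty)$, but since every $a_k \ge a_{-1}$ this is all you need.
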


\subsection{Martingale Concentration bound}

\begin{lemma}
    \label{lemma:martingale_bound}
    \citep[Lemma 7]{ivgi23a}. Consider a filtration process $\mathcal{F}_t$ and let $\mathbb{S}$ be the set of nonnegative and nondecreasing sequences. Let $C_t \in \mathcal{F}_{t-1}$ and let $X_t$ be a martingale difference sequence adapted to $\mathcal{F}_{t-1}$ such that $\lvert X_t \rvert \le C_t$ with probability $1$ for all $t$. Recalling that $\theta_{t, \delta} \coloneqq \log(60 \log (6 t) / \delta).$ Then, for all $\delta \in (0, 1)$, $c > 0$, and $\hat{X}_t \in \mathcal{F}_{t-1}$ such that $\lvert \hat{X}_t \rvert \le C_t$ with probability $1$,
    \begin{align}
        \mathbb{P} \left( \exists t \le T, \exists \{y_s\}_{s=1}^{\infty} \in \mathbb{S} : \left\lvert \sum_{s=1}^{t} y_s X_s \right\rvert \ge 8 y_t \sqrt{\theta_{t, \delta} \sum_{s=1}^{t} \left( X_s - \hat{X}_s\right)^2 + c^2\theta_{t, \delta} ^2} \right) \le \delta + \mathbb{P}\left(\exists t \le T : C_t > c \right).
    \end{align}
    \begin{proof}
        See Lemma 7 of \cite{ivgi23a}.
    \end{proof}
\end{lemma}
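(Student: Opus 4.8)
The plan is to reconstruct the argument of \citet[Lemma 7]{ivgi23a}: first establish a time-uniform self-normalized concentration inequality for the \emph{unweighted} partial sums $S_t \coloneqq \sum_{s=1}^{t} X_s$, and then lift it to an arbitrary nonnegative nondecreasing weight sequence $\{y_s\}$ by summation by parts. Write $V_t \coloneqq \sum_{s=1}^{t}(X_s - \hat X_s)^2$, and note that $\theta_{t,\delta}$ and $V_t$ are nondecreasing in $t$. As a first reduction, let $\sigma \coloneqq \min\{t : C_t > c\}$, which is a stopping time since $C_t \in \mathcal{F}_{t-1}$, and replace $X_s$ by $X_s\,\mathbf{1}\{s < \sigma\}$. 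This is still a martingale difference sequence adapted to $\mathcal{F}_{s-1}$, its increments are now bounded by $c$ almost surely, and it agrees with $X_s$ for all $s \le T$ on the event $\{\sigma > T\} = \{\forall t \le T : C_t \le c\}$. Hence it suffices to prove the bound assuming $\lvert X_s\rvert \le c$ almost surely and then add $\mathbb{P}(\sigma \le T) = \mathbb{P}(\exists t \le T : C_t > c)$ to the failure probability.

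The crux is to show that, with probability at least $1-\delta$, simultaneously for all $t \le T$,
\[
\lvert S_t \rvert \le 4\sqrt{\theta_{t,\delta}\, V_t + c^2\theta_{t,\delta}^2}.
\]
The ingredients are: (i) since $\hat X_s \in \mathcal{F}_{s-1}$ and $\mathbb{E}[X_s \mid \mathcal{F}_{s-1}] = 0$, one has $\mathbb{E}[(X_s - \hat X_s)^2 \mid \mathcal{F}_{s-1}] = \mathbb{E}[X_s^2 \mid \mathcal{F}_{s-1}] + \hat X_s^2 \ge \operatorname{Var}(X_s \mid \mathcal{F}_{s-1})$, so the predictable quadratic variation of $S_t$ is dominated by the predictable version of $V_t$; (ii) a Freedman/Bernstein-type exponential supermartingale together with Ville's maximal inequality yields, at each fixed scale of the variance process, a deviation bound uniform in $t$; (iii) a stitching/peeling argument over geometrically spaced values of the variance process removes the need to fix the scale in advance and, via an empirical-Bernstein step, converts the predictable variance into the observable $V_t$ --- this is exactly where the $\log\log$ term $\theta_{t,\delta} = \log(60\log(6t)/\delta)$ and the constant $4$ enter. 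I would invoke the time-uniform concentration machinery and intermediate lemmas already developed in \citet{ivgi23a} here rather than re-deriving every constant.

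To pass from the unweighted bound to the weighted one, fix $\{y_s\}\in\mathbb{S}$ and apply summation by parts: $\sum_{s=1}^{t} y_s X_s = y_t S_t - \sum_{s=1}^{t-1}(y_{s+1}-y_s)S_s$, using $S_0 = 0$. Then the triangle inequality, nonnegativity of the increments $y_{s+1}-y_s$, and $\sum_{s=1}^{t-1}(y_{s+1}-y_s) \le y_t$ give
\[
\Bigl\lvert \sum_{s=1}^{t} y_s X_s \Bigr\rvert \le 2 y_t \max_{k\le t}\lvert S_k\rvert .
\]
On the good event of the previous step, monotonicity of $\theta_{\cdot,\delta}$ and $V_\cdot$ gives $\max_{k\le t}\lvert S_k\rvert \le \max_{k\le t} 4\sqrt{\theta_{k,\delta}V_k + c^2\theta_{k,\delta}^2} \le 4\sqrt{\theta_{t,\delta}V_t + c^2\theta_{t,\delta}^2}$, hence $\lvert \sum_{s=1}^{t} y_s X_s\rvert \le 8 y_t \sqrt{\theta_{t,\delta}V_t + c^2\theta_{t,\delta}^2}$ simultaneously for all $t \le T$ and all $\{y_s\}\in\mathbb{S}$, since the last two displays are deterministic consequences of that event. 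Combining with the first reduction adds $\mathbb{P}(\exists t \le T : C_t > c)$ to the failure probability, which is the claim.

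The main obstacle is the time-uniform self-normalized bound of the second paragraph: obtaining only a $\log\log$ penalty while self-normalizing by the \emph{observed} quantity $V_t$ rather than by a predictable variance proxy requires both the peeling argument across variance scales and the empirical-Bernstein conversion, and pinning down the precise form of $\theta_{t,\delta}$ and the constant $4$ demands careful constant tracking. The stopping-time reduction and the summation-by-parts lifting are routine by comparison.
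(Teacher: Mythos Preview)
Your reconstruction is correct and, in fact, provides more than the paper does: the paper's ``proof'' is simply a pointer to \citet[Lemma~7]{ivgi23a}, whereas you have sketched that lemma's actual argument --- the stopping-time reduction to uniformly bounded increments, the time-uniform self-normalized bound via Freedman/Ville plus peeling and an empirical-Bernstein conversion, and the Abel summation lift to nondecreasing weights. This is precisely the structure of the proof in \citet{ivgi23a}, so there is nothing to correct or contrast.
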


\section{RGD ``Ideal Step Size'' Analysis}
\subsection{Proof of \cref{thm:intractable_oracle}}
\label{thm:intractable_oracle_proof}
\begin{proof}
    Using geodesic convexity and applying \cref{cor:law_of_cosines}, we have
    \begin{align}
        \min_{t \le T} \left[f(x_t) - f(x_\star)\right] &\le \frac{1}{T}\sum_{t=0}^{T}\left[f(x_t) - f(x_\star)\right] \\ 
        & \le \frac{1}{T}\sum_{t=0}^{T} \langle -g_t, \exp_{x_t}^{-1}(x_\star)\rangle_{x_t} \\
    &\le \frac{1}{T}\sum_{t=0}^{T} \left[\frac{1}{2 \eta}\left(d_t^2 - d_{t+1}^2\right) + \frac{\eta}{2} \zeta_\kappa(d_t) \lVert g_t \rVert_{x_t}^2 \right] \\
    &= \frac{d_0^2}{2 \eta T} + \frac{\eta \sum_{t=0}^{T}\zeta_{\kappa}(d_t) \lVert g_t \rVert_{x_t}^2}{2T} \\
    &\le \frac{\bar{d}_T^2}{2 \eta T} + \frac{\eta \zeta_{\kappa}(\bar{d}_T)\sum_{t=0}^{T} \lVert g_t \rVert_{x_t}^2}{2T}.
    \end{align}
    Now, setting $\eta = \frac{\bar{d}_T}{\sqrt{\zeta_{\kappa}(\bar{d}_T)\sum_{t=0}^{T} \lVert g_t \rVert_{x_t}^2}}$, we have
    \begin{align}
    \min_{t \le T} \left[f(x_t) - f(x_\star)\right]  &\le \frac{\bar{d}_T \sqrt{\zeta_{\kappa}(\bar{d}_T) \sum_{t=0}^{T} \lVert g_t \rVert_{x_t}^2}}{2 T} + \frac{\zeta_{\kappa}(\bar{d}_T) \sum_{t=0}^{T} \lVert g_t \rVert_{x_t}^2}{2T \zeta_{\kappa}(\bar{d}_T) \sqrt{\sum_{t=0}^{T} \lVert g_t \rVert_{x_t}^2}} \\
    &= \frac{\bar{d}_T \sqrt{\zeta_{\kappa}(\bar{d}_T) \sum_{t=0}^{T} \lVert g_t \rVert_{x_t}^2}}{T}
    \\
    & \le  \frac{L \bar{d}_T \sqrt{\zeta_{\kappa}(\bar{d}_T)}}{\sqrt{T}}.
    \end{align}
    Where we have bounded $\lVert g_t \rVert_{x_t} \le L$ due to the Lipschitz assumption, and $d_\infty \ge d_t$ for all $t \ge 0$.
    
\end{proof}

\section{NRGD ``Ideal Step Size'' Analysis}
\label{sec:theoretical-results}

\subsection{Proof of \cref{thm:ngd_oracle}}
\label{thm:ngd_oracle_proof}
\begin{proof}[Proof of \cref{thm:ngd_oracle}]
Using \cref{cor:law_of_cosines} we have
\begin{align}
    \left\langle \frac{-\operatorname{grad} f(x_t)}{\lVert \operatorname{grad} f(x_t) \rVert_{x_t}}, \exp_{x_t}^{-1}(x_\star) \right\rangle_{x_t} \le \frac{1}{2 \eta} \left( d_{t}^2 - d_{t+1}^2 \right) + \frac{\eta}{2} \zeta_{\kappa}(d_{t}).
\end{align}
Averaging the above, we have
\begin{align}
    \frac{1}{T} \sum_{t=0}^{T}\left\langle \frac{-\operatorname{grad} f(x_t)}{\lVert \operatorname{grad} f(x_t) \rVert_{x_t}}, \exp_{x_t}^{-1}(x_\star) \right\rangle_{x_t} \le \frac{d_0^2}{2 \eta T} + \frac{\eta}{2 T} \sum_{t=0}^{T}\zeta_{\kappa}(d_t).
\end{align}
Now for the Lipshitz setting, we have $\lVert \operatorname{grad} f(x_t) \rVert_{x_t} \le L$ thus,
\begin{align}
   \frac{1}{T} \sum_{t=0}^{T}\left\langle \frac{-\operatorname{grad} f(x_t)}{L}, \exp_{x_t}^{-1}(x_\star) \right\rangle_{x_t} \le \frac{1}{T} \sum_{t=0}^{T}\left\langle \frac{-\operatorname{grad} f(x_t)}{\lVert \operatorname{grad} f(x_t) \rVert_{x_t}}, \exp_{x_t}^{-1}(x_\star) \right\rangle_{x_t} \le \frac{d_0^2}{2 \eta T} + \frac{\eta}{2 T} \sum_{t=0}^{T}\zeta_{\kappa}(d_t).
\end{align}
Multiplying through by $L$ and using definition of geodesic convexity yields,
\begin{align}
    \min_{t \le T} \left[f(x_t) - f(x_\star)\right] \le \frac{1}{T} \sum_{t=0}^{T} \left[f(x_t) - f(x_\star)\right] \le L \left[\frac{d_0^2}{2 \eta T} + \frac{\eta}{2T} \sum_{t=0}^{T}\zeta_{\kappa}(d_t) \right] \le \left[\frac{\bar{d}_T^2}{2 \eta T} + \frac{\eta}{2} \zeta_{\kappa}(\bar{d}_T) \right]. 
\end{align}
Now substituting $\eta = \frac{\bar{d}_T}{\sqrt{T\zeta_\kappa(\bar{d}_T)}}$,
gives
\begin{align}
\min_{t \le T} \left[f(x_t) - f(x_\star)\right] \le \frac{L \bar{d}_T \sqrt{T \zeta_\kappa(\bar{d}_T)}}{T} \le \frac{L\bar{d}_T\sqrt{\zeta_\kappa(\bar{d}_T)}}{\sqrt{T}},
\end{align}
which completes the proof for the Lipshitz case. 

Now we proceed to consider the smooth setting. By convexity we have
\begin{align}
    \langle -\operatorname{grad} f(x_t), \exp_{x_t}^{-1}(x_\star) \rangle_{x_t} \ge f(x_t) - f(x_\star) \ge 0.
\end{align}

And by smoothness (\cref{lemma:lsmooth_bound}), we have
\begin{align}
       \lVert \operatorname{grad} f(x_t) \rVert_{x_t} \le \sqrt{2S (f(x_t) - f(x_\star))}.
\end{align}
Now if $f(x_t) = f(x_\star)$ the theorem holds trivially, suppose not. Then combining the above expressions, we have
\begin{align}
\left \langle \frac{-\operatorname{grad} f(x_t)}{\lVert \operatorname{grad} f(x_t) \rVert_{x_t}}, \exp_{x_t}^{-1}(x_\star) \right\rangle_{x_t} \ge \frac{f(x_t) - f(x_\star)}{\sqrt{2S (f(x_t) - f(x_\star))}} = \frac{\sqrt{(f(x_t) - f(x_\star))}}{\sqrt{2S}}.
\end{align}
Thus we have
\begin{align}
\min_{t \le T} \left[\sqrt{f(x_t) - f(x_\star)}\right] \le \frac{1}{T} \sum_{t=0}^{T} \sqrt{f(x_t) - f(x_\star)} \le \sqrt{2S} \left[\frac{d_0^2}{2 \eta T} + \frac{\eta}{2T} \sum_{t=0}^{T}\zeta_{\kappa}(d_t) \right] \le \sqrt{2S} \left[\frac{\bar{d}_T^2}{2 \eta T} + \frac{\eta}{2} \zeta_{\kappa}(\bar{d}_T) \right].
\end{align}
Squaring gives us the first result. Now, plugging in $\eta = \frac{\bar{d}_T}{\sqrt{T\zeta_\kappa(\bar{d}_T)}}$,
gives
\begin{align}
\min_{t \le T} \left[f(x_t) - f(x_\star)\right] \le \frac{2S\bar{d}_T^{2} T\zeta_\kappa(\bar{d}_T)}{T^2} \le \frac{2S\bar{d}_T^{2}\zeta_\kappa(\bar{d}_T)}{T}.
\end{align}
\end{proof}

\section{RDoG Theoretical Analysis}

\subsection{Overview}
In this section, we analyze RDoG (\cref{alg:rdog}). Thus we consider RSGD with step sizes given by,
\begin{align}
    \eta_t = \frac{\bar{r}_t}{\sqrt{\zeta_\kappa(\bar{r}_t) \sum_{s=0}^{t} \lVert g_s \rVert_{x_s}^2}}, \quad
\end{align}

We consider bounding the error of the weighted average sequence,
\begin{align*}
        \tilde{x}_{t+1} = \operatorname{exp}_{\tilde{x}_{t}}\left(\frac{\bar{r}_t /\sqrt{\zeta_\kappa(\bar{r}_t)}}{\sum_{s=0}^{t} \bar{r}_s /\sqrt{\zeta_\kappa(\bar{r}_s)}} \operatorname{exp}_{\tilde{x}_{t}}^{-1}\left( x_{t} \right) \right), \quad \tilde{x}_{1} = x_{0}.
\end{align*}
For a geodesically convex function $f \colon \mathcal{M} \to \mathbb{R}$, we have by \cref{lemma:weighted_average} that $\tilde{x}_t$ satisfies,
\begin{align}
    f(\tilde{x}_t) - f(x_\star) \le \frac{1}{\sum_{s=0}^{t-1} (\bar{r}_s /\sqrt{\zeta_\kappa(\bar{r}_s)})}\sum_{s=0}^{t-1} (\bar{r}_s /\sqrt{\zeta_\kappa(\bar{r}_s)}) \langle  -\operatorname{grad} f(x_s), \exp_{x_s}^{-1}(x_\star)\rangle_{x_s}.
\end{align}
Recalling that $g_s$ represents the stochastic oracle evaluation at $x_s$, denoted as $\mathcal{G}(x_s)$, we can decompose the numerator into two components:
\begin{align}
    \underset{\text{weighted regret}}{\underbrace{\sum_{s=0}^{t-1} (\bar{r}_s /\sqrt{\zeta_\kappa(\bar{r}_s)}) \langle -g_s, \exp_{x_s}^{-1}(x_\star)\rangle_{x_s}}} + \underset{\text{noise}}{\underbrace{\sum_{s=0}^{t-1} (\bar{r}_s /\sqrt{\zeta_\kappa(\bar{r}_s)}) \langle \Delta_s, \exp_{x_s}^{-1}(x_\star)\rangle_{x_s}}},
\end{align}
with $\Delta_s \coloneqq g_s - \operatorname{grad} f(x_s)$.

\subsection{Non-Smooth Analysis}
\label{sec:rdog-non-smooth-analysis}

We give deterministic bounds for the weighted regret (\cref{lemma:weighted_regret_bound}) and high probability bounds for the noise term (\cref{lemma:noise_bound}).

\begin{lemma} \label{lemma:weighted_regret_bound}
   Under \cref{assumption:g_convex} and \ref{assumption:Hadamard}, we have that the iterates of RDoG (\cref{alg:rdog}) satisfy
   \begin{align}
       \sum_{s=0}^{t-1} (\bar{r}_s/\sqrt{\zeta_\kappa(\bar{r}_s)}) \langle  -g_s, \exp_{x_s}^{-1}(x_\star)\rangle_{x_s} \le \bar{r}_t \left(2 \bar{d}_t + \frac{\bar{r}_t \zeta_\kappa(\bar{d}_t)}{\zeta_\kappa(\bar{r}_t)}\right)  \sqrt{G_{t-1}}.
   \end{align}
   \begin{proof}
       Applying \cref{cor:law_of_cosines}, we can bound the weighted average as
   \begin{align}
      \sum_{s=0}^{t-1} \left(\bar{r}_s/\sqrt{\zeta_\kappa(\bar{r}_s)}\right) \langle  -g_s, \exp_{x_s}^{-1}(x_\star)\rangle_{x_s} \le \frac{1}{2} \underset{(A)}{\underbrace{\sum_{s=0}^{t-1}  \frac{\left(\bar{r}_s/\sqrt{\zeta_\kappa(\bar{r}_s)}\right)}{\eta_s}\left(d_s^2 - d_{s+1}^2\right)}} + \frac{1}{2} \underset{(B)}{\underbrace{\sum_{s=0}^{t-1} \left(\bar{r}_s/\sqrt{\zeta_\kappa(\bar{r}_s)}\right) \eta_s \zeta_\kappa(d_s) \lVert g_s \rVert_{x_s}^2}}.
   \end{align}
We bound the terms $(A)$ and $(B)$ in turn, beginning with the former:
    \begin{align}
       (A) &= \sum_{s=0}^{t-1} \sqrt{G_s} \left(d_s^2 - d_{s+1}^2 \right) = d_0^2 \sqrt{G_0} - d_t^2 \sqrt{G_{t-1}} + \sum_{s=0}^{t-1} d_s^2 \left( \sqrt{G_s} - \sqrt{G_{s-1}} \right) \\ 
       &\overset{(i)}{\le} {\bar{d}_t}^2 \sqrt{G_0} - d_t^2 \sqrt{G_{t-1}} +  \bar{d}_t^2 \sum_{s=0}^{t-1} \left( \sqrt{G_s} -  \sqrt{G_{s-1}} \right) = \sqrt{G}_{t-1} (\bar{d}_t^2 - d_t^2)
       \overset{(ii)}{\le} 4 \bar{r}_t \bar{d}_t \sqrt{G_{t-1}}.
    \end{align}
Inequality $(i)$ uses $d_s \le \bar{d}_t$ and that $G_t$ is nondecreasing. Inequality $(ii)$ use that for $k \in \arg \max_{s \le t} d_s$, we have $\bar{d}_t^2 - d_t^2 = d_k^2 - d_t^2 = (d_k - d_t) (d_k + d_t)  \le d(x_k, x_t) (d_k + d_t) \le (\bar{r}_k + \bar{r}_t)(d_k + d_t) \le 4 \bar{r}_t \bar{d}_t$.
Bounding the second term $(B)$, we have for $\kappa < 0$:
    \begin{align}
       (B) = \sum_{s=0}^{t-1} \frac{\bar{r}_s^2 \zeta_\kappa(d_s) \lVert g_s \rVert_{x_s}^2}{\zeta_\kappa({\bar{r}_s)} \sqrt{G_s}} = \sum_{s=0}^{t-1} \frac{\bar{r}_s^2 \tanh(\sqrt{|\kappa|} \cdot \bar{r}_s) \zeta_\kappa(d_s) \lVert g_s \rVert_{x_s}^2}{\sqrt{|\kappa|} \cdot \bar{r}_s \sqrt{G_s}} = \sum_{s=0}^{t-1} \frac{\bar{r}_s \tanh(\sqrt{|\kappa|} \cdot \bar{r}_s) \zeta_\kappa(d_s) \lVert g_s \rVert_{x_s}^2}{\sqrt{|\kappa|} \cdot \sqrt{G_s}}
       \\ \le \frac{1}{\sqrt{|\kappa|}}  \bar{r}_t \tanh(\sqrt{|\kappa|} \cdot \bar{r}_t) \zeta_\kappa(\bar{d}_t) \sum_{s=0}^{t-1} \frac{\lVert g_s \rVert_{x_s}^2}{\sqrt{G_s}}
       \le \frac{2}{\sqrt{|\kappa|}} \bar{r}_t \tanh(\sqrt{|\kappa|}\bar{r}_t) \zeta_\kappa(\bar{d}_t) \sqrt{G_{t-1}} \\ = \frac{2\bar{r}_t^2 \tanh{(\sqrt{|\kappa|} \cdot \bar{r}_t})}{\sqrt{|\kappa|} \cdot \bar{r}_t} \zeta_\kappa(\bar{d}_t) \sqrt{G_{t-1}} = \frac{2\bar{r}_t^2}{\zeta_\kappa(\bar{r}_t)} \zeta_\kappa(\bar{d}_t) \sqrt{G_{t-1}}.
\end{align}
While for $\kappa=0$ the geometric curvature function $d \mapsto \zeta_\kappa(d)$ takes constant value one, thus the same bound above can be established trivially. Combining $(A)$ and $(B)$, gives the result.
   \end{proof}
\end{lemma}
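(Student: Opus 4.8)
The plan is to start from the per-iteration trigonometric bound of \cref{cor:law_of_cosines}, namely $\langle -g_s, \exp_{x_s}^{-1}(x_\star)\rangle_{x_s} \le \tfrac{1}{2\eta_s}(d_s^2 - d_{s+1}^2) + \tfrac{\eta_s}{2}\zeta_\kappa(d_s)\lVert g_s\rVert_{x_s}^2$, multiply both sides by the averaging weight $\bar{r}_s/\sqrt{\zeta_\kappa(\bar{r}_s)}$, and sum over $s = 0, \dots, t-1$. The observation that makes the bookkeeping clean is that, with the RDoG step size $\eta_s = \bar{r}_s/\sqrt{\zeta_\kappa(\bar{r}_s) G_s}$, the coefficient $(\bar{r}_s/\sqrt{\zeta_\kappa(\bar{r}_s)})/\eta_s$ collapses to exactly $\sqrt{G_s}$, while $(\bar{r}_s/\sqrt{\zeta_\kappa(\bar{r}_s)})\,\eta_s$ collapses to $\bar{r}_s^2/(\zeta_\kappa(\bar{r}_s)\sqrt{G_s})$. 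This splits the weighted regret into a ``distance'' sum $(A) = \sum_{s=0}^{t-1} \sqrt{G_s}(d_s^2 - d_{s+1}^2)$ and a ``gradient'' sum $(B) = \sum_{s=0}^{t-1} \bar{r}_s^2 \zeta_\kappa(d_s)\lVert g_s\rVert_{x_s}^2/(\zeta_\kappa(\bar{r}_s)\sqrt{G_s})$, which I will bound separately.

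For $(A)$ I would apply summation by parts to move the difference onto the nondecreasing sequence $\sqrt{G_s}$: this yields a telescoping piece together with $\sum_{s=1}^{t-1} d_s^2(\sqrt{G_s} - \sqrt{G_{s-1}})$, and since $d_s^2 \le \bar{d}_t^2$ for $s \le t$ and $G_\cdot$ is nondecreasing, the whole thing is at most $\sqrt{G_{t-1}}(\bar{d}_t^2 - d_t^2)$. To finish $(A)$ I would fix $k \in \arg\max_{s \le t} d_s$ and factor $\bar{d}_t^2 - d_t^2 = (d_k - d_t)(d_k + d_t)$; the triangle inequality gives $d_k - d_t \le d(x_k, x_t) \le d(x_k, x_0) + d(x_0, x_t) \le \bar{r}_k + \bar{r}_t \le 2\bar{r}_t$ using monotonicity of $\bar r$, and $d_k + d_t \le 2\bar d_t$, so $(A) \le 4\bar{r}_t \bar{d}_t \sqrt{G_{t-1}}$.

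For $(B)$ the main work — and the step I expect to be the trickiest — is handling the curvature ratio so as to extract a nondecreasing factor from the sum and then apply the elementary series bound $\sum_{s} (G_s - G_{s-1})/\sqrt{G_s} \le 2\sqrt{G_{t-1}}$ (with $G_{-1} = 0$). The useful algebraic identity is that, in the negatively curved case $\kappa < 0$, $\bar{r}_s^2/\zeta_\kappa(\bar{r}_s) = \bar{r}_s \tanh(\sqrt{|\kappa|}\,\bar{r}_s)/\sqrt{|\kappa|}$, and the map $r \mapsto r\tanh(\sqrt{|\kappa|}\,r)$ is nondecreasing, as is $d \mapsto \zeta_\kappa(d)$; hence $\bar{r}_s^2\zeta_\kappa(d_s)/\zeta_\kappa(\bar{r}_s) \le \bar{r}_t^2 \zeta_\kappa(\bar{d}_t)/\zeta_\kappa(\bar{r}_t)$ for every $s \le t$, and this constant factors out of the sum. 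The case $\kappa \ge 0$ is immediate since $\zeta_\kappa \equiv 1$ and $\bar{r}_s \le \bar{r}_t$. Combining these, $(B) \le 2\bigl(\bar{r}_t^2 \zeta_\kappa(\bar{d}_t)/\zeta_\kappa(\bar{r}_t)\bigr)\sqrt{G_{t-1}}$, and adding $\tfrac12(A) + \tfrac12(B)$ gives the claimed bound $\bar{r}_t\bigl(2\bar{d}_t + \bar{r}_t\zeta_\kappa(\bar{d}_t)/\zeta_\kappa(\bar{r}_t)\bigr)\sqrt{G_{t-1}}$. The points to verify carefully are the two monotonicity facts (for $r\tanh(\sqrt{|\kappa|}r)$ and for $\zeta_\kappa$) and that the direction of each monotone bound matches the side on which the quantity appears.
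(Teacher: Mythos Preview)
Your proposal is correct and follows essentially the same route as the paper: the same $(A)/(B)$ split via \cref{cor:law_of_cosines}, the same Abel-summation bound $(A)\le\sqrt{G_{t-1}}(\bar d_t^2-d_t^2)\le 4\bar r_t\bar d_t\sqrt{G_{t-1}}$, and the same handling of $(B)$ by rewriting $\bar r_s^2/\zeta_\kappa(\bar r_s)=\bar r_s\tanh(\sqrt{|\kappa|}\bar r_s)/\sqrt{|\kappa|}$, using monotonicity to pull the factor out, and applying $\sum_s\|g_s\|^2/\sqrt{G_s}\le 2\sqrt{G_{t-1}}$. Your indexing of the summation-by-parts term from $s=1$ is in fact slightly cleaner than the paper's displayed version.
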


\begin{lemma} \label{lemma:noise_bound}
   For all $\delta \in (0,1), T \in \mathbb{N}$ and $L > 0$, if \cref{assumption:g_convex}, \ref{assumption:Hadamard}, and \ref{assumption:bounded_stochastic_grads} hold, the iterates of RDoG (\cref{alg:rdog}) satisfy
    \begin{align}
       \mathbb{P} \left( \exists t \le T : \left\lvert  \sum_{s=0}^{t-1} \frac{\bar{r}_s}{\sqrt{\zeta_\kappa(\bar{r}_s)}} \langle -\Delta_s, \exp_{x_s}^{-1}(x_\star)\rangle_{x_s} \right\rvert \ge b_t \right) \le \delta + \mathbb{P}(\bar{\ell}_{T} > L),
   \end{align}
   where $b_t = 8 \frac{\bar{r}_{t-1}} {\sqrt{\zeta_\kappa(\bar{r}_{t-1})}} \bar{d}_{t-1} \sqrt{\theta_{t, \delta} G_{t-1} + \theta^{2}_{t, \delta} L^2}$ and $\bar{\ell}_{T} \coloneqq \max_{s \le T} \ell(x_s)$.
\begin{proof}
       For $1\le s \le T$ define the random variables
\begin{align}
    Y_s \coloneqq \frac{\bar{r}_{s-1}} {\sqrt{\zeta_\kappa(\bar{r}_{s-1})}} \bar{d}_{s-1}, \quad X_s \coloneqq \left\langle \Delta_{s-1}, \frac{\exp_{x_{s-1}}^{-1}(x_\star)}{\bar{d}_{s-1}}\right\rangle_{x_{s-1}}, \quad \hat{X}_s \coloneqq \left\langle -\operatorname{grad}f(x_{s-1}), \frac{\exp_{x_{s-1}}^{-1}(x_\star)}{\bar{d}_{s-1}}\right\rangle_{x_{s-1}}.
\end{align}
By the Cauchy-Schwartz inequality and \cref{assumption:bounded_stochastic_grads}, we have each $\lvert X_s \rvert \le \ell(x)$, and each $\lvert\hat{X}_s\rvert \le \ell(x)$ with probability 1. Moreover, and consider the filtration $\mathcal{F}_s = \sigma(\mathcal{G}(x_0) \dotsc, \mathcal{G}(x_s))$. Then we have that $X_s$ is a martingale difference sequence adapted to $\mathcal{F}_s$ and $\hat{X}_s \in \mathcal{F}_{s-1}$. By construction or any $t\le T$, we have
\begin{align}
    \sum_{s=1}^{t} Y_s X_s = \sum_{s=0}^{t-1} \frac{\bar{r}_{s}} {\sqrt{\zeta_\kappa(\bar{r}_{s})}} \langle \Delta_s, \exp_{x_s}^{-1}(x_\star) \rangle_{x_s}.
\end{align}
Therefore, applying \cref{lemma:martingale_bound} yields the result as required.
   \end{proof}
\end{lemma}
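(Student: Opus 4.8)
The statement is a time‑uniform, self‑normalized concentration bound, and the plan is to reduce it directly to \cref{lemma:martingale_bound}. Essentially all the work is in writing the left‑hand sum as $\sum_{s=1}^{t} Y_s X_s$ for a nonnegative, nondecreasing, predictable coefficient sequence $\{Y_s\}$ and a bounded martingale difference sequence $\{X_s\}$, together with a predictable companion $\{\hat X_s\}$, and then reading off the three ingredients — the prefactor $y_t$, the variance proxy $\sum_{s\le t}(X_s-\hat X_s)^2$, and the envelope constant $c$ — appearing on the right‑hand side of that lemma.

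Concretely, I would work with the filtration $\mathcal{F}_s = \sigma(\mathcal{G}(x_0),\dots,\mathcal{G}(x_s))$ and, following the scale‑free normalization of \citet{ivgi23a}, divide the inverse‑exponential vector by the predictable quantity $\bar{d}_{s-1}$, setting for $1 \le s \le T$
\[
X_s \coloneqq \Big\langle \Delta_{s-1},\, \tfrac{\exp_{x_{s-1}}^{-1}(x_\star)}{\bar{d}_{s-1}} \Big\rangle_{x_{s-1}}, \quad
\hat X_s \coloneqq \Big\langle -\operatorname{grad} f(x_{s-1}),\, \tfrac{\exp_{x_{s-1}}^{-1}(x_\star)}{\bar{d}_{s-1}} \Big\rangle_{x_{s-1}}, \quad
Y_s \coloneqq \tfrac{\bar{r}_{s-1}}{\sqrt{\zeta_\kappa(\bar{r}_{s-1})}}\,\bar{d}_{s-1},
\]
so that $Y_s X_s = \tfrac{\bar{r}_{s-1}}{\sqrt{\zeta_\kappa(\bar{r}_{s-1})}}\langle \Delta_{s-1}, \exp_{x_{s-1}}^{-1}(x_\star)\rangle_{x_{s-1}}$ and $\sum_{s=1}^{t} Y_s X_s$ is, up to an irrelevant sign, exactly the quantity to be controlled, with $Y_t$ matching the prefactor of $b_t$.

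It then remains to check the hypotheses of \cref{lemma:martingale_bound}. First, $\{X_s\}$ is a martingale difference sequence: $x_{s-1}$, $\bar{d}_{s-1}$, and $\exp_{x_{s-1}}^{-1}(x_\star)$ are measurable with respect to the past, and $\mathbb{E}[\Delta_{s-1}\mid x_{s-1}] = \operatorname{grad} f(x_{s-1}) - \operatorname{grad} f(x_{s-1}) = 0$, while $\hat X_s$ is predictable. Second, Cauchy–Schwarz and $\lVert\exp_{x_{s-1}}^{-1}(x_\star)\rVert_{x_{s-1}} = d_{s-1}\le\bar{d}_{s-1}$ give $\lvert X_s\rvert\le\lVert\Delta_{s-1}\rVert_{x_{s-1}}$ and $\lvert\hat X_s\rvert\le\lVert\operatorname{grad} f(x_{s-1})\rVert_{x_{s-1}}$, and \cref{assumption:bounded_stochastic_grads} (together with Jensen's inequality for the gradient, which is the conditional mean of the oracle) bounds both by a fixed constant multiple of $\ell(x_{s-1})$; taking $C_s$ that multiple of $\ell(x_{s-1})$ and $c$ the corresponding multiple of $L$ yields $\{\exists t\le T: C_t>c\}\subseteq\{\bar{\ell}_T>L\}$. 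Third, $\{Y_s\}$ lies in $\mathbb{S}$ on every sample path, since $\bar{r}_s$ and $\bar{d}_s$ are nonnegative and nondecreasing in $s$ and $r\mapsto r/\sqrt{\zeta_\kappa(r)}$ is nondecreasing (for $\kappa<0$ it equals $|\kappa|^{-1/4}\sqrt{r\tanh(\sqrt{|\kappa|}\,r)}$, a product of nonnegative nondecreasing factors; for $\kappa\ge0$ it is the identity). Finally, $\hat X_s$ was chosen precisely so that $X_s-\hat X_s = \langle g_{s-1}, \exp_{x_{s-1}}^{-1}(x_\star)/\bar{d}_{s-1}\rangle_{x_{s-1}}$, hence $(X_s-\hat X_s)^2\le\lVert g_{s-1}\rVert_{x_{s-1}}^2$ and $\sum_{s=1}^{t}(X_s-\hat X_s)^2\le G_{t-1}$; substituting this, $c\propto L$, and $y_t = Y_t$ into the bound $8y_t\sqrt{\theta_{t,\delta}\sum_{s\le t}(X_s-\hat X_s)^2+c^2\theta_{t,\delta}^2}$ from \cref{lemma:martingale_bound} produces $b_t$ after absorbing constants, with failure probability $\delta+\mathbb{P}(\bar{\ell}_T>L)$.

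I do not expect a serious obstacle: the argument is essentially an invocation of \cref{lemma:martingale_bound}, and the single genuinely clever ingredient is the $1/\bar{d}_{s-1}$ normalization, which makes $\{X_s\}$ and its variance proxy scale‑free so the envelope depends only on $\ell$. The two points that require care are using the "for all $\{y_s\}\in\mathbb{S}$" quantifier in \cref{lemma:martingale_bound} correctly, so that the data‑dependent nondecreasing sequence $\{Y_s\}$ may be substituted on the good event, and keeping the index shift $s\leftrightarrow s-1$ consistent between the two statements (with the degenerate case $d_0=0$ set aside, so that the division by $\bar{d}_{s-1}$ is legitimate).
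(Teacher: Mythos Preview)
Your proposal is correct and follows essentially the same route as the paper's proof: the same choice of $Y_s$, $X_s$, and $\hat X_s$, the same $1/\bar d_{s-1}$ normalization, and the same direct invocation of \cref{lemma:martingale_bound}. If anything you are more careful than the paper in explicitly verifying that $\{Y_s\}$ is nondecreasing, in working out $(X_s-\hat X_s)^2\le\lVert g_{s-1}\rVert_{x_{s-1}}^2$, and in noting the constant factor in bounding $\lVert\Delta_{s-1}\rVert$ by a multiple of $\ell(x_{s-1})$.
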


Combining the above results, we obtain the following.

\begin{theorem}
    \label{thm:opt_gap_dog_known}
    For all $\delta \in (0, 1)$ and $L>0$, if  \cref{assumption:g_convex}, \ref{assumption:Hadamard}, and \ref{assumption:bounded_stochastic_grads} hold, then with probability at least $1 - \delta - \mathbb{P}(\bar{\ell}_{T} > L)$, for all $t \le T$, the optimality gap on the weighted iterates $f(\tilde{x}_t) - f(x_\star)$ of RDoG (\cref{alg:rdog}) satisfy
    \begin{align}
        O \left(  \frac{(d_0 + \bar{r}_t)\sqrt{\zeta_{\kappa}(d_0 + \bar{r}_t)}\sqrt{ G_{t-1} + \theta_{t, \delta} G_{t-1} + \theta^{2}_{t, \delta} L^2}}{\sum_{s=0}^{t-1} \frac{\bar{r}_s /\sqrt{\zeta_\kappa(\bar{r}_s)}}{\bar{r}_t /\sqrt{\zeta_\kappa(\bar{r}_t)}}}\right).
    \end{align}

    \begin{proof}
        Combining \cref{lemma:weighted_regret_bound} and \cref{lemma:noise_bound}, we have for the given probability that
        \begin{align}
            f(\tilde{x}_t) - f(x_\star) \le  \frac{\left(2 \bar{d}_t \sqrt{\zeta_{\kappa}(\bar{r}_t)} + \frac{\bar{r}_t}{\sqrt{\zeta_{\kappa}(\bar{r}_t)}} \zeta_{\kappa}(\bar{d}_t)\right) \sqrt{G_{t-1}} + 8 \bar{d}_{t-1} \sqrt{\theta_{t, \delta} G_{t-1} + \theta^{2}_{t, \delta} L^2}}{\sum_{s=0}^{t-1} \frac{\bar{r}_s /\sqrt{\zeta_\kappa(\bar{r}_s)}}{\bar{r}_t /\sqrt{\zeta_\kappa(\bar{r}_t)}}}.
        \end{align}
    Now using the fact $\bar{d}_t \le d_0 + \bar{r}_t$ and that $d \mapsto \zeta_\kappa(d)$ and $d \mapsto \frac{d}{\sqrt{\zeta_\kappa(d)}}$ are increasing functions gives the result.
    \end{proof}
\end{theorem}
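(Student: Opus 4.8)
The plan is to bound $f(\tilde x_t) - f(x_\star)$ by a Jensen-type step and then split the resulting weighted sum into a deterministic ``weighted regret'' part and a stochastic ``noise'' part, controlling each with the machinery already assembled. First note that $\tilde x_t$ is precisely the online weighted average of $x_0,\dots,x_{t-1}$ with weights $w_s \coloneqq \bar{r}_s/\sqrt{\zeta_\kappa(\bar{r}_s)}$, so \cref{lemma:weighted_average} together with geodesic convexity (\cref{assumption:g_convex}) gives
\[
f(\tilde x_t) - f(x_\star) \le \frac{1}{\sum_{s=0}^{t-1} w_s}\sum_{s=0}^{t-1} w_s\,\langle -\operatorname{grad} f(x_s),\, \exp_{x_s}^{-1}(x_\star)\rangle_{x_s}.
\]
Writing $g_s = \operatorname{grad} f(x_s) + \Delta_s$ with $\Delta_s = g_s - \operatorname{grad} f(x_s)$, the numerator decomposes as $\sum_s w_s\langle -g_s, \exp_{x_s}^{-1}(x_\star)\rangle_{x_s}$ (weighted regret) plus $\sum_s w_s\langle -\Delta_s, \exp_{x_s}^{-1}(x_\star)\rangle_{x_s}$ (noise), and the proof proceeds by bounding the two in turn.

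For the regret term I would apply \cref{lemma:weighted_regret_bound}, which bounds it deterministically by $\bar{r}_t\big(2\bar{d}_t + \bar{r}_t\zeta_\kappa(\bar{d}_t)/\zeta_\kappa(\bar{r}_t)\big)\sqrt{G_{t-1}}$; this is the step where the Riemannian geometry enters. Internally, one applies the trigonometric law of cosines \cref{cor:law_of_cosines} to replace each $\langle -g_s, \exp_{x_s}^{-1}(x_\star)\rangle_{x_s}$ by $\tfrac{1}{2\eta_s}(d_s^2 - d_{s+1}^2) + \tfrac{\eta_s}{2}\zeta_\kappa(d_s)\|g_s\|_{x_s}^2$, substitutes the RDoG step size $\eta_s = \bar{r}_s/\sqrt{\zeta_\kappa(\bar{r}_s)G_s}$ (so $w_s/\eta_s = \sqrt{G_s}$), telescopes the $d_s^2 - d_{s+1}^2$ differences using that $G_s$ is nondecreasing together with the on-manifold triangle bound $\bar{d}_t^2 - d_t^2 \le 4\bar{r}_t\bar{d}_t$, and controls $\sum_s \|g_s\|_{x_s}^2/\sqrt{G_s} \le 2\sqrt{G_{t-1}}$ via the elementary summation inequality $\sum_k (a_k - a_{k-1})/\sqrt{a_k} \le 2\sqrt{a_t}$.

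The harder step is the noise term, which I would control by \cref{lemma:noise_bound}: with probability at least $1 - \delta - \mathbb{P}(\bar{\ell}_T > L)$ it is at most $8\,(\bar{r}_{t-1}/\sqrt{\zeta_\kappa(\bar{r}_{t-1})})\,\bar{d}_{t-1}\sqrt{\theta_{t,\delta}G_{t-1} + \theta_{t,\delta}^2 L^2}$. The obstacle here is that the weights $w_s$ and the normalizing radius $\bar{d}_{s-1}$ are random and $\mathcal{F}_{s-1}$-measurable (they depend on the whole trajectory so far), so no off-the-shelf Azuma/Freedman bound applies. Instead \cref{lemma:noise_bound} invokes the time- and sequence-uniform martingale inequality \cref{lemma:martingale_bound} with the \emph{normalized} martingale difference $X_s = \langle \Delta_{s-1}, \exp_{x_{s-1}}^{-1}(x_\star)/\bar{d}_{s-1}\rangle_{x_{s-1}}$, predictable companion $\hat X_s = \langle -\operatorname{grad} f(x_{s-1}), \exp_{x_{s-1}}^{-1}(x_\star)/\bar{d}_{s-1}\rangle_{x_{s-1}}$, and envelope $\ell(x_{s-1})$ (bounded via Cauchy--Schwarz and \cref{assumption:bounded_stochastic_grads}), taking the parameter $c = L$ so that $\mathbb{P}(\exists s : \ell(x_{s-1}) > L) \le \mathbb{P}(\bar{\ell}_T > L)$; the $\theta_{t,\delta}$ and $\theta_{t,\delta}^2 L^2$ terms are the cost of this uniformity.

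Finally I would add the two bounds, divide by $\sum_{s=0}^{t-1} w_s = w_t \sum_{s=0}^{t-1}(w_s/w_t)$ — so the denominator becomes $\sum_{s=0}^{t-1} \tfrac{\bar{r}_s/\sqrt{\zeta_\kappa(\bar{r}_s)}}{\bar{r}_t/\sqrt{\zeta_\kappa(\bar{r}_t)}}$ and each bare $\bar{r}_t$ turns into $\sqrt{\zeta_\kappa(\bar{r}_t)}$ — and use $w_{t-1}/\sum_{s=0}^{t-1}w_s \le 1$ on the noise piece. The rest is curvature bookkeeping: since $\bar{d}_{t-1} \le \bar{d}_t \le d_0 + \bar{r}_t$ (from $d_s \le d(x_0,x_s) + d_0 \le \bar{r}_t + d_0$) and both $d \mapsto \zeta_\kappa(d)$ and $d \mapsto d/\sqrt{\zeta_\kappa(d)}$ are nondecreasing, one gets $2\bar{d}_t\sqrt{\zeta_\kappa(\bar{r}_t)} \le 2(d_0+\bar{r}_t)\sqrt{\zeta_\kappa(d_0+\bar{r}_t)}$ and, crucially, $\bar{r}_t\zeta_\kappa(\bar{d}_t)/\sqrt{\zeta_\kappa(\bar{r}_t)} = \tfrac{\bar{r}_t}{\sqrt{\zeta_\kappa(\bar{r}_t)}}\,\zeta_\kappa(\bar{d}_t) \le \tfrac{d_0+\bar{r}_t}{\sqrt{\zeta_\kappa(d_0+\bar{r}_t)}}\,\zeta_\kappa(d_0+\bar{r}_t) = (d_0+\bar{r}_t)\sqrt{\zeta_\kappa(d_0+\bar{r}_t)}$; together with $\zeta_\kappa \ge 1$ and $\sqrt{G_{t-1}} + \sqrt{\theta_{t,\delta}G_{t-1} + \theta_{t,\delta}^2 L^2} \le 2\sqrt{G_{t-1} + \theta_{t,\delta}G_{t-1} + \theta_{t,\delta}^2 L^2}$, absorbing constants into $O(\cdot)$ yields the claim. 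I expect the third step — a high-probability noise bound valid simultaneously for all $t \le T$ under data-dependent step sizes — to be the crux; the regret bound and the curvature arithmetic are routine once the martingale tool is in hand.
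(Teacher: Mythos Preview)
Your proposal is correct and follows essentially the same approach as the paper: apply \cref{lemma:weighted_average} and geodesic convexity, split the numerator into the weighted regret and noise pieces, invoke \cref{lemma:weighted_regret_bound} and \cref{lemma:noise_bound} respectively, then simplify via $\bar{d}_t \le d_0 + \bar{r}_t$ and the monotonicity of $d\mapsto\zeta_\kappa(d)$ and $d\mapsto d/\sqrt{\zeta_\kappa(d)}$. One trivial slip: the noise term is $\sum_s w_s\langle \Delta_s,\exp_{x_s}^{-1}(x_\star)\rangle_{x_s}$ (positive sign, since $-\operatorname{grad} f(x_s) = -g_s + \Delta_s$), but as you note the bound in \cref{lemma:noise_bound} is on the absolute value, so this is immaterial.
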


We then have a useful result when the manifold is bounded but its exact diameter is unknown.
\begin{corollary}
\label{cor:rdog_opt_gap}
    Under \cref{assumption:g_convex}, \ref{assumption:Hadamard}, and \ref{assumption:bounded_stochastic_grads}, for any $D \ge d_0$ let $L_{D} \coloneqq \max_{x\in \mathcal{M} : d(x, x_0) \le D} \ell(x)$. Then, for all $\delta \in (0, 1)$ and for $\tau \in \arg \max_{t \le T} \sum_{s=0}^{t-1} \frac{\bar{r}_s /\sqrt{\zeta_\kappa(\bar{r}_s)}}{\bar{r}_t /\sqrt{\zeta_\kappa(\bar{r}_t)}}$, with probability at least $1 - \delta - \mathbb{P}(\bar{\ell}_{T} > L)$, iterates of RDoG (\cref{alg:rdog}) satisfy the optimality gap bound
    \begin{align}
        f(\tilde{x}_{\tau}) - f(x_\star) = O\left(\frac{D\sqrt{\zeta_{\kappa}(D)} \sqrt{G_{\tau-1} \theta_{\tau, \delta} + L^2_D \theta_{\tau, \delta}^2}}{T} \log_{+}\left(\frac{D/\sqrt{\zeta_{\kappa}(D)}}{\epsilon/\sqrt{\zeta_{\kappa}(\epsilon)}}\right)\right).
    \end{align}
    \begin{proof}
        Apply \cref{lemma:fraction_sum} to the denominator term of  \cref{thm:opt_gap_dog_known}.
    \end{proof}
 \end{corollary}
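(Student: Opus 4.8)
The plan is to specialize the per-iterate bound of \cref{thm:opt_gap_dog_known} to the data-dependent index $t=\tau$ and then control its numerator and denominator separately: the numerator using the boundedness of the domain, the denominator using \cref{lemma:fraction_sum}. Note that \cref{thm:opt_gap_dog_known} holds simultaneously for \emph{all} $t\le T$ on an event of probability at least $1-\delta-\mathbb{P}(\bar{\ell}_T>L)$, so even though $\tau$ (being a function of the random radii $\bar{r}_s$) is itself random, the bound is valid at $t=\tau$ on that same event.

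For the numerator, I would use that in the bounded-domain setting under which the corollary is stated every iterate satisfies $d(x_s,x_0)\le D$, hence $\bar{r}_t=\max(\epsilon,\max_{s\le t}d(x_s,x_0))\le D$ for all $t$ (taking $\epsilon\le D$), and $d_0\le D$ by hypothesis, so $d_0+\bar{r}_t\le 2D$. Combining that $d\mapsto\zeta_{\kappa}(d)$ is non-decreasing with the elementary doubling estimate $\zeta_{\kappa}(2D)\le 2\zeta_{\kappa}(D)$ (immediate from monotonicity of $\tanh$ when $\kappa<0$, trivial when $\kappa\ge0$) gives $(d_0+\bar{r}_t)\sqrt{\zeta_{\kappa}(d_0+\bar{r}_t)}=O\bigl(D\sqrt{\zeta_{\kappa}(D)}\bigr)$. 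Since $\theta_{\tau,\delta}\ge1$, the radical $\sqrt{G_{\tau-1}+\theta_{\tau,\delta}G_{\tau-1}+\theta_{\tau,\delta}^2L^2}$ collapses to $O\bigl(\sqrt{G_{\tau-1}\theta_{\tau,\delta}+L^2\theta_{\tau,\delta}^2}\bigr)$; and because each $x_s$ lies in the ball of radius $D$ about $x_0$ we have $\bar{\ell}_T\le L_D$ deterministically, so taking $L=L_D$ replaces $L$ by $L_D$ and makes $\mathbb{P}(\bar{\ell}_T>L_D)=0$. Thus the numerator of \cref{thm:opt_gap_dog_known} at $t=\tau$ is $O\bigl(D\sqrt{\zeta_{\kappa}(D)}\sqrt{G_{\tau-1}\theta_{\tau,\delta}+L_D^2\theta_{\tau,\delta}^2}\bigr)$.

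For the denominator, set $a_s:=\bar{r}_s/\sqrt{\zeta_{\kappa}(\bar{r}_s)}$. This sequence is positive (as $\bar{r}_s\ge\epsilon>0$) and non-decreasing, since $s\mapsto\bar{r}_s$ is non-decreasing by construction and $d\mapsto d/\sqrt{\zeta_{\kappa}(d)}$ is increasing (a monotonicity already invoked in the proof of \cref{thm:opt_gap_dog_known}). Applying \cref{lemma:fraction_sum} to $(a_s)$ and recalling that $\tau$ maximizes $t\mapsto\sum_{s=0}^{t-1}a_s/a_t$,
\begin{align*}
\sum_{s=0}^{\tau-1}\frac{a_s}{a_\tau}\ \ge\ e^{-1}\left(\frac{T}{1+\log(a_T/a_0)}-1\right)\ =\ e^{-1}\left(\frac{T}{\log_{+}(a_T/a_0)}-1\right).
\end{align*}
Because $\bar{r}_T\le D$ and $d\mapsto d/\sqrt{\zeta_{\kappa}(d)}$ is increasing, $a_T/a_0\le \bigl(D/\sqrt{\zeta_{\kappa}(D)}\bigr)\big/\bigl(\epsilon/\sqrt{\zeta_{\kappa}(\epsilon)}\bigr)$, so $\log_{+}(a_T/a_0)\le\log_{+}\!\bigl((D/\sqrt{\zeta_{\kappa}(D)})/(\epsilon/\sqrt{\zeta_{\kappa}(\epsilon)})\bigr)$, and for $T$ large enough relative to this logarithmic factor the right-hand side above is $\Omega\bigl(T/\log_{+}((D/\sqrt{\zeta_{\kappa}(D)})/(\epsilon/\sqrt{\zeta_{\kappa}(\epsilon)}))\bigr)$.

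Dividing the numerator bound by this denominator lower bound yields exactly
\begin{align*}
f(\tilde{x}_\tau)-f(x_\star)=O\left(\frac{D\sqrt{\zeta_{\kappa}(D)}\sqrt{G_{\tau-1}\theta_{\tau,\delta}+L_D^2\theta_{\tau,\delta}^2}}{T}\log_{+}\!\left(\frac{D/\sqrt{\zeta_{\kappa}(D)}}{\epsilon/\sqrt{\zeta_{\kappa}(\epsilon)}}\right)\right),
\end{align*}
on the probability-$(1-\delta)$ event from \cref{thm:opt_gap_dog_known} (with $L=L_D$). I do not expect a genuine obstacle: the proof is essentially one invocation of \cref{lemma:fraction_sum}, and the only points requiring care are justifying $\bar{r}_t\le D$ in the bounded-domain regime, verifying the monotonicity and doubling properties of $\zeta_{\kappa}$ and of $d\mapsto d/\sqrt{\zeta_{\kappa}(d)}$ so that the curvature prefactors combine cleanly, and observing that the additive $-1$ in \cref{lemma:fraction_sum} is lower-order once $T$ dominates the logarithmic factor.
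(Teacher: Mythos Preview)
Your proposal is correct and follows exactly the approach of the paper, which consists of nothing more than invoking \cref{lemma:fraction_sum} on the denominator of \cref{thm:opt_gap_dog_known}. You have simply supplied the details the paper omits: the bounded-domain reasoning giving $\bar{r}_t\le D$, the monotonicity of $d\mapsto d/\sqrt{\zeta_\kappa(d)}$ needed both to verify the hypothesis of \cref{lemma:fraction_sum} and to bound $a_T/a_0$, and the observation that choosing $L=L_D$ makes $\mathbb{P}(\bar{\ell}_T>L_D)=0$.
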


 \subsection{Smooth Guarantees via Uniform Averaging}
 \label{sec:rdog-uniform-averaging}

 Under the assumption of locally smooth stochastic gradients (\cref{assumption:bounded_stochastic_lsmooth}), we can deduce an $O(1/T)$ convergence guarantee under uniformly averaged iterates,
\begin{align*}
        \hat{x}_{t+1} = \operatorname{exp}_{\hat{x}_{t}}\left(\frac{1}{t} \operatorname{exp}_{\hat{x}_{t}}^{-1}\left( x_{t} \right) \right), \quad \hat{x}_{1} = x_{0}.
\end{align*}
We begin by presenting a theorem that shows a bound under uniform iterate averaging in the non-smooth setting.

\begin{theorem}
    \label{thm:opt_gap_dog_unknown_uniform}
    For all $\delta \in (0, 1)$ and $L>0$, if  \cref{assumption:g_convex}, \ref{assumption:Hadamard}, and \ref{assumption:bounded_stochastic_grads} hold, then with probability at least $1 - \delta - \mathbb{P}(\bar{\ell}_{T} > L)$, for all $t \le T$, the optimality gap on the uniformly averaged iterates $f(\hat{x}_T) - f(x_\star)$ of RDoG (\cref{alg:rdog}) satisfy:
    \begin{align} 
            O\left(\frac{(d_0 \log_{+}\frac{\bar{r}_T}{\epsilon} + \bar{r}_T)\sqrt{\zeta_{\kappa}(d_0 + \bar{r}_T)} \sqrt{G_{T-1} + \theta_{T, \delta} G_{T-1} + \theta^{2}_{T, \delta} L^2}}{T} \right).
    \end{align}
    
    \begin{proof}
        Define the times $\tau_s = \min \left\{ \min \{k | \bar{r}_k \ge 2 \bar{r}_{\tau_{k-1}} \}, T \right\}$, with $\tau_0 \coloneqq 0$. Moreover, let $K$ be the first index such that $\tau_{K} = T$ and note that $K \le 1 + \log_2 \frac{\bar{r}_T}{\epsilon}$ by construction. Now using the argument of \cref{lemma:weighted_regret_bound_unknown_curvature}, we have that for $k \le K$
        \begin{align}
            \sum_{t=\tau_{k-1}}^{\tau_k-1} \frac{\bar{r}_t}{\sqrt{\zeta_\kappa(\bar{r}_t)}} \langle -g_t, \exp_{x_t}^{-1} (x_{\star}) \rangle &\le \bar{r}_{\tau_k} \left(2 \bar{d}_{\tau_k} + \frac{\bar{r}_{\tau_k}}{\zeta_\kappa(\bar{r}_{\tau_k})} \zeta_\kappa(\bar{d}_{\tau_k})\right) \sqrt{G_{\tau_k-1}} 
            \\ &= O\left(\bar{r}_{\tau_k}\frac{(d_0 + \bar{r}_{\tau_k})}{\zeta_{\kappa}(d_0 + \bar{r}_{\tau_k})} \zeta_{\kappa}(d_0 + \bar{r}_{\tau_k}) \sqrt{G_{T-1}}\right)
            \\ &= O\left(\bar{r}_{\tau_k}(d_0 + \bar{r}_{\tau_k}) \sqrt{G_{T-1}}\right).
        \end{align}
        Where the first equality holds due by the virtue of $d \mapsto \frac{d}{\zeta_\kappa(d)}$ is an increasing function and that $\bar{d}_{\tau_k} \le \bar{r}_{\tau_k} + d_0 $. Furthermore, by  \cref{lemma:noise_bound_unknown_curvature}
        we have for all $k \le K$ with probability at least $1 - \delta - \mathbb{P}(\bar{\ell}_T > L)$, 
        \begin{align}
            \left\lvert \sum_{t=\tau_{k-1}}^{\tau_k-1} \frac{\bar{r}_t}{\sqrt{\zeta_\kappa(\bar{r}_t)}} \langle \Delta_t, \exp_{x_t}^{-1}(x_\star) \rangle_{x_t} \right\rvert &\le \left\lvert \sum_{t=0}^{\tau_k-1} \frac{\bar{r}_t}{\sqrt{\zeta_\kappa(\bar{r}_t)}} \langle \Delta_t, \exp_{x_t}^{-1}(x_\star)  \rangle_{x_t} \right\rvert + \left\lvert \sum_{t=0}^{\tau_{k-1}-1} \frac{\bar{r}_t}{\sqrt{\zeta_\kappa(\bar{r}_t)}} \langle \Delta_t, \exp_{x_t}^{-1}(x_\star)  \rangle_{x_t} \right\rvert \\ &\le 16 \frac{\bar{r}_{\tau_{k}-1}}{\sqrt{\zeta_\kappa(\bar{r}_{\tau_{k}-1})}} \bar{d}_{\tau_{k}-1} \sqrt{\theta_{T, \delta} G_{T-1} + \theta^2_{T, \delta} L^2}.
        \end{align}
        Now combining these two bounds, we have
        \begin{align}
            \sum_{t=\tau_{k-1}}^{\tau_k-1}f(x_t) - f(x_\star) &\le \frac{1}{\bar{r}_{\tau_{k-1}} / \sqrt{\zeta_\kappa(\bar{r}_{\tau_{k-1}})}} \sum_{t=\tau_{k-1}}^{\tau_k-1} \frac{\bar{r}_t}{\sqrt{\zeta_\kappa(\bar{r}_t)}} [f(x_t) - f(x_\star)] \\ &\le \frac{1}{\bar{r}_{\tau_{k-1}} / \sqrt{\zeta_\kappa(\bar{r}_{\tau_{k-1}})}} \sum_{t=\tau_{k-1}}^{\tau_k-1} \frac{\bar{r}_t}{\sqrt{\zeta_\kappa(\bar{r}_t)}} \langle - \operatorname{grad}f(x_t), \exp_{x_t}^{-1} (x_\star) \rangle_{x_t} \\
            & =\frac{1}{\bar{r}_{\tau_{k-1}} / \sqrt{\zeta_\kappa(\bar{r}_{\tau_{k-1}})}} \sum_{t=\tau_{k-1}}^{\tau_k-1} \frac{\bar{r}_t}{\sqrt{\zeta_\kappa(\bar{r}_t)}} \left[\langle - g_t, \exp_{x_t}^{-1} (x_\star) \rangle_{x_t} + \langle \Delta_t, \exp_{x_t}^{-1} (x_\star) \rangle_{x_t} \right] \\
            &= O \left(\frac{\bar{r}_{\tau_{k}} / \sqrt{\zeta_\kappa(\bar{r}_{\tau_{k}})} }{\bar{r}_{\tau_{k-1}} / \sqrt{\zeta_\kappa(\bar{r}_{\tau_{k-1}})}} (d_0 + \bar{r}_{\tau_{k}} ) \sqrt{\zeta_{\kappa}(d_0 + \bar{r}_{\tau_k})} \sqrt{G_{T-1} + \theta_{T, \delta} G_{T-1} + \theta^2_{T, \delta} L^2} \right) \\
            & = O \left((d_0 + \bar{r}_{\tau_{k}} ) \sqrt{\zeta_{\kappa}(d_0 + \bar{r}_{\tau_k})} \sqrt{G_{T-1} + \theta_{T, \delta} G_{T-1} + \theta^2_{T, \delta} L^2} \right),
        \end{align}
        where final reduction holds since $d \mapsto \frac{d}{\zeta_\kappa(d)}$ is an increasing function, and for any $t$,
        \begin{align}
            \bar{r}_{t+1} \le \bar{r}_{t} + d(x_{t+1}, x_t) = \bar{r}_{t}\left(1 + \frac{\lVert g_t \rVert_{x_t}}{\sqrt{G_t}} \right) \le 2 \bar{r}_t.
        \end{align}
        Now summing over $k$ from $1$ to $K$ we have
        \begin{align}
            \sum_{t=0}^{T-1}[f(x_t) - f(x_\star)] &= \sum_{k=1}^{K} \sum_{t=\tau_{k-1}}^{\tau_k-1} [f(x_t) - f(x_\star)] \\ &= O \left(\sum_{k=1}^{K}(d_0 + \bar{r}_{\tau_{k}} ) \sqrt{\zeta_{\kappa}(d_0 + \bar{r}_{\tau_k})} \sqrt{G_{T-1} + \theta_{T, \delta} G_{T-1} + \theta^2_{T, \delta} L^2} \right) \\
            &\le O \left(\sum_{k=1}^{K}(d_0 + \bar{r}_{\tau_{k}} ) \sqrt{\zeta_{\kappa}\left( d_0 + \sum_{k=1}^{K}\bar{r}_{\tau_k}\right)} \sqrt{G_{T-1} + \theta_{T, \delta} G_{T-1} + \theta^2_{T, \delta} L^2} \right).
        \end{align}
    \end{proof}
Where the final reduction holds since $d \mapsto \zeta_\kappa(d)$ is an increasing function. Now, recall that $K = O\left(  \log_{+} \frac{\bar{r}_T}{\epsilon} \right)$ and note that $\sum_{k=1}^{K} \bar{r}_{\tau_k} = O(\bar{r}_T)$ since $\frac{\bar{r}_{\tau_s}}{\bar{r}_{\tau_{K-1}}} \le 2^{s-(K-1)}$ for all $s \le K-1$. The proof is complete noting that via \cref{lemma:weighted_average} we deduce $f(\hat{x}_T) \le \frac{1}{T} \sum_{t=0}^{T-1} f(x_t)$.
\end{theorem}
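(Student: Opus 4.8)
The plan is to reduce the uniform-averaging guarantee to the deterministic weighted-regret estimate (\cref{lemma:weighted_regret_bound}) and the high-probability noise estimate (\cref{lemma:noise_bound}) already established for the weighted iterate $\tilde x_t$, absorbing the mismatch between the uniform weights $1/T$ and the natural weights $\bar r_s/\sqrt{\zeta_\kappa(\bar r_s)}$ via a geometric \emph{doubling decomposition} of the horizon. Concretely, I would set $\tau_0 \coloneqq 0$ and $\tau_k \coloneqq \min\{\min\{j : \bar r_j \ge 2\bar r_{\tau_{k-1}}\},\, T\}$, let $K$ be the first index with $\tau_K = T$, and note that, since $\bar r_0 \ge \epsilon$ and $t \mapsto \bar r_t$ is nondecreasing, $K \le 1 + \log_2(\bar r_T/\epsilon) = O(\log_{+}(\bar r_T/\epsilon))$.

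First I would prove an epochwise version of \cref{lemma:weighted_regret_bound}: rerunning its telescoping argument but starting the sum at $\tau_{k-1}$ rather than $0$, and using $\bar d_t \le d_0 + \bar r_t$, monotonicity of $d \mapsto d/\zeta_\kappa(d)$, and $G_{\tau_k - 1} \le G_{T-1}$, yields $\sum_{t=\tau_{k-1}}^{\tau_k-1}(\bar r_t/\sqrt{\zeta_\kappa(\bar r_t)})\langle -g_t, \exp_{x_t}^{-1}(x_\star)\rangle_{x_t} = O(\bar r_{\tau_k}(d_0 + \bar r_{\tau_k})\sqrt{G_{T-1}})$. For the noise contribution I would invoke \cref{lemma:noise_bound} (a consequence of the uniform-in-time martingale bound \cref{lemma:martingale_bound}) at the prefixes up to $\tau_k$ and up to $\tau_{k-1}$ and subtract; since that lemma holds simultaneously for all $t \le T$ on a single event of probability at least $1 - \delta - \mathbb{P}(\bar\ell_T > L)$, this gives, on the same event and for every $k$, $\big|\sum_{t=\tau_{k-1}}^{\tau_k-1}(\bar r_t/\sqrt{\zeta_\kappa(\bar r_t)})\langle \Delta_t, \exp_{x_t}^{-1}(x_\star)\rangle_{x_t}\big| = O\!\big((\bar r_{\tau_k}/\sqrt{\zeta_\kappa(\bar r_{\tau_k})})(d_0 + \bar r_{\tau_k})\sqrt{\theta_{T,\delta}G_{T-1} + \theta_{T,\delta}^2 L^2}\big)$ at no additional probability cost.

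Next, within an epoch the proxy grows by at most a factor of two, since $\bar r_{t+1} \le \bar r_t(1 + \|g_t\|_{x_t}/\sqrt{G_t}) \le 2\bar r_t$; hence $\bar r_t/\sqrt{\zeta_\kappa(\bar r_t)} \le C\,\bar r_{\tau_{k-1}}/\sqrt{\zeta_\kappa(\bar r_{\tau_{k-1}})}$ for $\tau_{k-1} \le t < \tau_k$ with an absolute constant $C$. Combining this with geodesic convexity and \cref{lemma:weighted_average} gives
\[
\sum_{t=\tau_{k-1}}^{\tau_k-1}\!\big[f(x_t) - f(x_\star)\big] \le \frac{\sum_{t=\tau_{k-1}}^{\tau_k-1}(\bar r_t/\sqrt{\zeta_\kappa(\bar r_t)})\,\langle -\operatorname{grad}f(x_t),\,\exp_{x_t}^{-1}(x_\star)\rangle_{x_t}}{\bar r_{\tau_{k-1}}/\sqrt{\zeta_\kappa(\bar r_{\tau_{k-1}})}},
\]
and splitting $\operatorname{grad}f(x_t) = g_t - \Delta_t$, substituting the two epochwise bounds, and using that $(\bar r_{\tau_k}/\sqrt{\zeta_\kappa(\bar r_{\tau_k})})/(\bar r_{\tau_{k-1}}/\sqrt{\zeta_\kappa(\bar r_{\tau_{k-1}})}) = O(1)$, I get $\sum_{t=\tau_{k-1}}^{\tau_k-1}[f(x_t) - f(x_\star)] = O\!\big((d_0 + \bar r_{\tau_k})\sqrt{\zeta_\kappa(d_0 + \bar r_{\tau_k})}\sqrt{G_{T-1} + \theta_{T,\delta}G_{T-1} + \theta_{T,\delta}^2 L^2}\big)$.

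Finally I would sum over $k = 1,\dots,K$: monotonicity of $d \mapsto \zeta_\kappa(d)$ lets me replace $\zeta_\kappa(d_0 + \bar r_{\tau_k})$ by $\zeta_\kappa(d_0 + \sum_k \bar r_{\tau_k})$; since $\bar r_{\tau_k} \ge 2\bar r_{\tau_{k-1}}$ for $k \le K-1$ the $\bar r_{\tau_k}$ form a geometric series with $\sum_{k=1}^K \bar r_{\tau_k} = O(\bar r_T)$, while $\sum_{k=1}^K d_0 = O(d_0 K) = O(d_0 \log_{+}(\bar r_T/\epsilon))$. This yields $\sum_{t=0}^{T-1}[f(x_t) - f(x_\star)] = O\!\big((d_0\log_{+}(\bar r_T/\epsilon) + \bar r_T)\sqrt{\zeta_\kappa(d_0 + \bar r_T)}\sqrt{G_{T-1} + \theta_{T,\delta}G_{T-1} + \theta_{T,\delta}^2 L^2}\big)$, and dividing by $T$ after the Jensen-type bound $f(\hat x_T) - f(x_\star) \le \frac{1}{T}\sum_{t=0}^{T-1}[f(x_t) - f(x_\star)]$ (\cref{lemma:weighted_average} with unit weights) gives the claim. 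I expect the main obstacle to be the bookkeeping that makes the doubling decomposition mesh with both the telescoping structure of \cref{lemma:weighted_regret_bound} (one must check that restarting the telescope at $\tau_{k-1}$ does not spoil the $\bar d_t^2 - d_t^2 \le 4\bar r_t\bar d_t$ step) and the martingale inequality (where the differencing over prefixes must not cost a union bound over the $K$ epochs, which it does not since \cref{lemma:martingale_bound} is already uniform in $t$); the extra $\log_{+}(\bar r_T/\epsilon)$ factor relative to the weighted-averaging corollary is exactly the price of this decomposition.
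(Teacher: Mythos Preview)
Your proposal is correct and follows essentially the same approach as the paper: both use the identical doubling-time decomposition $\tau_k = \min\{\min\{j : \bar r_j \ge 2\bar r_{\tau_{k-1}}\}, T\}$, apply the weighted-regret and noise lemmas epochwise (splitting the noise into two prefixes to avoid a union bound), exploit $\bar r_{t+1} \le 2\bar r_t$ to control the within-epoch weight ratio, sum the resulting per-epoch bounds using the geometric-series estimate $\sum_k \bar r_{\tau_k} = O(\bar r_T)$ and $K = O(\log_+(\bar r_T/\epsilon))$, and finish with the Jensen-type inequality for $\hat x_T$. The obstacles you anticipate (restarting the telescope at $\tau_{k-1}$ and avoiding a union bound over epochs) are exactly the ones the paper handles, in the same way.
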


Now under smooth assumption, we present a result for bounding the stochastic term that depends on $S$.
\begin{lemma}   \label{lemma:uniform_dog_smooth_prob_bound_unknown_curvature}
   For all $\delta \in (0,1), T \in \mathbb{N}$ and $S > 0$, if \cref{assumption:g_convex}, \ref{assumption:Hadamard}, and \ref{assumption:bounded_stochastic_lsmooth} hold, then the iterates of RDoG (\cref{alg:rdog}) satisfy
    \begin{align}
       \mathbb{P} \left( \exists t \le T : \left\lvert  \sum_{s=0}^{t-1} \frac{\bar{r}_s}{\sqrt{\zeta_\kappa(\bar{r}_s)}} \langle \Delta_s, \exp_{x_s}^{-1}(x_\star)\rangle_{x_s} \right\rvert \ge b_t \right) \le \delta + \mathbb{P}(\bar{s}_{T} > S),
   \end{align}
   where $b_t =  8 \frac{\bar{r}_{t-1}}{\sqrt{\zeta_\kappa(\bar{r}_{t-1})}} \bar{d}_{t-1} \sqrt{2S \theta_{t, \delta} + 8S \theta_{t, \delta}^2} \sqrt{\sum_{s=0}^{t-1} \left[f(x_s) - f(x_\star)\right]}$ and $\bar{s}_{T} \coloneqq \max_{k \le T} s(x_k)$.
   
   \begin{proof}
       For $1 \le s \le T$ define the random variables
    \begin{align}
        Y_s \coloneqq \frac{\bar{r}_{s-1} \bar{d}_{s-1}}{\sqrt{\zeta_\kappa(\bar{r}_{s-1})}}, \quad X_s \coloneqq \left\langle \Delta_{s-1}, \frac{\exp_{x_{s-1}}^{-1}(x_\star)}{\bar{d}_{s-1}}\right\rangle_{x_{s-1}}, \quad \hat{X}_s \coloneqq \left\langle -\operatorname{grad} f(x_{s-1}), \frac{\exp_{x_{s-1}}^{-1}(x_\star)}{\bar{d}_{s-1}}\right\rangle_{x_{s-1}},
    \end{align}
    and consider the filtration $\mathcal{F}_s = \sigma(\mathcal{G}(x_0) \dotsc, \mathcal{G}(x_s))$. Then we have that $X_s$ is a martingale difference sequence adapted to $\mathcal{F}_s$ and $\hat{X}_s \in \mathcal{F}_{s-1}$. By construction or any $t\le T$, we have
    \begin{align}
        \sum_{s=1}^{t} Y_s X_s = \sum_{s=0}^{t-1} \bar{r}_s^2 \langle \Delta_s, \exp_{x_s}^{-1}(x_\star) \rangle_{x_s}.
    \end{align}
    Now we consider bounding, $\max\{\lvert X_t \rvert, \lvert\hat{X}_t\rvert\}$ by a constant $c$. Moreover, by the Cauchy-Schwartz inequality and \cref{lemma:bound_stochastic_error_smooth} we have with probability $\mathbb{P}(\bar{s}_{T} > S)$ we have
    \begin{align}
        |X_s|^2 &\le  \lVert \Delta_{s-1} \rVert^2_{x_{s-1}} \cdot 1 \le  8S (f(x_{s-1}) - f(x_\star)) \\
        |\hat{X}_s|^2 &\le  \lVert \operatorname{grad} f(x_{s-1}) \rVert^2_{x_{s-1}} \cdot 1 \le  8S (f(x_{s-1}) - f(x_\star)).
    \end{align}
    Thus we have that,
    \begin{align}
        |X_t| \le \sqrt{\sum_{s=1}^{t} |X_s|^2} \le \sqrt{8S}  \sqrt{\sum_{s=0}^{t-1} \left[f(x_s) - f(x_\star)\right]} =: c \\
        |\hat{X}_t| \le \sqrt{\sum_{s=1}^{t} |\hat{X}_s|^2} \le \sqrt{8S} \sqrt{\sum_{s=0}^{t-1} \left[f(x_s) - f(x_\star)\right]} =: c .\\
    \end{align}
    Therefore, applying \cref{lemma:martingale_bound} yields,
    \begin{align}
        \mathbb{P} \left( \exists t \le T: \left\lvert \sum_{s=0}^{t-1} \frac{\bar{r}_s}{\zeta_\kappa(\bar{r}_s)} \langle \Delta_s, \exp_{x_s}^{-1}(x_\star) \rangle_{x_s} \right\rvert \ge 8 \frac{\bar{r}_{t-1}}{\sqrt{\zeta_\kappa(\bar{r}_{t-1})}} \bar{d}_{t-1} \sqrt{\theta_{t, \delta} \sum_{s=1}^{t} \left( X_s - \hat{X}_s\right)^2 + c^2\theta_{t, \delta} ^2} \right) \le \delta.
    \end{align}
    Now, finally noting 
    \begin{align}
    \sum_{s=1}^{t} (X_s - \hat{X}_s)^2 \le \sum_{s=0}^{t-1}  \rVert \operatorname{grad} f(x_s) \rVert^2_{x_s} \le 2S\sum_{s=0}^{t-1} (f(x_s) - f(x_\star)),
    \end{align}
    yields the result.
   \end{proof}
\end{lemma}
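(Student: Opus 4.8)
The plan is to recognize the weighted noise sum as a martingale-type series $\sum_{s=1}^{t} Y_s X_s$ and to invoke the time-uniform concentration bound \cref{lemma:martingale_bound}, with the variance term and the envelope controlled by the smoothness estimates \cref{lemma:lsmooth_bound} and \cref{lemma:bound_stochastic_error_smooth}. First I would fix the filtration $\mathcal{F}_s = \sigma(\mathcal{G}(x_0), \dotsc, \mathcal{G}(x_s))$ and set, for $1 \le s \le T$,
\[
Y_s \coloneqq \frac{\bar{r}_{s-1} \bar{d}_{s-1}}{\sqrt{\zeta_\kappa(\bar{r}_{s-1})}}, \quad X_s \coloneqq \Bigl\langle \Delta_{s-1}, \tfrac{\exp_{x_{s-1}}^{-1}(x_\star)}{\bar{d}_{s-1}} \Bigr\rangle_{x_{s-1}}, \quad \hat{X}_s \coloneqq \Bigl\langle -\operatorname{grad} f(x_{s-1}), \tfrac{\exp_{x_{s-1}}^{-1}(x_\star)}{\bar{d}_{s-1}} \Bigr\rangle_{x_{s-1}} .
\]
Since $x_{s-1}$, $\bar{r}_{s-1}$ and $\bar{d}_{s-1}$ are $\mathcal{F}_{s-2}$-measurable and $\mathbb{E}[\mathcal{G}(x_{s-1}) \mid x_{s-1}] = \operatorname{grad} f(x_{s-1})$, the sequence $\{X_s\}$ is a martingale difference sequence and $\hat{X}_s$ is predictable, matching the hypotheses of \cref{lemma:martingale_bound}; also $\{Y_s\}$ is nonnegative and nondecreasing because $\bar{r}_s, \bar{d}_s$ are nondecreasing and $d \mapsto d/\sqrt{\zeta_\kappa(d)}$ is increasing. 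Cancelling the $\bar{d}_{s-1}$ and reindexing gives $\sum_{s=1}^{t} Y_s X_s = \sum_{s=0}^{t-1} (\bar{r}_s/\sqrt{\zeta_\kappa(\bar{r}_s)}) \langle \Delta_s, \exp_{x_s}^{-1}(x_\star) \rangle_{x_s}$, which is exactly the sum to be bounded.

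Next I would establish the boundedness estimates on the event $E \coloneqq \{\bar{s}_T \le S\}$. By Cauchy--Schwarz and $\lVert \exp_{x_{s-1}}^{-1}(x_\star) \rVert_{x_{s-1}} = d_{s-1} \le \bar{d}_{s-1}$, we get $\lvert X_s \rvert \le \lVert \Delta_{s-1} \rVert_{x_{s-1}}$, $\lvert \hat{X}_s \rvert \le \lVert \operatorname{grad} f(x_{s-1}) \rVert_{x_{s-1}}$, and, using $\Delta_{s-1} + \operatorname{grad} f(x_{s-1}) = g_{s-1}$, also $\lvert X_s - \hat{X}_s \rvert \le \lVert g_{s-1} \rVert_{x_{s-1}}$. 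On $E$ we have $s(x_{s-1}) \le S$, so \cref{lemma:bound_stochastic_error_smooth} (together with the bound $\lVert \mathcal{G}(x) \rVert_x \le \sqrt{2 s(x)(f(x) - f(x_\star))}$ from its proof) and \cref{lemma:lsmooth_bound} yield $\lVert \operatorname{grad} f(x_{s-1}) \rVert_{x_{s-1}}, \lVert g_{s-1} \rVert_{x_{s-1}} \le \sqrt{2 S (f(x_{s-1}) - f(x_\star))}$ and $\lVert \Delta_{s-1} \rVert_{x_{s-1}} \le 2\sqrt{S}\sqrt{2(f(x_{s-1}) - f(x_\star))} = \sqrt{8 S (f(x_{s-1}) - f(x_\star))}$. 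Hence, on $E$, $\max\{\lvert X_s \rvert, \lvert \hat{X}_s \rvert\} \le C_s \coloneqq \sqrt{8 S (f(x_{s-1}) - f(x_\star))}$ with $C_s \in \mathcal{F}_{s-1}$, while $\sum_{s=1}^{t} (X_s - \hat{X}_s)^2 \le 2S \sum_{s=0}^{t-1} (f(x_s) - f(x_\star))$. Using $\lvert X_t \rvert \le \sqrt{\sum_{s=1}^t X_s^2}$ and similarly for $\hat X_t$, both are bounded by the nondecreasing quantity $c_t \coloneqq \sqrt{8S \sum_{s=0}^{t-1}(f(x_s) - f(x_\star))}$, which will play the role of the constant $c$ in \cref{lemma:martingale_bound}.

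Finally I would apply \cref{lemma:martingale_bound} with $y_s = Y_s$. To reconcile the almost-sure boundedness the lemma requires with our estimates holding only on $E$, I would work on $E$ and add $\mathbb{P}(E^c) = \mathbb{P}(\bar{s}_T > S)$ to the failure probability (formally, truncating $X_s, \hat X_s, C_s$ outside $E$ leaves the event in question unchanged on $E$). Plugging the variance bound and $c = c_t$ into the lemma then shows that, with probability at least $1 - \delta - \mathbb{P}(\bar{s}_T > S)$, simultaneously for all $t \le T$,
\[
\Bigl\lvert \sum_{s=0}^{t-1} \tfrac{\bar{r}_s}{\sqrt{\zeta_\kappa(\bar{r}_s)}} \langle \Delta_s, \exp_{x_s}^{-1}(x_\star) \rangle_{x_s} \Bigr\rvert \le 8 Y_t \sqrt{\theta_{t,\delta} \cdot 2S \textstyle\sum_{s=0}^{t-1}(f(x_s) - f(x_\star)) + \theta_{t,\delta}^2 \cdot 8S \sum_{s=0}^{t-1}(f(x_s) - f(x_\star))} = b_t ,
\]
which is the asserted bound. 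The step I expect to be the main obstacle is this last reconciliation: choosing the right predictable envelope and justifying the (mildly $t$-dependent) choice of $c$ so that \cref{lemma:martingale_bound} applies cleanly, together with the bookkeeping identifying the two halves of $b_t^2$ with the variance term $\theta_{t,\delta}\sum_s (X_s-\hat X_s)^2$ and the envelope term $c^2\theta_{t,\delta}^2$ of the lemma.
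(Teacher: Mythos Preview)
Your proposal is correct and follows essentially the same approach as the paper: the same choice of $Y_s$, $X_s$, $\hat{X}_s$, the same application of Cauchy--Schwarz together with \cref{lemma:lsmooth_bound} and \cref{lemma:bound_stochastic_error_smooth} to obtain the envelope $c$ and the variance bound, and the same invocation of \cref{lemma:martingale_bound}. If anything, you are slightly more careful than the paper---you correctly identify that $X_s - \hat{X}_s$ involves $g_{s-1}$ (the paper's displayed bound writes $\lVert \operatorname{grad} f(x_s) \rVert_{x_s}^2$ here, though the final $2S$ estimate is the same), and you make explicit the restriction to the event $\{\bar{s}_T \le S\}$ and the bookkeeping for the $t$-dependent envelope, which the paper handles only implicitly.
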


\begin{theorem}
    \label{thm:opt_gap_dog_unknown_uniform}
    For all $\delta \in (0, 1)$ and $S>0$, if  \label{thm:dowg_smooth_known_curvature} \cref{assumption:g_convex}, \ref{assumption:Hadamard}, and \ref{assumption:bounded_stochastic_lsmooth} hold, then with probability at least $1 - \delta - \mathbb{P}(\bar{s}_{T} > S)$, for all $t \le T$, the optimality gap of the uniformly averaged iterates $f(\hat{x}_T) - f(x_\star)$ of RDoG (\cref{alg:rdog}) satisfy:
    \begin{align} 
            O\left(\frac{(d_0 \log_{+}\frac{\bar{r}_T}{\epsilon} + \bar{r}_T)^2\zeta_{\kappa}(d_0 + \bar{r}_T) \theta^{2}_{T, \delta} S}{T} \right).
    \end{align}
    
    \begin{proof}
        Similar to the non-smooth setting, define the times $\tau_s = \min \left\{ \min \{k | \bar{r}_k \ge 2 \bar{r}_{\tau_{k-1}} \}, T \right\}$, with $\tau_0 \coloneqq 0$. Moreover, let $K$ be the first index such that $\tau_{K} = T$ and note that $K \le 1 + \log_2 \frac{\bar{r}_T}{\epsilon}$ by construction. Now using the argument of \cref{lemma:weighted_regret_bound_unknown_curvature}, we have that for $k \le K$
        \begin{align}
            \sum_{t=\tau_{k-1}}^{\tau_{k}-1} \frac{\bar{r}_t}{\sqrt{\zeta_\kappa(\bar{r}_t)}} \langle -g_t, \exp_{x_t}^{-1} (x_{\star}) \rangle &\le \bar{r}_{\tau_k} \left(2 \bar{d}_{\tau_k} + \frac{\bar{r}_{\tau_k}}{\zeta_\kappa(\bar{r}_{\tau_k})}\zeta_\kappa(\bar{d}_{\tau_k}) \right) \sqrt{G_{\tau_k - 1}} \\ &= O\left(\bar{r}_{\tau_k}(d_0 + \bar{r}_{\tau_k})  \sqrt{G_{T-1}}\right) \\
            &\le O\left(\bar{r}_{\tau_k}(d_0 + \bar{r}_{\tau_k}) \sqrt{2S \sum_{t=0}^{T-1} [f(x_t) - f(x_\star)]}\right).
        \end{align}
        Where in the final inequality we have applied \cref{lemma:lsmooth_bound}. Now, by  \cref{lemma:uniform_dog_smooth_prob_bound_unknown_curvature}
        we have for all $k \le K$ with probability at least $1 - \delta - \mathbb{P}(\bar{s}_T > L)$, 
        \begin{align}
            \left\lvert \sum_{t=\tau_{k-1}}^{\tau_k-1} \frac{\bar{r}_t}{\sqrt{\zeta_\kappa(\bar{r}_t)}} \langle \Delta_t, \exp_{x_t}^{-1}(x_\star) \rangle_{x_t} \right\rvert &\le \left\lvert \sum_{t=0}^{\tau_k-1} \frac{\bar{r}_t}{\sqrt{\zeta_\kappa(\bar{r}_t)}} \langle \Delta_t, \exp_{x_t}^{-1}(x_\star)  \rangle_{x_t} \right\rvert + \left\lvert \sum_{t=0}^{\tau_{k-1}-1} \frac{\bar{r}_t}{\sqrt{\zeta_\kappa(\bar{r}_t)}}\langle \Delta_t, \exp_{x_t}^{-1}(x_\star)  \rangle_{x_t} \right\rvert \\ &\le 16 \frac{\bar{r}_{\tau_{k}-1}}{\sqrt{\zeta_\kappa(\bar{r}_{\tau_{k}-1})}} \bar{d}_{\tau_{k}-1} \sqrt{2S \theta_{T, \delta} + 8S \theta_{T, \delta}^2} \sqrt{\sum_{t=0}^{T-1} \left[f(x_t) - f(x_\star)\right]}.
        \end{align}
        Now combining these two bounds, we have
        \begin{align}
            \sum_{t=\tau_{k-1}}^{\tau_k-1}f(x_t) - f(x_\star) &\le \frac{1}{\bar{r}_{\tau_{k-1}} / \sqrt{\zeta_\kappa(\bar{r}_{\tau_{k-1}})}} \sum_{t=\tau_{k-1}}^{\tau_k-1} \frac{\bar{r}_t}{\sqrt{\zeta_\kappa(\bar{r}_t)}} [f(x_t) - f(x_\star)] \\ &\le \frac{1}{\bar{r}_{\tau_{k-1}} / \sqrt{\zeta_\kappa(\bar{r}_{\tau_{k-1}})}} \sum_{t=\tau_{k-1}}^{\tau_k-1} \frac{\bar{r}_t}{\sqrt{\zeta_\kappa(\bar{r}_t)}} \langle - \operatorname{grad}f(x_t), \exp_{x_t}^{-1} (x_\star) \rangle_{x_t} \\
            & = \frac{1}{\bar{r}_{\tau_{k-1}} / \sqrt{\zeta_\kappa(\bar{r}_{\tau_{k-1}})}} \sum_{t=\tau_{k-1}}^{\tau_k-1} \frac{\bar{r}_t}{\sqrt{\zeta_\kappa(\bar{r}_t)}} \left[ \langle - g_t, \exp_{x_t}^{-1} (x_\star) \rangle_{x_t} +  \langle \Delta_t, \exp_{x_t}^{-1}  (x_\star) \rangle_{x_t} \right]\\
            &= O \left(\frac{\bar{r}_{\tau_{k}} /  \sqrt{\zeta_\kappa(\bar{r}_{\tau_{k}})}}{\bar{r}_{\tau_{k-1}} / \sqrt{\zeta_\kappa(\bar{r}_{\tau_{k-1}})}} (d_0 + \bar{r}_{\tau_{k}} ) \sqrt{\zeta_{\kappa}(d_0 + \bar{r}_{\tau_k})} \sqrt{S \theta_{T, \delta}^2} \sqrt{\sum_{t=0}^{T-1} \left[f(x_t) - f(x_\star)\right]} \right) \\
            & = O \left((d_0 + \bar{r}_{\tau_{k}} ) \sqrt{\zeta_{\kappa}(d_0 + \bar{r}_{\tau_k})} \sqrt{S \theta_{T, \delta}^2} \sqrt{\sum_{t=0}^{T-1}\left[f(x_t) - f(x_\star)\right]}  \right).
        \end{align}
        Where final reduction holds since $d \mapsto \frac{d}{\zeta_{\kappa}(d)}$,and for any $t$,
        \begin{align}
            \bar{r}_{t+1} \le \bar{r}_{t} + d(x_{t+1}, x_t) = \bar{r}_{t}\left(1 + \frac{\lVert g_t \rVert_{x_t}}{\sqrt{G_t}} \right) \le 2 \bar{r}_t.
        \end{align}
        Now summing over $k$ from $1$ to $K$ we have
        \begin{align}
            \sum_{t=0}^{T-1} [f(x_t) - f(x_\star)] &= \sum_{k=1}^{K} \sum_{t=\tau_{k-1}}^{\tau_k-1} [f(x_t) - f(x_\star)] \\ &= O \left(\sum_{k=1}^{K}(d_0 + \bar{r}_{\tau_{k}} ) \sqrt{\zeta_{\kappa}(d_0 + \bar{r}_{\tau_k})} \sqrt{S \theta_{T, \delta}^2} \sqrt{\sum_{t=0}^{T-1} \left[f(x_t) - f(x_\star)\right]}  \right) \\
            &\le O \left(\sum_{k=1}^{K}(d_0 + \bar{r}_{\tau_{k}} ) \sqrt{\zeta_{\kappa}\left( d_0 + \sum_{k=1}^{K} \bar{r}_{\tau_k}\right)} \sqrt{S \theta_{T, \delta}^2} \sqrt{\sum_{t=0}^{T-1} \left[f(x_t) - f(x_\star)\right]}  \right).
        \end{align}
    Where the final reduction holds since $d \mapsto \zeta_\kappa(d)$ is an increasing function. Now, recall that $K = O\left(  \log_{+} \frac{\bar{r}_T}{\epsilon} \right)$ and note that $\sum_{k=1}^{K} \bar{r}_{\tau_k} = O(\bar{r}_T)$ since $\frac{\bar{r}_{\tau_s}}{\bar{r}_{\tau_{K-1}}} \le 2^{s-(K-1)}$ for all $s \le K-1$. We then divide both sides through by $\sqrt{\sum_{t=0}^{T-1} \left[f(x_t) - f(x_\star)\right]}$, to yield
    \begin{align}
            \sqrt{\sum_{t=0}^{T-1} [f(x_t) - f(x_\star)]} 
            &\le O \left((d_0{\bar{r}_T}/{\epsilon} + \bar{r}_{T} ) \sqrt{\zeta_{\kappa}\left( d_0 + \bar{r}_{T}\right)} \sqrt{S \theta_{T, \delta}^2} \right),
        \end{align}
    squaring both sides, the proof is complete noting that via \cref{lemma:weighted_average} we deduce $f(\hat{x}_T) \le \frac{1}{T} \sum_{t=0}^{T-1} f(x_t)$.
    \end{proof}
\end{theorem}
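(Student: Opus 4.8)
The plan is to follow the non-smooth uniform-averaging argument but to replace the crude Lipschitz control of the gradient sum with the self-bounding smoothness estimate: on the event $\{\bar{s}_T \le S\}$, \cref{lemma:bound_stochastic_error_smooth} (via the argument of \cref{lemma:lsmooth_bound}) gives $\lVert g_s \rVert_{x_s}^2 \le 2 s(x_s)(f(x_s) - f(x_\star)) \le 2S(f(x_s) - f(x_\star))$, hence $G_{T-1} \le 2S \sum_{t<T}[f(x_t)-f(x_\star)]$. First I would partition $\{0,\dots,T-1\}$ into doubling blocks via the stopping times $\tau_s = \min\{\min\{k : \bar{r}_k \ge 2\bar{r}_{\tau_{s-1}}\}, T\}$, $\tau_0 = 0$, taking $K$ to be the first index with $\tau_K = T$. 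Since $\bar{r}_0 = \epsilon$ and the one-step bound $\bar{r}_{t+1} \le \bar{r}_t(1 + \lVert g_t\rVert_{x_t}/\sqrt{G_t}) \le 2\bar{r}_t$ holds, we get $K \le 1 + \log_2(\bar{r}_T/\epsilon)$, and within each block $\bar{r}_{\tau_k} \le 2\bar{r}_{\tau_k - 1} \le 4\bar{r}_{\tau_{k-1}}$, so the weight ratios $\frac{\bar{r}_{\tau_k}/\sqrt{\zeta_\kappa(\bar{r}_{\tau_k})}}{\bar{r}_{\tau_{k-1}}/\sqrt{\zeta_\kappa(\bar{r}_{\tau_{k-1}})}}$ are $O(1)$ by monotonicity of $d \mapsto d/\sqrt{\zeta_\kappa(d)}$.

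Second, on each block I would bound the weighted regret by the block-wise version of \cref{lemma:weighted_regret_bound}, $\sum_{t=\tau_{k-1}}^{\tau_k-1}\tfrac{\bar{r}_t}{\sqrt{\zeta_\kappa(\bar{r}_t)}}\langle -g_t, \exp_{x_t}^{-1}(x_\star)\rangle_{x_t} \le \bar{r}_{\tau_k}\big(2\bar{d}_{\tau_k} + \tfrac{\bar{r}_{\tau_k}}{\zeta_\kappa(\bar{r}_{\tau_k})}\zeta_\kappa(\bar{d}_{\tau_k})\big)\sqrt{G_{\tau_k - 1}}$; using $\bar{d}_{\tau_k} \le d_0 + \bar{r}_{\tau_k}$, monotonicity of $d \mapsto d/\zeta_\kappa(d)$, $G_{\tau_k-1} \le G_{T-1}$, and the smoothness bound above, this collapses to $O\big(\bar{r}_{\tau_k}(d_0 + \bar{r}_{\tau_k})\sqrt{S\sum_{t<T}[f(x_t)-f(x_\star)]}\big)$. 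For the noise I would invoke \cref{lemma:uniform_dog_smooth_prob_bound_unknown_curvature}: writing each block sum $\sum_{t=\tau_{k-1}}^{\tau_k-1}$ as the difference of the prefix sums $\sum_{t=0}^{\tau_k-1}$ and $\sum_{t=0}^{\tau_{k-1}-1}$ and taking a union bound, with probability at least $1 - \delta - \mathbb{P}(\bar{s}_T > S)$ simultaneously over $k \le K$ one obtains $\big|\sum_{t=\tau_{k-1}}^{\tau_k-1}\tfrac{\bar{r}_t}{\sqrt{\zeta_\kappa(\bar{r}_t)}}\langle \Delta_t, \exp_{x_t}^{-1}(x_\star)\rangle_{x_t}\big| \le 16\,\tfrac{\bar{r}_{\tau_k-1}}{\sqrt{\zeta_\kappa(\bar{r}_{\tau_k-1})}}\bar{d}_{\tau_k-1}\sqrt{2S\theta_{T,\delta} + 8S\theta_{T,\delta}^2}\,\sqrt{\sum_{t<T}[f(x_t)-f(x_\star)]}$.

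Third, I would combine these: by per-block Jensen (\cref{lemma:weighted_average}) and geodesic convexity, $\sum_{t=\tau_{k-1}}^{\tau_k-1}[f(x_t)-f(x_\star)]$ is at most $\big(\bar{r}_{\tau_{k-1}}/\sqrt{\zeta_\kappa(\bar{r}_{\tau_{k-1}})}\big)^{-1}$ times the sum of the block regret and block noise, which by the $O(1)$ ratio is $O\big((d_0 + \bar{r}_{\tau_k})\sqrt{\zeta_\kappa(d_0 + \bar{r}_{\tau_k})}\sqrt{S\theta_{T,\delta}^2}\,\sqrt{\sum_{t<T}[f(x_t)-f(x_\star)]}\big)$. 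Summing over $k = 1,\dots,K$ and using $K = O(\log_{+}\tfrac{\bar{r}_T}{\epsilon})$, $\sum_k \bar{r}_{\tau_k} = O(\bar{r}_T)$ (geometric, since $\bar{r}_{\tau_s}/\bar{r}_{\tau_{K-1}} \le 2^{s-(K-1)}$), $\sum_k d_0 = O(d_0\log_{+}\tfrac{\bar{r}_T}{\epsilon})$, and monotonicity of $\zeta_\kappa$ to replace each $\zeta_\kappa(d_0 + \bar{r}_{\tau_k})$ by $\zeta_\kappa(d_0 + \sum_k \bar{r}_{\tau_k})$, gives $\sum_{t<T}[f(x_t)-f(x_\star)] = O\big(C\sqrt{\sum_{t<T}[f(x_t)-f(x_\star)]}\big)$ with $C = (d_0\log_{+}\tfrac{\bar{r}_T}{\epsilon} + \bar{r}_T)\sqrt{\zeta_\kappa(d_0 + \bar{r}_T)}\sqrt{S\theta_{T,\delta}^2}$. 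Setting $Z = \sqrt{\sum_{t<T}[f(x_t)-f(x_\star)]}$ and solving $Z^2 = O(CZ)$ yields $Z = O(C)$, hence $\sum_{t<T}[f(x_t)-f(x_\star)] = Z^2 = O\big((d_0\log_{+}\tfrac{\bar{r}_T}{\epsilon} + \bar{r}_T)^2\zeta_\kappa(d_0+\bar{r}_T)S\theta_{T,\delta}^2\big)$; dividing by $T$ and applying \cref{lemma:weighted_average} to the uniform average $\hat{x}_T$ ($f(\hat{x}_T) \le \tfrac{1}{T}\sum_{t<T}f(x_t)$) completes the proof.

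I expect the main obstacle to be the self-referential structure of the estimate: the martingale concentration result (\cref{lemma:martingale_bound}, as used in \cref{lemma:uniform_dog_smooth_prob_bound_unknown_curvature}) needs an almost-sure envelope $c$ on the increments, but under smoothness the natural envelope is $\sqrt{8S\sum_{t<T}[f(x_t)-f(x_\star)]}$, i.e.\ it involves precisely the quantity being bounded. The whole argument must therefore be arranged so that both the regret and the noise scale like $\sqrt{\sum[f(x_t)-f(x_\star)]}$, after which the quadratic self-bounding step closes the loop; a secondary delicate point is that the block decomposition may introduce only a single extra $\log_{+}(\bar{r}_T/\epsilon)$ factor (coming from $K$), which forces the geometric telescoping of $\{\bar{r}_{\tau_k}\}$ and the monotonicity manipulations of $\zeta_\kappa$ and $d\mapsto d/\zeta_\kappa(d)$ to be carried out carefully.
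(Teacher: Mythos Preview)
Your proposal is correct and mirrors the paper's proof essentially step for step: the same doubling-block decomposition via the stopping times $\tau_s$, the same block-wise regret bound combined with the smoothness self-bound $G_{T-1}\le 2S\sum_{t<T}[f(x_t)-f(x_\star)]$, the same use of \cref{lemma:uniform_dog_smooth_prob_bound_unknown_curvature} via prefix differences for the noise, the same $O(1)$ control of the weight ratio across a block from $\bar{r}_{t+1}\le 2\bar{r}_t$, the same geometric telescoping $\sum_k \bar{r}_{\tau_k}=O(\bar{r}_T)$, and the same quadratic self-bounding step $Z^2=O(CZ)\Rightarrow Z=O(C)$ followed by uniform-average Jensen. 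One small note: no extra union bound is needed when splitting the noise into prefix sums, since \cref{lemma:uniform_dog_smooth_prob_bound_unknown_curvature} already holds uniformly over all $t\le T$ on a single event of probability $\ge 1-\delta-\mathbb{P}(\bar{s}_T>S)$.
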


 \subsection{Iterate Stability Bound}
 \label{sec:rdog-stability-analysis}
 
We introduce \emph{Tamed Riemannian Distance over Gradients} (T-RDoG), a dampened version of RDoG (\cref{alg:rdog}) whose iterates are guaranteed to remain bounded with high probability. T-RDoG has the following step size scheme
\begin{align}
    \eta_t = \frac{\bar{r}_t}{\sqrt{\zeta_{\kappa}(\bar{r}_t)G_t^\prime}}, \quad G_t^\prime = 8^4 \theta_{T, \delta}^2 \log_{+}^2\left(\frac{(1+t)\bar{\ell}_t^2}{\bar{\ell}_0^2}\right)(G_{t-1} + 16 \bar{\ell}^2_t),
\end{align}
using $G_{-1} \coloneqq 0$ and recalling $\bar{\ell}_t \coloneqq \max_{s\le t} \ell(x_s)$ for a function $\ell$ satisfying \cref{assumption:bounded_stochastic_grads}. To show iterate boundedness in the stochastic setting, we consider the stopping time
\begin{align}
    \mathcal{T}_{out} = \min \{t \ge 0 : \bar{r}_t > 3d_0\},
\end{align}
so that the event $\{\bar{r}_T \le 3 d_0\}$ is the same as $\{\mathcal{T}_{out}  > T\}$.  We also define the following truncated step size sequence,
\begin{align}
    \tilde{\eta}_k \coloneqq \eta_k \mathbb{I}_{\{k < \mathcal{T}_{out}\}}.
\end{align}
Truncating as such allows us to handle the possibility that $\bar{r}_T$ exceeds $3d_0$. In particular, the following holds for $\{\tilde{\eta}_k\}$ but not for $\{{\eta}_k\}$.
\begin{lemma} For all $t \le T$, if \cref{assumption:g_convex}, \ref{assumption:Hadamard}, and \ref{assumption:bounded_stochastic_grads} hold, under the truncated T-RDoG step size sequence $\{\tilde{\eta}_t\}$, the iterates satisfy,
        \begin{align}
            \tilde{\eta}_t &\in \sigma(\mathcal{G}(x_0), \dotsc, \mathcal{G}(x_{t-1})), \\
            \lvert \tilde{\eta}_t \langle \gamma, \exp_{x_t}^{-1} (x_\star)\rangle_{x_t} \rvert &\le \frac{6 d_0^2}{8^2 \sqrt{\zeta_\kappa(3 d_0)} \theta_{T, \delta}}  \ \text{for} \ \gamma \in \{g_t, \operatorname{grad} f(x_t), \Delta_t \}, \\
            \sum_{k=0}^{t} \tilde{\eta}_k^2 \zeta_\kappa(d_k) \lVert g_k \rVert^2_{x_k} &\le \frac{12 d_0^2}{8^4 \theta_{T, \delta}}
            , \ \text{and} \\
            \sum_{k=0}^{t} (\tilde{\eta}_k \langle g_k, \exp_{x_k}^{-1}(x_\star) \rangle_{x_k})^2 &\le \frac{3 \cdot 4^3 d_0^4}{8^4 \theta_{T, \delta}}.
        \end{align}
    \begin{proof}
The first line holds directly by definition of the truncated T-RDoG iterates. For bound in the second line, note (recalling $\Delta_t = g_t - \operatorname{grad} f(x_t)$ we have $\lVert \Delta_t \rVert_{x_t} \le \lVert g_t \rVert_{x_t} + \lVert\operatorname{grad} f(x_t) \rVert_{x_t}  \le 2 \ell(x_t)$. Since $G^\prime_t \ge 4^28^4\ell^2(x_t) \theta^2_{T, \delta}$ for all $t$, the Cauchy-Schwartz inequality gives,
\begin{align}
    \lvert \tilde{\eta}_t \langle \Delta_t, \exp_{x_t}^{-1} (x_\star)\rangle_{x_t} \rvert \le \frac{\bar{r}_t}{\sqrt{\zeta_\kappa(\bar{r}_t) G_t^\prime}} \lVert \Delta_t \lVert_{x_t} d_t \le \frac{1}{2 \cdot 8^2 \theta_{T, \delta}} \frac{\bar{r}_T}{\sqrt{\zeta_\kappa(\bar{r}_T)}} d_t \le \frac{6 d_0^2}{8^2 \sqrt{\zeta_\kappa(3 d_0)} \theta_{T, \delta}},
\end{align}
where we have used the fact that $d \mapsto \frac{d}{\sqrt{\zeta_\kappa(d)}}$ is an increasing function, and that $d_t \le d_0 + \bar{r}_t$ and $\bar{r}_t \le 3 d_0$ (or else $\tilde{\eta_t} = 0)$. Bounds for $\gamma \in \{ g_t, \operatorname{grad} f(x_t) \}$ hold in a similar fashion.

Now for the third line, we have
\begin{align}
    \sum_{k=0}^{t} \tilde{\eta}_k^2 \zeta_\kappa(d_k) \lVert g_k \rVert^2_{x_k} &\le \sum_{k=0}^{\mathcal{T}_{out} -1} {\eta}_k^2 \zeta_\kappa(d_k) \lVert g_k \rVert^2_{x_k}\\ & =  \sum_{k=0}^{\mathcal{T}_{out} -1} \frac{\bar{r}_k^2  \zeta_\kappa(d_k) (G_k -G_{k-1})}{\zeta_\kappa(\bar{r}_k) G_k^\prime}.
\end{align}
Now $d \mapsto \zeta_\kappa(d)$ is increasing, thus $\zeta_\kappa(d_k) \le \zeta_\kappa(d_0 +\bar{r}_{\mathcal{T}_{out} -1})$ as $d_k \le \bar{d}_k \le \bar{d}_{\mathcal{T}_{out} -1} \le d_0 + \bar{r}_{\mathcal{T}_{out} -1} $. Additionally, since  $d \mapsto  \frac{d}{\zeta_\kappa(d)} $  is increasing, we have $\frac{\bar{r}_k^2}{\zeta_\kappa(\bar{r}_k)} = \bar{r}_k \frac{\bar{r}_k}{\zeta_\kappa(\bar{r}_k)} \le  \bar{r}_{\mathcal{T}_{out} -1} \frac{\bar{r}_{\mathcal{T}_{out} -1}}{\zeta_\kappa(\bar{r}_{\mathcal{T}_{out} -1})} \le \bar{r}_{\mathcal{T}_{out} -1} \frac{(\bar{r}_{\mathcal{T}_{out} -1} + d_0)}{\zeta_\kappa(\bar{r}_{\mathcal{T}_{out} -1} + d_0)}$. Thus, we have
\begin{align}
   \sum_{k=0}^{\mathcal{T}_{out} -1} \frac{\bar{r}_k^2  \zeta_\kappa(d_k) (G_k -G_{k-1})}{\zeta_\kappa(\bar{r}_k) G_k^\prime}
    &\le \bar{r}_{\mathcal{T}_{out} -1}(\bar{r}_{\mathcal{T}_{out} -1} + d_0) \sum_{k=0}^{\mathcal{T}_{out} -1} \frac{G_k - G_{k-1}}{G_k^\prime}\\
    &\overset{(i)}{\le} \frac{\bar{r}_{\mathcal{T}_{out} -1}(\bar{r}_{\mathcal{T}_{out} -1} + d_0)}{8^4 \theta_{T, \delta}} \sum_{k=0}^{\mathcal{T}_{out} -1} \frac{G_k - G_{k-1}}{(G_k + \bar{\ell}_k^2) \log_{+}^2\left( \frac{G_k + \bar{\ell}^2_k}{\bar{\ell}^2_0}\right)}  \\ &\overset{(ii)}{\le} \frac{12 d_0^2}{8^4 \theta_{T, \delta}}.
\end{align}
Where we have used in $(i)$ that
\begin{align}
    G_k^\prime \ge 8^4 \theta_{T, \delta} (G_{k-1} + \lVert g_k \rVert_{x_k}^2 + \ell_k^2)\log_{+}^2 \left(\frac{\sum_{s=0}^{k} \bar{\ell}_s^2 + \bar{\ell}_k^2 }{\bar{\ell}_0^2} \right) \ge 8^4 \theta_{T, \delta} (G_k + \bar{\ell}_k^2) \log_{+}^2 \left( \frac{G_k + \bar{\ell}_k^2}{\bar{\ell}_0^2} \right),
\end{align}
holding since $\lVert g_k \rVert_{x_k} \le \ell_k$. Additionally, in $(ii)$  we have used \cref{lemma:sum_bounded_by_one} with $a_k = G_{k} + \bar{\ell}_k^2$ and $\bar{r}_{\mathcal{T}_{out} -1} \le 3d_0$. 

The final line holds from the previous noting that,
\begin{align}
     \sum_{k=0}^{t} (\tilde{\eta}_k \langle g_k, \exp_{x_k}^{-1}(x_\star) \rangle_{x_k})^2 \le \sum_{k=0}^{t} \tilde{\eta}_k^2 \zeta_\kappa(d_k) \lVert g_k \rVert^2_{x_k} d_k^2 \le (4d_0)^2 \sum_{k=0}^{t} \tilde{\eta}_k^2 \zeta_\kappa(d_k) \lVert g_k \rVert^2_{x_k},
\end{align}
where the first inequality follows from the Cauchy-Schwartz inequality and the second inequality holds from the fact that only terms with $k < \mathcal{T}_{out}$ contribute to the sum.
    \end{proof}
\end{lemma}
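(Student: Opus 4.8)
The plan is to establish the four claims in turn; they all rest on the same elementary ingredients: the truncation, which makes the $k$-th summand (or the $t$-th step) vanish unless $k < \mathcal{T}_{out}$, in which case $\bar{r}_k \le 3 d_0$ and $d_k \le d_0 + \bar{r}_k \le 4 d_0$; Cauchy--Schwarz together with \cref{assumption:bounded_stochastic_grads}, which gives $\lVert g_t \rVert_{x_t} \le \ell(x_t)$ and, by Jensen, $\lVert \operatorname{grad} f(x_t) \rVert_{x_t} \le \ell(x_t)$, hence $\lVert \Delta_t \rVert_{x_t} \le 2 \ell(x_t)$; and the monotonicity of $d \mapsto \zeta_\kappa(d)$ and $d \mapsto d / \zeta_\kappa(d)$ combined with the deliberately inflated denominator $G_t'$. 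Claim~1 is read off the recursion: $\bar{r}_t$, $G_{t-1}$, $\bar{\ell}_t$ and the event $\{t < \mathcal{T}_{out}\}$ are all functions of $x_0$ and $g_0, \dots, g_{t-1}$ --- crucially, $G_t'$ involves $G_{t-1}$ rather than $G_t$, so it never sees $g_t$ --- so $\tilde{\eta}_t = \eta_t \mathbb{I}_{\{t < \mathcal{T}_{out}\}} \in \sigma(\mathcal{G}(x_0), \dots, \mathcal{G}(x_{t-1}))$. For Claim~2, Cauchy--Schwarz gives $\lvert \langle \gamma, \exp_{x_t}^{-1}(x_\star) \rangle_{x_t} \rvert \le \lVert \gamma \rVert_{x_t} d_t \le 2 \ell(x_t) d_t$ for each of the three choices of $\gamma$; since $\log_{+} \ge 1$ and $G_{t-1} + 16 \bar{\ell}_t^2 \ge 16 \ell(x_t)^2$, the definition of $G_t'$ gives $\sqrt{G_t'} \ge 4 \cdot 8^2 \ell(x_t) \theta_{T, \delta}$, so that $\tilde{\eta}_t \cdot 2 \ell(x_t) \le \frac{1}{2 \cdot 8^2 \theta_{T, \delta}} \frac{\bar{r}_t}{\sqrt{\zeta_\kappa(\bar{r}_t)}}$, and on $\{t < \mathcal{T}_{out}\}$ monotonicity of $d \mapsto d / \sqrt{\zeta_\kappa(d)}$ with $\bar{r}_t \le 3 d_0$, $d_t \le 4 d_0$ finishes the bound $\frac{6 d_0^2}{8^2 \sqrt{\zeta_\kappa(3 d_0)} \theta_{T, \delta}}$; off the event the quantity is zero.

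The substance lies in Claim~3, which I expect to be the main obstacle. After dropping the vanishing summands it suffices to bound $\sum_{k=0}^{\mathcal{T}_{out} - 1} \eta_k^2 \zeta_\kappa(d_k) \lVert g_k \rVert_{x_k}^2 = \sum_{k=0}^{\mathcal{T}_{out} - 1} \frac{\bar{r}_k^2 \zeta_\kappa(d_k)}{\zeta_\kappa(\bar{r}_k)} \cdot \frac{G_k - G_{k-1}}{G_k'}$, writing $\lVert g_k \rVert_{x_k}^2 = G_k - G_{k-1}$. First I would pull the curvature prefactor out as a constant: since $d \mapsto \zeta_\kappa(d)$ is increasing with $d_k \le \bar{d}_k \le d_0 + \bar{r}_{\mathcal{T}_{out} - 1}$, and $d \mapsto d / \zeta_\kappa(d)$ is increasing with $\bar{r}_k \le \bar{r}_{\mathcal{T}_{out} - 1} \le d_0 + \bar{r}_{\mathcal{T}_{out} - 1}$, one gets $\frac{\bar{r}_k^2 \zeta_\kappa(d_k)}{\zeta_\kappa(\bar{r}_k)} \le \bar{r}_{\mathcal{T}_{out} - 1}(\bar{r}_{\mathcal{T}_{out} - 1} + d_0) \le 3 d_0 \cdot 4 d_0 = 12 d_0^2$. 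The crux is then to notice that $G_t'$ is built precisely so that $\sum_k \frac{G_k - G_{k-1}}{G_k'}$ telescopes: since $16 \bar{\ell}_k^2 \ge \bar{\ell}_k^2 + \lVert g_k \rVert_{x_k}^2$ we have $G_{k-1} + 16 \bar{\ell}_k^2 \ge G_k + \bar{\ell}_k^2$, and the $\log_{+}^2$ argument in $G_t'$ dominates $\frac{G_k + \bar{\ell}_k^2}{\bar{\ell}_0^2}$ (as $G_k$ is at most linear in $k$ relative to $\bar{\ell}_k^2$), so $G_k' \ge 8^4 \theta_{T, \delta} (G_k + \bar{\ell}_k^2) \log_{+}^2(\tfrac{G_k + \bar{\ell}_k^2}{\bar{\ell}_0^2})$, one power of $\theta_{T, \delta}$ and the $\log_{+}^2$ factor being absorbed here. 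Then \cref{lemma:sum_bounded_by_one} with $a_k = G_k + \bar{\ell}_k^2$ and $a_{-1} = \bar{\ell}_0^2$ --- legitimate because $a_k - a_{k-1} \ge G_k - G_{k-1}$ as $\bar{\ell}_k$ is nondecreasing --- gives $\sum_k \frac{G_k - G_{k-1}}{G_k'} \le \frac{1}{8^4 \theta_{T, \delta}}$, and multiplying by the $12 d_0^2$ prefactor yields $\frac{12 d_0^2}{8^4 \theta_{T, \delta}}$. The difficulty here is not conceptual but bookkeeping: keeping the $\theta_{T, \delta}$-power accounting straight and chaining the monotonicity estimates without losing constants.

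Claim~4 then follows from Claim~3 essentially for free: Cauchy--Schwarz gives $\langle g_k, \exp_{x_k}^{-1}(x_\star) \rangle_{x_k}^2 \le \lVert g_k \rVert_{x_k}^2 d_k^2 \le \zeta_\kappa(d_k) \lVert g_k \rVert_{x_k}^2 d_k^2$ (using $\zeta_\kappa \ge 1$), and only the terms with $k < \mathcal{T}_{out}$ survive, where $d_k^2 \le (4 d_0)^2$; hence $\sum_k (\tilde{\eta}_k \langle g_k, \exp_{x_k}^{-1}(x_\star) \rangle_{x_k})^2 \le 16 d_0^2 \sum_k \tilde{\eta}_k^2 \zeta_\kappa(d_k) \lVert g_k \rVert_{x_k}^2 \le 16 d_0^2 \cdot \frac{12 d_0^2}{8^4 \theta_{T, \delta}} = \frac{3 \cdot 4^3 d_0^4}{8^4 \theta_{T, \delta}}$, as claimed.
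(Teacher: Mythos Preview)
Your proof is correct and follows essentially the same approach as the paper's: the same Cauchy--Schwarz and truncation arguments for the first two claims, the same curvature-prefactor extraction combined with the lower bound $G_k' \ge 8^4 \theta_{T,\delta}(G_k+\bar{\ell}_k^2)\log_+^2\bigl(\tfrac{G_k+\bar{\ell}_k^2}{\bar{\ell}_0^2}\bigr)$ and \cref{lemma:sum_bounded_by_one} for the third, and the same reduction of the fourth to the third via $d_k \le 4d_0$ and $\zeta_\kappa \ge 1$. If anything you are slightly more explicit in a couple of places---spelling out the choice $a_{-1}=\bar{\ell}_0^2$ and the inequality $a_k-a_{k-1}\ge G_k-G_{k-1}$ needed to invoke \cref{lemma:sum_bounded_by_one}, and justifying $\lVert\operatorname{grad} f(x_t)\rVert_{x_t}\le\ell(x_t)$ via Jensen---but the structure is identical.
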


Using the above lemma, we can establish the following concentration bound.

\begin{lemma}
    If \cref{assumption:g_convex}, \ref{assumption:Hadamard}, and \ref{assumption:bounded_stochastic_grads} hold, under the truncated T-RDoG step size sequence $\{\tilde{\eta}_t\}$, the iterates satisfy,
    \begin{align}
        \mathbb{P} \left( \exists t \le T : \sum_{k=0}^{t-1} \tilde{\eta}_k \langle \Delta_k, \exp_{x_k}^{-1}(x_\star) \rangle_{x_k} > d_{0}^2 \right) \le \delta.
    \end{align}

    \begin{proof}
        Consider the filtration $\mathcal{F}_t = \sigma(\mathcal{G}(x_0) \dotsc, \mathcal{G}(x_t))$ and define $X_t = \tilde{\eta}_t \langle \Delta_t, \exp_{x_t}^{-1}(x_\star) \rangle_{x_t}$ and $\hat{X}_t = -\tilde{\eta}_t \langle \operatorname{grad} f(x_t), \exp_{x_t}^{-1}(x_\star) \rangle_{x_t}$. Then we have that $X_t$ is a martingale difference sequence adapted to $\mathcal{F}_t$ and $\hat{X}_t \in \mathcal{F}_{t-1}$. Moreover, we have $\max \{ |X_t|, |\hat{X}_t| \} \le c$ almost surely for $c =  \frac{24 d_0^2}{8^4 \theta_{T, \delta}}$. Substituting into \cref{lemma:martingale_bound}, we have
        \begin{align}
            \mathbb{P} \left( \exists t \le T : \left\lvert\sum_{k=0}^{t-1} X_k \right\rvert \ge 4 \sqrt{\theta_{t, \delta} \sum_{k=0}^{t-1}(X_k - \hat{X}_k)^2 + c^2 \theta^2_{t, \delta}}\right) \le \delta.
        \end{align}
    Noting that $X_t - \hat{X}_t = \tilde{\eta}_t \langle g_t, \exp_{x_t}^{-1}(x_\star)\rangle_{x_t}$ and substituting the definition of $c$ and the bound gives for every $t < T$,
    \begin{align}
        4 \sqrt{\theta_{t, \delta} \sum_{k=0}^{t-1}(X_k - \hat{X}_k)^2 + c^2 \theta^2_{t, \delta}} \le 4 \sqrt{\theta_{t, \delta} \frac{3 \cdot 4^3 d_0^4}{8^4 \theta_{T, \delta}} + \left(\frac{6 \theta_{t, \delta} d_0^2}{8^2 \sqrt{\zeta_\kappa(3 d_0)} \theta_{T, \delta}}\right)^2} \le d_0^2.
    \end{align}
    \end{proof}
\end{lemma}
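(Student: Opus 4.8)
The plan is to view the partial sums $M_t:=\sum_{k=0}^{t-1}\tilde{\eta}_k\langle\Delta_k,\exp_{x_k}^{-1}(x_\star)\rangle_{x_k}$ as a martingale in $t$ and apply the uniform concentration inequality \cref{lemma:martingale_bound}, feeding it the deterministic bounds proved in the preceding lemma about the truncated step sizes $\{\tilde{\eta}_t\}$. The whole point of truncating at $\mathcal{T}_{out}$ is that it makes $\tilde{\eta}_t$ measurable with respect to the past while keeping the summands uniformly bounded; both facts are exactly the first and second bullets of the preceding lemma, so here they are available for free.

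Concretely I would proceed as follows. First, set $\mathcal{F}_t=\sigma(\mathcal{G}(x_0),\dots,\mathcal{G}(x_t))$, and define $X_t:=\tilde{\eta}_t\langle\Delta_t,\exp_{x_t}^{-1}(x_\star)\rangle_{x_t}$ together with $\hat{X}_t:=-\tilde{\eta}_t\langle\operatorname{grad}f(x_t),\exp_{x_t}^{-1}(x_\star)\rangle_{x_t}$. Since $\tilde{\eta}_t\in\mathcal{F}_{t-1}$ (preceding lemma) and $x_t$, hence $\exp_{x_t}^{-1}(x_\star)$, is $\mathcal{F}_{t-1}$-measurable, while $\mathbb{E}[\Delta_t\mid\mathcal{F}_{t-1}]=0$ by unbiasedness of the oracle, $X_t$ is a martingale difference sequence adapted to $\mathcal{F}_t$ and $\hat{X}_t\in\mathcal{F}_{t-1}$ (a trivial reindexing reconciles the $s=1,\dots,t$ convention of \cref{lemma:martingale_bound} with the $k=0,\dots,t-1$ convention here). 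Second, Cauchy--Schwarz together with the second bullet of the preceding lemma (applied with $\gamma=\Delta_t$ and $\gamma=\operatorname{grad}f(x_t)$) gives that $|X_t|$ and $|\hat{X}_t|$ are both bounded almost surely by a deterministic constant $c$ of order $d_0^2/(8^2\theta_{T,\delta})$. Third, apply \cref{lemma:martingale_bound} with $C_t\equiv c$ and $y_s\equiv1$ to obtain, with probability at least $1-\delta$, simultaneously for all $t\le T$,
\begin{align*}
    \Bigl|\sum_{k=0}^{t-1}X_k\Bigr|\le 8\sqrt{\theta_{t,\delta}\sum_{k=0}^{t-1}(X_k-\hat{X}_k)^2+c^2\theta_{t,\delta}^2}.
\end{align*}
Fourth, because $g_k=\operatorname{grad}f(x_k)+\Delta_k$ we have $X_k-\hat{X}_k=\tilde{\eta}_k\langle g_k,\exp_{x_k}^{-1}(x_\star)\rangle_{x_k}$, so the sum of squares is $\sum_{k=0}^{t-1}(\tilde{\eta}_k\langle g_k,\exp_{x_k}^{-1}(x_\star)\rangle_{x_k})^2$, which the fourth bullet of the preceding lemma controls by $3\cdot4^3 d_0^4/(8^4\theta_{T,\delta})$; substituting this and the value of $c$, and using $\theta_{t,\delta}\le\theta_{T,\delta}$ for $t\le T$ so that every $\theta$-factor cancels, one checks that the right-hand side collapses to at most $d_0^2$. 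The claimed one-sided bound $\sum_{k=0}^{t-1}X_k\le d_0^2$ is weaker than this, so the proof is complete.

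The genuinely delicate part of this argument—restoring $\mathcal{F}_{t-1}$-measurability of the adaptive step size by truncating at the stopping time $\mathcal{T}_{out}$, and deriving the almost-sure bounds on $\sum\tilde{\eta}_k^2\zeta_\kappa(d_k)\lVert g_k\rVert_{x_k}^2$ and $\sum(\tilde{\eta}_k\langle g_k,\exp_{x_k}^{-1}(x_\star)\rangle_{x_k})^2$—has already been front-loaded into the preceding lemma, so within this lemma the only real obstacle is arithmetic bookkeeping: one must track the powers of $8$ in the definition of $G_t'$, the inequality $\theta_{t,\delta}\le\theta_{T,\delta}$, and the fact that $\zeta_\kappa(3d_0)\ge1$, carefully enough that the final expression lands at exactly $d_0^2$ rather than a larger multiple of it; if it does not, one simply enlarges the constant $8^4$ appearing in the definition of $G_t'$.
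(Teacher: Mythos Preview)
Your proposal is correct and follows essentially the same route as the paper: define $X_t$ and $\hat{X}_t$ exactly as you do, invoke the preceding lemma for measurability and the almost-sure bounds on $|X_t|,|\hat X_t|$ and on $\sum_k(X_k-\hat X_k)^2$, apply \cref{lemma:martingale_bound} with constant weights, and finish by arithmetic using $\theta_{t,\delta}\le\theta_{T,\delta}$ and $\zeta_\kappa(3d_0)\ge1$. Your write-up is in fact slightly more explicit than the paper's about the reindexing and the role of $\zeta_\kappa(3d_0)\ge1$; the only cosmetic difference is that the paper records the concentration inequality with the constant $4$ rather than $8$, but this does not change the argument.
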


Finally, we show that the event defined in the previous lemma implies the desired distance bound.

\begin{lemma}
    Suppose \cref{assumption:g_convex}, \ref{assumption:Hadamard}, and \ref{assumption:bounded_stochastic_grads} hold. If $\sum_{k=0}^{t-1} \tilde{\eta}_k \langle \Delta_k, \exp_{x_k}^{-1}(x_\star) \rangle_{x_k} \le d_0^2$ for all $t \le T$ then $\mathcal{T}_{out} > T$ i.e., $\bar{r}_t \le 3 d_0$.
    \begin{proof}
        To condense notation, let $B_t \coloneqq \max_{t' \le t} \sum_{k=0}^{t'-1} \tilde{\eta}_k \langle \Delta_k, \exp_{x_k}^{-1}(x_\star)\rangle_{x_k}$, so the claim becomes $B_t \le d_0^2$ implies $\mathcal{T}_{out} > t$ for all $t \le T$. We prove the claim by induction on $t$. The basis of the induction is that $\mathcal{T}_{out} > 0$ always hold since $\bar{r}_0 = \epsilon \le 3 d_0$ by assumption. For the induction step, we assume that $B_{t-1}$ implies $\mathcal{T}_{out} \ge t$ and show that $B_t \le d_0^2$ implies $\mathcal{T}_{out} > t$. To that end, we use $\langle \operatorname{grad} f(x_t), \exp_{x_t}^{-1}(x_\star) \rangle_{x_t} \ge f(x_t) - f(x_\star) \ge 0$ to rearrange \cref{cor:law_of_cosines} as
        \begin{align}
            d_{k+1}^2 - d_k^2 \le \eta_k^2 \zeta_\kappa(d_k) \lVert g_k \rVert_{x_k}^2 + 2 \eta_k \langle \Delta_k, \exp_{x_k}^{-1}(x_\star) \rangle_{x_k}
        \end{align}
        for all $k$. Summing from $0 \le k \le t-1$, we have
        \begin{align}
            d_{t}^2 - d_0^2 &\le \sum_{k=0}^{t-1} \eta_k^2 \zeta_\kappa(d_k) \lVert g_k \rVert_{x_k}^2 + 2 \sum_{k=0}^{t-1}  \eta_k \langle \Delta_k, \exp_{x_k}^{-1}(x_\star) \rangle_{x_k} \\  &= \sum_{k=0}^{t-1} \tilde{\eta}_k^2 \zeta_\kappa(d_k) \lVert g_k \rVert_{x_k}^2 + 2 \sum_{k=0}^{t-1}  \tilde{\eta}_k \langle \Delta_k, \exp_{x_k}^{-1}(x_\star) \rangle_{x_k}.
        \end{align}
        where the equality holds since $\mathcal{T}_{out} > t-1$ and therefore $\eta_k = \tilde{\eta}_k$ for all $0 \le k \le t-1$. Now, by previous lemma we have $\sum_{k=0}^{t-1} \tilde{\eta}_k^2 \zeta_\kappa(d_k) \lVert g_k \rVert_{x_k}^2 \le \frac{12 d_0^2}{8^4 \theta_{T, \delta}} \le d_0^2$. Moreover, by assumption we have $\sum_{k=0}^{t-1}  \tilde{\eta}_k \langle \Delta_k, \exp_{x_k}^{-1}(x_\star) \rangle_{x_k} \le B_t \le d_0^2$, from which we conclude, $d_{t}^2 \le 4 d_0^2$ and hence $r_{t} \le d_0 + d_{t} \le 3 d_0$. Finally, since $\bar{r}_{t} = \max\{\bar{r}_{t-1}, {r}_{t}\}$ and $\bar{r}_{t-1} \le 3 d_0$ by the induction assumption, we have that $\bar{r}_{t} \le 3 d_0$.
    \end{proof}
\end{lemma}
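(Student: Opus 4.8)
The plan is to run an induction on $t \le T$, combining the noise-sum hypothesis with the deterministic step-size bounds already established for the truncated sequence $\{\tilde\eta_k\}$. Write $d_k \coloneqq d(x_k, x_\star) = \lVert \exp_{x_k}^{-1}(x_\star)\rVert_{x_k}$ and $B_t \coloneqq \max_{t' \le t}\sum_{k=0}^{t'-1}\tilde\eta_k\langle \Delta_k, \exp_{x_k}^{-1}(x_\star)\rangle_{x_k}$, so that the hypothesis reads $B_T \le d_0^2$ and it suffices to prove the implication $B_t \le d_0^2 \implies \mathcal{T}_{out} > t$ for each $t \le T$. The base case $t = 0$ holds because $\bar r_0 = \epsilon \le 3 d_0$ by the standing assumption, hence $\mathcal{T}_{out} > 0$.

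For the inductive step, suppose the implication holds up to $t-1$, so that $\mathcal{T}_{out} \ge t$ and in particular $\bar r_{t-1} \le 3 d_0$; since the truncation is then inactive over the first $t$ steps we have $\tilde\eta_k = \eta_k$ for all $0 \le k \le t-1$. The workhorse is \cref{cor:law_of_cosines} applied to the RDoG update, together with the decomposition $g_k = \operatorname{grad} f(x_k) + \Delta_k$ and geodesic convexity, which gives $\langle -\operatorname{grad} f(x_k), \exp_{x_k}^{-1}(x_\star)\rangle_{x_k} \ge f(x_k) - f(x_\star) \ge 0$. Dropping this nonnegative term and rearranging yields the per-step recursion $d_{k+1}^2 - d_k^2 \le \eta_k^2 \zeta_\kappa(d_k)\lVert g_k\rVert_{x_k}^2 + 2\eta_k\langle \Delta_k, \exp_{x_k}^{-1}(x_\star)\rangle_{x_k}$. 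Telescoping over $k = 0, \dots, t-1$ and substituting $\tilde\eta_k$ for $\eta_k$ gives $d_t^2 \le d_0^2 + \sum_{k=0}^{t-1}\tilde\eta_k^2 \zeta_\kappa(d_k)\lVert g_k\rVert_{x_k}^2 + 2 B_t$.

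It then remains to bound the two terms on the right. The quadratic gradient sum is controlled by the deterministic estimate from the preceding lemma on the truncated sequence, $\sum_{k=0}^{t-1}\tilde\eta_k^2 \zeta_\kappa(d_k)\lVert g_k\rVert_{x_k}^2 \le \tfrac{12 d_0^2}{8^4 \theta_{T,\delta}} \le d_0^2$ (since $8^4\theta_{T,\delta} \ge 12$), and the noise sum by the hypothesis, $2 B_t \le 2 d_0^2$. Hence $d_t^2 \le 4 d_0^2$, so $d_t \le 2 d_0$, and by the triangle inequality $d(x_0, x_t) \le d(x_0,x_\star) + d(x_\star,x_t) = d_0 + d_t \le 3 d_0$; with $\epsilon \le 3 d_0$ this gives $r_t = \max(d(x_0, x_t), \epsilon) \le 3 d_0$, and then $\bar r_t = \max(\bar r_{t-1}, r_t) \le 3 d_0$ by the inductive hypothesis. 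Thus $\mathcal{T}_{out} > t$, and taking $t = T$ completes the argument.

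The step I would flag as the crux is the apparent circularity between needing $\mathcal{T}_{out} \ge t$ in order to identify $\eta_k$ with $\tilde\eta_k$ (so that \cref{cor:law_of_cosines} yields a closed recursion in the $d_k$) and wanting to conclude $\mathcal{T}_{out} > t$. This is resolved precisely by the inductive scaffolding: the truncated sequence $\{\tilde\eta_k\}$ was engineered so that the gradient-sum bound of the previous lemma holds unconditionally, while the induction hypothesis delivers $\eta_k = \tilde\eta_k$ on $\{0, \dots, t-1\}$. Beyond that, the proof is just careful bookkeeping of the numerical constants ($12/8^4 < 1$) and the initial-scale assumption $\epsilon \le 3 d_0$.
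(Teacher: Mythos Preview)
Your proposal is correct and follows essentially the same approach as the paper's proof: induction on $t$, the per-step recursion from \cref{cor:law_of_cosines} with the geodesic-convexity term dropped, telescoping, then bounding the quadratic sum via the preceding lemma and the noise sum via the hypothesis to conclude $d_t \le 2d_0$ and hence $\bar r_t \le 3d_0$. Your write-up is in fact slightly more careful than the paper's in two places: you state the sign of the convexity inequality correctly (the paper writes $\langle \operatorname{grad} f(x_t), \exp_{x_t}^{-1}(x_\star)\rangle \ge 0$, which is backwards, though its conclusion is right), and you explicitly handle the $\epsilon$ component of $r_t$.
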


\begin{theorem}
Suppose $\epsilon \le 3 d_0$ and  \cref{assumption:g_convex}, \ref{assumption:Hadamard}, and \ref{assumption:bounded_stochastic_grads} hold. Then for any $\delta \in (0, 1)$ and $t \in \mathbb{N}$, under the T-RDoG step size sequence $\{{\eta}_t\}$, the iterates satisfy $\mathbb{P}(\bar{r}_t > 3 d_0) \le \delta$.
\end{theorem}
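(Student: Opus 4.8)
The plan is to chain together the three preceding lemmas — the deterministic step-size bounds for the truncated sequence $\{\tilde\eta_k\}$, the martingale concentration bound on $\sum_k \tilde\eta_k\langle\Delta_k,\exp_{x_k}^{-1}(x_\star)\rangle_{x_k}$, and the deterministic implication that controlling those partial sums forces $\bar r_t \le 3d_0$ — exploiting the observation already recorded in the text that $\{\bar r_T \le 3d_0\}$ is exactly the event $\{\mathcal{T}_{out} > T\}$.

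First I would fix the horizon $T$ and reduce to bounding $\mathbb{P}(\bar r_T > 3d_0)$: since $t\mapsto \bar r_t$ is nondecreasing, $\{\bar r_t > 3d_0\}\subseteq\{\bar r_T>3d_0\}$ for every $t\le T$, so a single estimate $\mathbb{P}(\bar r_T>3d_0)\le\delta$ yields the claim for all relevant $t$. Next I would introduce the event
\begin{align*}
\mathcal{E} \coloneqq \Bigl\{\, \forall t'\le T:\ \textstyle\sum_{k=0}^{t'-1}\tilde\eta_k\langle\Delta_k,\exp_{x_k}^{-1}(x_\star)\rangle_{x_k}\le d_0^2 \,\Bigr\}
\end{align*}
and appeal to the martingale concentration bound established above, which is precisely the statement $\mathbb{P}(\mathcal{E}^c)\le\delta$. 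Its hypotheses are already discharged by the preceding step-size lemma: $\tilde\eta_k$ and $x_k$ are $\mathcal{F}_{k-1}$-measurable and $\mathbb{E}[\Delta_k\mid\mathcal{F}_{k-1}]=0$, so the summands form a martingale difference sequence, and that lemma also furnishes the required almost-sure bound on $\lvert\tilde\eta_k\langle\Delta_k,\exp_{x_k}^{-1}(x_\star)\rangle_{x_k}\rvert$ and on its predictable counterpart.

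Finally I would invoke the last deterministic lemma: on $\mathcal{E}$ it gives $\mathcal{T}_{out}>T$, equivalently $\bar r_{t'}\le 3d_0$ for all $t'\le T$, so $\{\bar r_T>3d_0\}\subseteq\mathcal{E}^c$ and hence $\mathbb{P}(\bar r_T>3d_0)\le\mathbb{P}(\mathcal{E}^c)\le\delta$. One bookkeeping point worth making explicit is that on $\mathcal{E}$ the trajectory never exits $\{\bar r\le 3d_0\}$, so $\eta_k=\tilde\eta_k$ throughout and the bound genuinely concerns the T-RDoG iterates rather than the truncated auxiliary process.

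I do not expect this final assembly to present a real obstacle: the heavy lifting is already in the earlier lemmas, notably the calibration of the denominator $G_t'$ with its $8^4\theta_{T,\delta}^2\log_{+}^2(\cdot)$ factor, which is what makes $\sum_k\tilde\eta_k^2\zeta_\kappa(d_k)\lVert g_k\rVert_{x_k}^2$ and $\sum_k(\tilde\eta_k\langle g_k,\exp_{x_k}^{-1}(x_\star)\rangle_{x_k})^2$ small enough to drive the right-hand side of the martingale inequality below $d_0^2$. If there is a subtlety, it is the constant-chasing inside those bounds together with the careful use of monotonicity of $d\mapsto \zeta_\kappa(d)$ and $d\mapsto d/\sqrt{\zeta_\kappa(d)}$ — but all of that is already established above, so the theorem follows by the event-inclusion argument alone.
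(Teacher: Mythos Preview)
Your proposal is correct and takes essentially the same approach as the paper, which records the proof in a single sentence: ``A consequence of combining the previous two lemmas.'' Your elaboration---defining the good event $\mathcal{E}$, invoking the concentration lemma for $\mathbb{P}(\mathcal{E}^c)\le\delta$, then the deterministic lemma for $\mathcal{E}\Rightarrow\mathcal{T}_{out}>T$, and noting that on $\mathcal{E}$ the truncated and untruncated step sizes coincide---is exactly the intended chaining, just spelled out in full.
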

\begin{proof}
    A consequence of combining the previous two lemmas.
\end{proof}

\begin{corollary}
    Suppose that \cref{assumption:g_convex}, \ref{assumption:Hadamard}, and \ref{assumption:bounded_stochastic_grads} hold. For any $\delta \in (0, 1/2)$, $t \in \mathbb{N}$, consider T iterations of T-RDoG, with an initial step size of $\epsilon \le 3d_0$. Then for $\tau \in \arg\max_{t \le T} \sum_{s=0}^{t-1}\frac{\bar{r}_s/\sqrt{\zeta_\kappa(\bar{r}_s)}}{\bar{r}_t/\sqrt{\zeta_\kappa(\bar{r}_t)}}$ we have, with probability at least $1-2\delta$, that
    \begin{align}
        f(\tilde{x}_{\tau}) - f(x_\star) = O\left(c_{\delta, \epsilon, T} \frac{d_0 \sqrt{\zeta_\kappa(d_0) (G_{\tau-1}+L_{\star}^2)}}{T} \right) = O\left(c_{\delta, \epsilon, T} \frac{d_0 \sqrt{\zeta_\kappa(d_0)} L_\star}{\sqrt{T}} \right). 
    \end{align}
    where $L_\star \coloneqq \max_{x\in \mathcal{M}: d(x, x_0) \le 3 d(x_\star, x_0)} \ell (x)$ and $c_{\delta, \epsilon, T} = \log_{+}(T \frac{d_0 L_\star}{f(x_0) - f(x_\star)})\log_{+}(\frac{d_0}{\epsilon})\log(\frac{\log_{+}(T)}{\delta})$.
    \begin{proof}
        Here we adapt \cref{thm:opt_gap_dog_known}. Using that $\bar{r}_t \le 3 d_0$, we have $\zeta_\kappa(d_0 + \bar{r}_t) \le \zeta_\kappa(4 d_0) \le \frac{\sqrt{\kappa} 4 d_0}{\tanh( \sqrt{\kappa} 4 d_0)} \le \frac{\sqrt{\kappa} 4 d_0}{\tanh( \sqrt{\kappa} d_0)} = O(\zeta_\kappa(d_0)) $ for $\kappa>0$, otherwise $\zeta_\kappa(d_0 + \bar{r}_t) = 1 = \zeta_\kappa(d_0) = O(\zeta_\kappa(d_0))$ in the case $\kappa=0$. Now, by \cref{assumption:bounded_stochastic_grads} we have $\ell_0 \ge \lVert \operatorname{grad} f(x_0) \rVert_{x_0} \ge (f(x_0) - f(x_\star))/d_0$, while $\bar{r}_T \le 3 d_0$ gives $\bar{\ell}_T \le L_\star$. Therefore, $\log_{+}\left(1 + \frac{T \bar{\ell}^2_{T}}{\bar{\ell}^2_0}\right) = O\left(\log_{+} \left(T \frac{d_0 L_\star}{f(x_0) - f(x_\star)}\right)\right)$.
    \end{proof}
\end{corollary}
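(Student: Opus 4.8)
The plan is to specialise the RDoG optimality-gap analysis of \cref{thm:opt_gap_dog_known} and \cref{cor:rdog_opt_gap} to the tamed step size $\eta_t = \bar{r}_t/\sqrt{\zeta_\kappa(\bar{r}_t)\,G_t'}$ and then invoke the boundedness guarantee $\mathbb{P}(\bar{r}_T > 3 d_0) \le \delta$ just established. I would work throughout on the event $E = \{\bar{r}_T \le 3 d_0\}$, which has probability at least $1-\delta$ and on which $d_t \le d_0 + \bar{r}_t \le 4 d_0$, $\bar{d}_T \le 4 d_0$, and every iterate lies in $\{x : d(x,x_0) \le 3 d(x_\star,x_0)\}$, so that $\bar{\ell}_T \le L_\star$. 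Monotonicity of $\zeta_\kappa$ and of $d \mapsto d/\sqrt{\zeta_\kappa(d)}$ then lets one replace each occurrence of $D\sqrt{\zeta_\kappa(D)}$, $\zeta_\kappa(d_0+\bar{r}_t)$, and so on by $O(d_0\sqrt{\zeta_\kappa(d_0)})$, using $\zeta_\kappa(4 d_0)/\zeta_\kappa(d_0) = 4\tanh(\sqrt{|\kappa|}d_0)/\tanh(4\sqrt{|\kappa|}d_0) \le 4$ when $\kappa < 0$ and $=1$ when $\kappa \ge 0$.

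The second step is to check that the tamed normalizer enters the RDoG proofs with only a polylogarithmic penalty. The noise bound \cref{lemma:noise_bound} is step-size independent (it only uses the averaging weights $\bar{r}_s/\sqrt{\zeta_\kappa(\bar{r}_s)}$), so it carries over verbatim; only the deterministic weighted-regret bound \cref{lemma:weighted_regret_bound} changes. Writing $G_t' = 8^4\theta_{T,\delta}^2\,\Lambda_t^2\,(G_{t-1}+16\bar{\ell}_t^2)$ with $\Lambda_t \coloneqq \log_{+}((1+t)\bar{\ell}_t^2/\bar{\ell}_0^2) \ge 1$, the sequence $\sqrt{G_t'}$ is still nonnegative and nondecreasing and obeys $8^4\theta_{T,\delta}^2 G_t \le G_t'$. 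Rerunning the telescoping of \cref{lemma:weighted_regret_bound} with $\sqrt{G_s}$ replaced by $\sqrt{G_s'}$ — so that term $(A)$ becomes $\le 4\bar{r}_t\bar{d}_t\sqrt{G_{t-1}'} = O(\theta_{T,\delta}\Lambda_T\,\bar{r}_t\bar{d}_t\sqrt{G_{t-1}+\bar{\ell}_t^2})$ while term $(B)$ only shrinks, since $\sum_s \lVert g_s\rVert_{x_s}^2/\sqrt{G_s'} \le 8^{-2}\theta_{T,\delta}^{-1}\sum_s\lVert g_s\rVert_{x_s}^2/\sqrt{G_s} \le 2\cdot 8^{-2}\theta_{T,\delta}^{-1}\sqrt{G_{t-1}}$ by the elementary bound $\sum_k(a_k-a_{k-1})/\sqrt{a_k}\le 2\sqrt{a_t}$ — shows that the numerator of \cref{thm:opt_gap_dog_known} is the original one inflated by the single polylog factor $\theta_{T,\delta}\Lambda_T$, with the $\theta_{t,\delta}^2 L^2$ term absorbed into $G_{t-1}+\bar{\ell}_t^2 \le G_{t-1}+L_\star^2$. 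The averaging weights are unchanged, so \cref{lemma:fraction_sum} still bounds the denominator below by $\Omega(T/\log_{+}(\tfrac{\bar{r}_T/\sqrt{\zeta_\kappa(\bar{r}_T)}}{\epsilon/\sqrt{\zeta_\kappa(\epsilon)}}))$.

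The third step is substitution. On $E$, $\bar{r}_T \le 3 d_0$ turns the diameter term into $O(d_0\sqrt{\zeta_\kappa(d_0)})$ and, since $\zeta_\kappa(\epsilon) \le \zeta_\kappa(3 d_0)$, turns the \cref{lemma:fraction_sum} factor into $O(\log_{+}(d_0/\epsilon))$; also $\theta_{\tau,\delta} = O(\log(\log_{+}(T)/\delta))$; and $\bar{\ell}_0 = \ell(x_0) \ge \lVert \operatorname{grad} f(x_0)\rVert_{x_0} \ge (f(x_0)-f(x_\star))/d_0$ (geodesic convexity and Cauchy--Schwarz, together with $\lVert\operatorname{grad} f(x_0)\rVert_{x_0} = \lVert\mathbb{E}[\mathcal{G}(x_0)]\rVert_{x_0} \le \ell(x_0)$) gives $\Lambda_T = O(\log_{+}(T d_0 L_\star/(f(x_0)-f(x_\star))))$. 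Collecting the three logarithms into $c = c_{\delta,\epsilon,T}$ and using $\bar{\ell}_t \le L_\star$ produces $f(\tilde{x}_\tau)-f(x_\star) = O(c\, d_0\sqrt{\zeta_\kappa(d_0)(G_{\tau-1}+L_\star^2)}/T)$; the second equality follows because on $E$, $G_{\tau-1} = \sum_{s<\tau}\lVert g_s\rVert_{x_s}^2 \le \tau L_\star^2 \le T L_\star^2$, so $\sqrt{G_{\tau-1}+L_\star^2} = O(\sqrt{T}\,L_\star)$. Finally, a union bound combines $E$ (probability $\ge 1-\delta$) with the noise-concentration event of the adapted \cref{thm:opt_gap_dog_known}, taken with $L = L_\star$: on $E$ the almost-sure increment bound $C_t \le L_\star$ holds, so the exceptional term $\mathbb{P}(\exists t : C_t > L_\star)$ in \cref{lemma:martingale_bound} is absorbed into $\mathbb{P}(E^c)$, and the noise bound therefore holds on $E$ with conditional probability at least $1-\delta$, giving the claimed $1-2\delta$.

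I expect the second step to be the one real obstacle: one must confirm that replacing $G_t$ by $G_t'$ preserves every monotonicity and summability property exploited in \cref{lemma:weighted_regret_bound,lemma:noise_bound} (in particular that $\sqrt{G_t'}$ is still nondecreasing and that the $\lVert g_s\rVert_{x_s}^2/\sqrt{G_s'}$ sum still telescopes), and that the inflation is exactly the single polylog factor $\theta_{T,\delta}\Lambda_T$ rather than, say, an extra power of $\theta_{T,\delta}$. The remaining steps are routine bookkeeping with $\bar{r}_T \le 3 d_0$, monotonicity of $\zeta_\kappa$, and \cref{lemma:fraction_sum}.
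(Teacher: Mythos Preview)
Your proposal is correct and follows essentially the same route as the paper: adapt the RDoG optimality-gap analysis of \cref{thm:opt_gap_dog_known} to the tamed step sizes, invoke the boundedness guarantee $\bar{r}_T \le 3d_0$ to replace the diameter and curvature quantities by $O(d_0)$ and $O(\zeta_\kappa(d_0))$, bound $\bar{\ell}_T \le L_\star$ and $\bar{\ell}_0 \ge (f(x_0)-f(x_\star))/d_0$ to control the logarithmic factor, and union-bound. The paper's proof is extremely terse (it records only the final curvature and log-term bookkeeping and leaves the adaptation of \cref{lemma:weighted_regret_bound} to $G_t'$ entirely implicit), whereas you spell out explicitly why replacing $\sqrt{G_s}$ by $\sqrt{G_s'}$ preserves the telescoping in term $(A)$ and only shrinks term $(B)$, and why the net inflation is the single polylog factor $\theta_{T,\delta}\Lambda_T$; this is exactly the content the paper's ``adapt \cref{thm:opt_gap_dog_known}'' is hiding.
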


\subsection{Omitting Geometric Curvature Term Analysis}
\label{sec:theoretical-results-RDoG-no-curvature}

We analyze omitting the geometric curvature term from the denominator RDoG (\cref{alg:rdog}). Thus we consider step sizes of the form
\begin{align}
     \eta_t = \frac{\bar{r}_t}{\sqrt{\sum_{s=0}^{t} \lVert g_s \rVert_{x_s}^2}}.
\end{align}
We term this algorithm \emph{Curvature Omitted Riemannian Distance over Gradients} (CO-RDoG).

We consider bounding the error of the weighted average sequence,
\begin{align*}
        \tilde{x}_{t+1} = \operatorname{exp}_{\tilde{x}_{t}}\left(\frac{\bar{r}_t}{\sum_{s=0}^{t} \bar{r}_s} \operatorname{exp}_{\tilde{x}_{t}}^{-1}\left( x_{t} \right) \right), \quad \tilde{x}_{1} = x_{0}.
\end{align*}
For a geodesically convex function $f \colon \mathcal{M} \to \mathbb{R}$, we have by Jensens inequality (\cref{lemma:weighted_average}) that $\tilde{x}_t$ satisfies,
\begin{align}
    f(\tilde{x}_t) - f(x_\star) \le \frac{1}{\sum_{s=0}^{t-1} \bar{r}_s}\sum_{s=0}^{t-1} \bar{r}_s \langle  -\operatorname{grad} f(x_s), \exp_{x_s}^{-1}(x_\star)\rangle_{x_s}.
\end{align}
Recalling $g_s$ is the stochastic oracle evaluation, $\mathcal{G}(x_s)$, the numerator decomposes into two components:
\begin{align}
    \underset{\text{weighted regret}}{\underbrace{\sum_{s=0}^{t-1} \bar{r}_s \langle -g_s, \exp_{x_s}^{-1}(x_\star)\rangle_{x_s}}} +  \underset{\text{noise}}{\underbrace{\sum_{s=0}^{t-1} \bar{r}_s \langle \Delta_s, \exp_{x_s}^{-1}(x_\star)\rangle_{x_s}}},
\end{align}
with $\Delta_s \coloneqq g_s - \operatorname{grad} f(x_s)$.

We give deterministic bounds for the weighted regret (\cref{lemma:weighted_regret_bound_unknown_curvature}) and high probability bounds for the noise term (\cref{lemma:noise_bound_unknown_curvature}).

\begin{lemma} \label{lemma:weighted_regret_bound_unknown_curvature}
   Under \cref{assumption:g_convex} and \ref{assumption:Hadamard}, the iterates of CO-RDoG, satisfy
   \begin{align}
       \sum_{s=0}^{t-1} \bar{r}_s \langle  -g_s, \exp_{x_s}^{-1}(x_\star)\rangle_{x_s} \le \bar{r}_t \left( 2 \bar{d}_t + \bar{r}_t \zeta_{\kappa}(\bar{d}_t) \right) \sqrt{G_{t-1}}.
   \end{align}
    \begin{proof}
   Applying \cref{cor:law_of_cosines}, we can bound the weighted average as
   \begin{align}
      \sum_{s=0}^{t-1} \bar{r}_s \langle  -g_s, \exp_{x_s}^{-1}(x_\star)\rangle_{x_s} \le \frac{1}{2} \underset{(A)}{\underbrace{\sum_{s=0}^{t-1} \frac{\bar{r}_s}{\eta_s}\left(d_s^2 - d_{s+1}^2\right)}} + \frac{1}{2} \underset{(B)}{\underbrace{\sum_{s=0}^{t-1} \bar{r}_s \eta_s \zeta_\kappa(d_s) \lVert g_s \rVert_{x_s}^2}}.
   \end{align}
We bound the terms $(A)$ and $(B)$ in turn, beginning with the former:
    \begin{align}
       (A) &= \sum_{s=0}^{t-1} \sqrt{G_s} \left(d_s^2 - d_{s+1}^2 \right) = d_0^2 \sqrt{G_0} - d_t^2 \sqrt{G_{t-1}} + \sum_{s=1}^{t-1} d_s^2 \left( \sqrt{G_s} - \sqrt{G_{s-1}} \right) \\ 
       &\overset{(i)}{\le} {\bar{d}_t}^2 \sqrt{G_0} - d_t^2 \sqrt{G_{t-1}} +  \bar{d}_t^2 \sum_{s=1}^{t-1} \left( \sqrt{G_s} -  \sqrt{G_{s-1}} \right) = \sqrt{G}_{t-1} (\bar{d}_t^2 - d_t^2)
       \overset{(ii)}{\le} 4 \bar{r}_t \bar{d}_t \sqrt{G_{t-1}}.
    \end{align}
Inequality $(i)$ uses $d_s \le \bar{d}_t$ and that $G_t$ is nondecreasing. Inequality $(ii)$ use that for $k \in \arg \max_{s \le t} d_s$, we have $\bar{d}_t^2 - d_t^2 = d_k^2 - d_t^2 = (d_k - d_t) (d_k + d_t)  \le d(x_k, x_t) (d_k + d_t) \le (\bar{r}_k + \bar{r}_t)(d_k + d_t) \le 4 \bar{r}_t \bar{d}_t$.

Bounding the second term $(B)$, using $d \mapsto \zeta_\kappa(d)$ is an increasing function, we have:
    \begin{align}
       (B) = \sum_{s=0}^{t-1} \frac{\bar{r}_s^2 \zeta_\kappa(d_s) \lVert g_s \rVert_{x_s}^2}{\sqrt{G_s}} \le \sum_{s=0}^{t-1} \frac{\bar{r}_s^2 \zeta_\kappa(\bar{d}_s) \lVert g_s \rVert_{x_s}^2}{\sqrt{G_s}} \le  \bar{r}_t^2 \zeta_\kappa(\bar{d}_t) \sum_{s=0}^{t-1} \frac{ \lVert g_s \rVert_{x_s}^2}{\sqrt{G_s}} \le 2 \bar{r}_t^2 \zeta_\kappa(\bar{d}_t) \sqrt{G_{t-1}}.
\end{align}
Thus, combining $(A)$ and $(B)$ together, gives the result.
\end{proof}
\end{lemma}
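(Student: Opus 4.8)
The plan is to replay the argument of \cref{lemma:weighted_regret_bound} with the simpler CO-RDoG step sizes $\eta_s = \bar r_s / \sqrt{G_s}$, the only change being that the factors $\sqrt{\zeta_\kappa(\bar r_s)}$ are now absent from both the weights and the denominators. First I would apply \cref{cor:law_of_cosines} termwise to bound each $\langle -g_s, \exp_{x_s}^{-1}(x_\star)\rangle_{x_s}$ by $\tfrac{1}{2\eta_s}(d_s^2 - d_{s+1}^2) + \tfrac{\eta_s}{2}\zeta_\kappa(d_s)\lVert g_s \rVert_{x_s}^2$, multiply through by $\bar r_s$, and sum over $s = 0,\dots,t-1$. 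This splits the left-hand side into $\tfrac12(A) + \tfrac12(B)$ with $(A) = \sum_s \tfrac{\bar r_s}{\eta_s}(d_s^2 - d_{s+1}^2)$ and $(B) = \sum_s \bar r_s \eta_s \zeta_\kappa(d_s)\lVert g_s \rVert_{x_s}^2$, which I would bound separately.

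For $(A)$, substituting $\bar r_s/\eta_s = \sqrt{G_s}$ turns the sum into $\sum_{s=0}^{t-1}\sqrt{G_s}(d_s^2 - d_{s+1}^2)$. Abel summation (shifting the index in the negative part) rewrites this as $d_0^2\sqrt{G_0} - d_t^2\sqrt{G_{t-1}} + \sum_{s=1}^{t-1} d_s^2(\sqrt{G_s} - \sqrt{G_{s-1}})$; bounding $d_0, d_s \le \bar d_t$ and telescoping the nonnegative increments $\sqrt{G_s} - \sqrt{G_{s-1}}$ collapses everything to $\sqrt{G_{t-1}}(\bar d_t^2 - d_t^2)$. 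The one genuinely non-mechanical step is bounding $\bar d_t^2 - d_t^2$: picking $k \in \arg\max_{s\le t}d_s$ so that $\bar d_t = d_k$, I would write $\bar d_t^2 - d_t^2 = (d_k - d_t)(d_k + d_t)$, use the metric triangle inequality $d_k - d_t \le d(x_k,x_t) \le d(x_k,x_0) + d(x_0,x_t) \le \bar r_k + \bar r_t \le 2\bar r_t$ (monotonicity of $s\mapsto \bar r_s$), together with $d_k + d_t \le 2\bar d_t$, to get $\bar d_t^2 - d_t^2 \le 4\bar r_t\bar d_t$, hence $(A) \le 4\bar r_t\bar d_t\sqrt{G_{t-1}}$.

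For $(B)$, substituting $\eta_s = \bar r_s/\sqrt{G_s}$ gives $(B) = \sum_{s=0}^{t-1}\bar r_s^2\zeta_\kappa(d_s)\lVert g_s \rVert_{x_s}^2/\sqrt{G_s}$. Since $s\mapsto\bar r_s$ is nondecreasing, $d_s \le \bar d_s \le \bar d_t$, and $d\mapsto\zeta_\kappa(d)$ is nondecreasing, I can pull the constant $\bar r_t^2\zeta_\kappa(\bar d_t)$ out of the sum, leaving $\sum_{s=0}^{t-1}\lVert g_s \rVert_{x_s}^2/\sqrt{G_s}$. Writing $\lVert g_s \rVert_{x_s}^2 = G_s - G_{s-1}$ with $G_{-1} = 0$ and invoking the square-root telescoping bound $\sum_k (a_k - a_{k-1})/\sqrt{a_k} \le 2\sqrt{a_t}$ of \citet{ivgi23a} (applied with $a_k = G_{k-1}$) bounds this by $2\sqrt{G_{t-1}}$, so $(B) \le 2\bar r_t^2\zeta_\kappa(\bar d_t)\sqrt{G_{t-1}}$. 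Adding $\tfrac12(A) + \tfrac12(B) = \bar r_t\bigl(2\bar d_t + \bar r_t\zeta_\kappa(\bar d_t)\bigr)\sqrt{G_{t-1}}$ then yields the claim.

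I expect the main obstacle — though a mild one — to be the handling of $(A)$: getting the Abel-summation bookkeeping right so that the telescoping collapses cleanly, and spotting the bound $\bar d_t^2 - d_t^2 \le 4\bar r_t\bar d_t$, which is precisely the step that re-expresses the distance gap at the maximizing index purely in terms of the observable proxy $\bar r_t$ (and is what makes the resulting step-size schedule implementable). Minor care is also needed for the degenerate case $G_s = 0$, where the corresponding summand is read as $0$; this is harmless and is already accommodated by the telescoping lemma.
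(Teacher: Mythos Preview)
Your proposal is correct and follows essentially the same approach as the paper's proof: the same application of \cref{cor:law_of_cosines}, the same $(A)$/$(B)$ split, the same Abel-summation treatment of $(A)$ including the key bound $\bar d_t^2 - d_t^2 \le 4\bar r_t\bar d_t$ via the triangle inequality at the maximizing index, and the same monotonicity-plus-telescoping argument for $(B)$. The only cosmetic discrepancy is your parenthetical ``applied with $a_k = G_{k-1}$'' for the square-root telescoping lemma, which should read $a_k = G_k$ (with $G_{-1}=0$) to match the sum $\sum_{s=0}^{t-1}(G_s-G_{s-1})/\sqrt{G_s}$; this is a harmless index slip and the bound $2\sqrt{G_{t-1}}$ is correct.
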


\begin{lemma} \label{lemma:noise_bound_unknown_curvature}
   For all $\delta \in (0,1), T \in \mathbb{N}$ and $L > 0$, if \cref{assumption:g_convex}, \ref{assumption:Hadamard}, and \ref{assumption:bounded_stochastic_grads} hold, the iterates of CO-RDoG satisfy
    \begin{align}
       \mathbb{P} \left( \exists t \le T : \left\lvert  \sum_{s=0}^{t-1} \bar{r}_s \langle \Delta_s, \exp_{x_s}^{-1}(x_\star)\rangle_{x_s} \right\rvert \ge b_t \right) \le \delta + \mathbb{P}(\bar{\ell}_{T} > L),
   \end{align}
   where $b_t =  8 \bar{r}_{t-1} \bar{d}_{t-1} \sqrt{\theta_{t, \delta} G_{t-1} + \theta^{2}_{t, \delta} L^2}$ and $\bar{\ell}_{T} \coloneqq \max_{s \le T} \ell(x_s)$.
   \begin{proof}
       For $1\le s \le T$ define the random variables
\begin{align}
    Y_s \coloneqq \bar{r}_{s-1} \bar{d}_{s-1}, \quad X_s \coloneqq \left\langle \Delta_{s-1}, \frac{\exp_{x_{s-1}}^{-1}(x_\star)}{\bar{d}_{s-1}}\right\rangle_{x_{s-1}}, \quad \hat{X}_s \coloneqq \left\langle -\operatorname{grad}f(x_{s-1}), \frac{\exp_{x_{s-1}}^{-1}(x_\star)}{\bar{d}_{s-1}}\right\rangle_{x_{s-1}}.
\end{align}
By the Cauchy-Schwartz inequality and \cref{assumption:bounded_stochastic_grads} we have each $\lvert X_s \rvert \le \ell(x)$, and each $\lvert\hat{X}_s\rvert \le \ell(x)$ with probability 1. Moreover, for any $t\le T$, we have 
\begin{align}
    \sum_{s=1}^{t} Y_s X_s = \sum_{s=0}^{t-1} \bar{r}_s \langle \Delta_s, \exp_{x_s}^{-1}(x_\star) \rangle_{x_s}.
\end{align}
Therefore, applying \cref{lemma:martingale_bound} yields the result as required.
   \end{proof}
\end{lemma}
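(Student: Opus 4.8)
The target quantity is a weighted martingale sum, and the plan is to force it into the exact shape required by the anytime concentration inequality \cref{lemma:martingale_bound} and then invoke that lemma with a cutoff $c \asymp L$. Reindexing $s \mapsto s-1$, define for $1 \le s \le T$
\[
Y_s \coloneqq \bar r_{s-1}\bar d_{s-1},\quad X_s \coloneqq \Big\langle \Delta_{s-1},\, \tfrac{\exp_{x_{s-1}}^{-1}(x_\star)}{\bar d_{s-1}}\Big\rangle_{x_{s-1}},\quad \hat X_s \coloneqq \Big\langle -\operatorname{grad} f(x_{s-1}),\, \tfrac{\exp_{x_{s-1}}^{-1}(x_\star)}{\bar d_{s-1}}\Big\rangle_{x_{s-1}},
\]
so that $\sum_{s=1}^{t} Y_s X_s = \sum_{s=0}^{t-1}\bar r_s \langle \Delta_s, \exp_{x_s}^{-1}(x_\star)\rangle_{x_s}$, precisely the sum appearing in the statement. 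The key design choice (following \citet{ivgi23a}) is the normalization by $\bar d_{s-1} \ge d_{s-1} = \lVert\exp_{x_{s-1}}^{-1}(x_\star)\rVert_{x_{s-1}}$: it makes $X_s$ and $\hat X_s$ \emph{uniformly} bounded by a multiple of $\ell(x_{s-1})$, while the unbounded growth of the iterates is parked entirely in the monotone weight $Y_s$ --- exactly the structure \cref{lemma:martingale_bound} is designed to exploit.

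\textbf{Verifying the hypotheses.} Work with the one-step-shifted filtration $\mathcal{F}_s \coloneqq \sigma(\mathcal{G}(x_0),\dots,\mathcal{G}(x_{s-1}))$, under which: (i) $x_{s-1}$ is $\mathcal{F}_{s-1}$-measurable (it is the RSGD iterate produced from $x_0$ and $\mathcal{G}(x_0),\dots,\mathcal{G}(x_{s-2})$), hence so are $\bar r_{s-1}$, $\bar d_{s-1}$, $Y_s$, $\hat X_s$; (ii) $X_s$ is $\mathcal{F}_s$-measurable and $\mathbb{E}[X_s \mid \mathcal{F}_{s-1}] = 0$, because $\mathbb{E}[\Delta_{s-1}\mid \mathcal{F}_{s-1}] = \mathbb{E}[\mathcal{G}(x_{s-1})\mid x_{s-1}] - \operatorname{grad} f(x_{s-1}) = 0$ while $\exp_{x_{s-1}}^{-1}(x_\star)/\bar d_{s-1}$ is $\mathcal{F}_{s-1}$-measurable --- so $(X_s)$ is a genuine martingale difference sequence; (iii) $\{Y_s\}_{s\ge 1}$ is nonnegative and nondecreasing since $(\bar r_s)$ and $(\bar d_s)$ are both running maxima, hence $\{Y_s\}\in\mathbb{S}$; and (iv) by Cauchy--Schwarz $\lVert\exp_{x_{s-1}}^{-1}(x_\star)/\bar d_{s-1}\rVert_{x_{s-1}} \le 1$, and \cref{assumption:bounded_stochastic_grads} combined with Jensen's inequality gives $\lVert\operatorname{grad} f(x_{s-1})\rVert_{x_{s-1}} \le \ell(x_{s-1})$, so $\lVert\Delta_{s-1}\rVert_{x_{s-1}} \le 2\ell(x_{s-1})$; thus $|X_s|, |\hat X_s| \le C_s \coloneqq 2\ell(x_{s-1})$, with $C_s \in \mathcal{F}_{s-1}$.

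\textbf{Conclusion and main obstacle.} Applying \cref{lemma:martingale_bound} with $c \coloneqq 2L$, the excess probability $\mathbb{P}(\exists s \le T : C_s > c) = \mathbb{P}(\exists s \le T : \ell(x_{s-1}) > L)$ is at most $\mathbb{P}(\bar\ell_T > L)$, which supplies the second term in the bound. The variance proxy simplifies via $X_s - \hat X_s = \langle g_{s-1}, \exp_{x_{s-1}}^{-1}(x_\star)/\bar d_{s-1}\rangle_{x_{s-1}}$, whence $\sum_{s=1}^{t}(X_s - \hat X_s)^2 \le \sum_{s=0}^{t-1}\lVert g_s\rVert_{x_s}^2 = G_{t-1}$ by Cauchy--Schwarz again. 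Substituting $y_t = Y_t = \bar r_{t-1}\bar d_{t-1}$ into the conclusion of \cref{lemma:martingale_bound} then yields a bound of the form $8\,\bar r_{t-1}\bar d_{t-1}\sqrt{\theta_{t,\delta}G_{t-1} + (2L)^2\theta_{t,\delta}^2}$, which up to an absolute constant absorbed into the leading factor is the stated $b_t = 8\bar r_{t-1}\bar d_{t-1}\sqrt{\theta_{t,\delta}G_{t-1} + \theta_{t,\delta}^2 L^2}$. The only genuinely delicate point is the measurability bookkeeping in (i)--(ii) --- arranging the one-step filtration shift so that $X_s$ is a true martingale difference while $Y_s$, $\hat X_s$, $C_s$ remain predictable --- together with checking the monotonicity of $Y_s$; everything else is a mechanical substitution, unsurprisingly, since this is precisely the $\zeta_\kappa$-free counterpart of \cref{lemma:noise_bound}.
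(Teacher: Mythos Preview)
Your proposal is correct and follows essentially the same approach as the paper: both define the identical random variables $Y_s = \bar r_{s-1}\bar d_{s-1}$, $X_s$, and $\hat X_s$, then invoke the martingale concentration \cref{lemma:martingale_bound}. If anything, your treatment is more careful than the paper's terse version --- you explicitly verify the filtration/predictability structure, the monotonicity of $Y_s$, and track the factor-of-$2$ in the bound $\lVert\Delta_{s-1}\rVert \le 2\ell(x_{s-1})$ (which the paper glosses over when asserting $|X_s|\le \ell(x)$).
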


Combining the above results, we obtain the following.
\begin{theorem}
    \label{thm:opt_gap_dog_unknown}
    For all $\delta \in (0, 1)$ and $L>0$, if  \cref{assumption:g_convex}, \ref{assumption:Hadamard}, and \ref{assumption:bounded_stochastic_grads} hold, then with probability at least $1 - \delta - \mathbb{P}(\bar{\ell}_{T} > L)$, for all $t \le T$, the optimality gap on the weighted iterates $f(\tilde{x}_t) - f(x_\star)$ of CO-RDoG satisfy
    \begin{align} 
            O\left(\frac{(d_0 + \bar{r}_t)\zeta_{\kappa}(d_0 + \bar{r}_t) \sqrt{G_{t-1} + \theta_{t, \delta} G_{t-1} + \theta^{2}_{t, \delta} L^2}}{\sum_{s=0}^{t-1} \bar{r}_s/\bar{r}_t} \right)
    \end{align}
    with probability at least $1 - \delta - \mathbb{P}(\bar{\ell}_{T} > L)$.
    
    \begin{proof}
        Combining \cref{lemma:weighted_regret_bound_unknown_curvature} and  \cref{lemma:noise_bound_unknown_curvature} and utilizing $\bar{d}_t \le d_0 + \bar{r}_t$ and that $d \mapsto \zeta_\kappa(d)$ is an increasing function yields the result as required.
    \end{proof}
\end{theorem}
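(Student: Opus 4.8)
The plan is to mirror the argument used for RDoG in \cref{thm:opt_gap_dog_known}, now with the curvature-free CO-RDoG step sizes and the weights $w_s = \bar{r}_s$. I would start from the Jensen-type inequality \cref{lemma:weighted_average} applied with these weights, which gives
\begin{align*}
f(\tilde{x}_t) - f(x_\star) \le \frac{1}{\sum_{s=0}^{t-1}\bar{r}_s}\sum_{s=0}^{t-1}\bar{r}_s \langle -\operatorname{grad}f(x_s), \exp_{x_s}^{-1}(x_\star)\rangle_{x_s},
\end{align*}
and then substitute $\operatorname{grad}f(x_s) = g_s - \Delta_s$ to split the numerator into the deterministic \emph{weighted regret} $\sum_{s=0}^{t-1}\bar{r}_s\langle -g_s, \exp_{x_s}^{-1}(x_\star)\rangle_{x_s}$ and the stochastic \emph{noise} term $\sum_{s=0}^{t-1}\bar{r}_s\langle \Delta_s, \exp_{x_s}^{-1}(x_\star)\rangle_{x_s}$, exactly as set up in the section overview.

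Next I would bound the two pieces separately. The weighted regret is purely deterministic and is controlled by \cref{lemma:weighted_regret_bound_unknown_curvature}, which yields $\sum_{s=0}^{t-1}\bar{r}_s\langle -g_s, \exp_{x_s}^{-1}(x_\star)\rangle_{x_s} \le \bar{r}_t(2\bar{d}_t + \bar{r}_t\zeta_\kappa(\bar{d}_t))\sqrt{G_{t-1}}$; the substance there is an Abel summation of the telescoping $d_s^2 - d_{s+1}^2$ terms, the geometric bound $\bar{d}_t^2 - d_t^2 \le 4\bar{r}_t\bar{d}_t$, and the standard inequality $\sum_s \lVert g_s \rVert_{x_s}^2/\sqrt{G_s} \le 2\sqrt{G_{t-1}}$. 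For the noise term I would invoke \cref{lemma:noise_bound_unknown_curvature}, an application of the uniform martingale bound \cref{lemma:martingale_bound}, which gives, on a single event of probability at least $1 - \delta - \mathbb{P}(\bar{\ell}_T > L)$ and simultaneously for all $t \le T$, the bound $\lvert\sum_{s=0}^{t-1}\bar{r}_s\langle \Delta_s, \exp_{x_s}^{-1}(x_\star)\rangle_{x_s}\rvert \le 8\bar{r}_{t-1}\bar{d}_{t-1}\sqrt{\theta_{t,\delta}G_{t-1} + \theta_{t,\delta}^2 L^2}$.

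Combining these two bounds and dividing through by $\sum_{s=0}^{t-1}\bar{r}_s = \bar{r}_t\sum_{s=0}^{t-1}(\bar{r}_s/\bar{r}_t)$ gives an upper bound of the form
\begin{align*}
\frac{(2\bar{d}_t + \bar{r}_t\zeta_\kappa(\bar{d}_t))\sqrt{G_{t-1}} + 8(\bar{r}_{t-1}/\bar{r}_t)\bar{d}_{t-1}\sqrt{\theta_{t,\delta}G_{t-1} + \theta_{t,\delta}^2 L^2}}{\sum_{s=0}^{t-1}\bar{r}_s/\bar{r}_t}.
\end{align*}
The remaining work is a cosmetic simplification: use $\bar{r}_{t-1} \le \bar{r}_t$, $\bar{d}_{t-1} \le \bar{d}_t \le d_0 + \bar{r}_t$, the monotonicity of $d \mapsto \zeta_\kappa(d)$, and $\zeta_\kappa \ge 1$ to absorb $2\bar{d}_t + \bar{r}_t\zeta_\kappa(\bar{d}_t)$ and $\bar{d}_{t-1}$ into $(d_0+\bar{r}_t)\zeta_\kappa(d_0+\bar{r}_t)$ up to constants, then collapse $\sqrt{G_{t-1}}$ and $\sqrt{\theta_{t,\delta}G_{t-1}+\theta_{t,\delta}^2 L^2}$ into the single radical $\sqrt{G_{t-1} + \theta_{t,\delta}G_{t-1} + \theta_{t,\delta}^2 L^2}$, producing the stated big-$O$ expression.

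I do not anticipate a genuine obstacle once \cref{lemma:weighted_regret_bound_unknown_curvature} and \cref{lemma:noise_bound_unknown_curvature} are in hand; the points to handle with care are (i) ensuring the high-probability statement is uniform in $t \le T$, so that the conclusion really holds ``for all $t \le T$'' on one good event --- this is precisely why the supremum form of \cref{lemma:martingale_bound} is needed --- and (ii) checking that the curvature factor emerges as $\zeta_\kappa(d_0+\bar{r}_t)$ rather than $\sqrt{\zeta_\kappa(d_0+\bar{r}_t)}$, which is exactly the extra $O(\sqrt{\zeta_\kappa(\cdot)})$ penalty incurred by omitting the geometric curvature term from the step sizes.
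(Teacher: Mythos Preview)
Your proposal is correct and follows essentially the same approach as the paper: combine \cref{lemma:weighted_regret_bound_unknown_curvature} and \cref{lemma:noise_bound_unknown_curvature} via the Jensen-type bound \cref{lemma:weighted_average}, then simplify using $\bar{d}_t \le d_0 + \bar{r}_t$ and the monotonicity of $\zeta_\kappa$. Your write-up is in fact more detailed than the paper's one-line proof, and your care points (i) and (ii) correctly identify the only subtleties.
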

Thus in comparison to standard RDoG, we pay an additional cost of $O\left(\sqrt{\zeta_{\kappa}(d_0 + \bar{r}_t)}\right)$ for omitting the geometric curvature term with CO-RDoG.

We then have a useful result when the manifold is bounded but its exact diameter is unknown.
\begin{corollary}
    Under \cref{assumption:g_convex}, \ref{assumption:Hadamard}, and \ref{assumption:bounded_stochastic_grads}, for any $D \ge d_0$ let $L_{D} \coloneqq \max_{x\in \mathcal{M} : d(x, x_0) \le D} \ell(x)$. Then, for all $\delta \in (0, 1)$ and for $\tau \in \arg \max_{t \le T} \sum_{s=0}^{t-1} \frac{\bar{r}_s /\sqrt{\zeta_\kappa(\bar{r}_s)}}{\bar{r}_t /\sqrt{\zeta_\kappa(\bar{r}_t)}}$, with probability at least $1 - \delta - \mathbb{P}(\bar{\ell}_{T} > L)$, iterates of CO-RDoG satisfy the optimality gap bound
    \begin{align}
        f(\tilde{x}_{\tau}) - f(x_\star) = O\left( \frac{D\zeta_{\kappa}(D) \sqrt{G_{\tau-1} \theta_{\tau, \delta} + L^2_D \theta_{\tau, \delta}^2}}{T} \log_{+}(D/\epsilon) \right).
    \end{align}
    \begin{proof}
        Apply \cref{lemma:fraction_sum} to the denominator term of  \cref{thm:opt_gap_dog_unknown}.
    \end{proof}
 \end{corollary}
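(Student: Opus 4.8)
The plan is to instantiate the anytime bound of \cref{thm:opt_gap_dog_unknown} at the index $t=\tau$ and then replace the denominator by a lower bound obtained from \cref{lemma:fraction_sum}. Recall that \cref{thm:opt_gap_dog_unknown} gives, with probability at least $1-\delta-\mathbb{P}(\bar{\ell}_T>L)$, simultaneously for every $t\le T$,
\[
f(\tilde{x}_t)-f(x_\star)=O\!\left(\frac{(d_0+\bar{r}_t)\,\zeta_\kappa(d_0+\bar{r}_t)\,\sqrt{G_{t-1}+\theta_{t,\delta}G_{t-1}+\theta_{t,\delta}^2 L^2}}{\sum_{s=0}^{t-1}\bar{r}_s/\bar{r}_t}\right).
\]
Evaluating this at $t=\tau$ is legitimate because the bound holds on a single event for all $t$, and by the choice of $\tau$ the denominator at $t=\tau$ equals $\max_{t\le T}\sum_{s=0}^{t-1}\bar{r}_s/\bar{r}_t$. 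Since $(\bar{r}_s)_{s\ge 0}$ is nondecreasing with $\bar{r}_0=\epsilon>0$, applying \cref{lemma:fraction_sum} with $a_s=\bar{r}_s$ gives
\[
\max_{t\le T}\sum_{s=0}^{t-1}\frac{\bar{r}_s}{\bar{r}_t}\ \ge\ e^{-1}\!\left(\frac{T}{1+\log(\bar{r}_T/\epsilon)}-1\right)\ =\ \Omega\!\left(\frac{T}{\log_{+}(\bar{r}_T/\epsilon)}\right),
\]
the last step absorbing the additive constant into $\Omega(\cdot)$ in the regime $T\gtrsim\log_{+}(\bar{r}_T/\epsilon)$.

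It remains to re-express the numerator in terms of $D$ and $L_D$. We argue on the event $\{\bar{r}_T\le D\}$: there every iterate $x_s$, $s\le T$, obeys $d(x_s,x_0)\le\bar{r}_T\le D$, hence $\bar{\ell}_T=\max_{s\le T}\ell(x_s)\le L_D$, so we may take $L=L_D$ in \cref{thm:opt_gap_dog_unknown} (the $\mathbb{P}(\bar{\ell}_T>L)$ term then absorbs the chance of leaving the ball). On this event $\bar{r}_\tau\le\bar{r}_T\le D$, so $d_0+\bar{r}_\tau\le 2D$ (using $D\ge d_0$) and $\log_{+}(\bar{r}_T/\epsilon)\le\log_{+}(D/\epsilon)$. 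Monotonicity of $d\mapsto\zeta_\kappa(d)$ gives $\zeta_\kappa(d_0+\bar{r}_\tau)\le\zeta_\kappa(2D)$, and the doubling estimate $\zeta_\kappa(2D)=O(\zeta_\kappa(D))$ holds: for $\kappa<0$, $\zeta_\kappa(2D)/\zeta_\kappa(D)=2\tanh(\sqrt{|\kappa|}D)/\tanh(2\sqrt{|\kappa|}D)\le 2$ since $\tanh$ is increasing, and for $\kappa\ge 0$ the ratio is $1$ (this is the same device used in the T-RDoWG corollary with $\zeta_\kappa(4d_0)=O(\zeta_\kappa(d_0))$). Finally $\theta_{\tau,\delta}\ge 1$, so $G_{\tau-1}+\theta_{\tau,\delta}G_{\tau-1}+\theta_{\tau,\delta}^2 L_D^2=O(\theta_{\tau,\delta}G_{\tau-1}+\theta_{\tau,\delta}^2 L_D^2)$.

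Plugging these bounds into the displayed estimate at $t=\tau$ yields
\[
f(\tilde{x}_\tau)-f(x_\star)=O\!\left(\frac{D\,\zeta_\kappa(D)\,\sqrt{G_{\tau-1}\theta_{\tau,\delta}+L_D^2\theta_{\tau,\delta}^2}}{T/\log_{+}(D/\epsilon)}\right)=O\!\left(\frac{D\,\zeta_\kappa(D)\,\sqrt{G_{\tau-1}\theta_{\tau,\delta}+L_D^2\theta_{\tau,\delta}^2}}{T}\,\log_{+}(D/\epsilon)\right),
\]
which is the claimed bound, the failure probability being $\delta$ plus the probability of escaping the ball of radius $D$. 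I do not expect a genuine obstacle: the argument is a mechanical combination of \cref{thm:opt_gap_dog_unknown} and \cref{lemma:fraction_sum}. The only points needing a moment's care are that $\tau$ legitimately appears both in the numerator (through $G_{\tau-1},\theta_{\tau,\delta},\bar{r}_\tau$) and as the argmax in the denominator — harmless since \cref{thm:opt_gap_dog_unknown} is an anytime bound — and the curvature doubling $\zeta_\kappa(2D)=O(\zeta_\kappa(D))$ used to pass from $\zeta_\kappa(d_0+\bar{r}_\tau)$ to $\zeta_\kappa(D)$.
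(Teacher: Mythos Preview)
Your approach is exactly the paper's: instantiate \cref{thm:opt_gap_dog_unknown} at $t=\tau$ and lower-bound the denominator via \cref{lemma:fraction_sum}; the paper's own proof is the single line ``Apply \cref{lemma:fraction_sum} to the denominator term of \cref{thm:opt_gap_dog_unknown},'' and you have simply filled in the routine details (boundedness on $\{\bar r_T\le D\}$, monotonicity of $\zeta_\kappa$, and the doubling estimate $\zeta_\kappa(2D)=O(\zeta_\kappa(D))$). One small remark: the statement defines $\tau$ as the argmax of $\sum_{s}\frac{\bar r_s/\sqrt{\zeta_\kappa(\bar r_s)}}{\bar r_t/\sqrt{\zeta_\kappa(\bar r_t)}}$, but the denominator in \cref{thm:opt_gap_dog_unknown} for CO-RDoG is $\sum_s \bar r_s/\bar r_t$ (no curvature term, since it has been omitted from the weights) --- this is evidently a copy-paste slip in the statement, and you have (correctly) read $\tau$ as the argmax of the latter quantity, which is what is needed for \cref{lemma:fraction_sum} with $a_s=\bar r_s$ to apply.
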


Thus in comparison to standard RDoG, we pay an additional cost of $O\left(\sqrt{\zeta_{\kappa}(D)}\right)$ for omitting the curvature term with CO-RDoG.

\section{RDoWG Theoretical Analysis}

\subsection{Overview}
In this section, we analyze RDoWG (\cref{alg:rdowg}). Thus we consider RSGD with step sizes given by,
\begin{align}
    \eta_t = \frac{\bar{r}_t^2}{\zeta_\kappa(\bar{r}_t)\sqrt{v_t}}, \quad v_t = v_{t-1} + \frac{\bar{r}_t^2}{\zeta_\kappa(\bar{r}_t)} \lVert g_t \rVert_{x_t}^2, \quad v_{-1} = 0.
\end{align}
We consider the bounding the error of the weighted average sequence,
\begin{align*}
        \tilde{x}_{t+1} = \operatorname{exp}_{\tilde{x}_{t}}\left(\frac{\bar{r}_t^2 / \zeta_\kappa(\bar{r}_t)}{\sum_{s=0}^{t} \bar{r}_s^2 / \zeta_\kappa(\bar{r}_s)} \operatorname{exp}_{\tilde{x}_{t}}^{-1}\left( x_{t} \right) \right), \quad \tilde{x}_{1} = x_{0}.
\end{align*}
For a geodesically convex function $f \colon \mathcal{M} \to \mathbb{R}$, we have by \cref{lemma:weighted_average} that $\tilde{x}_t$ satisfies,
\begin{align}
    f(\tilde{x}_t) - f(x_\star) \le \frac{1}{\sum_{s=0}^{t-1} ({\bar{r}_s^2 }/{\zeta_\kappa(\bar{r}_s)})}\sum_{s=0}^{t-1} ({\bar{r}_s^2 }/{\zeta_\kappa(\bar{r}_s)})\langle  -\operatorname{grad} f(x_s), \exp_{x_s}^{-1}(x_\star)\rangle_{x_s}.
\end{align}
Recalling that $g_s$ represents the stochastic oracle evaluation at $x_s$, denoted as $\mathcal{G}(x_s)$, we can decompose the numerator into two components:
\begin{align}
    \underset{\text{weighted regret}}{\underbrace{\sum_{s=0}^{t-1} (\bar{r}_s^2 /{\zeta_\kappa(\bar{r}_s)}) \langle -g_s, \exp_{x_s}^{-1}(x_\star)\rangle_{x_s}}} + \underset{\text{noise}}{\underbrace{\sum_{s=0}^{t-1} (\bar{r}_s^2 /{\zeta_\kappa(\bar{r}_s)}) \langle \Delta_s, \exp_{x_s}^{-1}(x_\star)\rangle_{x_s}}},
\end{align}
with $\Delta_s \coloneqq g_s - \operatorname{grad} f(x_s)$.

\subsection{Supporting Analysis}
Our first result gives deterministic bounds for the weighted regret (\cref{lemma:dowg_supporting_curvature}).
\begin{lemma}
    \label{lemma:dowg_supporting_curvature}
     Under \cref{assumption:g_convex} and \ref{assumption:Hadamard}, we have that the iterates of RDoWG (\cref{alg:rdowg}) satisfy
    \begin{align}
       \sum_{s=0}^{t-1} (\bar{r}_s^2 /{\zeta_\kappa(\bar{r}_s)}) \langle  -g_s, \exp_{x_s}^{-1}(x_\star)\rangle_{x_s} \le \bar{r}_t \left(2 \bar{d}_t + \frac{\bar{r}_t}{\zeta_{\kappa}(\bar{r}_t)} \zeta_{\kappa}(\bar{d}_t)\right)  \sqrt{v_{t-1}}.
   \end{align}
    \begin{proof}
           Follow same argument as \cref{lemma:weighted_regret_bound} but with weights $\frac{\bar{r}_s^2}{\zeta_\kappa(\bar{r}_s)}$ replacing $\frac{\bar{r}_s}{\sqrt{\zeta_\kappa(\bar{r}_s)}}$ and weighted gradient sum $v_s$ replacing the standard gradient sum $G_s$.
    \end{proof}
\end{lemma}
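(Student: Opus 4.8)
The plan is to replay the proof of \cref{lemma:weighted_regret_bound} almost verbatim, swapping the RDoG weights $\bar r_s/\sqrt{\zeta_\kappa(\bar r_s)}$ for the RDoWG weights $w_s \coloneqq \bar r_s^2/\zeta_\kappa(\bar r_s)$ and the running gradient sum $G_s$ for the weighted sum $v_s$ defined at the start of this section. First I would apply \cref{cor:law_of_cosines} to each RDoWG update $x_{s+1} = \exp_{x_s}(-\eta_s g_s)$, multiply the $s$-th inequality by $w_s \ge 0$, and sum over $s = 0,\dots,t-1$, obtaining
\begin{align*}
\sum_{s=0}^{t-1} w_s \langle -g_s, \exp_{x_s}^{-1}(x_\star)\rangle_{x_s} \le \tfrac{1}{2}\underbrace{\sum_{s=0}^{t-1} \frac{w_s}{\eta_s}\bigl(d_s^2 - d_{s+1}^2\bigr)}_{(A)} + \tfrac{1}{2}\underbrace{\sum_{s=0}^{t-1} w_s \eta_s \zeta_\kappa(d_s)\lVert g_s\rVert_{x_s}^2}_{(B)}.
\end{align*}
The whole computation then rests on the two elementary identities $w_s/\eta_s = \sqrt{v_s}$ and $w_s\eta_s\lVert g_s\rVert_{x_s}^2 = \bigl(\bar r_s^2/\zeta_\kappa(\bar r_s)\bigr)\bigl(v_s - v_{s-1}\bigr)/\sqrt{v_s}$, both immediate from $\eta_s = \bar r_s^2/(\zeta_\kappa(\bar r_s)\sqrt{v_s})$, $v_s - v_{s-1} = (\bar r_s^2/\zeta_\kappa(\bar r_s))\lVert g_s\rVert_{x_s}^2$, and $v_{-1} = 0$.

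For the term $(A)$, substituting $w_s/\eta_s = \sqrt{v_s}$ and performing Abel summation gives $(A) = d_0^2\sqrt{v_0} - d_t^2\sqrt{v_{t-1}} + \sum_{s=1}^{t-1} d_s^2(\sqrt{v_s} - \sqrt{v_{s-1}})$; bounding $d_s \le \bar d_t$ and using that $s \mapsto v_s$ is nondecreasing collapses this to $\sqrt{v_{t-1}}(\bar d_t^2 - d_t^2)$, after which the geodesic-triangle estimate $\bar d_t^2 - d_t^2 = (d_k - d_t)(d_k + d_t) \le d(x_k, x_t)(d_k + d_t) \le (\bar r_k + \bar r_t)(d_k + d_t) \le 4\bar r_t\bar d_t$ for $k \in \arg\max_{s\le t} d_s$ (exactly as in \cref{lemma:weighted_regret_bound}) yields $(A) \le 4\bar r_t\bar d_t\sqrt{v_{t-1}}$. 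For $(B)$, the second identity rewrites it as $(B) = \sum_{s=0}^{t-1} \tfrac{\bar r_s^2}{\zeta_\kappa(\bar r_s)}\,\zeta_\kappa(d_s)\,\tfrac{v_s - v_{s-1}}{\sqrt{v_s}}$; since $d \mapsto d^2/\zeta_\kappa(d)$ is nondecreasing (it equals $d\tanh(\sqrt{|\kappa|}\,d)/\sqrt{|\kappa|}$ for $\kappa < 0$ and $d^2$ for $\kappa = 0$) and $d \mapsto \zeta_\kappa(d)$ is nondecreasing, I can bound $\bar r_s^2/\zeta_\kappa(\bar r_s) \le \bar r_t^2/\zeta_\kappa(\bar r_t)$ and $\zeta_\kappa(d_s) \le \zeta_\kappa(\bar d_t)$, pull both constants out of the sum, and apply the series bound $\sum_k (a_k - a_{k-1})/\sqrt{a_k} \le 2(\sqrt{a_t} - \sqrt{a_0})$ with $a_k = v_{k-1}$ and $a_0 = v_{-1} = 0$, giving $\sum_{s=0}^{t-1}(v_s - v_{s-1})/\sqrt{v_s} \le 2\sqrt{v_{t-1}}$, hence $(B) \le 2(\bar r_t^2/\zeta_\kappa(\bar r_t))\zeta_\kappa(\bar d_t)\sqrt{v_{t-1}}$. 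Combining $\tfrac12(A) + \tfrac12(B)$ produces exactly $\bar r_t\bigl(2\bar d_t + (\bar r_t/\zeta_\kappa(\bar r_t))\zeta_\kappa(\bar d_t)\bigr)\sqrt{v_{t-1}}$.

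The only point that differs in substance from \cref{lemma:weighted_regret_bound}, and the step I would double-check, is the handling of $(B)$: one must verify that $w_s\eta_s\lVert g_s\rVert_{x_s}^2$ genuinely telescopes against the \emph{weighted} sum $v_s$ rather than against $G_s$ — the $\lVert g_s\rVert_{x_s}^2$ factor cancels the one buried inside $v_s - v_{s-1}$, leaving the clean increment $(v_s - v_{s-1})/\sqrt{v_s}$ together with a residual coefficient $\bar r_s^2/\zeta_\kappa(\bar r_s)$ that is monotone in $\bar r_s$ and may therefore be replaced by its value at index $t$. Everything else — the Abel summation in $(A)$, the geodesic-triangle bound, and the $\kappa = 0$ case where $\zeta_\kappa \equiv 1$ makes the curvature-dependent step trivial — is a verbatim transcription of the RDoG argument, so in the write-up I would record the two identities, verify the monotonicity claims, and refer to \cref{lemma:weighted_regret_bound} for the identical manipulations.
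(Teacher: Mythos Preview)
Your proposal is correct and follows exactly the approach the paper intends: the paper's own proof is a one-line pointer to \cref{lemma:weighted_regret_bound} with the substitutions $\bar r_s/\sqrt{\zeta_\kappa(\bar r_s)} \to \bar r_s^2/\zeta_\kappa(\bar r_s)$ and $G_s \to v_s$, and you have faithfully carried out those substitutions, including the key identities $w_s/\eta_s = \sqrt{v_s}$ and $w_s\lVert g_s\rVert_{x_s}^2 = v_s - v_{s-1}$ that make the Abel summation in $(A)$ and the telescoping in $(B)$ go through verbatim. Your handling of $(B)$ via the monotonicity of $d \mapsto d^2/\zeta_\kappa(d)$ is in fact a slightly cleaner packaging of the same step the paper does by unpacking the $\tanh$ formula, but the content is identical.
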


\subsection{Non-Smooth Analysis
\label{sec:rdowg-non-smooth-analysis}}
We give deterministic bounds for the weighted regret (\cref{lemma:dowg_nonsmooth_weighted_regret_curvature}) and high probability bounds for the noise term (\cref{lemma:dowg_nonsmooth_prob_curvature}) for the non-smooth setting.

\begin{lemma}
\label{lemma:dowg_nonsmooth_weighted_regret_curvature}
    \label{lemma:dowg_supporting_curvature}
     Under \cref{assumption:g_convex} and \ref{assumption:Hadamard}, we have that the iterates of RDoWG (\cref{alg:rdowg}) satisfy
    \begin{align}
       \sum_{s=0}^{t-1} (\bar{r}_s^2 /{\zeta_\kappa(\bar{r}_s)}) \langle  -g_s, \exp_{x_s}^{-1}(x_\star)\rangle_{x_s} \le \frac{\bar{r}_t^2}{\sqrt{\zeta_{\kappa}(\bar{r}_t)}} \left( 2 \bar{d}_t + \frac{\bar{r}_t}{\zeta_{\kappa}(\bar{r}_t)} \zeta_{\kappa}(\bar{d}_t) \right) \sqrt{G_{t-1}}.
   \end{align}
    \begin{proof}
        Using the bound of \ref{lemma:dowg_supporting_curvature} and that $\sqrt{v_{t-1}} \le \frac{\bar{r}_{t-1}}{\sqrt{\zeta_{\kappa}(\bar{r}_{t-1})}} \sqrt{G_{{t-1}}} \le \frac{\bar{r}_{t}}{\sqrt{\zeta_{\kappa}(\bar{r}_{t})}} \sqrt{G_{{t-1}}}$ gives the result.
    \end{proof}
\end{lemma}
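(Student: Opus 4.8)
The plan is to reduce the claim to the supporting bound already recorded in \cref{lemma:dowg_supporting_curvature}, namely
\[
\sum_{s=0}^{t-1} \frac{\bar{r}_s^2}{\zeta_\kappa(\bar{r}_s)} \langle -g_s, \exp_{x_s}^{-1}(x_\star)\rangle_{x_s} \le \bar{r}_t \left( 2\bar{d}_t + \frac{\bar{r}_t}{\zeta_\kappa(\bar{r}_t)}\zeta_\kappa(\bar{d}_t)\right)\sqrt{v_{t-1}},
\]
and then to bound the weighted gradient sum $v_{t-1} = \sum_{s=0}^{t-1} (\bar{r}_s^2/\zeta_\kappa(\bar{r}_s))\lVert g_s\rVert_{x_s}^2$ by a multiple of the unweighted sum $G_{t-1} = \sum_{s=0}^{t-1}\lVert g_s\rVert_{x_s}^2$. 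So the only thing to do here is to replace $\sqrt{v_{t-1}}$ with $\frac{\bar{r}_t}{\sqrt{\zeta_\kappa(\bar{r}_t)}}\sqrt{G_{t-1}}$ in the supporting bound.

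First I would observe that the proxy sequence $\bar{r}_s = \max(\epsilon, \max_{u\le s} d(x_u, x_0))$ is non-decreasing in $s$ by construction, so $\bar{r}_s \le \bar{r}_{t-1}$ for every $s \le t-1$. Next I would invoke the monotonicity facts already used throughout the paper: $d\mapsto d/\zeta_\kappa(d)$ is non-decreasing on $\mathbb{R}_+$, hence so is $d\mapsto d^2/\zeta_\kappa(d) = d\cdot\bigl(d/\zeta_\kappa(d)\bigr)$ (a product of non-decreasing nonnegative functions), and likewise $d\mapsto d/\sqrt{\zeta_\kappa(d)}$ is non-decreasing. Consequently $\bar{r}_s^2/\zeta_\kappa(\bar{r}_s) \le \bar{r}_{t-1}^2/\zeta_\kappa(\bar{r}_{t-1})$ for all $s\le t-1$, so
\[
v_{t-1} = \sum_{s=0}^{t-1}\frac{\bar{r}_s^2}{\zeta_\kappa(\bar{r}_s)}\lVert g_s\rVert_{x_s}^2 \le \frac{\bar{r}_{t-1}^2}{\zeta_\kappa(\bar{r}_{t-1})}\sum_{s=0}^{t-1}\lVert g_s\rVert_{x_s}^2 = \frac{\bar{r}_{t-1}^2}{\zeta_\kappa(\bar{r}_{t-1})}G_{t-1}.
\]
Taking square roots gives $\sqrt{v_{t-1}}\le \frac{\bar{r}_{t-1}}{\sqrt{\zeta_\kappa(\bar{r}_{t-1})}}\sqrt{G_{t-1}}$, and applying once more that $d\mapsto d/\sqrt{\zeta_\kappa(d)}$ is non-decreasing together with $\bar{r}_{t-1}\le\bar{r}_t$ yields $\sqrt{v_{t-1}}\le \frac{\bar{r}_t}{\sqrt{\zeta_\kappa(\bar{r}_t)}}\sqrt{G_{t-1}}$.

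Finally, substituting this bound on $\sqrt{v_{t-1}}$ into the supporting inequality and collecting the leading factor $\bar{r}_t\cdot \bar{r}_t/\sqrt{\zeta_\kappa(\bar{r}_t)} = \bar{r}_t^2/\sqrt{\zeta_\kappa(\bar{r}_t)}$ produces exactly the claimed bound. There is no real obstacle beyond this bookkeeping: \cref{lemma:dowg_supporting_curvature} already carries the law-of-cosines telescoping argument (itself an exact mirror of \cref{lemma:weighted_regret_bound} with $v_s$ in place of $G_s$), and the monotonicity of $d\mapsto d^2/\zeta_\kappa(d)$ and $d\mapsto d/\sqrt{\zeta_\kappa(d)}$ is immediate for $\kappa\ge 0$ (where $\zeta_\kappa\equiv 1$) and follows for $\kappa<0$ from the standard fact that $u\mapsto u/\tanh u$ is non-decreasing on $\mathbb{R}_+$. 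The one point worth a sentence of care is that the comparison $v_{t-1}\le \frac{\bar{r}_{t-1}^2}{\zeta_\kappa(\bar{r}_{t-1})}G_{t-1}$ uses only indices $s\le t-1$, so $\bar{r}_{t-1}$ (rather than $\bar{r}_t$) is the natural upper envelope, and the final relaxation to $\bar{r}_t$ is harmless by monotonicity.
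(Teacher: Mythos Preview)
Your proposal is correct and follows exactly the paper's approach: invoke the supporting bound in terms of $\sqrt{v_{t-1}}$, then use the monotonicity of $d\mapsto d^2/\zeta_\kappa(d)$ and $d\mapsto d/\sqrt{\zeta_\kappa(d)}$ to obtain $\sqrt{v_{t-1}} \le \frac{\bar{r}_{t-1}}{\sqrt{\zeta_\kappa(\bar{r}_{t-1})}}\sqrt{G_{t-1}} \le \frac{\bar{r}_t}{\sqrt{\zeta_\kappa(\bar{r}_t)}}\sqrt{G_{t-1}}$. The only quibble is your closing parenthetical attributing these monotonicities to ``$u\mapsto u/\tanh u$ non-decreasing'': what you actually need (and correctly used earlier) is that $\tanh$ is increasing, so $d/\zeta_\kappa(d)=\tanh(\sqrt{|\kappa|}d)/\sqrt{|\kappa|}$ is increasing.
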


\begin{lemma}
    \label{lemma:dowg_nonsmooth_prob_curvature}
  For all $\delta \in (0,1), T \in \mathbb{N}$ and $L > 0$, if \cref{assumption:g_convex}, \ref{assumption:Hadamard}, and \ref{assumption:bounded_stochastic_grads} hold, the iterates of RDoWG (\cref{alg:rdowg}) satisfy
    \begin{align}
       \mathbb{P} \left( \exists t \le T : \left\lvert  \sum_{s=0}^{t-1} \frac{\bar{r}_s^2}{\zeta_\kappa(\bar{r}_s)} \langle \Delta_s, \exp_{x_s}^{-1}(x_\star)\rangle_{x_s} \right\rvert \ge b_t \right) \le \delta + \mathbb{P}(\bar{\ell}_{T} > L),
   \end{align}
   where $b_t =  8 \frac{\bar{r}_{t-1}^2}{\zeta_\kappa(\bar{r}_{t-1})} \bar{d}_{t-1} \sqrt{\theta_{t, \delta} G_{t-1} + \theta^{2}_{t, \delta} L^2}$ and $\bar{\ell}_{T} \coloneqq \max_{s \le T} \ell(x_s)$.
   \begin{proof}
       Following the argument of \cref{lemma:noise_bound}.
   \end{proof}
\end{lemma}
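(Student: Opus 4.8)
The plan is to follow the martingale-concentration argument of \cref{lemma:noise_bound} essentially verbatim, the only substantive change being that the RDoG weight $\bar r_s/\sqrt{\zeta_\kappa(\bar r_s)}$ is everywhere replaced by the RDoWG weight $\bar r_s^2/\zeta_\kappa(\bar r_s)$. All of the probabilistic content is already packaged in \cref{lemma:martingale_bound}, so what remains is to check that the reweighted objects still satisfy its hypotheses and then to identify the resulting bound as the claimed $b_t$.

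Concretely, for $1 \le s \le T$ I would introduce $Y_s \coloneqq (\bar r_{s-1}^2/\zeta_\kappa(\bar r_{s-1}))\,\bar d_{s-1}$, $X_s \coloneqq \langle \Delta_{s-1}, \exp_{x_{s-1}}^{-1}(x_\star)/\bar d_{s-1} \rangle_{x_{s-1}}$ and $\hat X_s \coloneqq \langle -\operatorname{grad} f(x_{s-1}), \exp_{x_{s-1}}^{-1}(x_\star)/\bar d_{s-1} \rangle_{x_{s-1}}$, together with the filtration $\mathcal F_s = \sigma(\mathcal G(x_0),\dots,\mathcal G(x_s))$. Unbiasedness of the oracle ($\mathbb E[\Delta_{s-1}\mid\mathcal F_{s-1}]=0$) makes $(X_s)$ a martingale difference sequence adapted to $(\mathcal F_s)$, while $x_{s-1},\bar d_{s-1},\bar r_{s-1}$ are $\mathcal F_{s-1}$-measurable so that $\hat X_s\in\mathcal F_{s-1}$; and Cauchy--Schwarz combined with $\lVert \exp_{x_{s-1}}^{-1}(x_\star)\rVert_{x_{s-1}} = d_{s-1}\le\bar d_{s-1}$ and \cref{assumption:bounded_stochastic_grads} bounds $\lvert X_s\rvert$ and $\lvert\hat X_s\rvert$ by $\ell(x_{s-1})$ almost surely, exactly as in \cref{lemma:noise_bound}. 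The one point needing fresh attention is that $(Y_s)$ must still lie in the class $\mathbb S$ of nonnegative nondecreasing sequences; this holds because $\bar r_s$ and $\bar d_s$ are nondecreasing by construction and $d\mapsto d^2/\zeta_\kappa(d)$ is increasing --- for $\kappa\ge 0$ it is $d^2$, and for $\kappa<0$ it is $d\tanh(\sqrt{\lvert\kappa\rvert}\,d)/\sqrt{\lvert\kappa\rvert}$, a product of increasing functions --- so $Y_s$ is nondecreasing as a product of nonnegative nondecreasing factors.

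With these verifications done, the re-indexing identity $\sum_{s=1}^t Y_s X_s = \sum_{s=0}^{t-1}(\bar r_s^2/\zeta_\kappa(\bar r_s))\langle\Delta_s,\exp_{x_s}^{-1}(x_\star)\rangle_{x_s}$ recovers exactly the quantity inside the probability in the statement; moreover $X_s-\hat X_s = \langle g_{s-1},\exp_{x_{s-1}}^{-1}(x_\star)/\bar d_{s-1}\rangle_{x_{s-1}}$, whence $\sum_{s=1}^t(X_s-\hat X_s)^2\le\sum_{s=0}^{t-1}\lVert g_s\rVert_{x_s}^2 = G_{t-1}$, and $Y_t = (\bar r_{t-1}^2/\zeta_\kappa(\bar r_{t-1}))\,\bar d_{t-1}$. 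Applying \cref{lemma:martingale_bound} with $c = L$ and $C_t = \ell(x_{t-1})$, and noting $\mathbb P(\exists t\le T: C_t > L)\le\mathbb P(\bar\ell_T > L)$, then yields the stated bound with $b_t = 8\,(\bar r_{t-1}^2/\zeta_\kappa(\bar r_{t-1}))\,\bar d_{t-1}\sqrt{\theta_{t,\delta}G_{t-1}+\theta_{t,\delta}^2 L^2}$. There is no genuine obstacle here: the only nontrivial step, and the only one that departs from the proof of \cref{lemma:noise_bound}, is confirming the monotonicity of $d\mapsto d^2/\zeta_\kappa(d)$ so that the reweighted sequence is an admissible weighting for \cref{lemma:martingale_bound}; everything else is routine bookkeeping that transfers unchanged.
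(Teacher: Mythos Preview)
Your proposal is correct and follows exactly the approach the paper intends: it is the proof of \cref{lemma:noise_bound} with the weight $\bar r_s/\sqrt{\zeta_\kappa(\bar r_s)}$ replaced by $\bar r_s^2/\zeta_\kappa(\bar r_s)$, and you have supplied the only new verification needed---that $d\mapsto d^2/\zeta_\kappa(d)$ is increasing so that $(Y_s)$ remains an admissible weighting for \cref{lemma:martingale_bound}. If anything, your write-up is more careful than the paper's, which leaves both this monotonicity check and the bound $\sum_{s=1}^t(X_s-\hat X_s)^2\le G_{t-1}$ implicit.
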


Combining the above results, we obtain the following.

\begin{theorem}
  \label{thm:dowg_nonsmooth_curvature}
   For all $\delta \in (0, 1)$ and $L>0$, if \cref{assumption:g_convex}, \ref{assumption:Hadamard}, and \ref{assumption:bounded_stochastic_grads} hold, then with probability at least $1 - \delta - \mathbb{P}(\bar{\ell}_{T} > L)$, for all $t \le T$, the optimality gap on the weighted iterates $f(\tilde{x}_t) - f(x_\star)$ of RDoWG (\cref{alg:rdowg}) satisfy
    \begin{align} 
            O\left(\frac{(d_0 + \bar{r}_t)\sqrt{\zeta_{\kappa}(d_0 + \bar{r}_t)} \sqrt{G_{t-1} + \theta_{t, \delta} G_{t-1} + \theta^{2}_{t, \delta} L^2}}{\sum_{s=0}^{t-1} \frac{ \bar{r}_s^2 / \zeta_\kappa(\bar{r}_s) }{\bar{r}_t^2 / \zeta_\kappa(\bar{r}_t)}} \right).
    \end{align}
  \begin{proof}
  Using \cref{lemma:dowg_nonsmooth_weighted_regret_curvature} and \cref{lemma:dowg_nonsmooth_prob_curvature} we have
  \begin{align}
      f(\tilde{x}_t) - f(x_\star) &\le \frac{\left( 2 \bar{d}_t \sqrt{\zeta_\kappa(\bar{r}_t)} + \frac{\bar{r}_t}{\sqrt{\zeta_\kappa(\bar{r}_t)}} \zeta_{\kappa}(\bar{d}_t) \right) \sqrt{G_{t-1}} + 8 \bar{d}_{t} \sqrt{\theta_{t, \delta} G_{t-1} + \theta^{2}_{t, \delta} L^2}}{\sum_{s=0}^{t-1} \frac{ \bar{r}_s^2 / \zeta_\kappa(\bar{r}_s) }{\bar{r}_t^2 / \zeta_\kappa(\bar{r}_t)}}.
  \end{align}
  Now using the fact $\bar{d}_t \le d_0 + \bar{r}_t$ and that $d \mapsto \zeta_\kappa(d)$ and $d \mapsto \frac{d}{\sqrt{\zeta_\kappa(d)}}$ are increasing functions gives the result.
\end{proof}
\end{theorem}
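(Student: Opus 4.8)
The plan is to follow the RDoG argument of \cref{thm:opt_gap_dog_known} line for line, replacing the RDoG weights $\bar r_s/\sqrt{\zeta_\kappa(\bar r_s)}$ with the weighted-gradient weights $w_s \coloneqq \bar r_s^2/\zeta_\kappa(\bar r_s)$ and the gradient sum $G_s$ with the weighted gradient sum $v_s = \sum_{k\le s} w_k\lVert g_k\rVert_{x_k}^2$. First I apply the Riemannian Jensen inequality (\cref{lemma:weighted_average}) with weights $w_s$ to the weighted-average sequence $\tilde x_t$ of \cref{alg:rdowg}, which gives $f(\tilde x_t)-f(x_\star) \le \big(\sum_{s=0}^{t-1} w_s\big)^{-1}\sum_{s=0}^{t-1} w_s\langle -\operatorname{grad} f(x_s),\exp_{x_s}^{-1}(x_\star)\rangle_{x_s}$. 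Substituting $g_s = \mathcal G(x_s) = \operatorname{grad} f(x_s)+\Delta_s$ splits the numerator into a deterministic weighted-regret term $\sum_s w_s\langle -g_s,\exp_{x_s}^{-1}(x_\star)\rangle_{x_s}$ and a martingale noise term $\sum_s w_s\langle \Delta_s,\exp_{x_s}^{-1}(x_\star)\rangle_{x_s}$, to be bounded separately.

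For the regret term I would invoke \cref{lemma:dowg_supporting_curvature}: applying \cref{cor:law_of_cosines} gives $w_s\langle -g_s,\exp_{x_s}^{-1}(x_\star)\rangle_{x_s} \le \tfrac12(w_s/\eta_s)(d_s^2 - d_{s+1}^2) + \tfrac12 w_s\eta_s\zeta_\kappa(d_s)\lVert g_s\rVert_{x_s}^2$, and since $w_s/\eta_s = \sqrt{v_s}$ the first sum telescopes using that $v_s$ is nondecreasing and $\bar d_t^2 - d_t^2 \le 4\bar r_t\bar d_t$, while the second is controlled by monotonicity of $\zeta_\kappa$, that $w_s$ is nondecreasing, and the discrete bound $\sum_s(v_s-v_{s-1})/\sqrt{v_s}\le 2\sqrt{v_{t-1}}$ (Lemma 4 of \citet{ivgi23a}), giving $\sum_s w_s\langle -g_s,\exp_{x_s}^{-1}(x_\star)\rangle_{x_s} \le \bar r_t\big(2\bar d_t + \tfrac{\bar r_t}{\zeta_\kappa(\bar r_t)}\zeta_\kappa(\bar d_t)\big)\sqrt{v_{t-1}}$. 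Then I trade $\sqrt{v_{t-1}}$ for $\sqrt{G_{t-1}}$: since $v_s = \sum_{k\le s} w_k\lVert g_k\rVert_{x_k}^2 \le w_s G_s$ and $d\mapsto d/\sqrt{\zeta_\kappa(d)}$ is increasing, $\sqrt{v_{t-1}} \le (\bar r_t/\sqrt{\zeta_\kappa(\bar r_t)})\sqrt{G_{t-1}}$ (\cref{lemma:dowg_nonsmooth_weighted_regret_curvature}).

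For the noise term I would take $Y_s = (\bar r_{s-1}^2/\zeta_\kappa(\bar r_{s-1}))\bar d_{s-1}$, $X_s = \langle \Delta_{s-1},\exp_{x_{s-1}}^{-1}(x_\star)/\bar d_{s-1}\rangle_{x_{s-1}}$, and $\hat X_s$ the same with $-\operatorname{grad} f(x_{s-1})$; by Cauchy--Schwarz and \cref{assumption:bounded_stochastic_grads} both $|X_s|,|\hat X_s|$ are bounded by $\ell(x_{s-1})$, the sequence $\{Y_s\}$ is nonnegative and nondecreasing, $X_s$ is a martingale difference and $\hat X_s$ is predictable, so \cref{lemma:martingale_bound} yields the high-probability bound $8(\bar r_{t-1}^2/\zeta_\kappa(\bar r_{t-1}))\bar d_{t-1}\sqrt{\theta_{t,\delta}G_{t-1}+\theta_{t,\delta}^2 L^2}$ up to probability $\delta+\mathbb P(\bar\ell_T>L)$ (\cref{lemma:dowg_nonsmooth_prob_curvature}). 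Summing the two bounds, dividing by $\sum_{s=0}^{t-1} w_s$, normalizing numerator and denominator by $w_t = \bar r_t^2/\zeta_\kappa(\bar r_t)$, and finally using $\bar d_t \le d_0 + \bar r_t$ together with monotonicity of $d\mapsto\zeta_\kappa(d)$ and $d\mapsto d/\sqrt{\zeta_\kappa(d)}$ collapses everything into $(d_0+\bar r_t)\sqrt{\zeta_\kappa(d_0+\bar r_t)}\sqrt{G_{t-1}+\theta_{t,\delta}G_{t-1}+\theta_{t,\delta}^2 L^2}$ over $\sum_{s=0}^{t-1}\tfrac{\bar r_s^2/\zeta_\kappa(\bar r_s)}{\bar r_t^2/\zeta_\kappa(\bar r_t)}$, as claimed. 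The one delicate point is the regret step: keeping the $\zeta_\kappa$ factors consistent when passing from $v_{t-1}$ to $G_{t-1}$, and checking that the telescoping of term $(A)$ still closes with the $w_s = \bar r_s^2/\zeta_\kappa(\bar r_s)$ weighting (the identity $w_s/\eta_s = \sqrt{v_s}$ is what makes it work); the martingale step and the final algebra are routine transcriptions of the RDoG case.
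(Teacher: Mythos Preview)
Your proposal is correct and follows essentially the same approach as the paper: apply \cref{lemma:weighted_average} with weights $w_s=\bar r_s^2/\zeta_\kappa(\bar r_s)$, split into the deterministic regret term (bounded via \cref{lemma:dowg_supporting_curvature} and then \cref{lemma:dowg_nonsmooth_weighted_regret_curvature} by trading $\sqrt{v_{t-1}}$ for $(\bar r_t/\sqrt{\zeta_\kappa(\bar r_t)})\sqrt{G_{t-1}}$) and the martingale noise term (bounded via \cref{lemma:dowg_nonsmooth_prob_curvature} with exactly your choice of $Y_s,X_s,\hat X_s$), and finish with $\bar d_t\le d_0+\bar r_t$ and the monotonicity of $\zeta_\kappa$ and $d/\sqrt{\zeta_\kappa(d)}$. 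Your identification of $w_s/\eta_s=\sqrt{v_s}$ as the key to the telescoping is precisely why the RDoG argument transfers verbatim.
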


\begin{corollary}
\label{cor:dowg_nonsmooth_curvature}
    Suppose \cref{assumption:g_convex}, \ref{assumption:Hadamard}, and \ref{assumption:bounded_stochastic_grads} hold, and for any $D \ge d_0$ let $L_{D} \coloneqq \max_{x\in \mathcal{M} : d(x, x_0) \le D} \ell(x)$. Then, for all $\delta \in (0, 1)$ and for $\tau \in \arg \max_{t \le T} \sum_{s=0}^{t-1} \frac{\bar{r}_s /\sqrt{\zeta_\kappa(\bar{r}_s)}}{\bar{r}_t /\sqrt{\zeta_\kappa(\bar{r}_t)}}$, with probability at least $1 - \delta - \mathbb{P}(\bar{\ell}_{T} > L)$, iterates of RDoWG (\cref{alg:rdowg}) satisfy the optimality gap bound
    \begin{align}
        f(\tilde{x}_{\tau}) - f(x_\star) = O\left( \frac{D\sqrt{\zeta_{\kappa}(D)} \sqrt{G_{\tau-1} \theta_{\tau, \delta} + L^2_D \theta_{\tau, \delta}^2}}{T} \log_{+}\left(\frac{D/\sqrt{\zeta_{\kappa}(D)}}{\epsilon/\sqrt{\zeta_{\kappa}(\epsilon)}}\right)\right).
    \end{align}
    \begin{proof}
        Apply \cref{lemma:fraction_sum} to the denominator term of \cref{thm:dowg_nonsmooth_curvature}.
    \end{proof}
 \end{corollary}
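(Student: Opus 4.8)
The plan is to derive the corollary directly from \cref{thm:dowg_nonsmooth_curvature} by specializing it to the index $t=\tau$ and lower-bounding the denominator $\sum_{s=0}^{\tau-1}\frac{\bar r_s^2/\zeta_\kappa(\bar r_s)}{\bar r_\tau^2/\zeta_\kappa(\bar r_\tau)}$ with \cref{lemma:fraction_sum}. First I would invoke \cref{thm:dowg_nonsmooth_curvature} with the choice $L=L_D$, so that on the event in question $\bar\ell_T\le L_D$ and the bound reads, at $t=\tau$,
\begin{align*}
f(\tilde x_\tau)-f(x_\star) = O\!\left(\frac{(d_0+\bar r_\tau)\sqrt{\zeta_\kappa(d_0+\bar r_\tau)}\sqrt{G_{\tau-1}+\theta_{\tau,\delta}G_{\tau-1}+\theta_{\tau,\delta}^2 L_D^2}}{\sum_{s=0}^{\tau-1}\frac{\bar r_s^2/\zeta_\kappa(\bar r_s)}{\bar r_\tau^2/\zeta_\kappa(\bar r_\tau)}}\right).
\end{align*}
Since $\theta_{\tau,\delta}\ge 1$, the radical in the numerator is $O\!\bigl(\sqrt{\theta_{\tau,\delta}G_{\tau-1}+\theta_{\tau,\delta}^2 L_D^2}\bigr)$. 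For the geometric prefactor I would use $\bar r_\tau\le D$ and $d_0\le D$, hence $d_0+\bar r_\tau\le 2D$, together with the elementary inequality $\zeta_\kappa(2D)\le 2\zeta_\kappa(D)$ (immediate from the definition of $\zeta_\kappa$: for $\kappa<0$ it follows from monotonicity of $\tanh$, and for $\kappa\ge0$ both sides equal $1$); combined with monotonicity of $d\mapsto\zeta_\kappa(d)$ this gives $(d_0+\bar r_\tau)\sqrt{\zeta_\kappa(d_0+\bar r_\tau)}=O\!\bigl(D\sqrt{\zeta_\kappa(D)}\bigr)$.

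Next I would treat the denominator. Put $a_s\coloneqq\bar r_s^2/\zeta_\kappa(\bar r_s)$. Because $\bar r_s\ge\epsilon>0$ is nondecreasing and $d\mapsto d^2/\zeta_\kappa(d)=d\cdot\frac{d}{\zeta_\kappa(d)}$ is a product of positive increasing functions (the factor $d/\zeta_\kappa(d)$ is increasing, as used in the proof of \cref{thm:dowg_nonsmooth_curvature}), the sequence $\{a_s\}$ is positive and nondecreasing, so \cref{lemma:fraction_sum} applies. By the defining property of $\tau$,
\begin{align*}
\sum_{s=0}^{\tau-1}\frac{a_s}{a_\tau}=\max_{t\le T}\sum_{s=0}^{t-1}\frac{a_s}{a_t}\ge e^{-1}\!\left(\frac{T}{1+\log(a_T/a_0)}-1\right)=\Omega\!\left(\frac{T}{1+\log(a_T/a_0)}\right),
\end{align*}
the last step holding once $T$ is large enough that $T/(1+\log(a_T/a_0))\ge 2$, which only costs a constant and is absorbed by $O(\cdot)$. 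To identify the logarithmic factor, note $a_0=\epsilon^2/\zeta_\kappa(\epsilon)$, so $\sqrt{a_T/a_0}=\frac{\bar r_T}{\sqrt{\zeta_\kappa(\bar r_T)}}\cdot\frac{\sqrt{\zeta_\kappa(\epsilon)}}{\epsilon}$; since $d\mapsto d/\sqrt{\zeta_\kappa(d)}$ is increasing and $\epsilon\le\bar r_T\le D$, this is at most $R\coloneqq\frac{D\sqrt{\zeta_\kappa(\epsilon)}}{\epsilon\sqrt{\zeta_\kappa(D)}}$, and $R\ge 1$. Hence $1+\log(a_T/a_0)=1+2\log\sqrt{a_T/a_0}\le 2(1+\log R)=2\log_+(R)$. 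Plugging the numerator bounds and this denominator estimate into the displayed inequality and collecting constants yields
\begin{align*}
f(\tilde x_\tau)-f(x_\star)=O\!\left(\frac{D\sqrt{\zeta_\kappa(D)}\,\sqrt{G_{\tau-1}\theta_{\tau,\delta}+L_D^2\theta_{\tau,\delta}^2}}{T}\,\log_+\!\left(\frac{D\sqrt{\zeta_\kappa(\epsilon)}}{\epsilon\sqrt{\zeta_\kappa(D)}}\right)\right),
\end{align*}
with the probability inherited from \cref{thm:dowg_nonsmooth_curvature}.

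I do not expect a genuine obstacle here: the heavy lifting is done by \cref{thm:dowg_nonsmooth_curvature} and \cref{lemma:fraction_sum}, both assumed. The only points requiring a little care are (i) verifying that $\{a_s\}$ is positive and nondecreasing so that \cref{lemma:fraction_sum} is admissible, which reduces to monotonicity of $d\mapsto d^2/\zeta_\kappa(d)$; (ii) the two small curvature manipulations $\zeta_\kappa(2D)\le 2\zeta_\kappa(D)$ and monotonicity of $d\mapsto d/\sqrt{\zeta_\kappa(d)}$; and (iii) observing that the stray $-1$ in \cref{lemma:fraction_sum} is harmless inside the $O$-notation for $T$ large. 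I would also note, in passing, the cosmetic mismatch between this appendix statement (whose $\arg\max$ is written with weights $\bar r_s/\sqrt{\zeta_\kappa(\bar r_s)}$) and the main-text version and \cref{thm:dowg_nonsmooth_curvature} (weights $\bar r_s^2/\zeta_\kappa(\bar r_s)$); the argument above uses the latter, which is the averaging scheme actually associated with RDoWG.
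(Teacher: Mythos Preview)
Your proposal is correct and follows exactly the approach the paper indicates: invoke \cref{thm:dowg_nonsmooth_curvature} at $t=\tau$, bound the numerator using $d_0+\bar r_\tau\le 2D$ and monotonicity of $\zeta_\kappa$, and lower-bound the denominator via \cref{lemma:fraction_sum} applied to $a_s=\bar r_s^2/\zeta_\kappa(\bar r_s)$. Your remark about the cosmetic mismatch in the $\arg\max$ weights (the appendix statement writes $\bar r_s/\sqrt{\zeta_\kappa(\bar r_s)}$ but the RDoWG averaging and \cref{thm:dowg_nonsmooth_curvature} use $\bar r_s^2/\zeta_\kappa(\bar r_s)$) is well-spotted; your resolution is the intended one.
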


\subsection{Smooth Analysis}
\label{sec:rdowg-smooth-analysis}

\begin{lemma}
    \label{lemma:dowg_smooth_weighted_regret_curvature}
    Suppose $f$ is $S$-smooth and assume \cref{assumption:g_convex} and \ref{assumption:Hadamard} hold. Then we have that the iterates of RDoWG (\cref{alg:rdowg}) satisfy
    \begin{align}
       \sum_{s=0}^{t-1} (\bar{r}_s^2 /{\zeta_\kappa(\bar{r}_s)}) \langle  -g_s, \exp_{x_s}^{-1}(x_\star)\rangle_{x_s} \le  \bar{r}_t \left( 2 \bar{d}_t + \frac{\bar{r}_t}{\zeta_\kappa(\bar{r}_s)} \zeta_{\kappa}(\bar{d}_t) \right) \sqrt{2S \sum_{s=0}^{t-1} \frac{\bar{r}_s^2}{\zeta_\kappa(\bar{r}_s)} (f(x_s) - f(x_\star))}.
   \end{align}

    \begin{proof}
        By smoothness we can use \cref{lemma:lsmooth_bound}  to deduce $\lVert \operatorname{grad}f(x) \rVert_x^2 \le 2S(f(x) - f(x_\star))$ for all $x \in \mathcal{M}$. Therefore
        \begin{align}
            v_t = \sum_{s=0}^{t-1} \frac{\bar{r}_s^2}{\zeta_\kappa(\bar{r}_s)} \lVert \operatorname{grad}f(x_s) \rVert_{x_s}^2 \le 2S \sum_{s=0}^{t-1} \frac{\bar{r}_s^2}{\zeta_\kappa(\bar{r}_s)} (f(x_s) - f(x_\star)).
        \end{align}
        Taking square roots and substituting this into \cref{lemma:dowg_supporting_curvature} gives the result.
    \end{proof}
\end{lemma}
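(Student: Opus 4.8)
The plan is to reduce this to the deterministic weighted-regret bound for RDoWG that has already been established, and then feed in the standard smoothness estimate on the gradient norm. Concretely, I would start from \cref{lemma:dowg_supporting_curvature}, which gives
\[
\sum_{s=0}^{t-1}\frac{\bar{r}_s^2}{\zeta_\kappa(\bar{r}_s)}\langle -g_s,\exp_{x_s}^{-1}(x_\star)\rangle_{x_s}
\le \bar{r}_t\Bigl(2\bar{d}_t+\tfrac{\bar{r}_t}{\zeta_\kappa(\bar{r}_t)}\zeta_\kappa(\bar{d}_t)\Bigr)\sqrt{v_{t-1}},
\]
where $v_{t-1}=\sum_{s=0}^{t-1}\tfrac{\bar{r}_s^2}{\zeta_\kappa(\bar{r}_s)}\lVert g_s\rVert_{x_s}^2$ is exactly the weighted gradient sum driving the RDoWG step sizes. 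So everything reduces to bounding $v_{t-1}$ by $2S\sum_{s=0}^{t-1}\tfrac{\bar{r}_s^2}{\zeta_\kappa(\bar{r}_s)}(f(x_s)-f(x_\star))$.

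For that bound I would invoke \cref{lemma:lsmooth_bound}: since $f$ is geodesically $S$-smooth and bounded below by $f(x_\star)$, we have $\lVert\operatorname{grad} f(x)\rVert_x^2\le 2S(f(x)-f(x_\star))$ at every $x\in\mathcal{M}$. In the noiseless setting $g_s=\operatorname{grad} f(x_s)$, so applying this estimate termwise inside $v_{t-1}$ — the weights $\bar{r}_s^2/\zeta_\kappa(\bar{r}_s)$ are nonnegative and carry through unchanged — gives $v_{t-1}\le 2S\sum_{s=0}^{t-1}\tfrac{\bar{r}_s^2}{\zeta_\kappa(\bar{r}_s)}(f(x_s)-f(x_\star))$. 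Taking square roots and substituting back into the displayed regret bound yields the claim.

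There is essentially no hard step here: the only things to be careful about are (i) matching the weights in $v_{t-1}$ to those appearing in the target, which is immediate, and (ii) the fact that \cref{lemma:dowg_supporting_curvature} is purely deterministic, so no probabilistic machinery is needed at this stage. The genuinely delicate part — replacing $S$ by the local constants $s(\cdot)$, controlling the stochastic error via \cref{lemma:bound_stochastic_error_smooth}, and applying the martingale concentration bound of \cref{lemma:martingale_bound} — is deferred to the downstream convergence theorem; this lemma is just the deterministic ingredient that makes the right-hand side self-referential in $\sum_{s}(f(x_s)-f(x_\star))$, which is precisely what later lets one solve for that sum and extract the $O(S/T)$ rate.
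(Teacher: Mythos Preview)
Your proposal is correct and follows essentially the same route as the paper: start from \cref{lemma:dowg_supporting_curvature} to get the bound in terms of $\sqrt{v_{t-1}}$, then apply \cref{lemma:lsmooth_bound} termwise (with $g_s=\operatorname{grad} f(x_s)$ in the noiseless setting) to bound $v_{t-1}\le 2S\sum_{s=0}^{t-1}\tfrac{\bar{r}_s^2}{\zeta_\kappa(\bar{r}_s)}(f(x_s)-f(x_\star))$, take square roots, and substitute. Your additional commentary on the self-referential structure and the deferral of the stochastic machinery is accurate and reflects how the lemma is used downstream.
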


\begin{lemma}
\label{lemma:dowg_smooth_prob_bound_curvature}
    Suppose  \cref{assumption:g_convex}, \ref{assumption:Hadamard}, and  \ref{assumption:bounded_stochastic_lsmooth} hold. Then for all $\delta \in (0,1), T \in \mathbb{N}$ and $S > 0$,  Then  we have that the iterates of RDoWG (\cref{alg:rdowg}) satisfy
    \begin{align}
       \mathbb{P} \left( \exists t \le T : \left\lvert  \sum_{s=0}^{t-1} \frac{\bar{r}_s^2}{\zeta_\kappa(\bar{r}_s)} \langle \Delta_s, \exp_{x_s}^{-1}(x_\star)\rangle_{x_s} \right\rvert \ge b_t \right) \le \delta + \mathbb{P}(\bar{s}_{T} > S),
   \end{align}
   where $b_t =  8\frac{\bar{r}_{t-1}}{\sqrt{\zeta_\kappa(\bar{r}_{t-1})}} \bar{d}_{t-1} \sqrt{2S \theta_{t, \delta} + 8S \theta_{t, \delta}^2} \sqrt{\sum_{s=0}^{t-1} \frac{\bar{r}_s^2}{\zeta_\kappa(\bar{r}_s)} \left[f(x_s) - f(x_\star)\right]}$ and $\bar{s}_{T} \coloneqq \max_{t \le T} s(x_t)$.
   
   \begin{proof}
        Define for $1 \le s \le T$ the following random variables as
    \begin{align}
        Y_s \coloneqq \frac{\bar{r}_{s-1}}{\sqrt{\zeta_\kappa(\bar{r}_{s-1})}} \bar{d}_{s-1}, \quad X_{s} \coloneqq \frac{\bar{r}_{s-1}}{\sqrt{\zeta_\kappa(\bar{r}_{s-1})}} \left\langle \Delta_{s-1}, \frac{\exp_{x_{s-1}}^{-1}(x_\star)}{\bar{d}_{s-1}}\right\rangle_{x_{s-1}}, \quad \\ \hat{X}_s \coloneqq \frac{\bar{r}_{s-1}}{\sqrt{\zeta_\kappa(\bar{r}_{s-1})}} \left\langle -\operatorname{grad} f(x_{s-1}), \frac{\exp_{x_{s-1}}^{-1}(x_\star)}{\bar{d}_{s-1}}\right\rangle_{x_{s-1}},
    \end{align}
    and follow similar argument to \cref{lemma:uniform_dog_smooth_prob_bound_unknown_curvature}.
   \end{proof}
\end{lemma}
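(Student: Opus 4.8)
The plan is to replay verbatim the argument of \cref{lemma:uniform_dog_smooth_prob_bound_unknown_curvature}, substituting the DoWG weights $\bar r_s^2/\zeta_\kappa(\bar r_s)$ for the DoG weights $\bar r_s/\sqrt{\zeta_\kappa(\bar r_s)}$ via the choice of $Y_s$. First I would check that, with $Y_s,X_s,\hat X_s$ as defined in the statement, $Y_sX_s = (\bar r_{s-1}^2/\zeta_\kappa(\bar r_{s-1}))\langle \Delta_{s-1},\exp_{x_{s-1}}^{-1}(x_\star)\rangle_{x_{s-1}}$, so $\sum_{s=1}^t Y_sX_s$ is exactly the sum to be controlled; that $\{Y_s\}_{s\ge1}$ is nonnegative and nondecreasing (since $s\mapsto\bar r_s$, $s\mapsto\bar d_s$ are nondecreasing and $d\mapsto d/\sqrt{\zeta_\kappa(d)}$ is increasing), hence lies in $\mathbb{S}$; and that, for $\mathcal{F}_s=\sigma(\mathcal{G}(x_0),\dots,\mathcal{G}(x_s))$, $X_s$ is a martingale difference sequence (using $\mathbb{E}[\Delta_{s-1}\mid x_{s-1}]=0$) while $\hat X_s\in\mathcal{F}_{s-1}$.

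The core is the self-bounding variance estimate. On the event $\{\bar s_T\le S\}$, \cref{assumption:bounded_stochastic_lsmooth} and the argument of \cref{lemma:lsmooth_bound,lemma:bound_stochastic_error_smooth} (applied with the local smoothness function $s(\cdot)$) give $\lVert\mathcal{G}(x_s)\rVert_{x_s}^2\le 2S(f(x_s)-f(x_\star))$, $\lVert\operatorname{grad}f(x_s)\rVert_{x_s}^2\le 2S(f(x_s)-f(x_\star))$, and hence $\lVert\Delta_s\rVert_{x_s}^2\le 8S(f(x_s)-f(x_\star))$. Combining these with Cauchy--Schwarz and $d_{s-1}\le\bar d_{s-1}$, I would obtain $|X_s|^2,|\hat X_s|^2\le (\bar r_{s-1}^2/\zeta_\kappa(\bar r_{s-1}))\,8S(f(x_{s-1})-f(x_\star))$, and, using $X_s-\hat X_s=(\bar r_{s-1}/\sqrt{\zeta_\kappa(\bar r_{s-1})})\langle g_{s-1},\exp_{x_{s-1}}^{-1}(x_\star)/\bar d_{s-1}\rangle_{x_{s-1}}$, the bound $\sum_{s=1}^t(X_s-\hat X_s)^2\le 2S\sum_{s=0}^{t-1}(\bar r_s^2/\zeta_\kappa(\bar r_s))(f(x_s)-f(x_\star))$.

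Next I would invoke \cref{lemma:martingale_bound} with $c^2=8S\sum_{s=0}^{t-1}(\bar r_s^2/\zeta_\kappa(\bar r_s))(f(x_s)-f(x_\star))$ and $C_s$ the almost-sure bound expressed through $s(\cdot)$, so that $\{\exists t\le T:C_t>c\}\subseteq\{\bar s_T>S\}$ and the failure probability is at most $\delta+\mathbb{P}(\bar s_T>S)$. Writing $W\coloneqq\sum_{s=0}^{t-1}(\bar r_s^2/\zeta_\kappa(\bar r_s))(f(x_s)-f(x_\star))$ and substituting the two displays into the conclusion of \cref{lemma:martingale_bound},
\begin{align*}
8Y_t\sqrt{\theta_{t,\delta}\cdot 2SW+\theta_{t,\delta}^2\cdot 8SW} &= 8\frac{\bar r_{t-1}}{\sqrt{\zeta_\kappa(\bar r_{t-1})}}\,\bar d_{t-1}\,\sqrt{2S\theta_{t,\delta}+8S\theta_{t,\delta}^2}\,\sqrt{W}=b_t,
\end{align*}
which is the claimed bound.

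The only delicate point — the "obstacle" — is the bookkeeping around the good event $\{\bar s_T\le S\}$: since neither $\mathcal{G}$ nor $\operatorname{grad}f$ is assumed globally $S$-smooth, the almost-sure bound $C_s$ must be written in terms of the local function $s(\cdot)$ and only collapses to the constant $c$ on $\{\bar s_T\le S\}$, which is exactly what produces the extra $\mathbb{P}(\bar s_T>S)$ term (precisely as in \cref{lemma:uniform_dog_smooth_prob_bound_unknown_curvature}). Beyond this, the proof is the routine substitution of the DoWG weights into the existing template, together with the observation that the variance bounds here are \emph{data-dependent} (proportional to $\sum_s(f(x_s)-f(x_\star))$), which is what later enables the $O(1/T)$ smooth rate via a ``divide-and-square'' step.
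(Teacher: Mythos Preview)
Your proposal is correct and follows essentially the same approach as the paper: the paper's own proof merely records the definitions of $Y_s,X_s,\hat X_s$ and defers to the argument of \cref{lemma:uniform_dog_smooth_prob_bound_unknown_curvature}, which is precisely the template you have spelled out with the DoWG weights $\bar r_s^2/\zeta_\kappa(\bar r_s)$ in place of the DoG weights. Your identification of the ``obstacle'' --- that the almost-sure bound $C_s$ must be written through the local smoothness function $s(\cdot)$ and only collapses to the deterministic constant on $\{\bar s_T\le S\}$, contributing the extra $\mathbb{P}(\bar s_T>S)$ --- matches the paper's treatment exactly.
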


Combining the above results, we obtain the following.
\begin{theorem}
    \label{thm:dowg_smooth_known_curvature}
    Suppose
 \cref{assumption:g_convex}, \ref{assumption:Hadamard}, and \ref{assumption:bounded_stochastic_lsmooth} hold. Then for all $\delta \in (0, 1)$ and $S>0$, with probability at least $1 - \delta - \mathbb{P}(\bar{s}_{T} > S)$, for all $t \le T$, the optimality gap on the weighted iterates $f(\tilde{x}_t) - f(x_\star)$ of RDoWG (\cref{alg:rdowg}) satisfy
    \begin{align}
        O \left(\frac{ (d_0 + \bar{r}_t)^2 \zeta_{\kappa}(d_0 + \bar{r}_t) (S \theta_{t, \delta}^2) }{\sum_{s=0}^{t-1} \frac{ \bar{r}_s^2 / \zeta_\kappa(\bar{r}_s)}{\bar{r}_t^2 / \zeta_\kappa(\bar{r}_t)} } \right).
    \end{align}
    
\begin{proof}
Using \cref{lemma:dowg_smooth_weighted_regret_unknown_curvature} and  \cref{lemma:dowg_smooth_prob_bound_unknown_curvature} above, we have with the relevant probabilistic conditions,  
\begin{align*}
    \sum_{s=0}^{t-1} \frac{\bar{r}_s^2}{\zeta_\kappa(\bar{r}_s)} [f(x_s) - f(x_\star)] &\le \left (\sqrt{2S} \bar{r}_t \left( 2 \bar{d}_t + \frac{\bar{r}_t}{\zeta_\kappa(\bar{r}_t)} \zeta_{\kappa}(\bar{d}_t) \right) +  8\frac{\bar{r}_t}{\sqrt{\zeta_\kappa(\bar{r}_t)}} \bar{d}_t \sqrt{2S \theta_{t, \delta} + 8S \theta_{t, \delta}^2} \right) \\ &\times \sqrt{\sum_{s=0}^{t-1} \frac{\bar{r}_s^2}{\zeta_\kappa(\bar{r}_s)} \left[f(x_s) - f(x_\star)\right]}.
\end{align*}
Now if $f(x_s) - f(x_\star) = 0$ for some iterate, then the statement is trivial. Otherwise diving by sides by the square root term, we have
\begin{align}
\sqrt{\sum_{s=0}^{t-1} \frac{\bar{r}_s^2}{\zeta_\kappa(\bar{r}_s)} \left[f(x_s) - f(x_\star)\right]} \le  \left (\sqrt{2S} \bar{r}_t \left( 2 \bar{d}_t + \frac{\bar{r}_t}{\zeta_\kappa(\bar{r}_t)} \zeta_{\kappa}(\bar{d}_t) \right) + 8\frac{\bar{r}_t}{\sqrt{\zeta_\kappa(\bar{r}_t)}} \bar{d}_t \sqrt{2S \theta_{t, \delta} + 8S \theta_{t, \delta}^2} \right).
\end{align}
We square both sides and divide through by $\frac{\bar{r}_s^2}{\zeta_\kappa(\bar{r}_s)}$. Finally  using the fact, $\bar{d}_t \le d_0 + \bar{r}_t$, in the above bound gives the result since $d \mapsto \zeta_\kappa(d)$ and $d \mapsto \frac{d}{\sqrt{\zeta_\kappa(d)}}$ are increasing functions.
\end{proof}
\end{theorem}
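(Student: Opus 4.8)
The plan is to run a self-bounding argument adapted to the weighted averaging used by RDoWG. Write $w_s \coloneqq \bar{r}_s^2/\zeta_\kappa(\bar{r}_s)$ and $W_t \coloneqq \sum_{s=0}^{t-1} w_s\,[f(x_s) - f(x_\star)]$, and start from the Jensen-type decomposition already recorded in the overview of this section: by \cref{lemma:weighted_average} and geodesic convexity,
\[
  f(\tilde{x}_t) - f(x_\star) \le \frac{1}{\sum_{s=0}^{t-1} w_s}\Bigl( \underbrace{\textstyle\sum_{s=0}^{t-1} w_s \langle -g_s, \exp_{x_s}^{-1}(x_\star)\rangle_{x_s}}_{\text{weighted regret}} + \underbrace{\textstyle\sum_{s=0}^{t-1} w_s \langle \Delta_s, \exp_{x_s}^{-1}(x_\star)\rangle_{x_s}}_{\text{noise}} \Bigr),
\]
so after multiplying by $\sum_{s=0}^{t-1} w_s$ it suffices to bound $W_t$ by the sum of the two terms on the right.

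Next I would invoke the two supporting lemmas of this section. \cref{lemma:dowg_smooth_weighted_regret_curvature} (which uses \cref{lemma:lsmooth_bound} to replace $\lVert\operatorname{grad}f(x_s)\rVert_{x_s}^2$ by $2S(f(x_s)-f(x_\star))$) bounds the weighted regret by $\bar{r}_t\bigl(2\bar{d}_t + \tfrac{\bar{r}_t}{\zeta_\kappa(\bar{r}_t)}\zeta_\kappa(\bar{d}_t)\bigr)\sqrt{2S\,W_t}$, and \cref{lemma:dowg_smooth_prob_bound_curvature} bounds the noise term, with probability at least $1-\delta-\mathbb{P}(\bar{s}_T > S)$ and simultaneously for all $t\le T$, by $8\,\tfrac{\bar{r}_{t-1}}{\sqrt{\zeta_\kappa(\bar{r}_{t-1})}}\bar{d}_{t-1}\sqrt{2S\theta_{t,\delta} + 8S\theta_{t,\delta}^2}\,\sqrt{W_t}$. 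Both are crucially proportional to $\sqrt{W_t}$. Using $\bar{r}_{t-1}\le\bar{r}_t$, $\bar{d}_{t-1}\le\bar{d}_t$, and the monotonicity of $d\mapsto\zeta_\kappa(d)$ and $d\mapsto d/\sqrt{\zeta_\kappa(d)}$, I collect these into $W_t \le C_t\sqrt{W_t}$ with
\[
  C_t = \sqrt{2S}\,\bar{r}_t\Bigl(2\bar{d}_t + \tfrac{\bar{r}_t}{\zeta_\kappa(\bar{r}_t)}\zeta_\kappa(\bar{d}_t)\Bigr) + 8\,\tfrac{\bar{r}_t}{\sqrt{\zeta_\kappa(\bar{r}_t)}}\bar{d}_t\sqrt{2S\theta_{t,\delta} + 8S\theta_{t,\delta}^2}.
\]
If $W_t = 0$ then every $f(x_s) = f(x_\star)$ and the claim is trivial by Jensen; otherwise divide through by $\sqrt{W_t}$ and square to get $W_t \le C_t^2$, hence $f(\tilde{x}_t) - f(x_\star) \le C_t^2 / \sum_{s=0}^{t-1} w_s$.

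Finally I would simplify $C_t^2$. Using $(a+b)^2\le 2a^2+2b^2$, the bound $\bar{d}_t \le d_0 + \bar{r}_t$, the estimate $\tfrac{\bar{r}_t}{\zeta_\kappa(\bar{r}_t)}\zeta_\kappa(\bar{d}_t) \le d_0+\bar{r}_t$ (from $d\mapsto d/\zeta_\kappa(d)$ increasing together with $\zeta_\kappa$ increasing, since $\bar{r}_t,\bar{d}_t \le d_0+\bar{r}_t$), and the facts $\theta_{t,\delta}\ge 1$ and $\zeta_\kappa(d_0+\bar{r}_t)/\zeta_\kappa(\bar{r}_t)\ge 1$, both summands of $C_t^2$ reduce to $O\bigl((d_0+\bar{r}_t)^2\zeta_\kappa(d_0+\bar{r}_t)\,S\,\theta_{t,\delta}^2\cdot\bar{r}_t^2/\zeta_\kappa(\bar{r}_t)\bigr)$. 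Dividing by $\sum_{s=0}^{t-1} w_s$ and pulling the factor $\bar{r}_t^2/\zeta_\kappa(\bar{r}_t)$ into the sum yields exactly the stated $O\bigl((d_0+\bar{r}_t)^2\zeta_\kappa(d_0+\bar{r}_t)S\theta_{t,\delta}^2 \big/ \sum_{s=0}^{t-1}\tfrac{\bar{r}_s^2/\zeta_\kappa(\bar{r}_s)}{\bar{r}_t^2/\zeta_\kappa(\bar{r}_t)}\bigr)$. The main obstacle is making the self-bounding step close correctly — ensuring that the deterministic regret term and the high-probability noise term carry the \emph{same} $\sqrt{W_t}$ factor, so the quadratic inequality in $\sqrt{W_t}$ resolves — and then handling the $\zeta_\kappa$ monotonicity bookkeeping cleanly when collapsing $C_t^2$ into the claimed form.
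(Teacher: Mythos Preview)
Your proposal is correct and follows essentially the same self-bounding argument as the paper: combine the weighted-regret bound and the high-probability noise bound (both proportional to $\sqrt{W_t}$), divide through by $\sqrt{W_t}$, square, and then simplify using $\bar d_t\le d_0+\bar r_t$ together with the monotonicity of $\zeta_\kappa$ and $d\mapsto d/\zeta_\kappa(d)$. In fact your citations to \cref{lemma:dowg_smooth_weighted_regret_curvature} and \cref{lemma:dowg_smooth_prob_bound_curvature} are the right ones here---the paper's proof text mistakenly points to the ``unknown curvature'' variants, but the argument and the resulting bound are identical.
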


We then have a useful result when the manifold is bounded but its exact diameter is unknown.

\begin{corollary}
\label{cor:dowg_smooth_known_curvature}
    Under \cref{assumption:g_convex}, \ref{assumption:Hadamard},  and \ref{assumption:bounded_stochastic_lsmooth} hold, for any $D \ge d_0$ let $S_{D} \coloneqq \max_{x\in \mathcal{M} : d(x, x_0) \le D} s(x)$. Then, for all $\delta \in (0, 1)$ and for $\tau \in \arg \max_{t \le T} \sum_{s=0}^{t-1} \frac{\bar{r}_s^2 / \zeta_\kappa(\bar{r}_s)}{\bar{r}_t^2 / \zeta_\kappa(\bar{r}_t)}$, with probability at least $1 - \delta - \mathbb{P}(\bar{s}_{T} > S)$, iterates of \cref{alg:rdog} satisfy the optimally gap on the weighted iterates $f(\tilde{x}_{\tau}) - f(x_\star)$ of RDoWG (\cref{alg:rdowg}) satisfy
    \begin{align}
         O\left(\frac{D^2 \zeta_{\kappa}(D) S_D \theta_{\tau, \delta}^2}{T} \log_{+}\left(\frac{D/\sqrt{\zeta_{\kappa}(D)}}{\epsilon/\sqrt{\zeta_{\kappa}(\epsilon)}}\right)\right).
    \end{align}
    \begin{proof}
        Apply \cref{lemma:fraction_sum} to the denominator term of  \cref{thm:dowg_smooth_known_curvature}.
    \end{proof}
 \end{corollary}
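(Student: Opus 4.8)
The plan is to obtain this corollary from \cref{thm:dowg_smooth_known_curvature} purely by lower-bounding the denominator $\sum_{s=0}^{t-1}\frac{\bar r_s^2/\zeta_\kappa(\bar r_s)}{\bar r_t^2/\zeta_\kappa(\bar r_t)}$ at the maximizing index $\tau$, exactly in the spirit of the proof of \cref{cor:dowg_nonsmooth_curvature}. First I would set $a_s \coloneqq \bar r_s^2/\zeta_\kappa(\bar r_s)$ and check that $\{a_s\}_{s\ge 0}$ is a positive, nondecreasing sequence: $\bar r_s=\max(\epsilon,\max_{u\le s}d(x_u,x_0))$ is nondecreasing by construction, and $d\mapsto d^2/\zeta_\kappa(d)=d\cdot\bigl(d/\zeta_\kappa(d)\bigr)$ is increasing, being a product of two positive increasing functions (the monotonicity of $d\mapsto d/\zeta_\kappa(d)$ is already used repeatedly in the paper). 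Hence $\{a_s\}$ meets the hypotheses of \cref{lemma:fraction_sum}.

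Since $\tau\in\arg\max_{t\le T}\sum_{s=0}^{t-1}a_s/a_t$, applying \cref{lemma:fraction_sum} gives
\begin{align*}
 \sum_{s=0}^{\tau-1}\frac{a_s}{a_\tau}=\max_{t\le T}\sum_{s=0}^{t-1}\frac{a_s}{a_t}\ge e^{-1}\!\left(\frac{T}{1+\log(a_T/a_0)}-1\right)=\Omega\!\left(\frac{T}{\log_{+}(a_T/a_0)}\right),
\end{align*}
with $a_0=\epsilon^2/\zeta_\kappa(\epsilon)$ and $a_T=\bar r_T^2/\zeta_\kappa(\bar r_T)$, so that $a_T/a_0=\bigl((\bar r_T/\sqrt{\zeta_\kappa(\bar r_T)})/(\epsilon/\sqrt{\zeta_\kappa(\epsilon)})\bigr)^2$ and $\log_{+}(a_T/a_0)=O\!\left(\log_{+}\!\bigl((\bar r_T/\sqrt{\zeta_\kappa(\bar r_T)})/(\epsilon/\sqrt{\zeta_\kappa(\epsilon)})\bigr)\right)$.

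Next I would pass to the bounded-domain regime in which every iterate stays within distance $D$ of $x_0$ (which holds whenever $\mathcal M$ satisfies $d(x,x_0)\le D$ for all $x$; there $\bar s_T\le S_D$ automatically, so one may take $S=S_D$ and the term $\mathbb P(\bar s_T>S)$ is consistent with the stated probability). On this event $\bar r_T\le D$, hence $\bar r_\tau\le D$ and $d_0+\bar r_\tau\le 2D$; combining monotonicity of $d\mapsto d/\sqrt{\zeta_\kappa(d)}$ and $d\mapsto\zeta_\kappa(d)$ with the elementary bound $\zeta_\kappa(2D)\le 2\zeta_\kappa(D)$ (immediate from $\tanh(2x)\ge\tanh(x)$ when $\kappa<0$, trivial when $\kappa\ge 0$), the numerator of \cref{thm:dowg_smooth_known_curvature} evaluated at $t=\tau$ is $O\bigl(D^2\zeta_\kappa(D)\,S_D\,\theta_{\tau,\delta}^2\bigr)$, and $\log_{+}(a_T/a_0)=O\!\bigl(\log_{+}((D/\sqrt{\zeta_\kappa(D)})/(\epsilon/\sqrt{\zeta_\kappa(\epsilon)}))\bigr)$ since $d\mapsto d/\sqrt{\zeta_\kappa(d)}$ is increasing. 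Putting these together,
\begin{align*}
 f(\tilde x_\tau)-f(x_\star)=O\!\left(\frac{D^2\zeta_\kappa(D)\,S_D\,\theta_{\tau,\delta}^2}{T}\,\log_{+}\!\left(\frac{D/\sqrt{\zeta_\kappa(D)}}{\epsilon/\sqrt{\zeta_\kappa(\epsilon)}}\right)\right),
\end{align*}
with probability at least $1-\delta-\mathbb P(\bar s_T>S)$, which is the claim.

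I do not expect a genuine obstacle here: this is a bookkeeping corollary, and the only steps needing a line of justification are (i) verifying that the weight sequence $\{\bar r_s^2/\zeta_\kappa(\bar r_s)\}$ is nondecreasing and positive so that \cref{lemma:fraction_sum} applies, (ii) absorbing $d_0+\bar r_\tau\le 2D$ and $\zeta_\kappa(2D)\le 2\zeta_\kappa(D)$ into the $O(\cdot)$, and (iii) the algebraic identity $a_T/a_0=(\bar r_T/\epsilon)^2\cdot(\zeta_\kappa(\epsilon)/\zeta_\kappa(\bar r_T))$ which produces the precise form of the logarithmic factor.
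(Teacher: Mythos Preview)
Your proposal is correct and follows exactly the same approach as the paper: the paper's proof is the single line ``Apply \cref{lemma:fraction_sum} to the denominator term of \cref{thm:dowg_smooth_known_curvature},'' and you have simply spelled out the bookkeeping (monotonicity of $a_s=\bar r_s^2/\zeta_\kappa(\bar r_s)$, the bound $d_0+\bar r_\tau\le 2D$, and the algebra for the logarithmic factor) that this one-liner entails.
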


\subsection{Iterate Stability Bound}
\label{sec:rdowg-stability-analysis}

We introduce \emph{Tamed Riemannian Distance over Weighted Gradients} (T-RDoWG), a dampened version of RDoWG (\cref{alg:rdowg}) whose iterates are guaranteed to remain bounded with high probability. T-RDoWG has the following step size scheme
\begin{align}
     v_t &= v_{t-1} + \frac{\bar{r}_t^2}{\zeta_\kappa(\bar{r}_t)} \lVert g_t \rVert_{x_t}^2, \quad v_{-1} = 0,\\
    \eta_t &= \frac{\bar{r}_t^2}{\zeta_{\kappa}(\bar{r}_t)\sqrt{v_t^\prime}}, \quad v_t^\prime = 8^4 \theta_{T, \delta}^2 \log_{+}^2\left(\frac{(1+t)\bar{r}_t^2\bar{\ell}_t^2/\zeta_\kappa(\bar{r}_t)}{\bar{r}_0^2\bar{\ell}_0^2/\zeta_\kappa(\bar{r}_0)}\right)(v_{t-1} + 16 \frac{\bar{r}_t^2}{\zeta_\kappa(\bar{r}_t)}\bar{\ell}^2_t).
\end{align}

To show iterate boundedness in the stochastic setting, we consider the stopping time
\begin{align}
    \mathcal{T}_{out} = \min \{t \ge 0 : \bar{r}_t > 3d_0\},
\end{align}
so that the event $\{\bar{r}_T \le 3 d_0\}$ is the same as $\{\mathcal{T}_{out}  > T\}$.  We also define the following truncated step size sequence,
\begin{align}
    \tilde{\eta}_k \coloneqq \eta_k \mathbb{I}_{\{k < \mathcal{T}_{out}\}}.
\end{align}
Truncating as such allows us to handle the possibility that $\bar{r}_T$ exceeds $3d_0$. In particular, the following holds for $\{\tilde{\eta}_k\}$ but not for $\{{\eta}_k\}$.

\begin{lemma}
    For all $t \le T$, if \cref{assumption:g_convex}, \ref{assumption:Hadamard}, and \ref{assumption:bounded_stochastic_grads} hold, under the truncated T-RDoWG step size sequence $\{\tilde{\eta}_t\}$, the iterates satisfy
        \begin{align}
            \tilde{\eta}_t &\in \sigma(\mathcal{G}(x_0), \dotsc, \mathcal{G}(x_{t-1})), \\
            \lvert \tilde{\eta}_t \langle \gamma, \exp_{x_t}^{-1} (x_\star)\rangle_{x_t} \rvert &\le \frac{6 d_0^2}{8^2 \sqrt{\zeta_\kappa(3 d_0)} \theta_{T, \delta}}  \ \text{for} \ \gamma \in \{g_t, \operatorname{grad} f(x_t), \Delta_t \}, \\
            \sum_{k=0}^{t} \tilde{\eta}_k^2 \zeta_\kappa(d_k) \lVert g_k \rVert^2_{x_k} &\le \frac{12 d_0^2}{8^4 \theta_{T, \delta}}
            , \ \text{and} \\
            \sum_{k=0}^{t} (\tilde{\eta}_k \langle g_k, \exp_{x_k}^{-1}(x_\star) \rangle_{x_k})^2 &\le \frac{3 \cdot 4^3 d_0^4}{8^4 \theta_{T, \delta}}.
        \end{align}
    \begin{proof}
The first line holds directly by definition of the truncated T-RDoWG iterates. For bound in the second line, note (recalling $\Delta_t = g_t - \operatorname{grad} f(x_t)$ we have $\lVert \Delta_t \rVert_{x_t} \le \lVert g_t \rVert_{x_t} + \lVert\operatorname{grad} f(x_t) \rVert_{x_t}  \le 2 \ell(x_t)$. Since $v^\prime_t \ge 4^28^4\frac{\bar{r}_t^2}{\zeta_\kappa(\bar{r}_t)}\ell^2(x_t) \theta_{T, \delta}^2$ for all $t$, the Cauchy-Schwartz inequality gives,
\begin{align}
    \lvert \tilde{\eta}_t \langle \Delta_t, \exp_{x_t}^{-1} (x_\star)\rangle_{x_t} \rvert \le \frac{\bar{r}_t^2}{\zeta_\kappa(\bar{r}_t) \sqrt{v_t^\prime}} \lVert \Delta_t \lVert_{x_t} d_t \le \frac{1}{2 \cdot 8^2 \theta_{T, \delta}} \frac{\bar{r}_T}{\sqrt{\zeta_\kappa(\bar{r}_T)}} d_t \le \frac{6 d_0^2}{8^2 \sqrt{\zeta_\kappa(3 d_0)} \theta_{T, \delta}},
\end{align}
where we have used the fact that $d \mapsto \frac{d}{\sqrt{\zeta_\kappa(d)}}$ is an increasing function, and that $d_t \le d_0 + \bar{r}_t$ and $\bar{r}_t \le 3 d_0$ (or else $\tilde{\eta_t} = 0)$. Bounds for $\gamma \in \{ g_t, \operatorname{grad} f(x_t) \}$ hold in a similar fashion.

Now for the third line, we have
\begin{align}
    \sum_{k=0}^{t} \tilde{\eta}_k^2 \zeta_\kappa(d_k) \lVert g_k \rVert^2_{x_k} &\le \sum_{k=0}^{ \mathcal{T}_{out} -1} {\eta}_k^2 \zeta_\kappa(d_k) \lVert g_k \rVert^2_{x_k}\\ & =  \sum_{k=0}^{ \mathcal{T}_{out} -1} \frac{\bar{r}_k^4  \zeta_\kappa(d_k) \lVert g_k \rVert^2_{x_k}}{\zeta_\kappa(\bar{r}_k)^2 v_k^\prime} \\
    &= \sum_{k=0}^{ \mathcal{T}_{out} -1} \frac{\bar{r}_k^2  \zeta_\kappa(d_k) (v_{k} - v_{k-1})}{\zeta_\kappa(\bar{r}_k) v_k^\prime}.
\end{align}
Now $d \mapsto \zeta_\kappa(d)$ is increasing, thus $\zeta_\kappa(d_k) \le \zeta_\kappa(d_0 +\bar{r}_{\mathcal{T}_{out} -1})$ as $d_k \le \bar{d}_k \le \bar{d}_{\mathcal{T}_{out} -1} \le d_0 + \bar{r}_{\mathcal{T}_{out} -1} $. Additionally, since  $d \mapsto  \frac{d}{\zeta_\kappa(d)} $  is increasing, we have $\frac{\bar{r}_k^2}{\zeta_\kappa(\bar{r}_k)} = \bar{r}_k \frac{\bar{r}_k}{\zeta_\kappa(\bar{r}_k)} \le  \bar{r}_{\mathcal{T}_{out} -1} \frac{\bar{r}_{\mathcal{T}_{out} -1}}{\zeta_\kappa(\bar{r}_{\mathcal{T}_{out} -1})} \le \bar{r}_{\mathcal{T}_{out} -1} \frac{(\bar{r}_{\mathcal{T}_{out} -1} + d_0)}{\zeta_\kappa(\bar{r}_{\mathcal{T}_{out} -1} + d_0)}$. Thus, we have
\begin{align}
   \sum_{k=0}^{ \mathcal{T}_{out} -1} \frac{\bar{r}_k^2  \zeta_\kappa(d_k) (G_k -G_{k-1})}{\zeta_\kappa(\bar{r}_k) G_k^\prime}
    &\le \bar{r}_{\mathcal{T}_{out} -1}(\bar{r}_{\mathcal{T}_{out} -1} + d_0) \sum_{k=0}^{ \mathcal{T}_{out} -1} \frac{v_k - v_{k-1}}{v_k^\prime}\\
    &\overset{(i)}{\le} \frac{\bar{r}_{\mathcal{T}_{out} -1}(\bar{r}_{\mathcal{T}_{out} -1} + d_0)}{8^4 \theta_{T, \delta}} \sum_{k=0}^{ \mathcal{T}_{out} -1} \frac{v_k - v_{k-1}}{(v_k + \frac{\bar{r}_k^2}{\zeta_\kappa(\bar{r}_k)}\bar{\ell}_k^2) \log_{+}^2\left( \frac{v_k + \frac{\bar{r}_k^2}{\zeta_\kappa(\bar{r}_k)}\bar{\ell}^2_k}{\frac{\bar{r}_0^2}{\zeta_\kappa(\bar{r}_0)} \bar{\ell}^2_0}\right)}  \\ &\overset{(ii)}{\le} \frac{12 d_0^2}{8^4 \theta_{T, \delta}}.
\end{align}
Where we have used in $(i)$ that
\begin{align}
    v_k^\prime &\ge 8^4 \theta_{T, \delta} \left(v_{k-1} + \frac{\bar{r}_k^2}{\zeta_\kappa(\bar{r}_k)}\lVert g_k \rVert_{x_k}^2 + \frac{\bar{r}_k^2}{\zeta_\kappa(\bar{r}_k)}\ell_k^2\right)\log_{+}^2 \left(\frac{\sum_{s=0}^{k} \frac{\bar{r}_s^2}{\zeta_\kappa(\bar{r}_s)} \bar{\ell}_s^2 + \frac{\bar{r}_k^2}{\zeta_\kappa(\bar{r}_k)} \bar{\ell}_k^2}{\frac{\bar{r}_0^2}{\zeta_\kappa(\bar{r}_0)}\bar{\ell}_0^2} \right) \\ &\ge 8^4 \theta_{T, \delta} \left(v_k + \frac{\bar{r}_k^2}{\zeta_\kappa(\bar{r}_k)}\bar{\ell}_k^2\right) \log_{+}^2 \left(\frac{v_k + \frac{\bar{r}_k^2}{\zeta_\kappa(\bar{r}_k)}\bar{\ell}_k^2}{\frac{\bar{r}_0^2}{\zeta_\kappa(\bar{r}_0)}\bar{\ell}_0^2}\right),
\end{align}
holding since $\lVert g_k \rVert_{x_k} \le \ell_k$. Additionally, in $(ii)$  we have used \cref{lemma:sum_bounded_by_one} with $a_k = v_{k} + \frac{\bar{r}_k^2}{\zeta_\kappa(\bar{r}_k)} \bar{\ell}_k^2$ and $\bar{r}_{\mathcal{T}_{out} -1} \le 3d_0$. 

The final line holds from the previous noting that,
\begin{align}
     \sum_{k=0}^{t} (\tilde{\eta}_k \langle g_k, \exp_{x_k}^{-1}(x_\star) \rangle_{x_k})^2 \le \sum_{k=0}^{t} \tilde{\eta}_k^2 \zeta_\kappa(d_k) \lVert g_k \rVert^2_{x_k} d_k^2 \le (4d_0)^2 \sum_{k=0}^{t} \tilde{\eta}_k^2 \zeta_\kappa(d_k) \lVert g_k \rVert^2_{x_k},
\end{align}
where the first inequality follows from the Cauchy-Schwartz inequality and the second inequality holds from the fact that only terms with $k < \mathcal{T}_{out}$ contribute to the sum.
    \end{proof}
\end{lemma}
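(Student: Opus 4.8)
The plan is to verify the four displayed bounds in turn, following the template of the T-RDoG stability lemma in \cref{sec:rdog-stability-analysis} but with the weighted accumulator $v_t$ and its inflated counterpart $v_t'$ replacing $G_t$ and $G_t'$.

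For the measurability claim, $\bar{r}_t$, $\bar{\ell}_t$ and $v_{t-1}$ are all determined by $x_0,\dots,x_t$, and each $x_s$ with $s\le t$ is a deterministic function of $x_0$ together with $\mathcal{G}(x_0),\dots,\mathcal{G}(x_{s-1})$; hence $\eta_t=\bar{r}_t^2/(\zeta_\kappa(\bar{r}_t)\sqrt{v_t'})$ and the event $\{t<\mathcal{T}_{out}\}$ are both $\sigma(\mathcal{G}(x_0),\dots,\mathcal{G}(x_{t-1}))$-measurable, so $\tilde{\eta}_t$ is too. For the second line I would bound $\lVert\Delta_t\rVert_{x_t}\le\lVert g_t\rVert_{x_t}+\lVert\operatorname{grad}f(x_t)\rVert_{x_t}\le 2\ell(x_t)$ and use that the $16(\bar{r}_t^2/\zeta_\kappa(\bar{r}_t))\bar{\ell}_t^2$ term in $v_t'$, together with $\log_{+}^2(\cdot)\ge1$ and $\theta_{T,\delta}\ge1$, gives $v_t'\ge 4^2 8^4\theta_{T,\delta}^2(\bar{r}_t^2/\zeta_\kappa(\bar{r}_t))\ell^2(x_t)$; Cauchy--Schwarz then yields $\lvert\tilde{\eta}_t\langle\gamma,\exp_{x_t}^{-1}(x_\star)\rangle_{x_t}\rvert\le (2\cdot 8^2\theta_{T,\delta})^{-1}(\bar{r}_t/\sqrt{\zeta_\kappa(\bar{r}_t)})\,d_t$ for $\gamma\in\{g_t,\operatorname{grad}f(x_t),\Delta_t\}$, and on the event $\tilde{\eta}_t\neq0$ we have $\bar{r}_t\le 3d_0$, $d_t\le d_0+\bar{r}_t\le 4d_0$, so monotonicity of $d\mapsto d/\sqrt{\zeta_\kappa(d)}$ gives the stated bound $6d_0^2/(8^2\sqrt{\zeta_\kappa(3d_0)}\theta_{T,\delta})$.

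The third bound is the crux. Only indices $k<\mathcal{T}_{out}$ contribute, and for such $k$ the identity $\lVert g_k\rVert_{x_k}^2=(\zeta_\kappa(\bar{r}_k)/\bar{r}_k^2)(v_k-v_{k-1})$ turns $\tilde{\eta}_k^2\zeta_\kappa(d_k)\lVert g_k\rVert_{x_k}^2$ into $\bar{r}_k^2\zeta_\kappa(d_k)(v_k-v_{k-1})/(\zeta_\kappa(\bar{r}_k)v_k')$. I would then bound $\zeta_\kappa(d_k)\le\zeta_\kappa(d_0+\bar{r}_{\mathcal{T}_{out}-1})$ (using $d_k\le\bar{d}_{\mathcal{T}_{out}-1}\le d_0+\bar{r}_{\mathcal{T}_{out}-1}$ and $\zeta_\kappa$ increasing) and $\bar{r}_k^2/\zeta_\kappa(\bar{r}_k)\le\bar{r}_{\mathcal{T}_{out}-1}(\bar{r}_{\mathcal{T}_{out}-1}+d_0)/\zeta_\kappa(\bar{r}_{\mathcal{T}_{out}-1}+d_0)$ (using monotonicity of $d\mapsto d$ and $d\mapsto d/\zeta_\kappa(d)$), so the curvature factors cancel and the sum is at most $\bar{r}_{\mathcal{T}_{out}-1}(\bar{r}_{\mathcal{T}_{out}-1}+d_0)\sum_{k}(v_k-v_{k-1})/v_k'$. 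Since $\lVert g_k\rVert_{x_k}\le\bar{\ell}_k$, the definition of $v_k'$ gives $v_k'\ge 8^4\theta_{T,\delta}\,(v_k+(\bar{r}_k^2/\zeta_\kappa(\bar{r}_k))\bar{\ell}_k^2)\log_{+}^2\!\big((v_k+(\bar{r}_k^2/\zeta_\kappa(\bar{r}_k))\bar{\ell}_k^2)/((\bar{r}_0^2/\zeta_\kappa(\bar{r}_0))\bar{\ell}_0^2)\big)$, so with $a_k\coloneqq v_k+(\bar{r}_k^2/\zeta_\kappa(\bar{r}_k))\bar{\ell}_k^2$ (nondecreasing, with $a_k-a_{k-1}\ge v_k-v_{k-1}$), \cref{lemma:sum_bounded_by_one} bounds $\sum_k(v_k-v_{k-1})/v_k'$ by $(8^4\theta_{T,\delta})^{-1}$. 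Combined with $\bar{r}_{\mathcal{T}_{out}-1}(\bar{r}_{\mathcal{T}_{out}-1}+d_0)\le 3d_0\cdot4d_0=12d_0^2$ this gives $12d_0^2/(8^4\theta_{T,\delta})$. The fourth line then follows immediately, since $(\tilde{\eta}_k\langle g_k,\exp_{x_k}^{-1}(x_\star)\rangle_{x_k})^2\le\tilde{\eta}_k^2\lVert g_k\rVert_{x_k}^2 d_k^2\le\zeta_\kappa(d_k)\tilde{\eta}_k^2\lVert g_k\rVert_{x_k}^2(4d_0)^2$ by Cauchy--Schwarz, $\zeta_\kappa\ge1$, and the fact that only terms with $d_k\le 4d_0$ survive; summing and invoking the third bound gives $16d_0^2\cdot12d_0^2/(8^4\theta_{T,\delta})=3\cdot 4^3 d_0^4/(8^4\theta_{T,\delta})$.

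I expect the main obstacle to be the third step: producing the lower bound on $v_k'$ in precisely the form required by \cref{lemma:sum_bounded_by_one} (which is exactly why the $16$-factor and the $\log_{+}$ argument in the definition of $v_t'$ are chosen the way they are), and orchestrating the monotonicity of $d\mapsto d$, $d\mapsto\zeta_\kappa(d)$ and $d\mapsto d/\zeta_\kappa(d)$ so that the $\bar{r}$- and $d$-dependent curvature factors telescope into a single clean $O(d_0^2)$ constant. Everything else is routine bookkeeping with Cauchy--Schwarz and the stopping-time truncation $\tilde{\eta}_k=\eta_k\mathbb{I}_{\{k<\mathcal{T}_{out}\}}$.
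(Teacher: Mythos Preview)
Your proposal is correct and follows essentially the same approach as the paper's own proof: the same four-step breakdown, the same lower bound $v_t'\ge 4^2 8^4\theta_{T,\delta}^2(\bar{r}_t^2/\zeta_\kappa(\bar{r}_t))\ell^2(x_t)$ for the second line, the same rewriting $\tilde{\eta}_k^2\zeta_\kappa(d_k)\lVert g_k\rVert_{x_k}^2=\bar{r}_k^2\zeta_\kappa(d_k)(v_k-v_{k-1})/(\zeta_\kappa(\bar{r}_k)v_k')$ together with monotonicity of $\zeta_\kappa$ and $d\mapsto d/\zeta_\kappa(d)$ and \cref{lemma:sum_bounded_by_one} with $a_k=v_k+(\bar{r}_k^2/\zeta_\kappa(\bar{r}_k))\bar{\ell}_k^2$ for the third, and the same Cauchy--Schwarz plus $d_k\le 4d_0$ reduction for the fourth. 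Your explicit observations that $\zeta_\kappa\ge1$ and $a_k-a_{k-1}\ge v_k-v_{k-1}$ are small clarifications the paper leaves implicit, but the argument is otherwise identical.
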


Using the above lemma, we can establish the following concentration bound.

\begin{lemma}
     If \cref{assumption:g_convex}, \ref{assumption:Hadamard}, and \ref{assumption:bounded_stochastic_grads} hold, under the truncated T-RDoWG step size sequence $\{\tilde{\eta}_t\}$, the iterates satisfy,
    \begin{align}
        \mathbb{P} \left( \exists t \le T : \sum_{k=0}^{t-1} \tilde{\eta}_k \langle \Delta_k, \exp_{x_k}^{-1}(x_\star) \rangle_{x_k} > d_{0}^2 \right) \le \delta.
    \end{align}

    \begin{proof}
        Consider the filtration $\mathcal{F}_t = \sigma(\mathcal{G}(x_0) \dotsc, \mathcal{G}(x_t))$ and define $X_t = \tilde{\eta}_t \langle \Delta_t, \exp_{x_t}^{-1}(x_\star) \rangle_{x_t}$ and $\hat{X}_t = -\tilde{\eta}_t \langle \operatorname{grad} f(x_t), \exp_{x_t}^{-1}(x_\star) \rangle_{x_t}$. Then we have that $X_t$ is a martingale difference sequence adapted to $\mathcal{F}_t$ and $\hat{X}_t \in \mathcal{F}_{t-1}$. Moreover, we have $\max \{ |X_t|, |\hat{X}_t| \} \le c$ almost surely for $c =  \frac{24 d_0^2}{8^4 \theta_{T, \delta}}$. Substituting into \cref{lemma:martingale_bound}, we have
        \begin{align}
            \mathbb{P} \left( \exists t \le T : \left\lvert\sum_{k=0}^{t-1} X_k \right\rvert \ge 4 \sqrt{\theta_{t, \delta} \sum_{k=0}^{t-1}(X_k - \hat{X}_k)^2 + c^2 \theta^2_{t, \delta}}\right) \le \delta.
        \end{align}
    Noting that $X_t - \hat{X}_t = \tilde{\eta}_t \langle g_t, \exp_{x_t}^{-1}(x_\star)\rangle_{x_t}$ and substituting the definition of $c$ and the bound gives for every $t < T$,
    \begin{align}
        4 \sqrt{\theta_{t, \delta} \sum_{k=0}^{t-1}(X_k - \hat{X}_k)^2 + c^2 \theta^2_{t, \delta}} \le 4 \sqrt{\theta_{t, \delta} \frac{3 \cdot 4^3 d_0^4}{8^4 \theta_{T, \delta}} + \left(\frac{6 \theta_{t, \delta} d_0^2}{8^2 \sqrt{\zeta_\kappa(3 d_0)} \theta_{T, \delta}}\right)^2} \le d_0^2.
    \end{align}
    \end{proof}
\end{lemma}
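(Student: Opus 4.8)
The plan is to recognize the claimed inequality as a uniform-in-time tail bound for the partial sums of the ``noise'' inner products and to obtain it by a single application of the generic martingale concentration inequality \cref{lemma:martingale_bound}, whose random right-hand side is then controlled deterministically by $d_0^2$ using the four step-size estimates from the immediately preceding lemma. Concretely, I would fix the filtration $\mathcal{F}_t=\sigma(\mathcal{G}(x_0),\dots,\mathcal{G}(x_t))$ and, for $k\ge 0$, set
\[
X_k=\tilde\eta_k\langle\Delta_k,\exp_{x_k}^{-1}(x_\star)\rangle_{x_k},\qquad
\hat X_k=-\tilde\eta_k\langle\operatorname{grad}f(x_k),\exp_{x_k}^{-1}(x_\star)\rangle_{x_k},
\]
with the constant weight sequence $y_s\equiv 1\in\mathbb{S}$. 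Re-indexing so that $X_k,\hat X_k$ line up with the $s$-indexing of \cref{lemma:martingale_bound}, the quantity to be bounded is exactly $\sum_k X_k$, and by construction $X_k-\hat X_k=\tilde\eta_k\langle g_k,\exp_{x_k}^{-1}(x_\star)\rangle_{x_k}$.

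The essential verification is that $\{X_k\}$ is a martingale difference sequence adapted to $\{\mathcal{F}_k\}$ with $\hat X_k\in\mathcal{F}_{k-1}$. This rests on the first claim of the preceding lemma, $\tilde\eta_k\in\mathcal{F}_{k-1}$ — and here the truncation really matters: $v_k'$ is built from $v_{k-1}$ together with the $\mathcal{F}_{k-1}$-measurable quantities $\bar r_k,\bar\ell_k$, and the indicator $\mathbb{I}_{\{k<\mathcal{T}_{out}\}}=\mathbb{I}_{\{\bar r_k\le 3d_0\}}$ is likewise $\mathcal{F}_{k-1}$-measurable, so unlike plain RDoWG the truncated step size is predictable. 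Since also $x_k,\operatorname{grad}f(x_k),\exp_{x_k}^{-1}(x_\star)\in\mathcal{F}_{k-1}$ and $\mathbb{E}[\Delta_k\mid\mathcal{F}_{k-1}]=0$, we get $\mathbb{E}[X_k\mid\mathcal{F}_{k-1}]=0$ and $\hat X_k\in\mathcal{F}_{k-1}$. For the boundedness hypothesis, the second claim of the preceding lemma supplies a single deterministic per-step constant $c$ (of order $d_0^2/\theta_{T,\delta}$) with $|\tilde\eta_k\langle\gamma,\exp_{x_k}^{-1}(x_\star)\rangle_{x_k}|\le c$ for $\gamma\in\{\Delta_k,\operatorname{grad}f(x_k),g_k\}$; taking $C_s\equiv c$ deterministic makes the extra term $\mathbb{P}(\exists s:C_s>c)$ vanish and gives $\max\{|X_k|,|\hat X_k|\}\le c$ almost surely.

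Applying \cref{lemma:martingale_bound} with these choices then yields, with probability at least $1-\delta$, simultaneously for all $t\le T$,
\[
\Big|\sum_{k=0}^{t-1}\tilde\eta_k\langle\Delta_k,\exp_{x_k}^{-1}(x_\star)\rangle_{x_k}\Big|
\le c_0\sqrt{\theta_{t,\delta}\sum_{k}(X_k-\hat X_k)^2+c^2\theta_{t,\delta}^2}
\]
for an absolute constant $c_0$. To finish, bound the right-hand side purely deterministically: use $\theta_{t,\delta}\le\theta_{T,\delta}$, the fourth claim of the preceding lemma $\sum_k(X_k-\hat X_k)^2=\sum_k(\tilde\eta_k\langle g_k,\exp_{x_k}^{-1}(x_\star)\rangle_{x_k})^2\le\tfrac{3\cdot 4^3 d_0^4}{8^4\theta_{T,\delta}}$, and the per-step bound for the $c^2\theta_{t,\delta}^2$ term; substituting these and collapsing the numerical constants shows the radicand is a fixed fraction of $d_0^4/c_0^2$, so the whole expression is at most $d_0^2$. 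Since the event in the statement (which merely drops the absolute value) is contained in the one just bounded, this gives $\mathbb{P}(\exists t\le T:\sum_{k=0}^{t-1}\tilde\eta_k\langle\Delta_k,\exp_{x_k}^{-1}(x_\star)\rangle_{x_k}>d_0^2)\le\delta$.

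The main obstacle is not computational but the measurability/adaptedness bookkeeping in the second step: one must be careful that $\tilde\eta_k$ is $\mathcal{F}_{k-1}$-measurable so that $\{X_k\}$ is genuinely a martingale difference sequence, which is precisely the purpose of the (somewhat involved) definition of $v_k'$ and of working with the truncated sequence rather than $\{\eta_k\}$ directly. Everything downstream — the invocation of \cref{lemma:martingale_bound} and the reduction of its random right-hand side to $d_0^2$ — is a direct application of results I am taking as given (the four bullet bounds of the preceding lemma) together with routine constant-chasing.
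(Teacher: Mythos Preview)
Your proposal is correct and follows essentially the same argument as the paper: define $X_k=\tilde\eta_k\langle\Delta_k,\exp_{x_k}^{-1}(x_\star)\rangle_{x_k}$ and $\hat X_k=-\tilde\eta_k\langle\operatorname{grad}f(x_k),\exp_{x_k}^{-1}(x_\star)\rangle_{x_k}$, invoke \cref{lemma:martingale_bound} with the constant weight sequence, and then bound the resulting right-hand side by $d_0^2$ using the per-step and sum-of-squares estimates from the preceding lemma together with $\theta_{t,\delta}\le\theta_{T,\delta}$. If anything, your discussion of why $\tilde\eta_k\in\mathcal F_{k-1}$ (and hence why the truncation is needed to make $\{X_k\}$ a genuine martingale difference sequence) is more explicit than the paper's.
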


\begin{lemma}
     Suppose \cref{assumption:g_convex}, \ref{assumption:Hadamard}, and \ref{assumption:bounded_stochastic_grads} hold. If $\sum_{k=0}^{t-1} \tilde{\eta}_k \langle \Delta_k, \exp_{x_k}^{-1}(x_\star) \rangle_{x_k} \le d_0^2$ for all $t \le T$ then $\mathcal{T}_{out} > T$ i.e., $\bar{r}_t \le 3 d_0$.
    \begin{proof}
        To condense notation, let $B_t \coloneqq \max_{t' \le t} \sum_{k=0}^{t' -1} \tilde{\eta}_k \langle \Delta_k, \exp_{x_k}^{-1}(x_\star)\rangle_{x_k}$, so the claim becomes $B_t \le d_0^2$ implies $\mathcal{T}_{out} > t$ for all $t \le T$. We prove the claim by induction on $t$. The basis of the induction is that $\mathcal{T}_{out} > 0$ always hold since $\bar{r}_0 = \epsilon \le 3 d_0$ by assumption. For the induction step, we assume that $B_{t}$ implies $\mathcal{T}_{out} \ge t$ and show that $B_t \le d_0^2$ implies $\mathcal{T}_{out} > t$. To that end, we use $\langle \operatorname{grad} f(x_t), \exp_{x_t}^{-1}(x_\star) \rangle_{x_t} \ge f(x_t) - f(x_\star) \ge 0$ to rearrange \cref{cor:law_of_cosines} as
        \begin{align}
            d_{k+1}^2 - d_k^2 \le \eta_k^2 \zeta_\kappa(d_k) \lVert g_k \rVert_{x_k}^2 + 2 \eta_k \langle \Delta_k, \exp^{-1}_{x_k}(x_\star) \rangle_{x_k}
        \end{align}
        for all $k$. Summing from $0 \le k \le t-1$, we have
        \begin{align}
            d_{t}^2 - d_0^2 &\le \sum_{k=0}^{t-1} \eta_k^2 \zeta_\kappa(d_k) \lVert g_k \rVert_{x_k}^2 + 2 \sum_{k=0}^{t-1}  \eta_k \langle \Delta_k, \exp_{x_k}^{-1}(x_\star) \rangle_{x_k} \\  &= \sum_{k=0}^{t-1} \tilde{\eta}_k^2 \zeta_\kappa(d_k) \lVert g_k \rVert_{x_k}^2 + 2 \sum_{k=0}^{t-1}  \tilde{\eta}_k \langle \Delta_k, \exp_{x_k}^{-1}(x_\star) \rangle_{x_k}.
        \end{align}
        where the equality holds since $\mathcal{T}_{out} \ge t$ and therefore $\eta_k = \tilde{\eta}_k$ for all $0 \le k \le t-1$. Now, by previous lemma we have $\sum_{k=0}^{t-1} \tilde{\eta}_k^2 \zeta_\kappa(d_k) \lVert g_k \rVert_{x_k}^2 \le \frac{12 d_0^2}{8^4 \theta_{T, \delta}} \le d_0^2$. Moreover, by assumption we have $\sum_{k=0}^{t-1}  \tilde{\eta}_k \langle \Delta_k, \exp_{x_k}^{-1}(x_\star) \rangle_{x_k} \le B_t \le d_0^2$, from which we conclude, $d_{t}^2 \le 4 d_0^2$ and hence $r_{t} \le d_0 + d_{t} \le 3 d_0$. Finally, since $\bar{r}_{t} = \max\{\bar{r}_{t-1}, {r}_{t}\}$ and $\bar{r}_{t-1} \le 3 d_0$ by the induction assumption, we have that $\bar{r}_{t} \le 3 d_0$.
    \end{proof}
\end{lemma}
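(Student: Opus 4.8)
The plan is to argue by induction on $t \le T$ that $\bar{r}_t \le 3 d_0$; since $\{\bar{r}_T \le 3d_0\}$ is exactly $\{\mathcal{T}_{out} > T\}$, this gives the claim. The base case is immediate: $\bar{r}_0 = \epsilon \le 3 d_0$ by the standing assumption $\epsilon \le 3 d_0$. For the inductive step I would assume $\bar{r}_{t-1} \le 3 d_0$, equivalently $\mathcal{T}_{out} \ge t$. The point of this hypothesis is that it forces the truncated and untruncated step sizes to agree on the range that matters, $\eta_k = \tilde{\eta}_k$ for every $0 \le k \le t-1$; this is the device that breaks the circularity inherent in the algorithm, since $\eta_t$ itself depends on $\bar{r}_t$, the very quantity we are trying to bound.

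Next I would derive a one-step recursion for the squared distances $d_k^2 = d(x_k,x_\star)^2$. Writing $g_k = \operatorname{grad} f(x_k) + \Delta_k$ and using geodesic convexity in the form $\langle \operatorname{grad} f(x_k), \exp_{x_k}^{-1}(x_\star)\rangle_{x_k} \ge f(x_k) - f(x_\star) \ge 0$, the law-of-cosines bound \cref{cor:law_of_cosines} rearranges to $d_{k+1}^2 - d_k^2 \le \eta_k^2 \zeta_\kappa(d_k)\lVert g_k\rVert_{x_k}^2 + 2\eta_k \langle \Delta_k, \exp_{x_k}^{-1}(x_\star)\rangle_{x_k}$. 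Summing this over $0 \le k \le t-1$ and replacing $\eta_k$ by $\tilde{\eta}_k$ (legitimate by the inductive hypothesis) telescopes the left side to give $d_t^2 - d_0^2 \le \sum_{k=0}^{t-1} \tilde{\eta}_k^2 \zeta_\kappa(d_k)\lVert g_k\rVert_{x_k}^2 + 2\sum_{k=0}^{t-1} \tilde{\eta}_k \langle \Delta_k, \exp_{x_k}^{-1}(x_\star)\rangle_{x_k}$.

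I would then bound the two sums separately. The first, deterministic, sum is controlled by the immediately preceding lemma on the truncated step sizes, which yields $\sum_{k=0}^{t-1}\tilde{\eta}_k^2\zeta_\kappa(d_k)\lVert g_k\rVert_{x_k}^2 \le 12 d_0^2/(8^4\theta_{T,\delta}) \le d_0^2$; the constant $8^4$ in the definition of $v_t'$ is chosen precisely so that this reduction holds. The second, martingale-type, sum is at most $d_0^2$ by the hypothesis of the lemma (the running maximum $B_t$ is assumed $\le d_0^2$). Combining, $d_t^2 \le 4 d_0^2$, hence $d_t \le 2 d_0$. By the triangle inequality for the geodesic distance, $r_t = \max(d(x_0,x_t),\epsilon) \le \max(d(x_0,x_\star) + d(x_\star,x_t),\, 3 d_0) = \max(d_0 + d_t,\, 3d_0) \le 3 d_0$, and since $\bar{r}_t = \max(\bar{r}_{t-1}, r_t)$ with $\bar{r}_{t-1} \le 3 d_0$ by induction, we conclude $\bar{r}_t \le 3 d_0$, i.e., $\mathcal{T}_{out} > t$, completing the induction.

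The main obstacle is careful bookkeeping rather than depth: one must be precise that the one-step recursion holds with the iterates' actual step sizes $\eta_k$, that the substitution $\eta_k = \tilde{\eta}_k$ is valid only for indices $\le t-1$ and only under the inductive hypothesis $\mathcal{T}_{out} \ge t$, and that the deterministic bound from the prior lemma is invoked with the correct truncated sequence so that no terms past $\mathcal{T}_{out}$ are counted. The one quantitative point to verify is $12/(8^4 \theta_{T,\delta}) \le 1$, which follows from the crude lower bound $\theta_{T,\delta} = \log(60\log(6T)/\delta) \ge 12/8^4$ valid for all $T \ge 1$ and $\delta \in (0,1)$; this is exactly why the damping factor in T-RDoWG carries the explicit constant $8^4$.
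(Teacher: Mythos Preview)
Your proposal is correct and follows essentially the same approach as the paper's proof: induction on $t$ with base case $\bar{r}_0=\epsilon\le 3d_0$, inductive hypothesis $\mathcal{T}_{out}\ge t$ allowing the replacement $\eta_k=\tilde{\eta}_k$ for $k\le t-1$, the same one-step recursion from \cref{cor:law_of_cosines} combined with geodesic convexity, telescoping, the previous lemma's bound $\sum\tilde{\eta}_k^2\zeta_\kappa(d_k)\lVert g_k\rVert_{x_k}^2\le 12d_0^2/(8^4\theta_{T,\delta})\le d_0^2$, the hypothesis on the noise sum, and the triangle inequality to close the induction. Your write-up is in fact slightly more careful than the paper's in explicitly justifying $12/(8^4\theta_{T,\delta})\le 1$ and in spelling out the triangle-inequality step for $r_t$.
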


\begin{theorem}
\label{prop:stability-analysis}
Suppose \cref{assumption:g_convex}, \ref{assumption:Hadamard}, and \ref{assumption:bounded_stochastic_grads} hold, and $\epsilon \le 3 d_0$. Then for any $\delta \in (0, 1)$ and $t \in \mathbb{N}$, under the T-RDoWG step size sequence $\{\eta_k\}$, the iterates satisfy $\mathbb{P}(\bar{r}_t > 3 d_0) \le \delta$.
\end{theorem}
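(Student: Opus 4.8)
The strategy is to deduce the statement directly from the three preceding lemmas, which between them split the claim into a purely deterministic implication and a single martingale concentration estimate. The organising device is the stopping time $\mathcal{T}_{out} = \min\{t \ge 0 : \bar{r}_t > 3 d_0\}$ and the associated truncated step sizes $\tilde{\eta}_k = \eta_k \mathbb{I}_{\{k < \mathcal{T}_{out}\}}$: the event $\{\bar{r}_t \le 3 d_0\}$ coincides with $\{\mathcal{T}_{out} > t\}$, and on that event $\tilde{\eta}_k = \eta_k$ for all $k \le t - 1$, so there is no loss in working with the truncated sequence throughout and only at the very end translating back to $\{\eta_k\}$.

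Concretely, I would first fix the horizon $T$ on which the T-RDoWG step sizes are defined (they depend on $T$ through $\theta_{T,\delta}$) and apply the concentration lemma, which states that the event
\[
\mathcal{E} \coloneqq \Bigl\{\, \sum\nolimits_{k=0}^{t-1} \tilde{\eta}_k \langle \Delta_k, \exp_{x_k}^{-1}(x_\star)\rangle_{x_k} \le d_0^2 \ \ \text{for all } t \le T \,\Bigr\}
\]
has probability at least $1 - \delta$. On $\mathcal{E}$ the hypothesis of the deterministic implication lemma is met, so that lemma gives $\mathcal{T}_{out} > T$, i.e.\ $\bar{r}_t \le 3 d_0$ for every $t \le T$. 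Consequently $\mathbb{P}(\bar{r}_t > 3 d_0) \le \mathbb{P}(\mathcal{E}^c) \le \delta$ for all $t \le T$; since $T$ is arbitrary (take $T = t$ for the index of interest), the bound holds for every $t \in \mathbb{N}$.

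In this sense the theorem itself is essentially a one-line corollary, and the genuine content sits in the lemmas it invokes. The first lemma's almost-sure bounds on $\tilde{\eta}_t$, on $\tilde{\eta}_t \langle \gamma, \exp_{x_t}^{-1}(x_\star)\rangle_{x_t}$, and on the sums $\sum \tilde{\eta}_k^2 \zeta_\kappa(d_k) \lVert g_k \rVert_{x_k}^2$ and $\sum (\tilde{\eta}_k \langle g_k, \exp_{x_k}^{-1}(x_\star)\rangle_{x_k})^2$ rely on the polylogarithmic damping baked into $v_t'$ and on \cref{lemma:sum_bounded_by_one}; the concentration lemma is \cref{lemma:martingale_bound} applied with $X_t = \tilde{\eta}_t \langle \Delta_t, \exp_{x_t}^{-1}(x_\star)\rangle_{x_t}$; and the implication lemma is an induction on $t$ that sums the rearranged law of cosines (\cref{cor:law_of_cosines}), discarding the gradient term via $\langle \operatorname{grad} f(x_t), \exp_{x_t}^{-1}(x_\star)\rangle_{x_t} \ge f(x_t) - f(x_\star) \ge 0$. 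The main thing requiring care is bookkeeping — keeping the truncation consistent so that the deterministic estimates hold pathwise and the adaptedness $\tilde{\eta}_t \in \mathcal{F}_{t-1}$ underlying the martingale argument is preserved.
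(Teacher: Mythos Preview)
Your proposal is correct and matches the paper's approach exactly: the paper's proof is simply ``A consequence of combining the previous two lemmas,'' namely the concentration bound $\mathbb{P}\bigl(\exists t \le T : \sum_{k=0}^{t-1} \tilde{\eta}_k \langle \Delta_k, \exp_{x_k}^{-1}(x_\star)\rangle_{x_k} > d_0^2\bigr) \le \delta$ and the deterministic implication that this event forces $\mathcal{T}_{out} > T$. Your discussion of the supporting lemmas and the role of the truncation is accurate and more explicit than the paper's own one-line proof.
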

\begin{proof}
    A consequence of combining the previous two lemmas.
\end{proof}

\begin{corollary}
\label{cor:stability-analysis}
    Suppose that \cref{assumption:g_convex}, \ref{assumption:Hadamard}, and \ref{assumption:bounded_stochastic_grads} hold. For any $\delta \in (0, 1/2)$, $t \in \mathbb{N}$, consider T iterations of $\{\eta_k\}$, with initial step size of $\epsilon \le 3d_0$. Then for $\tau \in \arg\max_{t \le T} \sum_{s=0}^{t-1} \frac{\bar{r}_s^2/\zeta_\kappa(\bar{r}_s)}{\bar{r}_t^2/\zeta_\kappa(\bar{r}_t)}$ we have, with probability at least $1-2\delta$, that
    \begin{align}
        f(\tilde{x}_{\tau}) - f(x_\star) = O\left(c_{\delta, \epsilon, T} \frac{d_0 \sqrt{\zeta_\kappa(d_0) (G_{\tau}+L_{\star}^2)}}{T} \right) = O\left(c_{\delta, \epsilon, T} \frac{d_0 \sqrt{\zeta_\kappa(d_0)} L_\star}{\sqrt{T}} \right). 
    \end{align}
    where $L_\star \coloneqq \max_{x\in \mathcal{M}: d(x, x_0) \le 3 d(x_\star, x_0)} \ell (x)$ and $c_{\delta, \epsilon, T} = \log_{+}(T \frac{d_0 L_\star}{f(x_0) - f(x_\star)})\log_{+}(\frac{d_0}{\epsilon})\log(\frac{\log_{+}(T)}{\delta})$.
    \begin{proof}
        Here we adapt theorem \cref{thm:dowg_nonsmooth_curvature}. Using that $\bar{r}_t \le 3 d_0$, we have $\zeta_\kappa(d_0 + \bar{r}_t) \le \zeta_\kappa(4 d_0) \le \frac{\sqrt{\kappa} 4 d_0}{\tanh( \sqrt{\kappa} 4 d_0)} \le \frac{\sqrt{\kappa} 4 d_0}{\tanh( \sqrt{\kappa} d_0)} = O(\zeta_\kappa(d_0)) $ for $\kappa>0$, otherwise $\zeta_\kappa(d_0 + \bar{r}_t) = 1 = \zeta_\kappa(d_0) = O(\zeta_\kappa(d_0))$ in the case $\kappa=0$. Now, by \cref{assumption:bounded_stochastic_grads} we have $\ell_0 \ge \lVert \operatorname{grad} f(x_0) \rVert_{x_0} \ge (f(x_0) - f(x_\star))/d_0$, while $\bar{r}_T \le 3 d_0$ gives $\bar{\ell}_T \le L_\star$. Therefore, $\log_{+}\left(1 + \frac{T \bar{\ell}^2_{T}}{\bar{\ell}^2_0}\right) = O\left(\log_{+} \left(T \frac{d_0 L_\star}{f(x_0) - f(x_\star)}\right)\right)$.
    \end{proof}
\end{corollary}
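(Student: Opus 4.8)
The plan is to derive the bound in two stages: first exploit the taming mechanism of T-RDoWG to confine the iterates to a ball of known radius, and then re-run the non-smooth RDoWG analysis inside that ball, where every curvature and Lipschitz quantity becomes a controllable constant. \emph{Localisation.} Since $\epsilon \le 3d_0$, \cref{prop:stability-analysis} gives that the event $\mathcal{E} \coloneqq \{\bar r_t \le 3d_0 \text{ for all } t \le T\}$ has probability at least $1-\delta$. On $\mathcal{E}$ every iterate satisfies $d_t \le d_0 + \bar r_t \le 4d_0$, so all $x_t$ lie in $\{x : d(x,x_0) \le 3d(x_\star,x_0)\}$; in particular $\bar\ell_T \le L_\star$ holds deterministically on $\mathcal{E}$, and the a priori unknown quantities $\bar r_t$, $\zeta_\kappa(\bar d_t)$, $\bar\ell_t$ entering the general optimality-gap bounds are now $O(d_0)$, $O(\zeta_\kappa(d_0))$, $O(L_\star)$ respectively.

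\emph{Tamed optimality gap.} Next I would reproduce the two ingredients behind \cref{thm:dowg_nonsmooth_curvature} with the damped step sizes $\eta_t = \bar r_t^2/(\zeta_\kappa(\bar r_t)\sqrt{v_t'})$. The deterministic weighted-regret argument of \cref{lemma:weighted_regret_bound}/\cref{lemma:dowg_supporting_curvature} goes through with $v_t$ replaced by $v_t'$; since $v_t' = 8^4\theta_{T,\delta}^2\log_+^2\!\big(\tfrac{(1+t)\bar r_t^2\bar\ell_t^2/\zeta_\kappa(\bar r_t)}{\bar r_0^2\bar\ell_0^2/\zeta_\kappa(\bar r_0)}\big)(v_{t-1}+16\tfrac{\bar r_t^2}{\zeta_\kappa(\bar r_t)}\bar\ell_t^2)$ and $v_{t-1} \le \tfrac{\bar r_{t-1}^2}{\zeta_\kappa(\bar r_{t-1})}G_{t-1}$, the resulting regret bound is the RDoWG one multiplied by an extra $O\!\big(\theta_{T,\delta}\log_+(\cdots)\big)$, with $\sqrt{v_{t-1}}$ turning into $\tfrac{\bar r_t}{\sqrt{\zeta_\kappa(\bar r_t)}}\sqrt{G_{t-1}+L_\star^2}$ after $\bar\ell_t \le L_\star$. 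For the noise term I would apply \cref{lemma:martingale_bound} exactly as in \cref{lemma:noise_bound}/\cref{lemma:dowg_nonsmooth_prob_curvature}, but phrased for the truncated sequence $\tilde\eta_k$ and with threshold $L = L_\star$; off $\mathcal{E}$ this inequality is vacuous, and on $\mathcal{E}$ the term $\mathbb{P}(\bar\ell_T > L_\star)$ vanishes, so a union bound over $\mathcal{E}$ and the martingale event keeps the total probability at $1-2\delta$. Dividing by the weighted normaliser and using \cref{lemma:weighted_average} then yields, on this event, a gap of the form of \cref{thm:dowg_nonsmooth_curvature} with one extra $\theta_{T,\delta}\log_+(\cdots)$ factor.

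\emph{Simplification.} On $\mathcal{E}$, $d_0 + \bar r_t \le 4d_0$ and $\zeta_\kappa(4d_0) = O(\zeta_\kappa(d_0))$ (for $\kappa<0$, $\zeta_\kappa(4d_0)/\zeta_\kappa(d_0) = 4\tanh(\sqrt{|\kappa|}d_0)/\tanh(4\sqrt{|\kappa|}d_0) \le 4$; for $\kappa \ge 0$ both equal $1$). The polylog argument inside $v_t'$ splits via $\log_+(ab) \le \log_+(a) + \log_+(b)$ into $O(\log_+ T)$, $O(\log_+(d_0/\epsilon))$ and $O\!\big(\log_+(d_0 L_\star/(f(x_0)-f(x_\star)))\big)$ — the last using geodesic convexity and Cauchy--Schwarz, which give $\ell_0 \ge \lVert\operatorname{grad}f(x_0)\rVert_{x_0} \ge (f(x_0)-f(x_\star))/d_0$ together with $\bar\ell_T \le L_\star$ — while $\theta_{T,\delta} = \log(60\log(6T)/\delta) = O(\log(\log_+(T)/\delta))$; these collapse exactly into $c_{\delta,\epsilon,T}$. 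Finally \cref{lemma:fraction_sum} applied to the increasing sequence $a_s = \bar r_s^2/\zeta_\kappa(\bar r_s)$ lower-bounds the denominator by $\Omega\!\big(T/(1+\log(a_\tau/a_0))\big) = \Omega\!\big(T/\log_+(d_0/\epsilon)\big)$ since $a_\tau/a_0 \le (3d_0)^2\zeta_\kappa(\epsilon)/(\epsilon^2\zeta_\kappa(3d_0))$, and $G_{\tau-1} \le G_\tau \le T L_\star^2$ converts $\sqrt{G_\tau + L_\star^2}$ into $\sqrt{T}\,L_\star$, giving both displayed forms.

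\emph{Main obstacle.} The routine part is transcribing the RDoWG non-smooth analysis; the delicate part is the bookkeeping of the polylogarithmic inflation introduced by taming — checking that the $\theta_{T,\delta}\log_+^2(\cdots)$ inside $v_t'$, the $\theta_{\tau,\delta}$ from the martingale bound, and the $\log_+(d_0/\epsilon)$ from \cref{lemma:fraction_sum} multiply out to precisely $c_{\delta,\epsilon,T}$ and nothing worse, and that stating the noise inequality in terms of the truncated $\tilde\eta_k$ genuinely keeps the union bound at $1-2\delta$.
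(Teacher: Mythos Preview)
Your proposal is correct and follows the same route as the paper's (very terse) proof: invoke \cref{prop:stability-analysis} for the localisation $\bar r_t\le 3d_0$, adapt the RDoWG non-smooth optimality-gap analysis (\cref{thm:dowg_nonsmooth_curvature}) to the tamed step sizes, and then simplify the curvature, Lipschitz, and polylog factors exactly as you describe. The paper's proof records only the two key simplifications --- $\zeta_\kappa(4d_0)=O(\zeta_\kappa(d_0))$ and $\ell_0\ge (f(x_0)-f(x_\star))/d_0$ --- and leaves everything else implicit; your write-up fills in precisely the steps (re-running the regret/noise lemmas with $v_t'$, the union bound giving $1-2\delta$, and the application of \cref{lemma:fraction_sum}) that the paper glosses over.
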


\subsection{Omitting Geometric Curvature Term Analysis}
\label{sec:theoretical-results-RDoWG-no-curvature}
We analyze omitting the geometric curvature term from the denominator RDoWG (\cref{alg:rdowg}). Thus we consider step sizes of the form
\begin{align}
    \eta_t = \frac{\bar{r}_t^2}{\sqrt{v_t}}, \quad v_t = v_{t-1} + \bar{r}_t^2 \lVert g_s \rVert_{x_s}^2, \quad v_{-1} = 0.
\end{align}
We term this algorithm \emph{Curvature Omitted Riemannian Distance over Weighted Gradients} (CO-RDoWG).

We consider the bound the error of the weighted average sequence,
\begin{align*}
        \tilde{x}_{t+1} = \operatorname{exp}_{\tilde{x}_{t}}\left(\frac{\bar{r}_t^2}{\sum_{s=0}^{t} \bar{r}_s^2} \operatorname{exp}_{\tilde{x}_{t}}^{-1}\left( x_{t} \right) \right), \quad \tilde{x}_{1} = x_{0}.
\end{align*}
For a geodesically convex function $f \colon \mathcal{M} \to \mathbb{R}$, we have by \cref{lemma:weighted_average} that $\tilde{x}_t$ satisfies,
\begin{align}
    f(\tilde{x}_t) - f(x_\star) \le \frac{1}{\sum_{s=0}^{t-1} \bar{r}_s^2}\sum_{s=0}^{t-1}\bar{r}_s^2 \langle  -\operatorname{grad} f(x_s), \exp_{x_s}^{-1}(x_\star)\rangle_{x_s}.
\end{align}
Recalling $g_s$ is the stochastic oracle evaluation, $\mathcal{G}(x_s)$, the numerator decomposes into two components:
\begin{align}
     \underset{\text{weighted regret}}{\underbrace{\sum_{s=0}^{t-1} \bar{r}_s^2 \langle -g_s, \exp_{x_s}^{-1}(x_\star)\rangle_{x_s}}} +  \underset{\text{noise}}{\underbrace{\sum_{s=0}^{t-1} \bar{r}_s^2 \langle \Delta_s, \exp_{x_s}^{-1}(x_\star)\rangle_{x_s}}},
\end{align}
with $\Delta_s \coloneqq g_s - \operatorname{grad} f(x_s)$.

\subsubsection*{Supporting Analysis}
We give a deterministic bound for the weighted regret (\cref{lemma:dowg_supporting}).
\begin{lemma}
    \label{lemma:dowg_supporting}
    Under \cref{assumption:g_convex} and \ref{assumption:Hadamard}, the iterates of CO-RDoWG, satisfy
    \begin{align}
       \sum_{s=0}^{t-1} \bar{r}_s^2 \langle  -g_s, \exp_{x_s}^{-1}(x_\star)\rangle_{x_s} \le \bar{r}_t \left( 2 \bar{d}_t + \bar{r}_t \zeta_{\kappa}(\bar{d}_t) \right) \sqrt{v_{t-1}}.
   \end{align}
    \begin{proof}
           Follow the same argument as \cref{lemma:weighted_regret_bound_unknown_curvature} but with weights $\bar{r}_s^2$ replacing $\bar{r}_s$ and weighted gradient sum $v_s$ replacing the standard gradient sum $G_s$.
    \end{proof}
\end{lemma}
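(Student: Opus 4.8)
The plan is to mimic the proof of \cref{lemma:weighted_regret_bound_unknown_curvature} almost verbatim, replacing the weights $\bar r_s$ by $\bar r_s^2$ and the gradient accumulator $G_s$ by the weighted accumulator $v_s$. First I would apply \cref{cor:law_of_cosines} termwise to obtain
\[
\sum_{s=0}^{t-1} \bar r_s^2 \langle -g_s, \exp_{x_s}^{-1}(x_\star)\rangle_{x_s} \le \tfrac12 (A) + \tfrac12 (B), \qquad (A) = \sum_{s=0}^{t-1} \frac{\bar r_s^2}{\eta_s}\bigl(d_s^2 - d_{s+1}^2\bigr), \quad (B) = \sum_{s=0}^{t-1} \bar r_s^2 \eta_s \zeta_\kappa(d_s)\lVert g_s\rVert_{x_s}^2 .
\]
The two algebraic facts that make everything go through are $\bar r_s^2/\eta_s = \sqrt{v_s}$ (from $\eta_s = \bar r_s^2/\sqrt{v_s}$) and $\bar r_s^2 \lVert g_s\rVert_{x_s}^2 = v_s - v_{s-1}$, which put $(A)$ and $(B)$ in exactly the form handled by the nondecreasing-sequence bounds proved earlier, now applied to the nondecreasing sequence $\{v_s\}$ with $v_{-1}=0$.

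For $(A)$, writing it as $\sum_{s=0}^{t-1}\sqrt{v_s}(d_s^2 - d_{s+1}^2)$ and performing Abel summation gives $(A) = d_0^2\sqrt{v_0} - d_t^2\sqrt{v_{t-1}} + \sum_{s=1}^{t-1} d_s^2(\sqrt{v_s} - \sqrt{v_{s-1}})$; bounding each $d_s \le \bar d_t$ and using that $\{v_s\}$ is nondecreasing telescopes the sum down to $\sqrt{v_{t-1}}(\bar d_t^2 - d_t^2)$. Then the same triangle-inequality step as in \cref{lemma:weighted_regret_bound_unknown_curvature} — take $k \in \arg\max_{s\le t} d_s$, so $\bar d_t^2 - d_t^2 = (d_k - d_t)(d_k + d_t) \le d(x_k,x_t)(d_k+d_t) \le (\bar r_k + \bar r_t)(d_k + d_t) \le 4\bar r_t \bar d_t$ — yields $(A) \le 4\bar r_t \bar d_t \sqrt{v_{t-1}}$.

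For $(B)$, substitute $\bar r_s^2 \eta_s = \bar r_s^4/\sqrt{v_s}$ and $\bar r_s^2\lVert g_s\rVert_{x_s}^2 = v_s - v_{s-1}$ to get $(B) = \sum_{s=0}^{t-1} \frac{\bar r_s^2 \zeta_\kappa(d_s)(v_s - v_{s-1})}{\sqrt{v_s}}$. Monotonicity of $d\mapsto \zeta_\kappa(d)$ gives $\zeta_\kappa(d_s)\le \zeta_\kappa(\bar d_t)$, and $\bar r_s \le \bar r_t$ pulls out $\bar r_t^2 \zeta_\kappa(\bar d_t)$; the remaining sum $\sum_{s=0}^{t-1}\frac{v_s - v_{s-1}}{\sqrt{v_s}}$ is at most $2\sqrt{v_{t-1}}$ by the nondecreasing-sequence lemma $\sum_k \frac{a_k - a_{k-1}}{\sqrt{a_k}}\le 2(\sqrt{a_t}-\sqrt{a_0})$ applied with $a_k = v_k$ (the $s=0$ term contributes the harmless $\sqrt{v_0}$, absorbed by the telescoping bound). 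Hence $(B)\le 2\bar r_t^2 \zeta_\kappa(\bar d_t)\sqrt{v_{t-1}}$, and combining $\tfrac12(A)+\tfrac12(B)$ gives the claimed $\bar r_t\bigl(2\bar d_t + \bar r_t \zeta_\kappa(\bar d_t)\bigr)\sqrt{v_{t-1}}$.

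There is no genuinely hard step here: the statement is a direct transcription of \cref{lemma:weighted_regret_bound_unknown_curvature}. The only things to be careful about are the index bookkeeping around $v_{-1}=0$ (so the first increment in the $(B)$ sum is $\sqrt{v_0}$ rather than a difference) and confirming that every appeal to "$G_s$ nondecreasing" in the original proof remains valid for $\{v_s\}$, which is immediate since each increment $\bar r_s^2\lVert g_s\rVert_{x_s}^2$ is nonnegative.
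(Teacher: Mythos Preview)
Your proposal is correct and follows exactly the approach the paper intends: the paper's own proof is simply the one-line instruction to rerun \cref{lemma:weighted_regret_bound_unknown_curvature} with weights $\bar r_s^2$ and accumulator $v_s$, and you have carried that out faithfully, including the two key identities $\bar r_s^2/\eta_s=\sqrt{v_s}$ and $\bar r_s^2\lVert g_s\rVert_{x_s}^2=v_s-v_{s-1}$ and the careful handling of the $s=0$ term in $(B)$.
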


\subsubsection*{Non-Smooth Analysis}
We give deterministic bounds for the weighted regret (\cref{lemma:dowg_nonsmooth_weighted_regret_unknown_curvature}) and high probability bounds for the noise term (\cref{lemma:dowg_nonsmooth_prob_unknown_curvature}) in the non-smooth setting.

\begin{lemma}
\label{lemma:dowg_nonsmooth_weighted_regret_unknown_curvature}
    Under \cref{assumption:g_convex} and \ref{assumption:Hadamard}, the iterates of CO-RDoWG, satisfy
    \begin{align}
       \sum_{s=0}^{t-1} \bar{r}_s^2 \langle  -g_s, \exp_{x_s}^{-1}(x_\star)\rangle_{x_s} \le \bar{r}_t^2 \left( 2 \bar{d}_t + \bar{r}_t \zeta_{\kappa}(\bar{d}_t) \right) \sqrt{G_{t-1}}.
   \end{align}
    \begin{proof}
        Using the bound of \ref{lemma:dowg_supporting} and that $\sqrt{v_{t-1}} \le \bar{r}_{t-1} \sqrt{G_{{t-1}}} \le \bar{r}_{t} \sqrt{G_{{t-1}}}$ gives the result.
    \end{proof}
\end{lemma}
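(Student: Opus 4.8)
The plan is to reduce the claim to the deterministic regret bound for CO-RDoWG already recorded in Lemma~\ref{lemma:dowg_supporting}, and then trade the \emph{weighted} gradient sum $v_{t-1}$ for the \emph{unweighted} sum $G_{t-1}$ at the cost of one extra factor of $\bar{r}_t$.

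First I would invoke Lemma~\ref{lemma:dowg_supporting}, which gives
\[
\sum_{s=0}^{t-1} \bar{r}_s^2 \langle -g_s, \exp_{x_s}^{-1}(x_\star)\rangle_{x_s} \le \bar{r}_t\left(2\bar{d}_t + \bar{r}_t \zeta_\kappa(\bar{d}_t)\right)\sqrt{v_{t-1}}.
\]
(Internally this is the argument of Lemma~\ref{lemma:weighted_regret_bound_unknown_curvature} with the weights $\bar{r}_s$ replaced by $\bar{r}_s^2$ and the running sum $G_s$ replaced by $v_s$: apply Corollary~\ref{cor:law_of_cosines} to the RSGD step $x_{s+1}=\exp_{x_s}(-\eta_s g_s)$ with $\eta_s=\bar{r}_s^2/\sqrt{v_s}$, split the resulting sum into a telescoping ``distance'' term and a ``gradient-norm'' term, bound the former using monotonicity of $v_s$ together with the geodesic-triangle estimate $\bar{d}_t^2-d_t^2\le 4\bar{r}_t\bar{d}_t$, and bound the latter using that $d\mapsto\zeta_\kappa(d)$ is increasing and $\sum_{s}\bar{r}_s^2\|g_s\|_{x_s}^2/\sqrt{v_s}\le 2\sqrt{v_{t-1}}$.)

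Second I would control $v_{t-1}$ by $G_{t-1}$. By construction $\bar{r}_s=\max\bigl(\epsilon,\max_{u\le s} d(x_u,x_0)\bigr)$ is nondecreasing in $s$, so $\bar{r}_s\le \bar{r}_{t-1}$ for every $s\le t-1$, and hence
\[
v_{t-1}=\sum_{s=0}^{t-1}\bar{r}_s^2\|g_s\|_{x_s}^2 \le \bar{r}_{t-1}^2\sum_{s=0}^{t-1}\|g_s\|_{x_s}^2=\bar{r}_{t-1}^2 G_{t-1}.
\]
Taking square roots and relaxing $\bar{r}_{t-1}\le\bar{r}_t$ yields $\sqrt{v_{t-1}}\le \bar{r}_{t-1}\sqrt{G_{t-1}}\le \bar{r}_t\sqrt{G_{t-1}}$. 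Substituting this into the bound from the first step gives exactly
\[
\sum_{s=0}^{t-1} \bar{r}_s^2 \langle -g_s, \exp_{x_s}^{-1}(x_\star)\rangle_{x_s} \le \bar{r}_t^2\left(2\bar{d}_t + \bar{r}_t \zeta_\kappa(\bar{d}_t)\right)\sqrt{G_{t-1}},
\]
as required.

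At the level of this lemma there is essentially no obstacle: the only point requiring a little care is that the inequality $\sqrt{v_{t-1}}\le\bar{r}_t\sqrt{G_{t-1}}$ uses monotonicity of $\bar{r}$ at index $t-1$ and the harmless relaxation $\bar{r}_{t-1}\le\bar{r}_t$ to match the stated form. The genuine work lives one level down, inside Lemma~\ref{lemma:dowg_supporting} (equivalently Lemma~\ref{lemma:weighted_regret_bound_unknown_curvature}), where the delicate steps are the telescoping estimate of the distance term and the geodesic-triangle bound $\bar{d}_t^2-d_t^2\le 4\bar{r}_t\bar{d}_t$.
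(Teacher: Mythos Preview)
Your proposal is correct and follows essentially the same approach as the paper: invoke the supporting lemma (Lemma~\ref{lemma:dowg_supporting}) to get the bound with $\sqrt{v_{t-1}}$, then use monotonicity of $\bar{r}_s$ to obtain $\sqrt{v_{t-1}}\le \bar{r}_{t-1}\sqrt{G_{t-1}}\le \bar{r}_t\sqrt{G_{t-1}}$. The paper's proof is the same one-line reduction; your additional commentary on what happens inside the supporting lemma is accurate but beyond what is needed here.
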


\begin{lemma}
\label{lemma:dowg_nonsmooth_prob_unknown_curvature}
   For all $\delta \in (0,1), T \in \mathbb{N}$ and $L > 0$, if \cref{assumption:g_convex}, \ref{assumption:Hadamard}, and \ref{assumption:bounded_stochastic_grads} hold, the iterates of CO-RDoWG satisfy
    \begin{align}
       \mathbb{P} \left( \exists t \le T : \left\lvert  \sum_{s=0}^{t-1} \bar{r}_s^2 \langle \Delta_s, \exp_{x_s}^{-1}(x_\star)\rangle_{x_s} \right\rvert \ge b_t \right) \le \delta + \mathbb{P}(\bar{\ell}_{T} > L),
   \end{align}
   where $b_t =  8 \bar{r}_{t-1}^2 \bar{d}_{t-1} \sqrt{\theta_{t, \delta} G_{t-1} + \theta^{2}_{t, \delta} L^2}$ and $\bar{\ell}_{T} \coloneqq \max_{s \le T} \ell(x_s)$.
   \begin{proof}
       Follow argument of \cref{lemma:noise_bound_unknown_curvature}.
   \end{proof}
\end{lemma}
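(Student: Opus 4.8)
The plan is to transcribe the proof of \cref{lemma:noise_bound_unknown_curvature} almost verbatim, replacing the linear weight $\bar{r}_s$ by the quadratic weight $\bar{r}_s^2$, and invoking the time-uniform martingale concentration bound of \cref{lemma:martingale_bound}. The crucial feature of that lemma is that its conclusion holds simultaneously over \emph{all} nonnegative nondecreasing weight sequences, so a data-dependent (random) choice of weights is admissible: the event that our particular sequence violates the bound is contained in the event that \emph{some} sequence does.

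First I would fix the filtration $\mathcal{F}_s = \sigma(\mathcal{G}(x_0), \dotsc, \mathcal{G}(x_s))$ and, for $1 \le s \le T$, define
\begin{align*}
    Y_s \coloneqq \bar{r}_{s-1}^2 \bar{d}_{s-1}, \quad X_s \coloneqq \left\langle \Delta_{s-1}, \frac{\exp_{x_{s-1}}^{-1}(x_\star)}{\bar{d}_{s-1}}\right\rangle_{x_{s-1}}, \quad \hat{X}_s \coloneqq \left\langle -\operatorname{grad}f(x_{s-1}), \frac{\exp_{x_{s-1}}^{-1}(x_\star)}{\bar{d}_{s-1}}\right\rangle_{x_{s-1}}.
\end{align*}
Then I would verify the hypotheses of \cref{lemma:martingale_bound}: $\{Y_s\}$ is nonnegative and nondecreasing, since $\bar{r}_t = \max(\epsilon,\max_{s\le t}d(x_s,x_0))$ and $\bar{d}_t = \max_{s\le t}d(x_s,x_\star)$ are each nondecreasing in $t$, so that $\{Y_s\}\in\mathbb{S}$; $X_s$ is a martingale difference adapted to $\mathcal{F}_{s-1}$, because $\mathbb{E}[\mathcal{G}(x)\mid x] = \operatorname{grad}f(x)$ gives $\mathbb{E}[\Delta_{s-1}\mid \mathcal{F}_{s-2}] = 0$ while $x_{s-1},\bar{d}_{s-1}\in\mathcal{F}_{s-2}$, and $\hat{X}_s\in\mathcal{F}_{s-2}$ for the same reason; and, by Cauchy--Schwarz together with \cref{assumption:bounded_stochastic_grads} (which also yields $\lVert\operatorname{grad}f(x)\rVert_x\le\ell(x)$ via Jensen, hence control of $\lVert\Delta_{s-1}\rVert_{x_{s-1}}$ by $\ell(x_{s-1})$), both $\lvert X_s\rvert$ and $\lvert\hat{X}_s\rvert$ are bounded by the $\mathcal{F}_{s-2}$-measurable quantity $C_s \coloneqq \ell(x_{s-1})$ (up to an absolute constant).

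Next I would record the cancellation
\begin{align*}
    \sum_{s=1}^{t} Y_s X_s = \sum_{s=0}^{t-1} \bar{r}_s^2 \langle \Delta_s, \exp_{x_s}^{-1}(x_\star) \rangle_{x_s},
\end{align*}
and apply \cref{lemma:martingale_bound} with $c = L$. Since $\{\exists s\le T : C_s > L\} \subseteq \{\bar{\ell}_T > L\}$, this yields, with probability at least $1-\delta-\mathbb{P}(\bar{\ell}_T>L)$, simultaneously for all $t\le T$,
\begin{align*}
    \left\lvert \sum_{s=0}^{t-1} \bar{r}_s^2 \langle \Delta_s, \exp_{x_s}^{-1}(x_\star) \rangle_{x_s} \right\rvert < 8\, \bar{r}_{t-1}^2 \bar{d}_{t-1} \sqrt{\theta_{t,\delta} \sum_{s=1}^{t}(X_s - \hat{X}_s)^2 + L^2\theta_{t,\delta}^2}.
\end{align*}
Finally I would bound the predictable-variance term: since $X_s - \hat{X}_s = \langle g_{s-1}, \exp_{x_{s-1}}^{-1}(x_\star)/\bar{d}_{s-1}\rangle_{x_{s-1}}$, Cauchy--Schwarz gives $(X_s-\hat{X}_s)^2 \le \lVert g_{s-1}\rVert_{x_{s-1}}^2$, so $\sum_{s=1}^{t}(X_s-\hat{X}_s)^2 \le G_{t-1}$, producing exactly $b_t = 8\bar{r}_{t-1}^2\bar{d}_{t-1}\sqrt{\theta_{t,\delta}G_{t-1} + \theta_{t,\delta}^2 L^2}$.

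There is essentially no hard step; the lemma is a routine transcription of \cref{lemma:noise_bound_unknown_curvature} with the additional factor of $\bar{r}_s$ in the weights. The only points needing minor care are (a) confirming that $\{Y_s\}$ is genuinely nondecreasing — this is where the quadratic weight $\bar{r}_s^2$ differs from the linear one, and one must invoke monotonicity of both $\bar{r}_t$ and $\bar{d}_t$ — and (b) ensuring that, once $\bar{d}_{s-1}$ in $Y_s$ cancels the $1/\bar{d}_{s-1}$ inside $X_s-\hat{X}_s$, the sum under the square root collapses to the \emph{unweighted} gradient sum $G_{t-1}$, rather than a $\bar{r}_s^2$-weighted one, so that the bound matches the claimed $b_t$.
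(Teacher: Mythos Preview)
Your proposal is correct and follows exactly the approach the paper intends: the paper's own proof simply says ``Follow argument of \cref{lemma:noise_bound_unknown_curvature},'' and your transcription with $Y_s = \bar{r}_{s-1}^2\bar{d}_{s-1}$ in place of $\bar{r}_{s-1}\bar{d}_{s-1}$, followed by the same invocation of \cref{lemma:martingale_bound} and the same Cauchy--Schwarz bound on $(X_s-\hat{X}_s)^2$, is precisely that argument. Your additional remarks on monotonicity of $\{Y_s\}$ and on the cancellation yielding the unweighted $G_{t-1}$ are correct and, if anything, more careful than the paper's treatment.
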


Combining the above results, we obtain the following. 

\begin{theorem}
  \label{thm:dowg_nonsmooth_unknown_curvature}
  For all $\delta \in (0, 1)$ and $L>0$, if  \cref{assumption:g_convex}, \ref{assumption:Hadamard}, and \ref{assumption:bounded_stochastic_grads} hold, then with probability at least $1 - \delta - \mathbb{P}(\bar{\ell}_{T} > L)$, for all $t \le T$, the optimality gap on the weighted iterates $f(\tilde{x}_t) - f(x_\star)$ of CO-RDoWG satisfy
    \begin{align} 
            O\left(\frac{(d_0 + \bar{r}_t)\zeta_{\kappa}(d_0 + \bar{r}_t) \sqrt{G_{t-1} + \theta_{t, \delta} G_{t-1} + \theta^{2}_{t, \delta} L^2}}{\sum_{s=0}^{t-1} \bar{r}_s^2/\bar{r}_t^2} \right).
    \end{align}
  \begin{proof}
  Combine \cref{lemma:dowg_nonsmooth_weighted_regret_unknown_curvature} and \cref{lemma:dowg_nonsmooth_prob_unknown_curvature} and use the fact $\bar{d}_t \le d_0 + \bar{r}_t$.
\end{proof}
\end{theorem}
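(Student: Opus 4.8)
The plan is to follow verbatim the blueprint already used for RDoWG (\cref{thm:dowg_nonsmooth_curvature}): start from the Jensen-type inequality for the weighted average, split the numerator into a deterministic ``weighted regret'' part and a stochastic ``noise'' part, bound each using the two supporting lemmas of this subsection, and then collapse everything into a single big-$O$ expression by invoking monotonicity of $d\mapsto\zeta_\kappa(d)$ and the elementary bound $\bar d_t\le d_0+\bar r_t$.

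Concretely, recall from the overview of this subsection that \cref{lemma:weighted_average} gives
\begin{align*}
f(\tilde x_t)-f(x_\star)\le\frac{1}{\sum_{s=0}^{t-1}\bar r_s^2}\Bigl(\underbrace{\textstyle\sum_{s=0}^{t-1}\bar r_s^2\langle-g_s,\exp_{x_s}^{-1}(x_\star)\rangle_{x_s}}_{\text{weighted regret}}+\underbrace{\textstyle\sum_{s=0}^{t-1}\bar r_s^2\langle\Delta_s,\exp_{x_s}^{-1}(x_\star)\rangle_{x_s}}_{\text{noise}}\Bigr).
\end{align*}
First I would bound the weighted regret deterministically by \cref{lemma:dowg_nonsmooth_weighted_regret_unknown_curvature}, which yields $\bar r_t^2\bigl(2\bar d_t+\bar r_t\zeta_\kappa(\bar d_t)\bigr)\sqrt{G_{t-1}}$. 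Then, for the noise term, I would apply \cref{lemma:dowg_nonsmooth_prob_unknown_curvature}, which states that on a single event of probability at least $1-\delta-\mathbb P(\bar\ell_T>L)$ one has, \emph{simultaneously for all} $t\le T$, the bound $\bigl|\sum_{s=0}^{t-1}\bar r_s^2\langle\Delta_s,\exp_{x_s}^{-1}(x_\star)\rangle_{x_s}\bigr|\le 8\bar r_{t-1}^2\bar d_{t-1}\sqrt{\theta_{t,\delta}G_{t-1}+\theta_{t,\delta}^2L^2}$. Adding the two bounds and dividing by $\sum_{s=0}^{t-1}\bar r_s^2$ gives the optimality gap on that event.

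The rest is algebraic cleanup. Using $\bar r_{t-1}\le\bar r_t$, $\bar d_{t-1}\le\bar d_t\le d_0+\bar r_t$, $\zeta_\kappa\ge1$, and the fact that $d\mapsto\zeta_\kappa(d)$ is nondecreasing, one gets $2\bar d_t+\bar r_t\zeta_\kappa(\bar d_t)\le 3(d_0+\bar r_t)\zeta_\kappa(d_0+\bar r_t)$ and $\bar r_{t-1}^2\bar d_{t-1}\le\bar r_t^2(d_0+\bar r_t)\zeta_\kappa(d_0+\bar r_t)$, so both contributions to the numerator are $O\bigl(\bar r_t^2(d_0+\bar r_t)\zeta_\kappa(d_0+\bar r_t)\bigr)$ multiplied by $\sqrt{G_{t-1}}$ and $\sqrt{\theta_{t,\delta}G_{t-1}+\theta_{t,\delta}^2L^2}$ respectively; merging the two square roots via $\sqrt a+\sqrt b\le\sqrt{2(a+b)}$ produces $\sqrt{G_{t-1}+\theta_{t,\delta}G_{t-1}+\theta_{t,\delta}^2L^2}$. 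Finally, rewriting $\bar r_t^2/\sum_{s=0}^{t-1}\bar r_s^2=1/\sum_{s=0}^{t-1}(\bar r_s^2/\bar r_t^2)$ gives exactly the claimed form.

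There is no genuine obstacle in this statement: the substantive work lives in the two supporting lemmas (the telescoping law-of-cosines estimate behind \cref{lemma:dowg_supporting}/\cref{lemma:dowg_nonsmooth_weighted_regret_unknown_curvature}, and the time-uniform martingale concentration of \cref{lemma:martingale_bound} behind \cref{lemma:dowg_nonsmooth_prob_unknown_curvature}), both already established. The only points requiring care are the index bookkeeping ($t$ versus $t-1$ in $\bar r$ and $\bar d$) when absorbing constants into $\zeta_\kappa(d_0+\bar r_t)$, and ensuring that the high-probability statement is the single ``$\forall t\le T$'' event delivered by \cref{lemma:dowg_nonsmooth_prob_unknown_curvature}, rather than a union bound over $t$.
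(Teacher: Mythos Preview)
Your proposal is correct and follows exactly the paper's approach: combine \cref{lemma:dowg_nonsmooth_weighted_regret_unknown_curvature} and \cref{lemma:dowg_nonsmooth_prob_unknown_curvature}, then use $\bar d_t\le d_0+\bar r_t$ together with monotonicity of $\zeta_\kappa$ to absorb everything into the stated big-$O$ form. The additional algebraic details you spell out (index shifts $t-1\to t$, $\sqrt a+\sqrt b\le\sqrt{2(a+b)}$, and the observation that the event from \cref{lemma:dowg_nonsmooth_prob_unknown_curvature} is already uniform in $t$) are all accurate and are precisely what the paper leaves implicit in its one-line proof.
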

Thus in comparison to standard RDoWG, we pay an additional cost of $O\left(\sqrt{\zeta_{\kappa}(d_0 + \bar{r}_t)}\right)$ for omitting the geometric curvature term with CO-RDoWG.

We then have a useful result when the manifold is bounded but its exact diameter is unknown.
\begin{corollary}
    Under \cref{assumption:g_convex}, \ref{assumption:Hadamard}, and \ref{assumption:bounded_stochastic_grads}, for any $D \ge d_0$ let $L_{D} \coloneqq \max_{x\in \mathcal{M} : d(x, x_0) \le D} \ell(x)$. Then, for all $\delta \in (0, 1)$ and for $\tau \in \arg \max_{t \le T} \sum_{s=0}^{t-1} \frac{\bar{r}_s^2}{\bar{r}_t^2}$, with probability at least $1 - \delta - \mathbb{P}(\bar{\ell}_{T} > L)$, iterates of CO-DoWG satisfy the optimality gap bound
    \begin{align}
        f(\tilde{x}_{\tau}) - f(x_\star) = O\left( \frac{D\zeta_{\kappa}(D) \sqrt{G_{\tau-1} \theta_{\tau, \delta} + L^2_D \theta_{\tau, \delta}^2}}{T} \log_{+}(D/\epsilon) \right).
    \end{align}
    \begin{proof}
        Apply \cref{lemma:fraction_sum} to the denominator term of \cref{thm:dowg_nonsmooth_unknown_curvature}.
    \end{proof}
 \end{corollary}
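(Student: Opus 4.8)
The plan is to read the result straight off the per-iterate bound of \cref{thm:dowg_nonsmooth_unknown_curvature}, which already establishes that, on the event of probability at least $1 - \delta - \mathbb{P}(\bar{\ell}_{T} > L)$ and for every $t \le T$,
\[
f(\tilde{x}_t) - f(x_\star) = O\!\left(\frac{(d_0 + \bar{r}_t)\,\zeta_{\kappa}(d_0 + \bar{r}_t)\,\sqrt{G_{t-1} + \theta_{t,\delta}G_{t-1} + \theta^{2}_{t,\delta} L^2}}{\sum_{s=0}^{t-1} \bar{r}_s^2/\bar{r}_t^2}\right).
\]
Since this holds for all $t \le T$ simultaneously, I would specialise it to the data-dependent index $\tau \in \arg\max_{t\le T}\sum_{s=0}^{t-1}\bar{r}_s^2/\bar{r}_t^2$, at which the denominator equals $\max_{t\le T}\sum_{s=0}^{t-1}\bar{r}_s^2/\bar{r}_t^2$. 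From there the argument has two independent pieces: a lower bound on this maximal denominator, and a coarsening of the numerator that uses the bounded-domain setting of the corollary — there the iterates stay within distance $D$ of $x_0$, so $\bar{r}_t \le D$ (taking $\epsilon \le D$) and $\bar{\ell}_T \le L_D$.

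For the denominator I would invoke \cref{lemma:fraction_sum} with $a_s = \bar{r}_s^2$ — a positive sequence, bounded below by $\epsilon^2$, and non-decreasing by construction of $\bar{r}_s$ — to get
\[
\max_{t\le T}\sum_{s=0}^{t-1}\frac{\bar{r}_s^2}{\bar{r}_t^2} \ge \frac{1}{e}\!\left(\frac{T}{1 + \log(\bar{r}_T^2/\bar{r}_0^2)} - 1\right).
\]
With $\bar{r}_0 = \epsilon$ and $\bar{r}_T \le D$ one has $1 + \log(\bar{r}_T^2/\bar{r}_0^2) \le 1 + 2\log(D/\epsilon) \le 2\log_{+}(D/\epsilon)$, so the denominator is $\Omega\!\big(T/\log_{+}(D/\epsilon)\big)$ once $T$ is large enough to absorb the subtracted constant.

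For the numerator, monotonicity of $d \mapsto \zeta_\kappa(d)$ together with the elementary doubling estimate $\zeta_\kappa(2D)\le 2\zeta_\kappa(D)$ (immediate from the closed form of $\zeta_\kappa$ and monotonicity of $\tanh$) gives $(d_0+\bar{r}_\tau)\zeta_\kappa(d_0+\bar{r}_\tau) = O\!\big(D\,\zeta_\kappa(D)\big)$ since $d_0+\bar{r}_\tau\le 2D$; the fact $\theta_{\tau,\delta}\ge 1$ lets us absorb the plain $G_{\tau-1}$ term into $\theta_{\tau,\delta}G_{\tau-1}$; and choosing $L = L_D$ makes the excluded bad event $\{\bar{\ell}_T > L_D\}$ vacuous, because every iterate stays within distance $D$ of $x_0$. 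Substituting the denominator bound and these simplifications into the displayed inequality at $t=\tau$ yields precisely $f(\tilde{x}_\tau) - f(x_\star) = O\!\big(D\,\zeta_\kappa(D)\sqrt{G_{\tau-1}\theta_{\tau,\delta} + L_D^2\theta_{\tau,\delta}^2}\,\log_{+}(D/\epsilon)/T\big)$. There is no genuine obstacle here: all the work lives in \cref{thm:dowg_nonsmooth_unknown_curvature}, and the only care required is the logarithmic bookkeeping that turns $1+\log(\bar{r}_T^2/\bar{r}_0^2)$ into the clean factor $\log_{+}(D/\epsilon)$, the doubling bound $\zeta_\kappa(2D)=O(\zeta_\kappa(D))$, and the observation that the single fact $\bar{r}_t\le D$ simultaneously controls the curvature/distance prefactor in the numerator and forces $\bar{\ell}_T\le L_D$.
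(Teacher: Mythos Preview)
Your proposal is correct and follows exactly the paper's approach: the paper's proof is the single line ``Apply \cref{lemma:fraction_sum} to the denominator term of \cref{thm:dowg_nonsmooth_unknown_curvature},'' and you have simply unpacked that line, supplying the bookkeeping (the doubling estimate $\zeta_\kappa(2D)\le 2\zeta_\kappa(D)$, absorbing $G_{\tau-1}$ via $\theta_{\tau,\delta}\ge 1$, and the choice $a_s=\bar r_s^2$ in \cref{lemma:fraction_sum}) that the paper leaves implicit.
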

Thus in comparison to standard RDoWG, we pay an additional cost of $O\left(\sqrt{\zeta_{\kappa}(D)}\right)$ for omitting the curvature term with CO-RDoWG.

\subsubsection*{Smooth Analysis}

\begin{lemma}
    \label{lemma:dowg_smooth_weighted_regret_unknown_curvature}
    Suppose $f$ is $S$-smooth and assume \cref{assumption:g_convex} and \ref{assumption:Hadamard} hold. Then we have that the iterates of CO-RDoWG satisfy
    \begin{align}
       \sum_{s=0}^{t-1} \bar{r}_s^2 \langle  -g_s, \exp_{x_s}^{-1}(x_\star)\rangle_{x_s} \le \bar{r}_t \left( 2 \bar{d}_t + \bar{r}_t \zeta_{\kappa}(\bar{d}_t) \right) \sqrt{2S \sum_{s=0}^{t-1} \bar{r}_s^2 (f(x_s) - f(x_\star))}.
   \end{align}

    \begin{proof}
        By smoothness we can use \cref{lemma:lsmooth_bound}  to deduce $\lVert \operatorname{grad}f(x) \rVert_x^2 \le 2S(f(x) - f(x_\star))$ for all $x \in \mathcal{M}$. Therefore
        \begin{align}
            v_t = \sum_{s=0}^{t-1} \bar{r}_s^2 \lVert \operatorname{grad}f(x_s) \rVert_{x_s}^2 \le 2S \sum_{s=0}^{t-1} \bar{r}_s^2 (f(x_s) - f(x_\star)).
        \end{align}
        Taking square roots and substituting this into \cref{lemma:lsmooth_bound} gives the result.
    \end{proof}
\end{lemma}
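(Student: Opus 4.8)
The plan is to obtain the claimed smooth weighted-regret bound by composing two results already established for CO-RDoWG: the general (curvature-free) weighted regret bound of \cref{lemma:dowg_supporting}, and the self-bounding property of smooth functions from \cref{lemma:lsmooth_bound}. No new geometric estimate is needed; the argument parallels that of \cref{lemma:dowg_smooth_weighted_regret_curvature}, simply with the curvature-weighted accumulator replaced by the plain weighted accumulator $v_t = v_{t-1} + \bar{r}_t^2 \lVert g_t \rVert_{x_t}^2$.

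First I would apply \cref{lemma:dowg_supporting}, which under \cref{assumption:g_convex} and \cref{assumption:Hadamard} yields
\[
\sum_{s=0}^{t-1} \bar{r}_s^2 \langle -g_s, \exp_{x_s}^{-1}(x_\star)\rangle_{x_s} \le \bar{r}_t\left(2\bar{d}_t + \bar{r}_t \zeta_\kappa(\bar{d}_t)\right)\sqrt{v_{t-1}},
\]
where $v_{t-1} = \sum_{s=0}^{t-1} \bar{r}_s^2 \lVert g_s \rVert_{x_s}^2$ is the CO-RDoWG gradient accumulator. In the deterministic smooth setting under consideration here one has $g_s = \operatorname{grad} f(x_s)$, so $v_{t-1} = \sum_{s=0}^{t-1} \bar{r}_s^2 \lVert \operatorname{grad} f(x_s) \rVert_{x_s}^2$.

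Next I would invoke \cref{lemma:lsmooth_bound}: since geodesic convexity (\cref{assumption:g_convex}) guarantees that $f$ is lower bounded by $f(x_\star)$ and $f$ is $S$-smooth, we have $\lVert \operatorname{grad} f(x) \rVert_x^2 \le 2S\bigl(f(x) - f(x_\star)\bigr)$ for every $x \in \mathcal{M}$. Applying this termwise gives $v_{t-1} \le 2S \sum_{s=0}^{t-1} \bar{r}_s^2 \bigl(f(x_s) - f(x_\star)\bigr)$. Substituting this upper bound on $v_{t-1}$ into the displayed inequality and pulling the factor $2S$ out of the square root produces exactly the asserted bound.

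The only delicate points — rather than a genuine obstacle — are bookkeeping: (i) matching the index convention so that the accumulator appearing in \cref{lemma:dowg_supporting} is indeed $v_{t-1}$, i.e. the sum over $s \le t-1$; and (ii) noting that the self-bounding inequality of \cref{lemma:lsmooth_bound} requires $f$ to be bounded below by $f(x_\star)$, which is precisely what \cref{assumption:g_convex} supplies (the minimum is attained at $x_\star$). Everything else is an immediate substitution, so I expect the proof to be essentially two lines once these two invocations are in place.
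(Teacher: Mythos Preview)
Your proposal is correct and follows essentially the same approach as the paper: invoke the CO-RDoWG weighted regret bound (\cref{lemma:dowg_supporting}) to obtain the estimate in terms of $\sqrt{v_{t-1}}$, then use the self-bounding gradient inequality from \cref{lemma:lsmooth_bound} to control $v_{t-1}$. Your version is in fact slightly more careful than the paper's, which contains a minor typo (it cites \cref{lemma:lsmooth_bound} where \cref{lemma:dowg_supporting} is meant) and a small index slip in labeling the accumulator.
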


\begin{lemma}
   \label{lemma:dowg_smooth_prob_bound_unknown_curvature}
   Suppose  \cref{assumption:g_convex}, \ref{assumption:Hadamard}, and  \ref{assumption:bounded_stochastic_lsmooth} hold. Then for all $\delta \in (0,1), T \in \mathbb{N}$ and $S > 0$,  Then  we have that the iterates of CO-RDoWG satisfy
    \begin{align}
       \mathbb{P} \left( \exists t \le T : \left\lvert  \sum_{s=0}^{t-1} \bar{r}_s^2 \langle \Delta_s, \exp_{x_s}^{-1}(x_\star)\rangle_{x_s} \right\rvert \ge b_t \right) \le \delta + \mathbb{P}(\bar{s}_{T} > S),
   \end{align}
   where $b_t =  8\bar{r}_{t-1} \bar{d}_{t-1} \sqrt{2S \theta_{t, \delta} + 8S \theta_{t, \delta}^2} \sqrt{\sum_{s=0}^{t-1} \bar{r}_s^2 \left[f(x_s) - f(x_\star)\right]}$ and $\bar{s}_{T} \coloneqq \max_{t \le T} s(x_t)$.
   
   \begin{proof}
       Follow argument of \cref{lemma:dowg_smooth_prob_bound_curvature}.
   \end{proof}
\end{lemma}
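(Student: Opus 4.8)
The plan is to reproduce, almost verbatim, the martingale argument used for \cref{lemma:uniform_dog_smooth_prob_bound_unknown_curvature} and \cref{lemma:dowg_smooth_prob_bound_curvature}, but with the curvature factors deleted and the CO-RDoWG weight $\bar{r}_s^2$ in place of $\bar{r}_s^2/\zeta_\kappa(\bar{r}_s)$. First I would recast the target sum as a weighted martingale sum: with the filtration $\mathcal{F}_s=\sigma(\mathcal{G}(x_0),\dots,\mathcal{G}(x_s))$, set, for $1\le s\le T$,
\[
Y_s\coloneqq\bar{r}_{s-1}\bar{d}_{s-1},\qquad X_s\coloneqq\bar{r}_{s-1}\Big\langle\Delta_{s-1},\,\tfrac{\exp_{x_{s-1}}^{-1}(x_\star)}{\bar{d}_{s-1}}\Big\rangle_{x_{s-1}},\qquad \hat{X}_s\coloneqq-\bar{r}_{s-1}\Big\langle\operatorname{grad}f(x_{s-1}),\,\tfrac{\exp_{x_{s-1}}^{-1}(x_\star)}{\bar{d}_{s-1}}\Big\rangle_{x_{s-1}}.
\]
Here the weight $\bar{r}_s^2$ is split as $\bar{r}_s\cdot\bar{r}_s$, one factor entering $X_s$ and one entering $Y_s$, so that $\sum_{s=1}^{t}Y_sX_s=\sum_{s=0}^{t-1}\bar{r}_s^2\langle\Delta_s,\exp_{x_s}^{-1}(x_\star)\rangle_{x_s}$. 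One checks that $\{X_s\}$ is a martingale difference sequence, $\hat{X}_s$ its predictable companion, and $\{Y_s\}$ is nonnegative and nondecreasing because $s\mapsto\bar{r}_s$ and $s\mapsto\bar{d}_s$ are nondecreasing.

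Next I would collect the almost-sure magnitude bounds, which hold on the event $\{\bar{s}_T\le S\}$ whose complement accounts for the $\mathbb{P}(\bar{s}_T>S)$ term. Since $\lVert\exp_{x_{s-1}}^{-1}(x_\star)/\bar{d}_{s-1}\rVert_{x_{s-1}}=d_{s-1}/\bar{d}_{s-1}\le1$, Cauchy--Schwarz gives $X_s^2\le\bar{r}_{s-1}^2\lVert\Delta_{s-1}\rVert_{x_{s-1}}^2$ and $(X_s-\hat{X}_s)^2=\bar{r}_{s-1}^2\langle g_{s-1},\exp_{x_{s-1}}^{-1}(x_\star)/\bar{d}_{s-1}\rangle_{x_{s-1}}^2\le\bar{r}_{s-1}^2\lVert g_{s-1}\rVert_{x_{s-1}}^2$, and similarly for $\hat{X}_s$. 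On $\{\bar{s}_T\le S\}$, \cref{lemma:bound_stochastic_error_smooth} (together with the bound $\lVert\mathcal{G}(x)\rVert_x^2\le2s(x)(f(x)-f(x_\star))$ proved inside it) and \cref{lemma:lsmooth_bound} give $\lVert\Delta_{s-1}\rVert_{x_{s-1}}^2\le8S(f(x_{s-1})-f(x_\star))$ and $\lVert g_{s-1}\rVert_{x_{s-1}}^2,\lVert\operatorname{grad}f(x_{s-1})\rVert_{x_{s-1}}^2\le2S(f(x_{s-1})-f(x_\star))$. Summing, $\sum_{s=1}^{t}X_s^2\le c^2$, $\sum_{s=1}^{t}\hat{X}_s^2\le c^2$ and $\sum_{s=1}^{t}(X_s-\hat{X}_s)^2\le2S\sum_{s=0}^{t-1}\bar{r}_s^2(f(x_s)-f(x_\star))$, where $c^2\coloneqq8S\sum_{s=0}^{t-1}\bar{r}_s^2(f(x_s)-f(x_\star))$; in particular $\lvert X_t\rvert,\lvert\hat{X}_t\rvert\le c$.

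Feeding these into \cref{lemma:martingale_bound} then yields, for all $t\le T$ simultaneously on an event of probability at least $1-\delta-\mathbb{P}(\bar{s}_T>S)$,
\[
\Big\lvert\sum_{s=0}^{t-1}\bar{r}_s^2\langle\Delta_s,\exp_{x_s}^{-1}(x_\star)\rangle_{x_s}\Big\rvert\le8Y_t\sqrt{\theta_{t,\delta}\sum_{s=1}^{t}(X_s-\hat{X}_s)^2+c^2\theta_{t,\delta}^2}\le8\bar{r}_{t-1}\bar{d}_{t-1}\sqrt{2S\theta_{t,\delta}+8S\theta_{t,\delta}^2}\,\sqrt{\sum_{s=0}^{t-1}\bar{r}_s^2\big(f(x_s)-f(x_\star)\big)}=b_t,
\]
which is the claim. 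I expect the only genuinely delicate point --- exactly as in the two cited lemmas, where it is also left implicit --- to be the measurability/uniformity bookkeeping when invoking \cref{lemma:martingale_bound}: verifying that $\hat{X}_s$ and the dominating sequence are appropriately $\mathcal{F}_{s-1}$-measurable, and that, once the event $\{\bar{s}_T>S\}$ has been discarded, one may use the trajectory-dependent quantity $c$ above in the role of the fixed constant $c$ in the lemma. Everything else is the straightforward substitution of $\bar{r}_s^2$ for $\bar{r}_s^2/\zeta_\kappa(\bar{r}_s)$.
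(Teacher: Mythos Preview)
Your proposal is correct and follows essentially the same approach as the paper: the paper's proof defers to \cref{lemma:dowg_smooth_prob_bound_curvature}, which in turn defers to \cref{lemma:uniform_dog_smooth_prob_bound_unknown_curvature}, and you have reproduced that argument with the curvature factors $1/\sqrt{\zeta_\kappa(\bar{r}_{s-1})}$ stripped out, exactly as the CO-RDoWG weighting requires. Your choice of $Y_s=\bar{r}_{s-1}\bar{d}_{s-1}$ and $X_s,\hat{X}_s$ with the extra $\bar{r}_{s-1}$ factor matches the decomposition in \cref{lemma:dowg_smooth_prob_bound_curvature}, and the concern you flag about the trajectory-dependent $c$ in the invocation of \cref{lemma:martingale_bound} is the same gap that is left implicit in the paper's own proofs.
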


Combining the above results, we obtain the following. 
\begin{theorem}
    \label{thm:dowg_smooth_unknown_curvature}
    For all $\delta \in (0, 1)$ and $S>0$, if  \cref{assumption:g_convex}, \ref{assumption:Hadamard}, and \ref{assumption:bounded_stochastic_lsmooth}  hold, then with probability at least $1 - \delta - \mathbb{P}(\bar{s}_{T} > S)$, for all $t \le T$,  CO-RDoWG satisfies the optimality gap $f(\tilde{x}_t) - f(x_\star)$ of
    \begin{align}
        O \left(\frac{ \left( (d_0 + \bar{r}_t) \zeta_{\kappa}(d_0 + \bar{r}_t) \right)^2 (S \theta_{t, \delta}^2) }{\sum_{s=0}^{t-1} \frac{ \bar{r}_s^2}{\bar{r}_t^2}} \right).
    \end{align}
    
\begin{proof}
Using \cref{lemma:dowg_smooth_weighted_regret_unknown_curvature} and  \cref{lemma:dowg_smooth_prob_bound_unknown_curvature} above, we have with the relevant probabilistic conditions,  
\begin{align}
    \sum_{s=0}^{t-1} \bar{r}_s^2 [f(x_s) - f(x_\star)] \le \left (\sqrt{2S} \bar{r}_t \left( 2 \bar{d}_t + \bar{r}_t \zeta_{\kappa}(\bar{d}_t) \right) + 8\bar{r}_t \bar{d}_t \sqrt{2S \theta_{t, \delta} + 8S \theta_{t, \delta}^2} \right) \sqrt{\sum_{s=0}^{t-1} \bar{r}_s^2 \left[f(x_s) - f(x_\star)\right]}.
\end{align}
Now if $f(x_s) - f(x_\star) = 0$ for some iterate, then the statement is trivial. Otherwise diving by sides by the square root term, we have
\begin{align}
\sqrt{\sum_{s=0}^{t-1} \bar{r}_s^2 \left[f(x_s) - f(x_\star)\right]} \le  \left (\sqrt{2S} \bar{r}_t \left( 2 \bar{d}_t + \bar{r}_t \zeta_{\kappa}(\bar{d}_t) \right) + 8 \bar{r}_t \bar{d}_t \sqrt{2S \theta_{t, \delta} + 8S \theta_{t, \delta}^2} \right).
\end{align}
We square both sides and divide through by $\sum_{s=0}^{t-1} \bar{r}_s^2$,
\begin{align}
\frac{1}{\sum_{s=0}^{t-1} \bar{r}_s^2} \sum_{s=0}^{t-1} \bar{r}_s^2 \left[f(x_s) - f(x_\star)\right] \le O \left(\frac{ \left( 2 \bar{d}_t + \bar{r}_t \zeta_{\kappa}(\bar{d}_t) \right)^2 (S \theta_{t, \delta}^2) }{\sum_{s=0}^{t-1} \frac{ \bar{r}_s^2}{\bar{r}_t^2}} \right).
\end{align}
Now using the fact, $\bar{d}_t \le d_0 + \bar{r}_t$, in the above bound gives the result.
\end{proof}
\end{theorem}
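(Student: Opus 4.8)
The plan is to bound the optimality gap of the weighted average iterate $\tilde{x}_t$ of CO-RDoWG (with weights $w_s=\bar{r}_s^2$) by reducing everything to a scalar quadratic inequality via a self-bounding argument. First I would apply the Riemannian Jensen inequality (\cref{lemma:weighted_average}) to get
\[
f(\tilde{x}_t) - f(x_\star) \le \frac{1}{\sum_{s=0}^{t-1}\bar{r}_s^2}\sum_{s=0}^{t-1}\bar{r}_s^2\,\langle -\operatorname{grad}f(x_s),\exp_{x_s}^{-1}(x_\star)\rangle_{x_s},
\]
and then, writing $g_s=\mathcal{G}(x_s)$ and $\Delta_s=g_s-\operatorname{grad}f(x_s)$, split the numerator into a deterministic weighted-regret term $\sum_{s=0}^{t-1}\bar{r}_s^2\langle-g_s,\exp_{x_s}^{-1}(x_\star)\rangle_{x_s}$ and a stochastic noise term $\sum_{s=0}^{t-1}\bar{r}_s^2\langle\Delta_s,\exp_{x_s}^{-1}(x_\star)\rangle_{x_s}$.

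For the regret term I would invoke \cref{lemma:dowg_smooth_weighted_regret_unknown_curvature}, which telescopes the law-of-cosines bound of \cref{cor:law_of_cosines} and then uses the smoothness self-bound $\lVert\operatorname{grad}f(x)\rVert_x^2\le 2S(f(x)-f(x_\star))$ of \cref{lemma:lsmooth_bound}, yielding a bound proportional to $\bar{r}_t\big(2\bar{d}_t+\bar{r}_t\zeta_\kappa(\bar{d}_t)\big)\sqrt{\sum_{s=0}^{t-1}\bar{r}_s^2(f(x_s)-f(x_\star))}$. For the noise term I would invoke \cref{lemma:dowg_smooth_prob_bound_unknown_curvature}, which on the event $\{\bar{s}_T\le S\}$ holds with probability at least $1-\delta$ uniformly over $t\le T$ and bounds the noise by $8\bar{r}_{t-1}\bar{d}_{t-1}\sqrt{2S\theta_{t,\delta}+8S\theta_{t,\delta}^2}\sqrt{\sum_{s=0}^{t-1}\bar{r}_s^2(f(x_s)-f(x_\star))}$; that lemma itself rests on the time-uniform empirical-Bernstein martingale bound \cref{lemma:martingale_bound}, applied to the increments $X_s=\langle\Delta_{s-1},\exp_{x_{s-1}}^{-1}(x_\star)/\bar{d}_{s-1}\rangle_{x_{s-1}}$ with the monotone weights $Y_s=\bar{r}_{s-1}^2\bar{d}_{s-1}$, using \cref{lemma:bound_stochastic_error_smooth} to control $\lVert\Delta_{s-1}\rVert_{x_{s-1}}^2\le 8S(f(x_{s-1})-f(x_\star))$ almost surely on $\{\bar{s}_T\le S\}$.

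The key step is that, under the smoothness assumption, both bounds are linear in $Y:=\big(\sum_{s=0}^{t-1}\bar{r}_s^2(f(x_s)-f(x_\star))\big)^{1/2}$, so adding them and using $\bar{r}_{t-1}\le\bar{r}_t$, $\bar{d}_{t-1}\le\bar{d}_t$ and $\theta_{t,\delta}\ge 1$ gives a scalar inequality of the form $Y^2\le C\,Y$ with $C=O\!\big(\sqrt{S}\,\bar{r}_t\big(2\bar{d}_t+\bar{r}_t\zeta_\kappa(\bar{d}_t)\big)+\sqrt{S}\,\bar{r}_t\bar{d}_t\,\theta_{t,\delta}\big)$. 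Hence either $Y=0$, in which case $f(\tilde{x}_t)=f(x_\star)$ and the claim is trivial, or $Y\le C$, i.e.\ $\sum_{s=0}^{t-1}\bar{r}_s^2(f(x_s)-f(x_\star))\le C^2$. Dividing by $\sum_{s=0}^{t-1}\bar{r}_s^2$ and normalising by $\bar{r}_t^{-2}$ yields
\[
f(\tilde{x}_t)-f(x_\star)=O\!\left(\frac{\big(2\bar{d}_t+\bar{r}_t\zeta_\kappa(\bar{d}_t)\big)^2\,S\,\theta_{t,\delta}^2}{\sum_{s=0}^{t-1}\bar{r}_s^2/\bar{r}_t^2}\right),
\]
and I would finish by using $\bar{d}_t\le d_0+\bar{r}_t$, $\zeta_\kappa\ge 1$ and monotonicity of $d\mapsto\zeta_\kappa(d)$ to replace the numerator by $\big((d_0+\bar{r}_t)\zeta_\kappa(d_0+\bar{r}_t)\big)^2 S\theta_{t,\delta}^2$, together with a union bound over the two failure events, which gives the stated probability $1-\delta-\mathbb{P}(\bar{s}_T>S)$.

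The main obstacle is closing the self-bounding argument: the deterministic regret bound and the martingale concentration bound must each be driven out proportional to the \emph{same} weighted square root $\big(\sum_s\bar{r}_s^2(f(x_s)-f(x_\star))\big)^{1/2}$ — not, for instance, to $\sqrt{v_t}$ or to an unweighted residual sum — which is exactly why \cref{lemma:dowg_smooth_prob_bound_unknown_curvature} uses the weighted martingale increments $Y_s=\bar{r}_{s-1}^2\bar{d}_{s-1}$ and the stochastic-error self-bound of \cref{lemma:bound_stochastic_error_smooth}; keeping this weighting consistent across the deterministic and stochastic pieces is where the care lies, whereas the remaining asymptotic bookkeeping (via $\bar{d}_t\le d_0+\bar{r}_t$, $\theta_{t,\delta}\ge1$ and monotonicity of $\zeta_\kappa$) is routine.
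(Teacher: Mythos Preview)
Your proposal is correct and follows essentially the same approach as the paper: combine \cref{lemma:dowg_smooth_weighted_regret_unknown_curvature} and \cref{lemma:dowg_smooth_prob_bound_unknown_curvature} so that both the regret and noise contributions are bounded by a multiple of $Y=\big(\sum_{s<t}\bar{r}_s^2(f(x_s)-f(x_\star))\big)^{1/2}$, apply the self-bounding step $Y^2\le CY\Rightarrow Y\le C$, then square, normalise by $\bar{r}_t^2/\sum_{s<t}\bar{r}_s^2$, and finish with $\bar{d}_t\le d_0+\bar{r}_t$ and monotonicity of $\zeta_\kappa$. Your write-up is in fact slightly more explicit than the paper's in spelling out the Jensen step (\cref{lemma:weighted_average}) and the geodesic-convexity inequality that connects $\sum_s\bar{r}_s^2[f(x_s)-f(x_\star)]$ to the regret-plus-noise decomposition, but the argument is otherwise identical.
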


Thus in comparison to standard RDoWG, we pay an additional cost of $O\left(\sqrt{\zeta_{\kappa}(d_0 + \bar{r}_t)}\right)$ for omitting the geometric curvature term with CO-RDoWG.

We then have a useful result when the manifold is bounded but its exact diameter is unknown.
\begin{corollary}
    Under \label{thm:dowg_smooth_known_curvature}
    \cref{assumption:g_convex}, \ref{assumption:Hadamard}, and \ref{assumption:bounded_stochastic_lsmooth}, for any $D \ge d_0$  let $S_{D} \coloneqq \max_{x\in \mathcal{M} : d(x, x_0) \le D} s(x)$. Then, for all $\delta \in (0, 1)$ and for $\tau \in \arg \max_{t \le T} \sum_{s=0}^{t-1} \frac{\bar{r}_s^2}{\bar{r}_t^2}$, with probability at least $1 - \delta - \mathbb{P}(\bar{s}_{T} > S)$, the iterates of CO-RDoWG satisfies the optimally gap $f(\tilde{x}_{\tau}) - f(x_\star)$ of
    \begin{align}
         O\left(\frac{D^2 \zeta_{\kappa}(D)^2 S_D \theta_{\tau, \delta}^2}{T} \log_{+}(D/\epsilon) \right).
    \end{align}
    \begin{proof}
        Apply \cref{lemma:fraction_sum} to the denominator term of  \cref{thm:dowg_smooth_unknown_curvature}.
    \end{proof}
 \end{corollary}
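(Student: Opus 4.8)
The plan is to reduce everything to the uniform-in-$t$ bound already established in \cref{thm:dowg_smooth_unknown_curvature}, which states that, on the event of probability at least $1-\delta-\mathbb{P}(\bar{s}_T>S)$, for \emph{all} $t\le T$ simultaneously,
\[
f(\tilde{x}_t)-f(x_\star)=O\!\left(\frac{\bigl((d_0+\bar{r}_t)\zeta_\kappa(d_0+\bar{r}_t)\bigr)^2\,S\,\theta_{t,\delta}^2}{\sum_{s=0}^{t-1}\bar{r}_s^2/\bar{r}_t^2}\right).
\]
Since this holds for every index $t\le T$, it holds in particular for the (possibly random) index $\tau\in\arg\max_{t\le T}\sum_{s=0}^{t-1}\bar{r}_s^2/\bar{r}_t^2$. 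It then remains to (i) bound the numerator at $t=\tau$ on the bounded domain, and (ii) lower-bound the denominator $\sum_{s=0}^{\tau-1}\bar{r}_s^2/\bar{r}_\tau^2$, which by the choice of $\tau$ equals $\max_{t\le T}\sum_{s=0}^{t-1}\bar{r}_s^2/\bar{r}_t^2$.

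For (i), on a domain contained in the ball of radius $D$ about $x_0$ we have $\bar{r}_t\le D$ for all $t$ and $d_0\le D$, so $d_0+\bar{r}_t\le 2D$. Because $d\mapsto\zeta_\kappa(d)$ is increasing and, for $\kappa<0$, satisfies $\zeta_\kappa(2D)\le 2\zeta_\kappa(D)$ (monotonicity of $\tanh$), with $\zeta_\kappa\equiv 1$ when $\kappa\ge 0$, one gets $(d_0+\bar{r}_\tau)\zeta_\kappa(d_0+\bar{r}_\tau)=O(D\zeta_\kappa(D))$, hence the numerator is $O\bigl(D^2\zeta_\kappa(D)^2\,S\,\theta_{\tau,\delta}^2\bigr)$. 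Moreover, on the event $\{\bar{s}_T\le S\}$ we may take $S=S_D$ since every iterate lies in the radius-$D$ ball and therefore $\bar{s}_T=\max_{t\le T}s(x_t)\le S_D$.

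For (ii), apply \cref{lemma:fraction_sum} with $a_s=\bar{r}_s^2$; this sequence is positive (as $\bar{r}_s\ge\epsilon>0$) and non-decreasing, since $\bar{r}_s=\max(\epsilon,\max_{k\le s}d(x_k,x_0))$ is non-decreasing in $s$. The lemma gives
\[
\max_{t\le T}\sum_{s=0}^{t-1}\frac{\bar{r}_s^2}{\bar{r}_t^2}\ \ge\ e^{-1}\!\left(\frac{T}{1+\log(\bar{r}_T^2/\bar{r}_0^2)}-1\right),
\]
and using $\bar{r}_0=\epsilon$, $\bar{r}_T\le D$ one has $1+\log(\bar{r}_T^2/\epsilon^2)=1+2\log(\bar{r}_T/\epsilon)\le 2\log_{+}(D/\epsilon)$, so the denominator is $\Omega\bigl(T/\log_{+}(D/\epsilon)\bigr)$. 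Dividing the bound from (i) by this lower bound yields the claimed $O\bigl(D^2\zeta_\kappa(D)^2 S_D\theta_{\tau,\delta}^2\,T^{-1}\log_{+}(D/\epsilon)\bigr)$, with the probability inherited verbatim from \cref{thm:dowg_smooth_unknown_curvature}. The only slightly delicate point is the curvature bookkeeping in (i) — guaranteeing that replacing $\zeta_\kappa(d_0+\bar{r}_\tau)$ by $\zeta_\kappa(D)$ costs only a universal constant — but this is precisely the $\zeta_\kappa(2D)\le 2\zeta_\kappa(D)$ estimate already used in the stability corollaries, so no genuine obstacle arises.
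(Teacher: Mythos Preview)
Your proof is correct and follows essentially the same approach as the paper: instantiate \cref{thm:dowg_smooth_unknown_curvature} at $t=\tau$, bound the numerator via $d_0+\bar{r}_\tau\le 2D$ and monotonicity of $\zeta_\kappa$, and lower-bound the denominator by applying \cref{lemma:fraction_sum} to $a_s=\bar{r}_s^2$. The paper's one-line proof leaves all of these steps implicit, and you have spelled them out accurately.
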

Thus in comparison to standard RDoWG, we pay an additional cost of $O\left(\sqrt{\zeta_{\kappa}(D)}\right)$ for omitting the curvature term with CO-RDoWG.

\section{NRDoG Overview}
\label{sec:nrdog}
Here we present a learning-rate-free schedule for NRSGD: \emph{Normalized Riemannian Distance over Gradients} (NRDoG).

\begin{algorithm}[!htbp]
   \caption{NRDoG}
   \label{alg:nrdog}
\begin{algorithmic}
   \STATE {\bfseries Input:} initial point $x_0$, initial estimate $\epsilon >0$, $G_{-1}=0$.
   \FOR{$t=0$ {\bfseries to} $T-1$}
    \STATE $g_{t} = \mathcal{G}(x_t)$
    \STATE $\bar{r}_t = \max\left(\epsilon, \max_{s\le t} d (x_{s}, x_{0})\right)$
    \STATE $\eta_t = \frac{\bar{r}_t} {\sqrt{ ({t+1}) \zeta_\kappa( \bar{r}_t)} }$
    \STATE $x_{t+1} = \exp_{x_{t}} \left( - \eta_t \frac{g_{t}}{\lVert g_{t} \rVert_{x_t}} \right)$
   \ENDFOR
\end{algorithmic}
\end{algorithm}

\section{Geometry of Specific Riemannian Manifolds}
\label{sec:geometry}

\subsection{Sphere Manifold}
The sphere manifold $\mathbb{S}^{d-1} \coloneqq \{x \in \mathbb{R}^d: \| x\| = 1 \}$ is an embedded submanifold of $\mathbb{R}^d$ with tangent space $\mathcal{T}_x \mathbb{S}^{d-1} = \{v \in \mathbb{R}^{d} : x^{T} v = 0\}$. The Riemannian metric is given by the Euclidean inner product $\langle \cdot, \cdot' \rangle_x = \langle \cdot, \cdot' \rangle$. The exponential map is given by $\exp_{x}(v) = \cos(\|v\|)x + \sin(\|v\|)\frac{v}{\|v\|}$ with inverse exponential map as $\exp^{-1}_{x}(y) = \arccos(x^{T}y)\frac{\operatorname{Proj}_{x}(y-x)}{\|\operatorname{Proj}_{x}(y-x)\|}$ where  $\operatorname{Proj}_{x}(v) = v - (x^{T} v)x$ is the orthogonal projection of any $v \in \mathbb{R}^{d}$ to the tangent space $\mathcal{T}_x \mathbb{S}^{d-1}$. Following the Pymanopt implementation \citep{Pymanopt}, parallel transport is approximated with the projection operation, i.e., $\Gamma_x^y v \approx \operatorname{Proj}_{x}(v)$.

\subsection{Grassmann Manifold}
The Grassmann manifold of dimension $d \times r$, denoted as $\mathbb{G}(d, r)$ is the set of all $r$ dimensional
subspaces in $\mathbb{R}^d$ $(d \ge r)$. Each point on the Grassmann manifold can be identified as a column of orthonormal matrices $x \in \mathbb{R}^{d \times r}$, $x^{T} x = \mathrm{I}$ and two points $x, y$ are equivalent if $x = y o$ for some $r \times r$ orthogonal
matrix $o$. For our implementation of the exponential map, inverse exponential map, and parallel transport, we directly translate the Pymanopt code \citep{Pymanopt} from NumPy to JAX.

\subsection{Poincar\'{e} Manifold}
The Poincar\'{e} manifold of dimension $d$ is given by the open $d$-dimensional unit ball $\mathbb{B}_d \coloneqq \{ x \in \mathbb{R}^d: \lVert x \rVert < 1 \}$ equipped with Riemannian metric  $\langle \cdot, \cdot' \rangle_x = 4/(1 - \lVert x \rVert^2)^2 \langle \cdot, \cdot' \rangle$. The \emph{Möbius addition} of $x$ and $y$ in $\mathbb{B}^d$ is defined as \citep{ungar2022gyrovector}
\begin{align*}
    x \oplus y \coloneqq \frac{(1+2 \langle x, y \rangle + \lVert y \rVert^2)x + (1-\lVert x \rVert^2) y}{1 + 2 \langle x, y \rangle + \lVert x \rVert^2 \lVert y \rVert^2}.
\end{align*}
Defining the \emph{conformal factor} as $\lambda_x \coloneqq 2 /(1-\lVert x \rVert^2)$, the exponential map is given by $\exp_{x}(v) = x \oplus \left(\tanh\left(\lambda_x \frac{\lVert x\rVert}{2}\right)\right) \frac{v}{\lVert v \rVert}$  and the inverse exponential map is given by $\exp^{-1}_{x}(y) = \frac{2}{\lambda_x} \tanh^{-1}(\lVert - x \oplus y \rVert) \frac{- x \oplus y}{\lVert - x \oplus y \rVert}$. Parallel transport can also be given in closed form \citep[see][for further details]{ungar2022gyrovector}.

\section{Additional Numerical Results}

\subsection{Rayleigh Quotient Maximization on the Sphere}
\label{additional:rayleigh}

In this section, we provide additional results for the Rayleigh quotient maximization discussed in \cref{experiments:rayleigh} with a consistent setup across $d=1000$ dimensions. The initial figures in \cref{ad_fig:rayleigh_regret_trace} emphasize the learning-rate-free adaptability and insensitivity to the choice of the initial distance estimate, $\epsilon \in [10^{-8}, 10^0]$, for RDoG, RDoWG, and NRDoG, particularly after a few hundred iterations. In contrast, we observe a notable impact on the performance of RSGD due to the choice of the learning rate, $\eta \in [10^{-8}, 10^0]$. This sensitivity in regret also influences solution quality, as illustrated in \cref{ad_fig:rayleigh_distance_trace}.

We proceed to evaluate the algorithms for various numbers of iterations $T\in\{100, 500, 1000, 2000\}$, showcasing regret in \cref{ad_fig:rayleigh_regret} and geodesic distance to a numerically computed optimum in \cref{ad_fig:rayleigh_distance} for different learning rates, $\eta \in [10^{-8}, 10^{6}]$, for RSGD and RADAM. Additionally, we explore different initial distance estimates, $\epsilon \in [10^{-8}, 10^{-1}]$, for RDoG, RDoWG, and NRDoG. Notably, we observe that for $T=100$ iterations, the initial distance estimate does impact the algorithms, but after $T=500$ iterations, the effect becomes insensitive over several orders of magnitude, mirroring \cref{ad_fig:rayleigh_regret} and \cref{ad_fig:rayleigh_distance}. Conversely, RADAM and RSGD exhibit a requirement for careful tuning.

\begin{figure*}[htb]
  \centering
  \subfigure[RSGD.]{\includegraphics[width=0.225\textwidth]{figs/sphere_rayleigh/sphere_sensitivity_trace_rsgd_regret.pdf}}
  \subfigure[RDoG.]{\includegraphics[width=0.225\textwidth]{figs/sphere_rayleigh/sphere_sensitivity_trace_rdog_regret.pdf}}
  \subfigure[RDoWG.]{\includegraphics[width=0.225\textwidth]{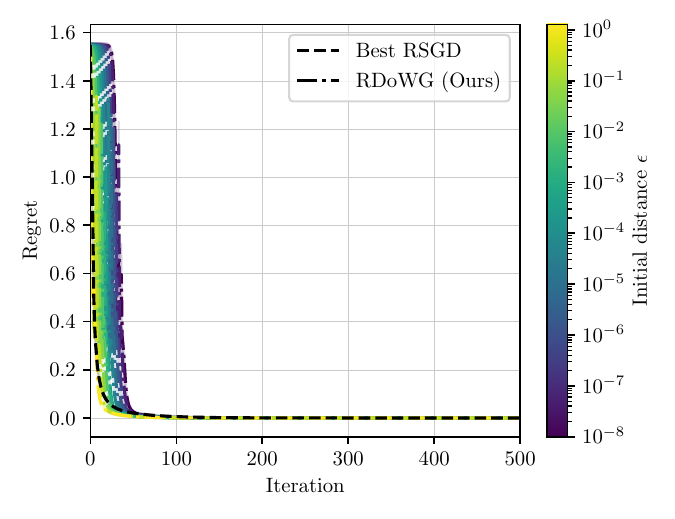}}
  \subfigure[NRDoG.]{\includegraphics[width=0.225\textwidth]{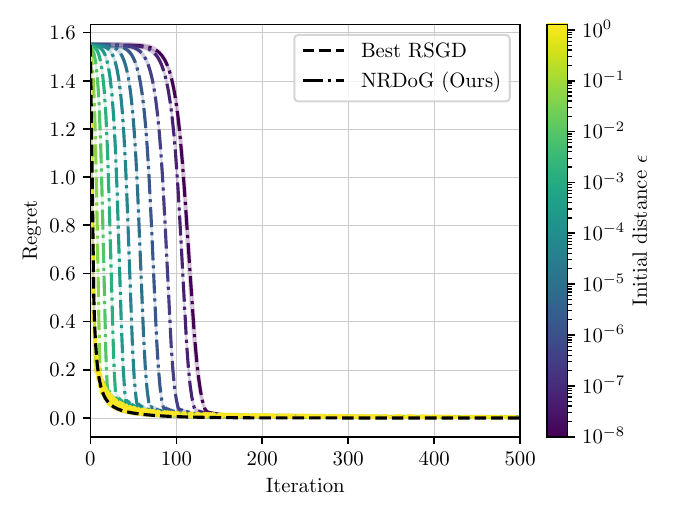}}
  \vspace{-3mm}
  \caption{\textbf{Supplementary results for Rayleigh quotient maximization on the sphere (\cref{experiments:rayleigh}).} The plots depict regret as a function of the iteration, considering various learning rates. Results are averaged over ten random replications. The optimal RSGD is chosen based on minimizing the regret after 5000 iterations. Note that (a) and (b) are equivalent to \cref{fig:rayleigh} (b) and (c) respectively.}
  \label{ad_fig:rayleigh_regret_trace}
\end{figure*}

\begin{figure*}[htb]
  \centering
  \subfigure[RSGD.]{\includegraphics[width=0.225\textwidth]{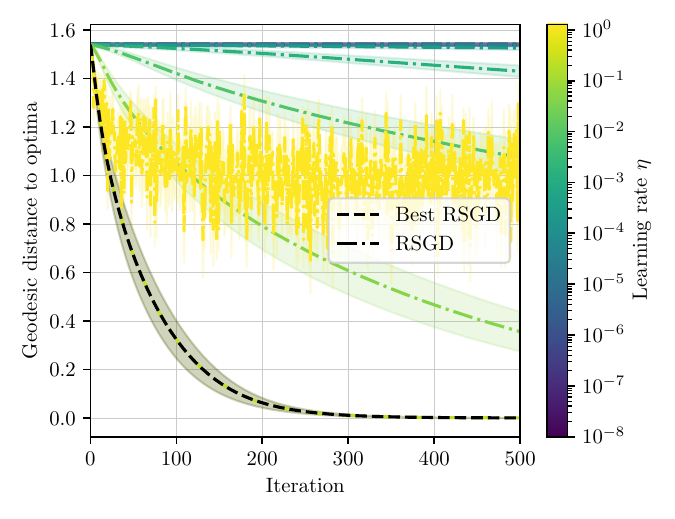}}
  \subfigure[RDoG.]{\includegraphics[width=0.225\textwidth]{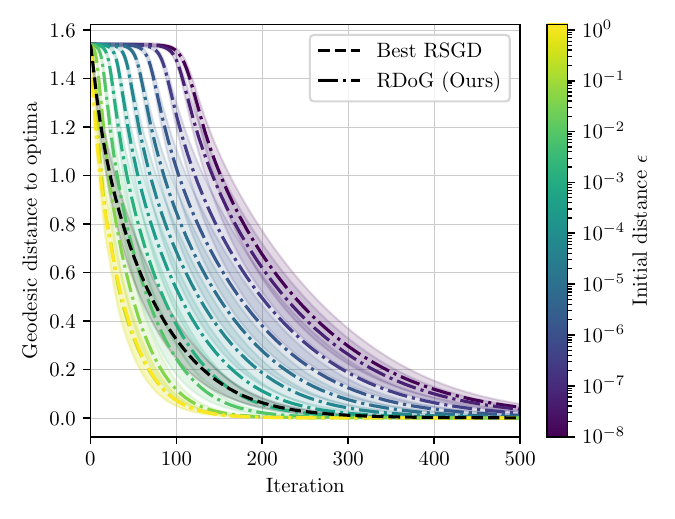}}
  \subfigure[RDoWG.]{\includegraphics[width=0.225\textwidth]{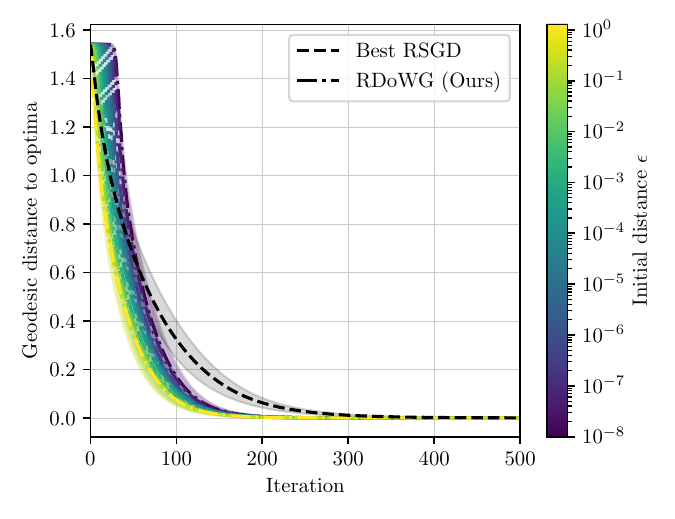}}
  \subfigure[NRDoG.]{\includegraphics[width=0.225\textwidth]{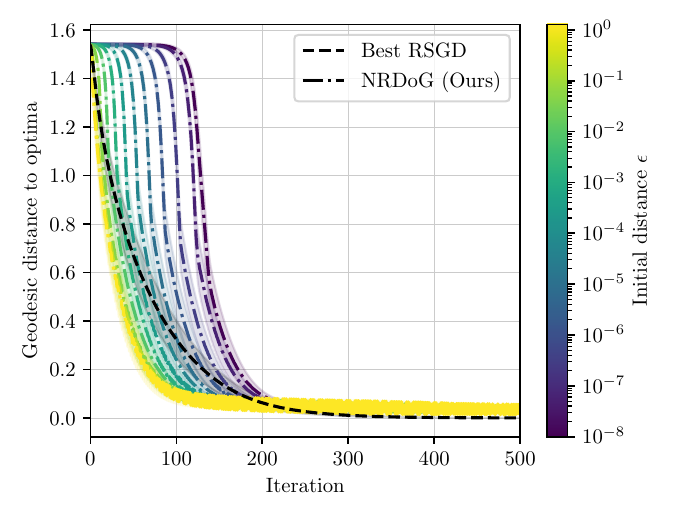}}
  \vspace{-3mm}
  \caption{\textbf{Supplementary results for Rayleigh quotient maximization on the sphere (\cref{experiments:rayleigh}).} The plots display the geodesic distance from an optimum as a function of the iteration, considering various learning rates. Results are averaged over ten replicates with different initial points. The optimal RSGD is selected based on minimizing the geodesic distance from the optimum after 5000 iterations.}
  \label{ad_fig:rayleigh_distance_trace}
\end{figure*}

\begin{figure*}[htb]
  \centering
  \subfigure[T = 100.]{\includegraphics[width=0.225\textwidth]{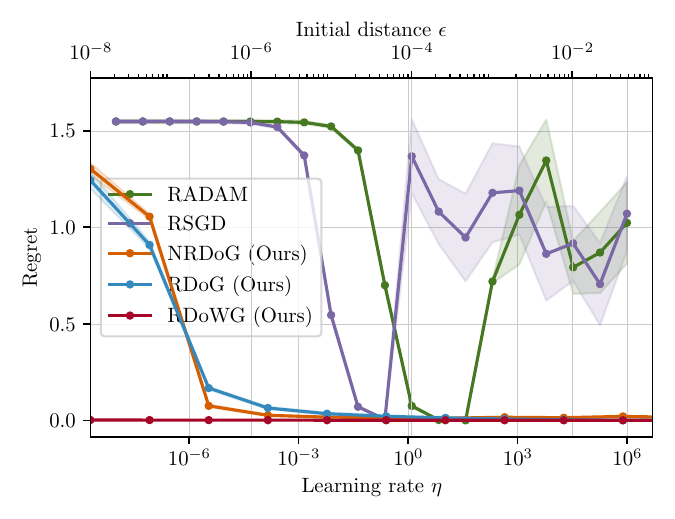}}
  \subfigure[T = 500.]{\includegraphics[width=0.225\textwidth]{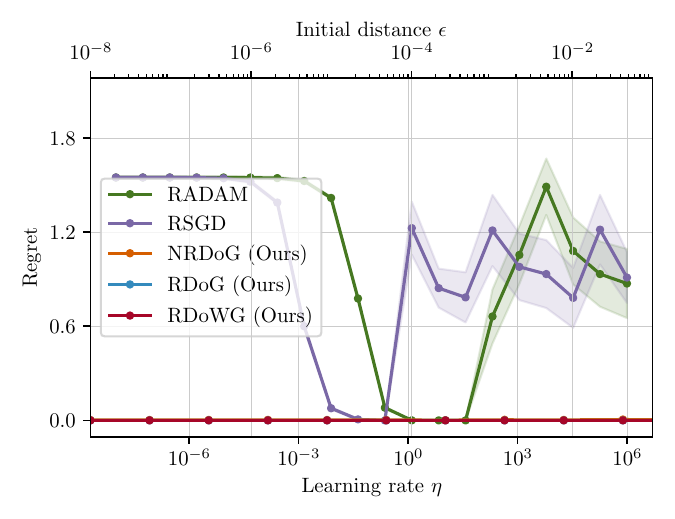}}
  \subfigure[T = 1000.]{\includegraphics[width=0.225\textwidth]{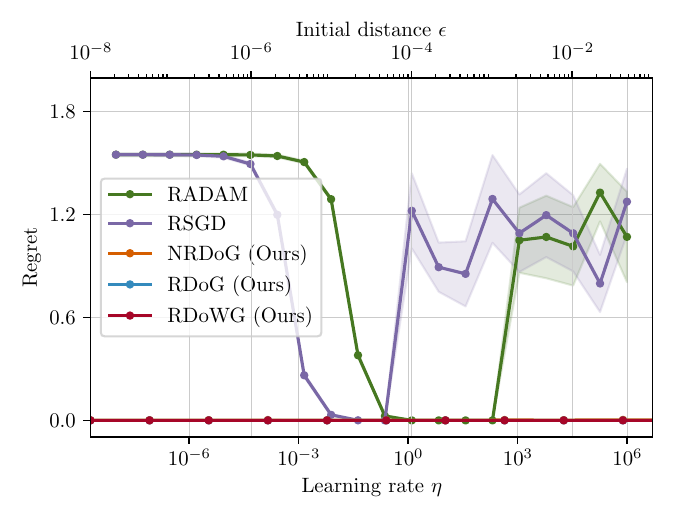}}
  \subfigure[T = 2000.]{\includegraphics[width=0.225\textwidth]{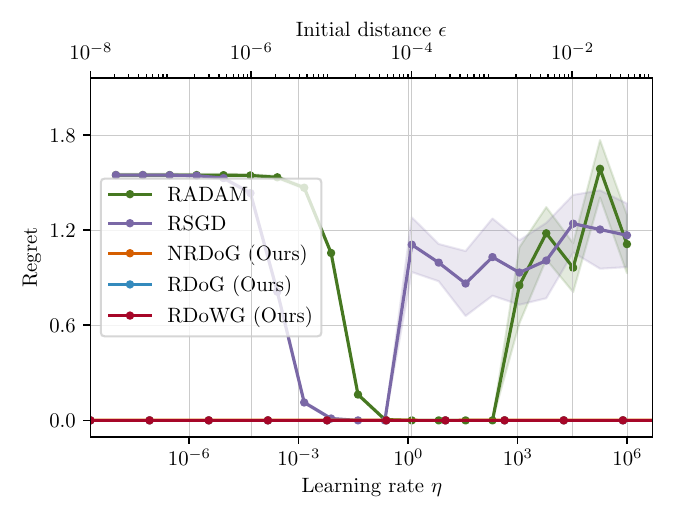}}
  \vspace{-3mm}
  \caption{\textbf{Supplementary Results for Rayleigh Quotient Maximization (\cref{experiments:rayleigh}).} Each plot illustrates the regret after the algorithm is halted for the specified number of iterations. Results are averaged over ten replicates with different initial points.}
  \label{ad_fig:rayleigh_regret}
\end{figure*}

\begin{figure*}[htb]
  \centering
  \subfigure[T = 100.]{\includegraphics[width=0.225\textwidth]{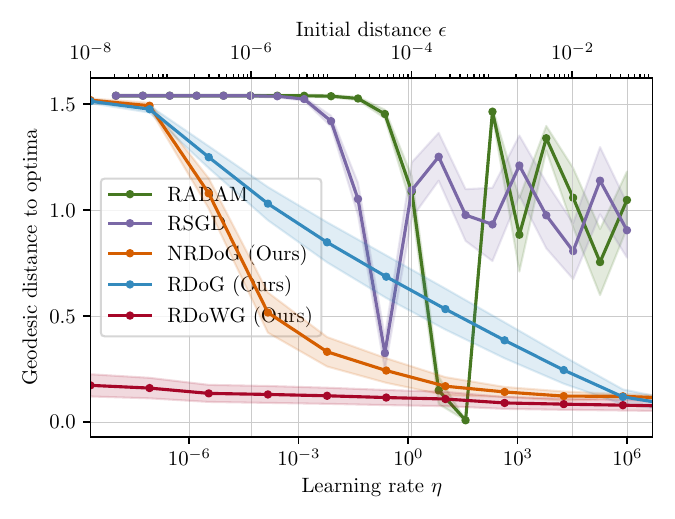}}
  \subfigure[T = 500.]{\includegraphics[width=0.225\textwidth]{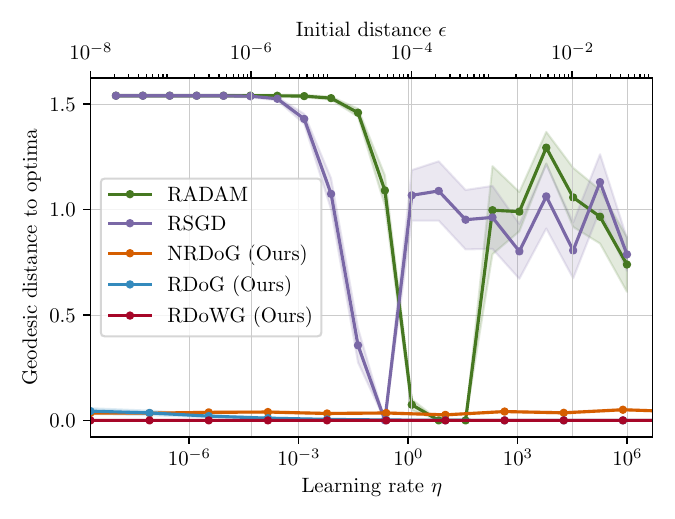}}
  \subfigure[T = 1000.]{\includegraphics[width=0.225\textwidth]{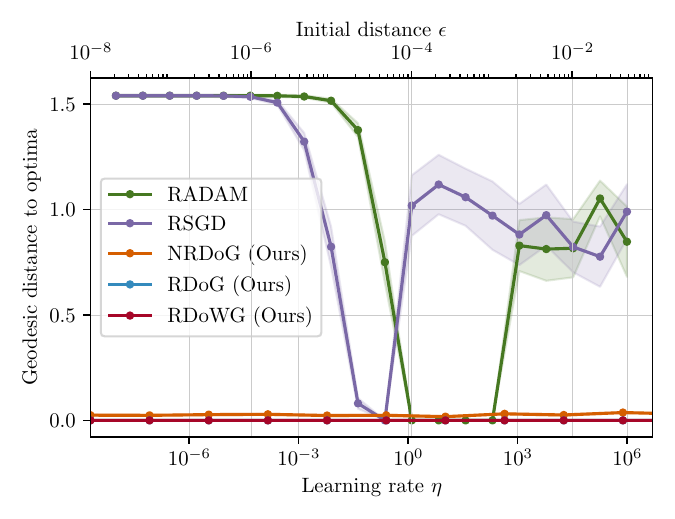}}
  \subfigure[T = 2000.]{\includegraphics[width=0.225\textwidth]{figs/sphere_rayleigh/sphere_distance_2000.pdf}}
  \vspace{-3mm}
  \caption{\textbf{Supplementary results for Rayleigh quotient maximization on the sphere (\cref{experiments:rayleigh}).} Each plot illustrates the geodesic distance to a numerically computed optimum after the algorithm is halted for the specified number of iterations. Results are averaged over ten replicates with different initial points.}
  \label{ad_fig:rayleigh_distance}
\end{figure*}

\subsection{PCA on the Grassmann Manifold}
\label{additional:pca}
In this section, we present additional results for the PCA on the Grassmann manifold discussed in \cref{experiments:pca}, maintaining a consistent experimental setup. In \cref{tab:pca_trace}, we observe that while RSGD exhibits sensitivity to the learning rate $\eta \in [10^{-8}, 10^{0}]$, RDoG, RDoWG, and NRDoG quickly adapt and achieve performance comparable to the best learning rate for RSGD within 500 iterations, irrespective of the chosen initial distance $\epsilon \in [10^{-8}, 10^{0}]$. This adaptability is further evident in \cref{tab:pca_distance}, where we consider halting the algorithms for $T\in\{100, 500, 1000, 2000\}$ iterations and comparing the geodesic distance of the output of the optimizer with the numerical solution. Discrepancies are noticeable for $T=100$, but these discrepancies diminish for $T=500$ iterations and beyond.

\begin{table}[htb]
  \centering
  \begin{tabular}{cccccc}
    & RSGD & RDoG & RDoWG & NRDoG \\
    \raisebox{1.0\height}{\rotatebox{90}{Wine}} & \includegraphics[width=0.225\textwidth]{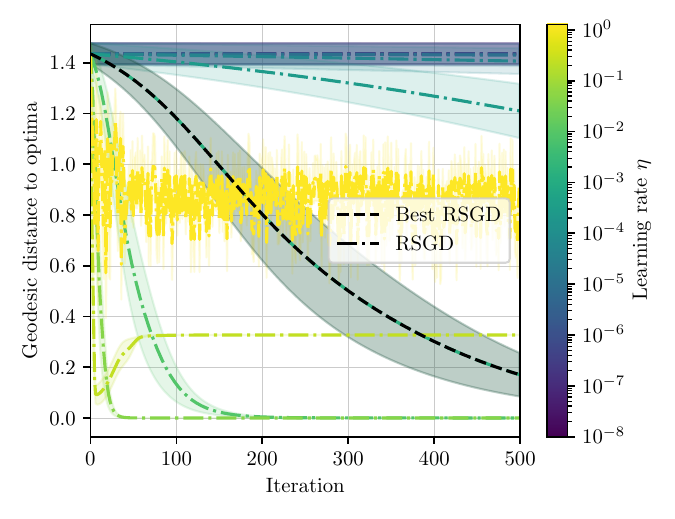} & \includegraphics[width=0.225\textwidth]{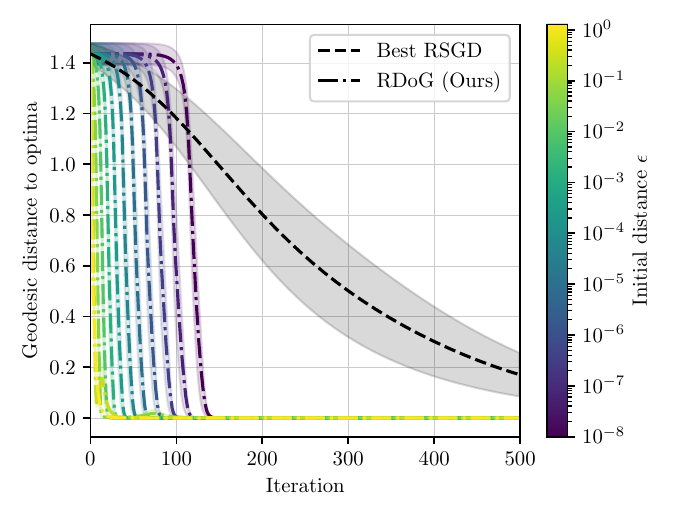} & \includegraphics[width=0.225\textwidth]{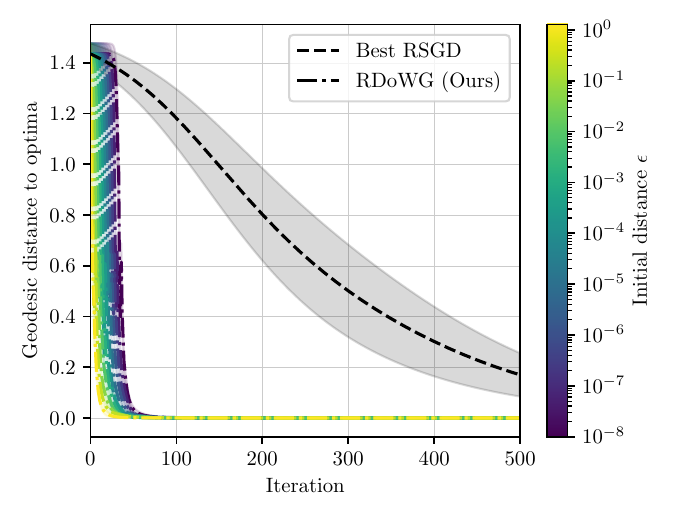} & \includegraphics[width=0.225\textwidth]{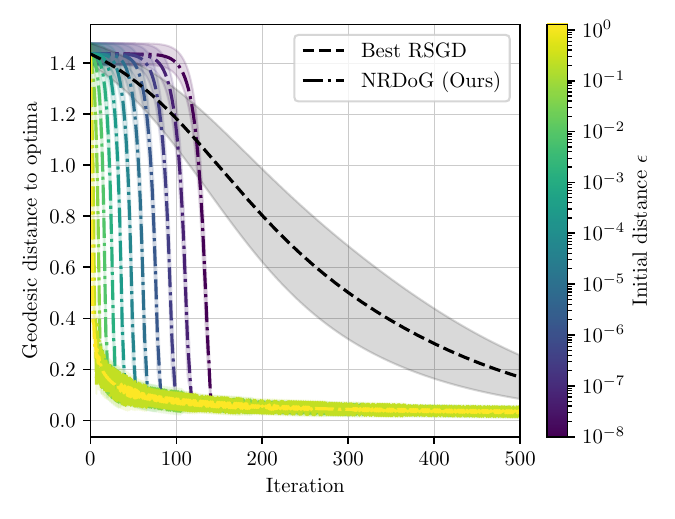} \\
   \raisebox{0.5\height}{\rotatebox{90}{Waveform}} & \includegraphics[width=0.225\textwidth]{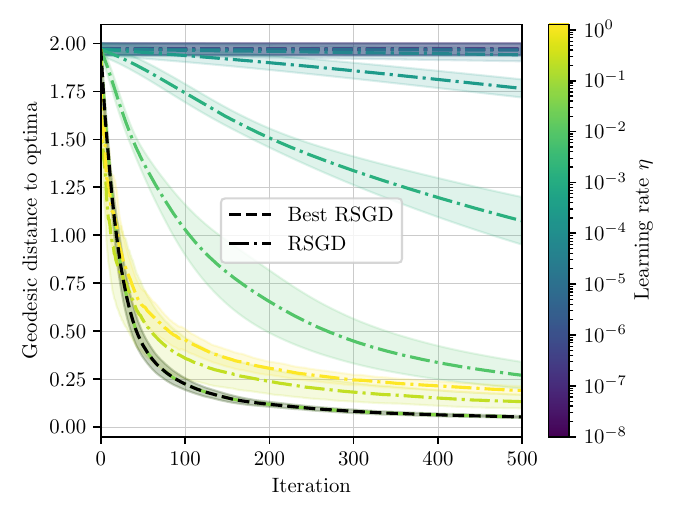} & \includegraphics[width=0.225\textwidth]{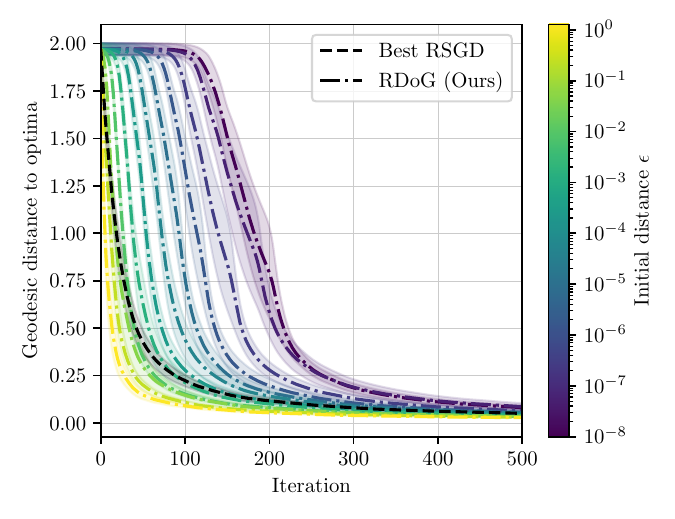} & \includegraphics[width=0.225\textwidth]{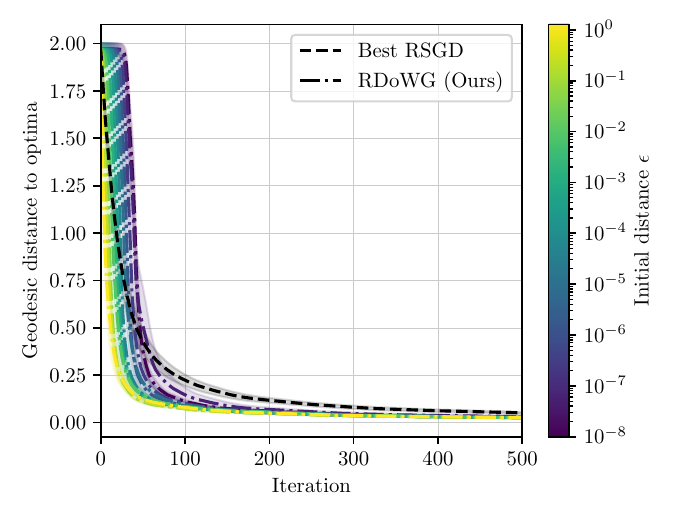} & \includegraphics[width=0.225\textwidth]{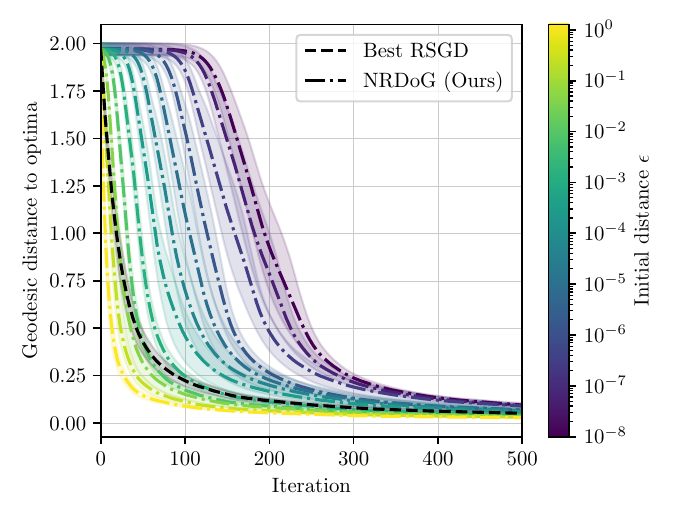} \\
   \raisebox{0.1\height}{\rotatebox{90}{Tiny ImageNet}} & \includegraphics[width=0.225\textwidth]{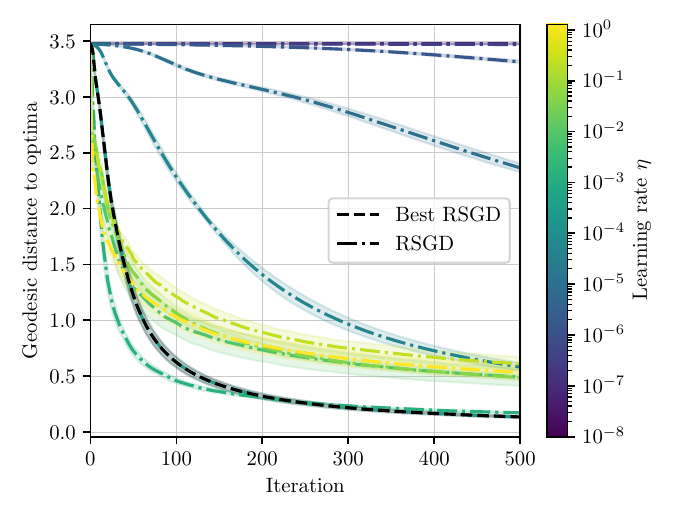} & \includegraphics[width=0.225\textwidth]{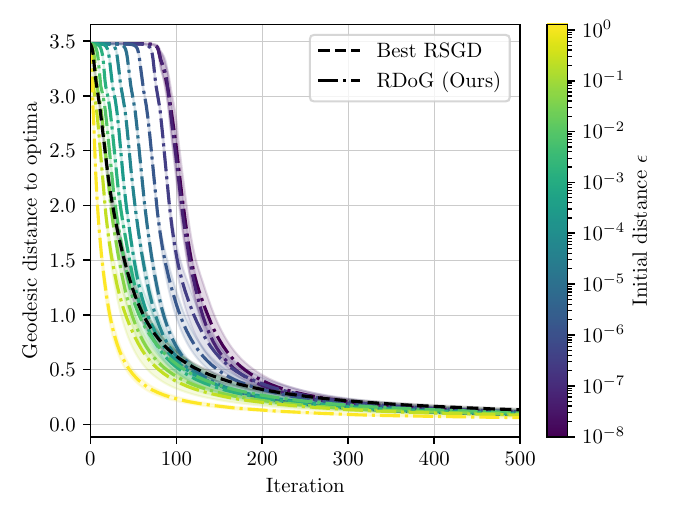} & \includegraphics[width=0.225\textwidth]{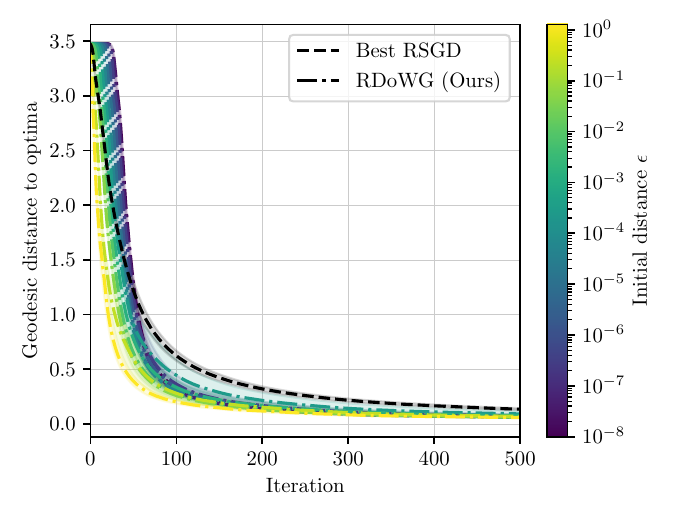} & 
   \includegraphics[width=0.225\textwidth]{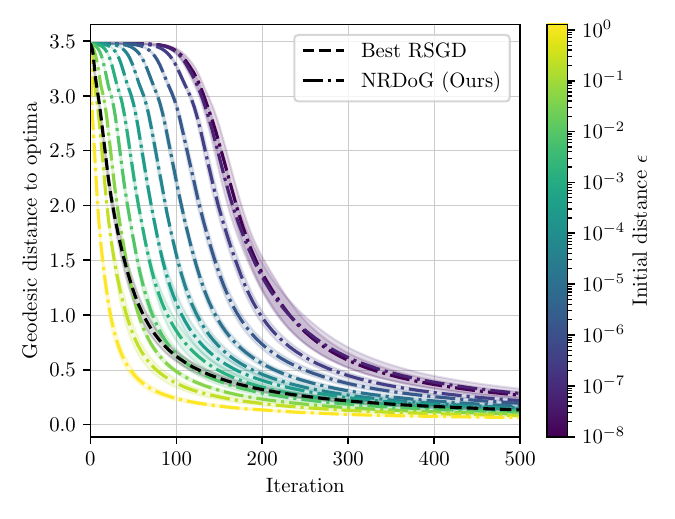}\\
  \end{tabular}
  \caption{\textbf{Supplementary results for PCA on the Grassmann manifold (\cref{experiments:pca}).} The plots display the geodesic distance from a numerically computed optimum as a function of the iteration, considering various learning rates. Results are averaged over five replicates with different initial points. The optimal RSGD is selected based on minimizing the geodesic distance from the optimum after 2000 iterations.}
  \label{tab:pca_trace}
\end{table}

\begin{table}[htb]
  \centering
  \begin{tabular}{ccccc}
    & T=100 & T=500 & T=1000 & T=2000 \\
    \raisebox{1.0\height}{\rotatebox{90}{Wine}}& \includegraphics[width=0.225\textwidth]{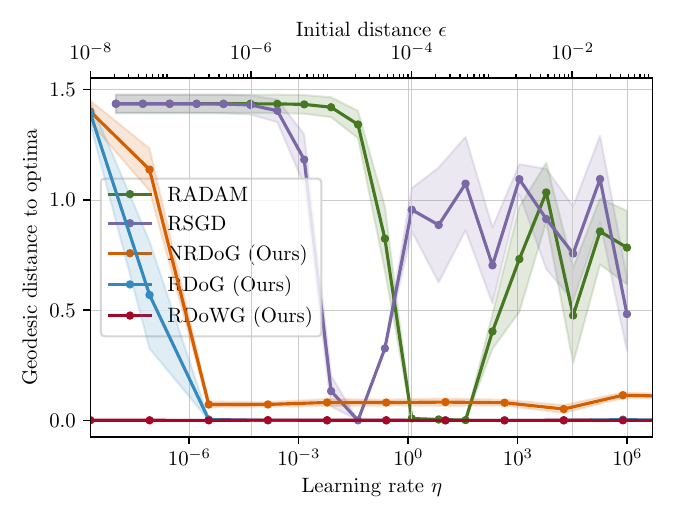} & \includegraphics[width=0.225\textwidth]{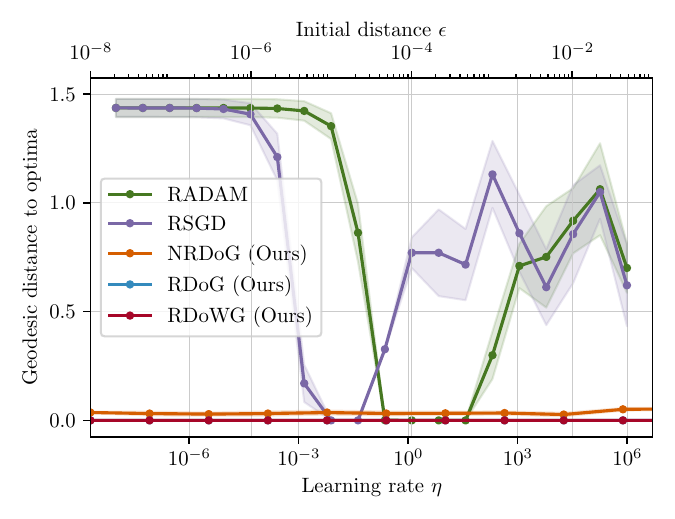} & \includegraphics[width=0.225\textwidth]{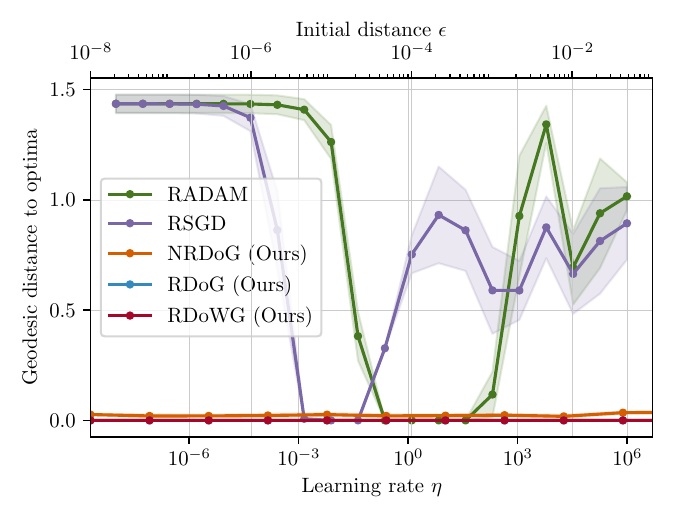} & \includegraphics[width=0.225\textwidth]{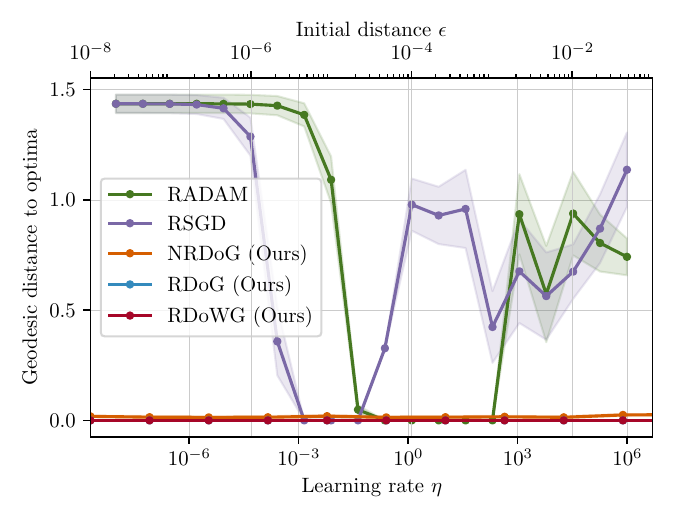} \\
   \raisebox{0.5\height}{\rotatebox{90}{Waveform}}  & \includegraphics[width=0.225\textwidth]{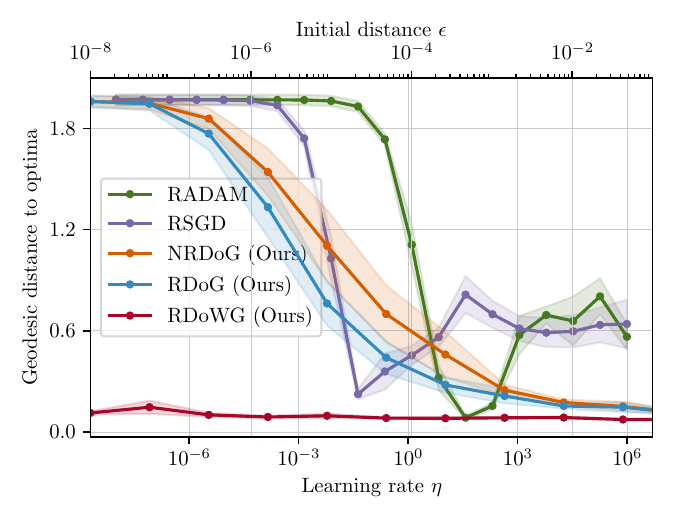} & \includegraphics[width=0.225\textwidth]{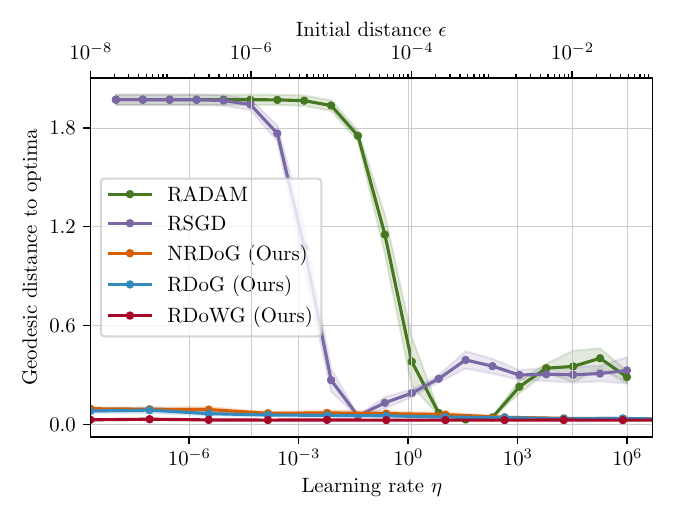} & \includegraphics[width=0.225\textwidth]{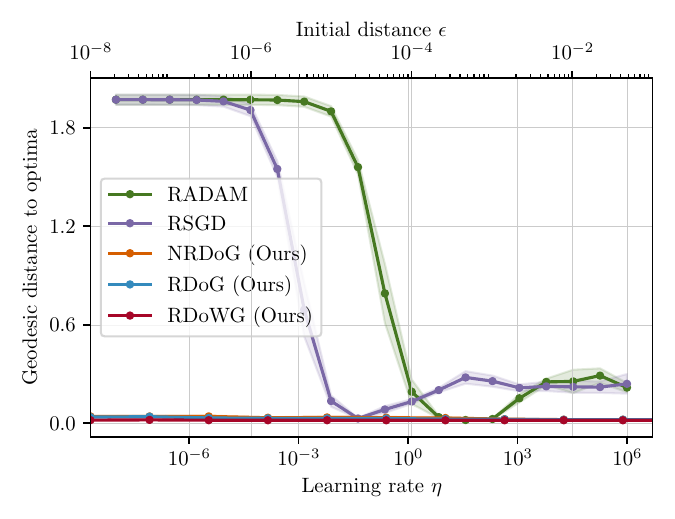} & \includegraphics[width=0.225\textwidth]{figs/waveform_pca/waveform_distance_2000.pdf} \\
   \raisebox{0.1\height}{\rotatebox{90}{Tiny ImageNet}}
    & \includegraphics[width=0.225\textwidth]{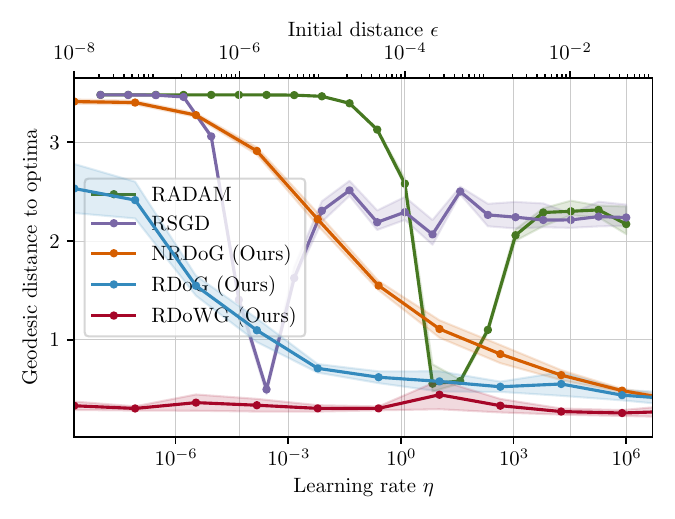} & \includegraphics[width=0.225\textwidth]{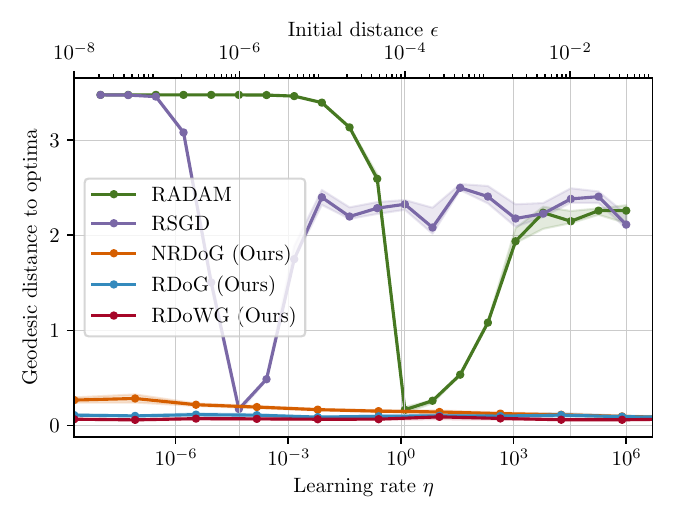} & \includegraphics[width=0.225\textwidth]{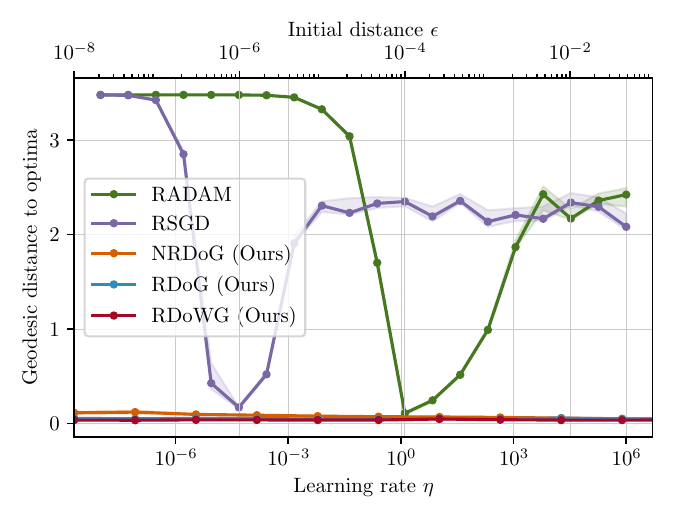} & \includegraphics[width=0.225\textwidth]{figs/tiny_pca/tiny_image_net_distance_2000.pdf} \\
  \end{tabular}
  \caption{\textbf{Supplementary results for PCA on the Grassmann manifold (\cref{experiments:pca}).} Results for different datasets and methods. Each plot illustrates the geodesic distance to a numerically computed optimum after the algorithm is halted for the specified number of iterations. Results are averaged over ten replicates with different initial points.}
  \label{tab:pca_distance}
\end{table}

\subsection{Embedding Graphs in the Poincar\'{e} Embeddings}
\label{additional:poincare}
In this section, we provide supplementary results concerning Poincar\'{e} embeddings, as detailed in \cref{experiments:poincare}. 

\Cref{tab:poincare5D} maintains a consistent experimental framework with the main paper, focusing on five-dimensional embeddings. The top-left section of the table corresponds to \cref{fig:map} presented in the main paper, serving as a reference for comparison. In the bottom-left segment, we explore the algorithmic performance of RADAM and RSGD without implementing the burn-in heuristic, which results in inferior performance. Notably, our optimizers demonstrate robustness, eliminating the need for such heuristics. On the left-hand column, we investigate the impact of omitting the curvature term from the learning rates. For RDoG and RDoWG, the curvature omission corresponds to CO-RDoG (\cref{sec:theoretical-results-RDoG-no-curvature}) and CO-RDoWG (\cref{sec:theoretical-results-RDoWG-no-curvature}). This omission leads to a performance decrease for NRDoG and RDoWG, while RDoG remains unaffected in performance.

\Cref{tab:poincare2D} adheres to a consistent experimental framework for two-dimensional embeddings outlined in the main paper. In the right-hand column, we discern meaningful groupings across various categories without resorting to burn-in heuristics for RDoG, RDoWG, and NRDoG. Conversely, in the left-hand column, we emphasize the pivotal role of geometric curvature in governing step sizes; its absence results in inferior groupings. This discrepancy is reflected in the mean average precision metric.

\begin{table}[htb]
  \centering
  \begin{tabular}{ccc}
    & With geometric curvature term & Without geometric curvature term \\
    \raisebox{0.5\height}{\rotatebox{90}{With burn-in period} }& \includegraphics[width=0.45\textwidth]{figs/mammals/mammals_sensitivity.pdf} & \includegraphics[width=0.45\textwidth]{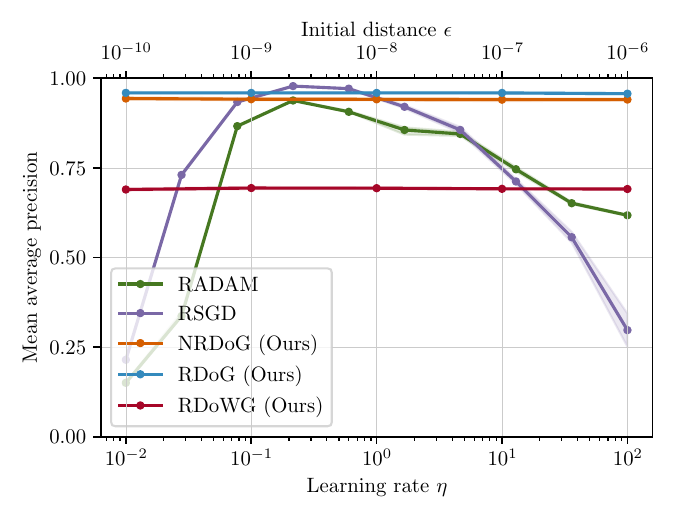} \\
    \raisebox{0.5\height}{\rotatebox{90}{Without burn-in period}}  & \includegraphics[width=0.45\textwidth]{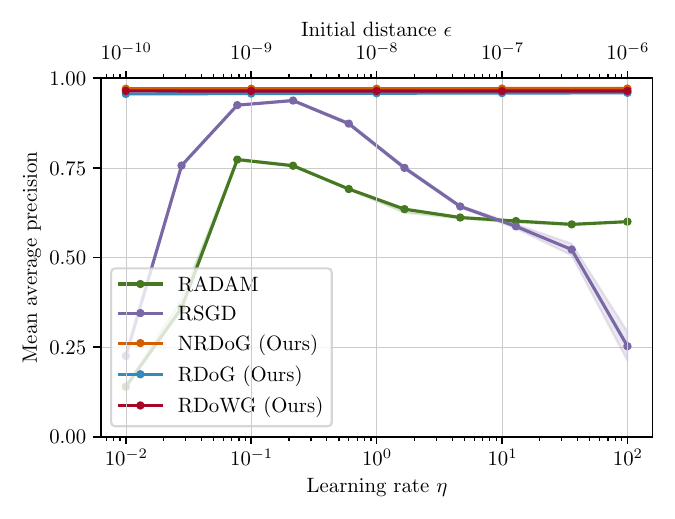} & \includegraphics[width=0.45\textwidth]{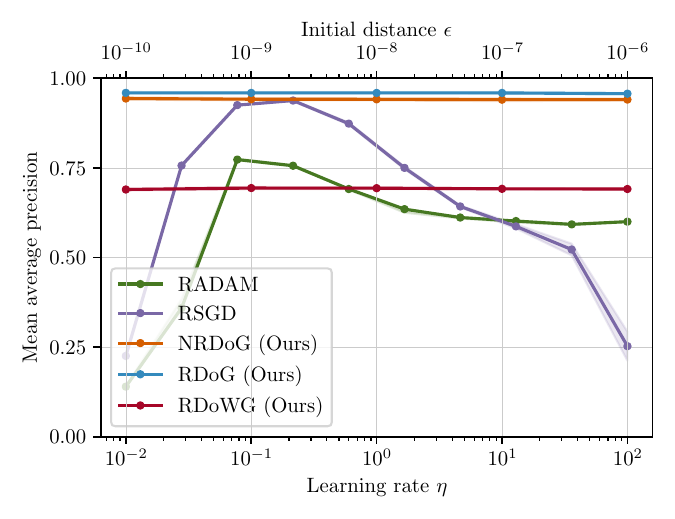} \\
  \end{tabular}
  \caption{\textbf{Supplementary results for the five-dimensional Poincar\'{e} word embeddings (\cref{experiments:poincare}).} We compute the mean average precision of the embeddings against the ground truth after 1000 training epochs. The reported results represent the average over five replications, with the dimension of the embeddings set to five. In the columns, ``with geometric curvature term" corresponds to learning schedulers for RDoG, RDoWG, and NRDoG that retain the geometric curvature term in the denominator, while ``without geometric curvature term" denotes the omission of this term. On the rows, ``with burn-in period" indicates running RADAM and RSGD with a burn-in heuristic. In this case, the algorithms are executed with learning rates divided by ten for the initial ten epochs before regular training. ``without burn-in period" signifies the absence of this heuristic. }
  \label{tab:poincare5D}
\end{table}

\begin{table}[htb]
  \centering
  \begin{tabular}{ccc}
    & With geometric curvature term & Without geometric curvature term \\
    \raisebox{0.2\height}{\rotatebox{90}{Mean average precision} }& \includegraphics[width=0.35\textwidth]{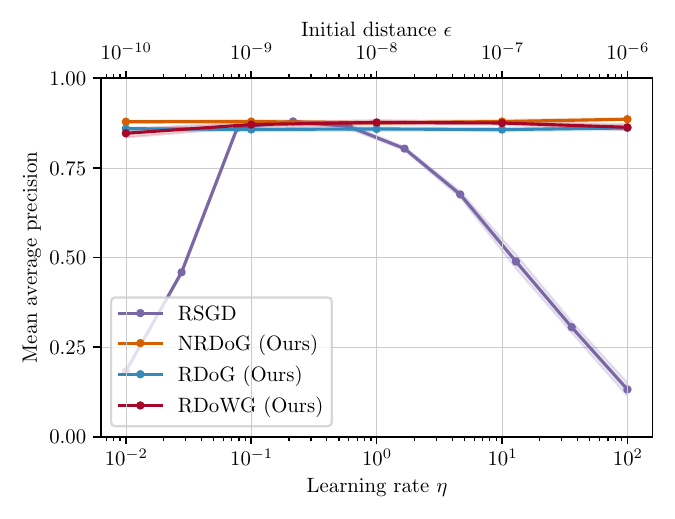} & \includegraphics[width=0.35\textwidth]{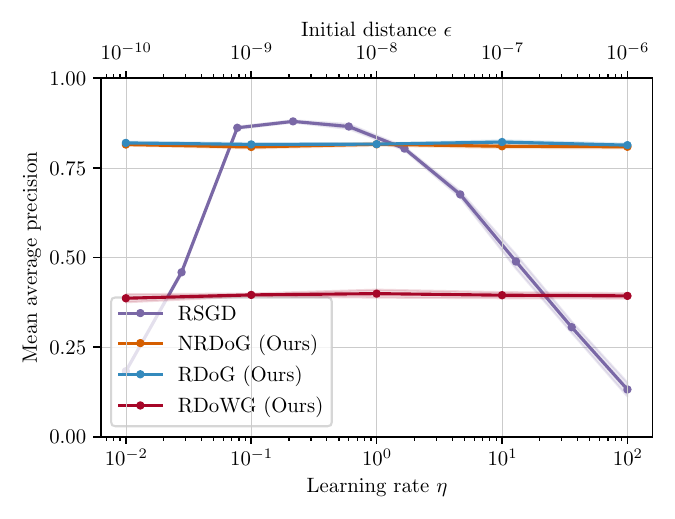} \\
    \raisebox{2.0\height}{\rotatebox{90}{RDoG}}  & \includegraphics[width=0.30\textwidth]{figs/mammals_2d/embeddings_rdog_sensitivity.pdf} & \includegraphics[width=0.30\textwidth]{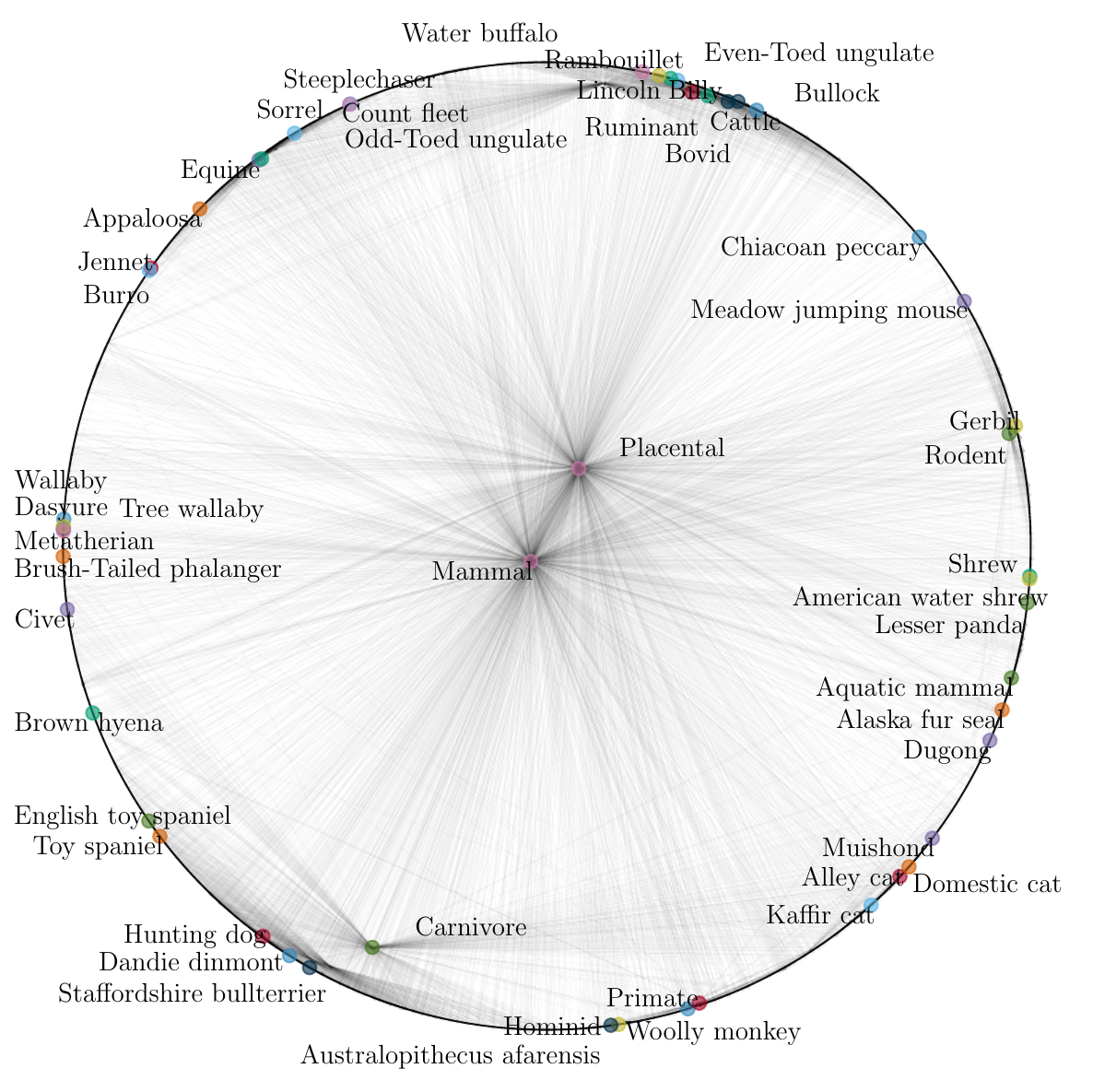} \\
    \raisebox{1.5\height}{\rotatebox{90}{NRDoG}}  & 
    \includegraphics[width=0.30\textwidth]{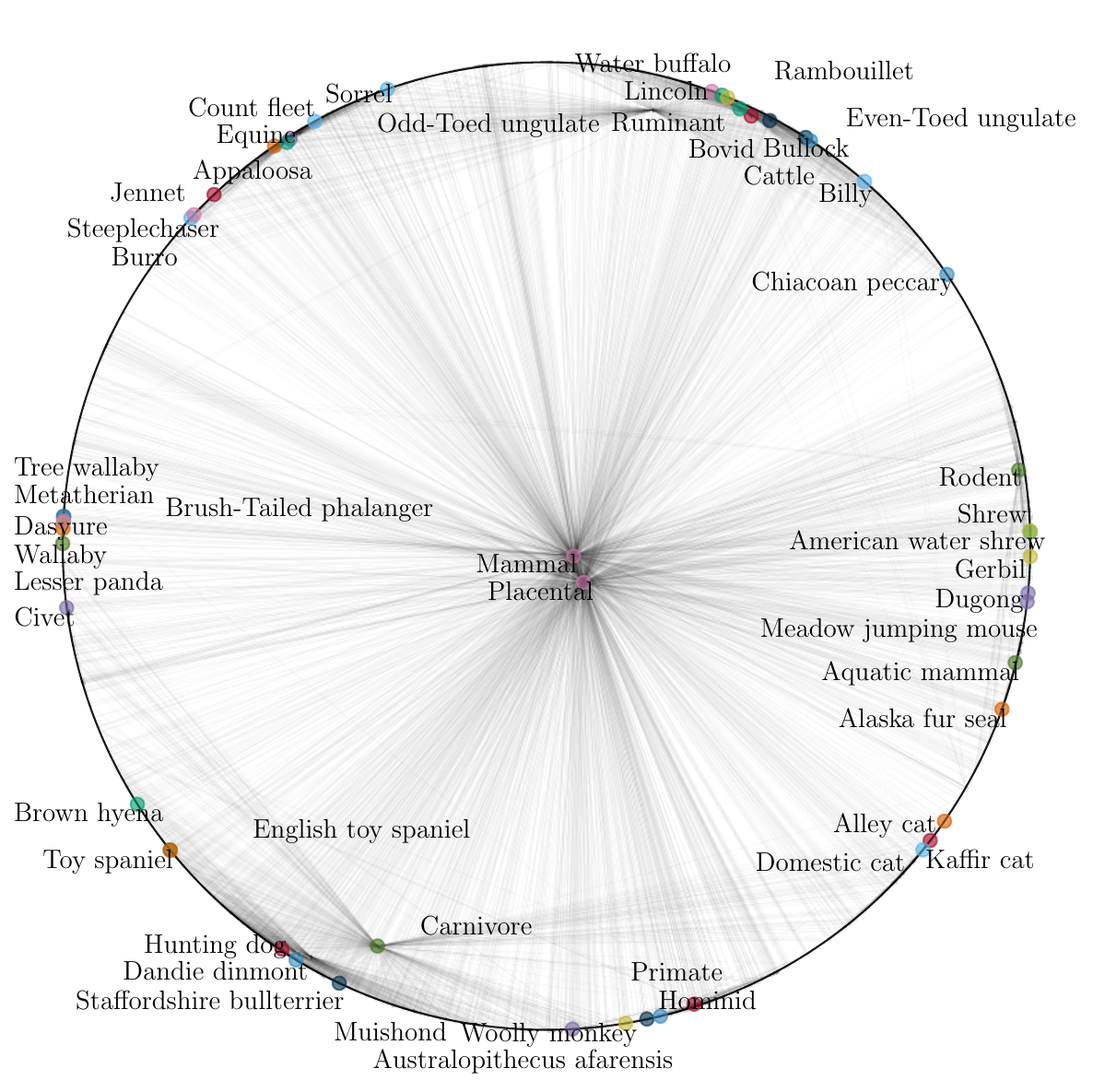} & \includegraphics[width=0.30\textwidth]{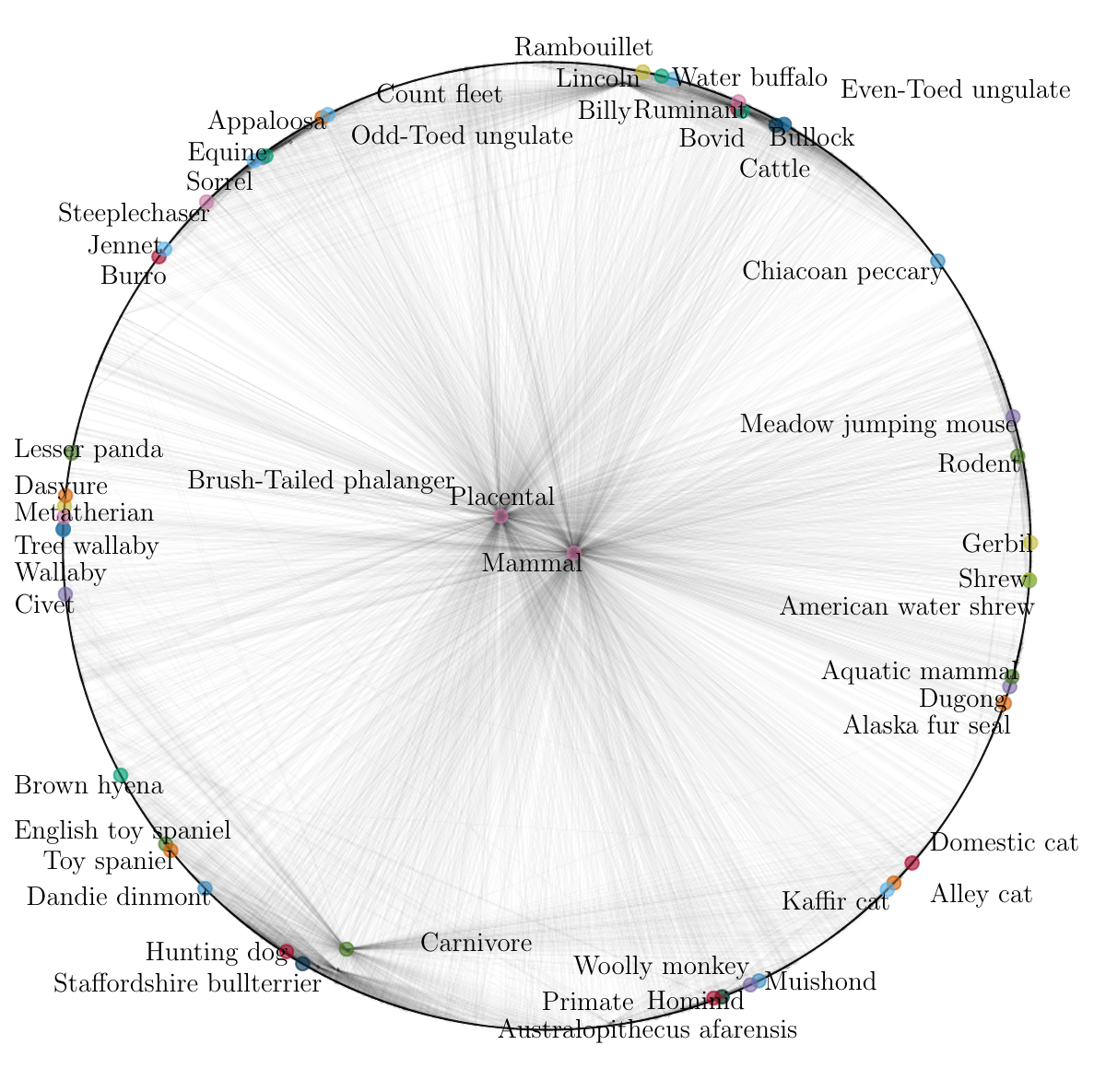} \\
    \raisebox{1.5\height}{\rotatebox{90}{RDoWG}}  & 
    \includegraphics[width=0.30\textwidth]{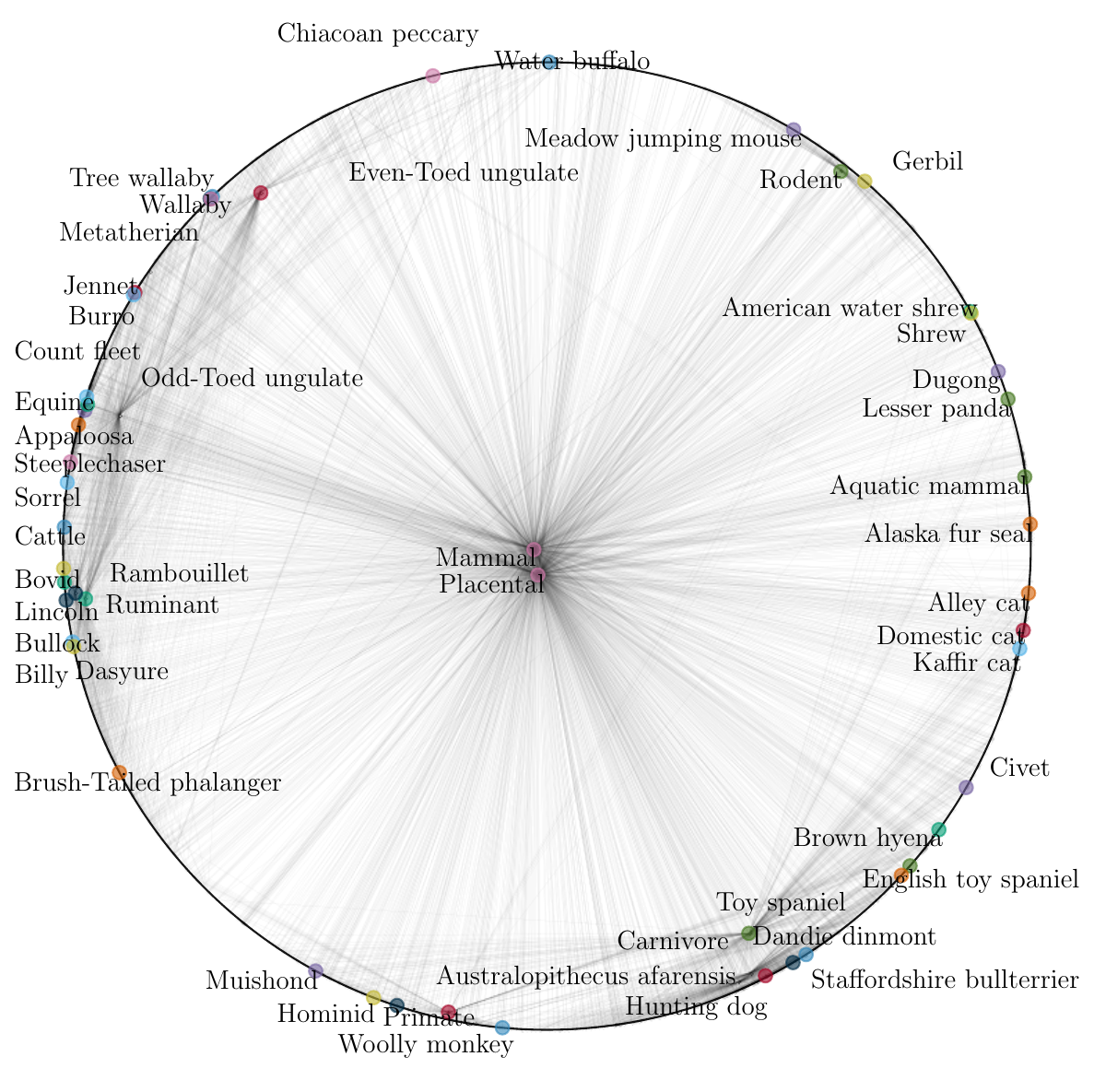} & \includegraphics[width=0.30\textwidth]{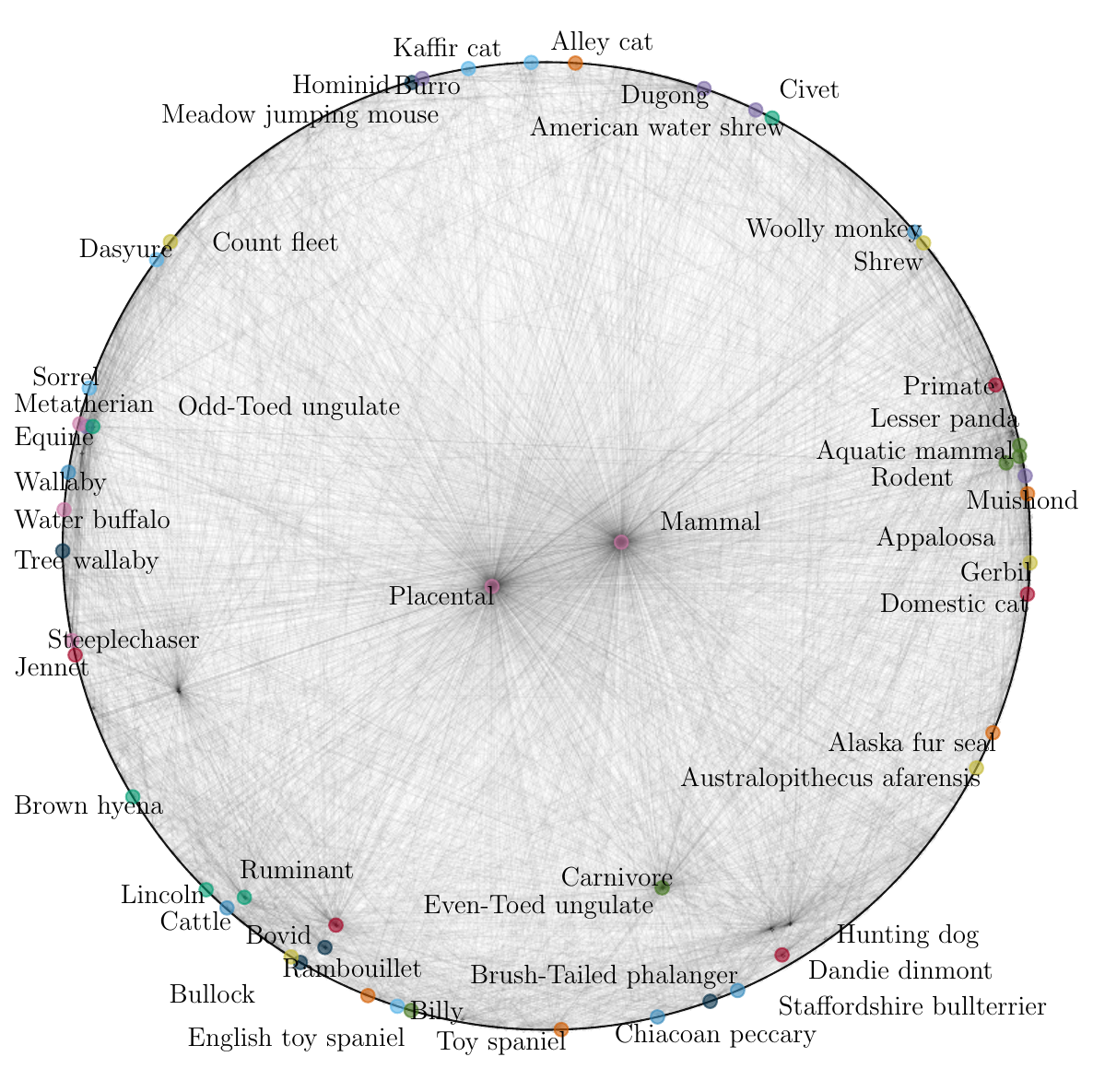} \\
  \end{tabular}
  
  \caption{\textbf{Supplementary results for the two-dimensional Poincar\'{e} word embeddings (\cref{experiments:poincare}).} We compute the mean average precision of the embeddings against the ground truth after 2000 training epochs. The reported results represent the average over five replications, with the dimension of the embeddings set to two. Plots of embeddings obtained under each optimizer are visualized and annotated for the first 50 nouns of the mammal's subtree. In the columns, ``with geometric curvature term" corresponds to learning schedulers for RDoG, RDoWG, and NRDoG that retain the geometric curvature term in the denominator, while ``without geometric curvature term" denotes the omission of this term.
  }
  \label{tab:poincare2D}
\end{table}

\end{document}